\pgfplotsset{compat=1.8}
\tikzset{elegant/.style={smooth,thick,samples=500,magenta}}
\theoremstyle{plain}
\newtheorem{theorem}{Theorem}[section]
\newtheorem{lemma}[theorem]{Lemma}
\newtheorem{corollary}[theorem]{Corollary}
\theoremstyle{definition}
\newtheorem{definition}[theorem]{Definition}
\newtheorem{assumption}[theorem]{Assumption}
\crefname{assumption}{Assumption}{Assumptions}
\def\1{\bm{1}}
\def\eps{{\epsilon}}
\def\vzero{{\bm{0}}}
\def\vmu{{\bm{\mu}}}
\def\vtheta{{\bm{\theta}}}
\def\vdelta{{\bm{\delta}}}
\def\valpha{{\bm{\alpha}}}
\def\vlambda{{\bm{\lambda}}}
\def\vb{{\bm{b}}}
\def\vc{{\bm{c}}}
\newcommand{\ve}{\@ifnextchar\bgroup{\velong}{{\bm{e}}}}
\newcommand{\velong}[1]{{\bm{#1}}}
\def\vf{{\bm{f}}}
\def\vg{{\bm{g}}}
\def\vh{{\bm{h}}}
\def\vp{{\bm{p}}}
\def\vq{{\bm{q}}}
\def\vs{{\bm{s}}}
\def\vu{{\bm{u}}}
\def\vv{{\bm{v}}}
\def\vw{{\bm{w}}}
\def\vx{{\bm{x}}}
\def\vy{{\bm{y}}}
\def\vz{{\bm{z}}}
\def\mI{{\bm{I}}}
\def\mM{{\bm{M}}}
\def\mP{{\bm{P}}}
\DeclareMathAlphabet{\mathsfit}{\encodingdefault}{\sfdefault}{m}{sl}
\SetMathAlphabet{\mathsfit}{bold}{\encodingdefault}{\sfdefault}{bx}{n}
\newcommand{\E}{\mathbb{E}}
\newcommand{\R}{\mathbb{R}}
\newcommand{\normltwo}{L^2}
\newcommand{\normmax}{L^\infty}
\DeclareMathOperator*{\argmax}{arg\,max}
\DeclareMathOperator*{\argmin}{arg\,min}
\newcommand{\dotp}[2]{\left<#1, #2\right>}
\newcommand{\dotpsm}[2]{\langle #1, #2\rangle}
\newcommand{\paral}{\scriptscriptstyle\parallel}
\newcommand{\barvw}{\bar{\vw}}
\newcommand{\bara}{\bar{a}}
\newcommand{\barb}{\bar{b}}
\newcommand{\hatb}{\hat{b}}
\newcommand{\barvb}{\bar{\vb}}
\newcommand{\hatvb}{\hat{\vb}}
\newcommand{\barvtheta}{\bar{\vtheta}}
\newcommand{\tildevtheta}{\tilde{\vtheta}}
\newcommand{\barvthetaopt}{\bar{\vtheta}^{\ast}}
\newcommand{\hata}{\hat{a}}
\newcommand{\Normal}{\mathcal{N}}
\newcommand{\unif}{\mathrm{unif}}
\newcommand{\Loss}{\mathcal{L}}
\newcommand{\DatS}{\mathcal{S}} 
\newcommand{\OmegaS}{\Omega_{\DatS}}
\newcommand{\abs}[1]{\left\lvert #1 \right\rvert}
\newcommand{\abssm}[1]{\lvert #1 \rvert}
\newcommand{\norm}[1]{\left\| #1 \right\|}
\newcommand{\normsm}[1]{\| #1 \|}
\newcommand{\normtwo}[1]{\norm{#1}_2}
\newcommand{\normtwosm}[1]{\normsm{#1}_2}
\newcommand{\norminfsm}[1]{\normsm{#1}_{\infty}}
\newcommand{\sphS}{\mathbb{S}}
\newcommand{\conv}{\mathrm{conv}}
\newcommand{\sgn}{\mathrm{sgn}}
\newcommand{\barcpartial}{\bar{\partial}^{\circ}}
\newcommand{\barcpartialr}{\bar{\partial}^{\circ}_{\mathrm{r}}}
\newcommand{\barcpartialperp}{\bar{\partial}^{\circ}_{\perp}}
\newcommand{\cpartial}{\partial^{\circ}}
\newcommand{\cder}[1]{#1^{\circ}}
\newcommand{\onec}[1]{\mathbbm{1}_{[#1]}}
\newcommand{\dd}{\textup{\textrm{d}}}
\newcommand{\sigmainit}{\sigma_{\mathrm{init}}}
\newcommand{\sigmainitmax}{\sigma_{\mathrm{init}}^{\max}}
\newcommand{\cAinit}{c_{\mathrm{ainit}}}
\newcommand{\alphaLK}{\alpha_{\mathrm{leaky}}}
\newcommand{\barvmu}{\bar{\vmu}}
\newcommand{\scaledvmu}{\tilde{\vmu}}
\newcommand{\phitheta}{\varphi}
\newcommand{\phisg}{\tilde{\varphi}}
\newcommand{\mMmu}{\mM_{\scaledvmu}}
\newcommand{\normM}[1]{\norm{#1}_{\mathrm{M}}}
\newcommand{\normMsm}[1]{\normsm{#1}_{\mathrm{M}}}
\newcommand{\normP}[1]{\norm{#1}_{\mathrm{P}}}
\newcommand{\normPsm}[1]{\normsm{#1}_{\mathrm{P}}}
\newcommand{\normRsm}[1]{\normsm{#1}_{\mathrm{R}}}
\newcommand{\Dinit}{\mathcal{D}_{\mathrm{init}}}
\newcommand{\Diniteq}{\tilde{\mathcal{D}}_{\mathrm{init}}}
\newcommand{\hai}[1][i]{h^{(1)}_{#1}}
\newcommand{\hbi}[1][i]{h^{(2)}_{#1}}
\newcommand{\sai}[1][i]{\sigma^{(1)}_{#1}}
\newcommand{\sbi}[1][i]{\sigma^{(2)}_{#1}}
\newcommand{\ski}[1][i]{\sigma^{(k)}_{#1}}
\newcommand{\hki}[1][i]{h^{(k)}_{#1}}
\newcommand{\vwopt}{\vw^*}
\newcommand{\vwperp}{\vw_{\perp}}
\newcommand{\vthetaopt}{\vtheta^*}
\newcommand{\vwgar}{\vw^{\diamond}}
\newcommand{\barvu}{\bar{\vu}}
\newcommand{\gammaopt}{\gamma^*}
\newcommand{\gammagar}{\gamma^{\diamond}}
\newcommand{\tildegamma}{\tilde{\gamma}}
\newcommand{\qmin}{q_{\min}}
\newcommand{\mPgar}{\mP^{\diamond}}
\newcommand{\Gronwall}{Gr\"{o}nwall\xspace}
\newcommand{\hatvw}{\hat{\vw}}
\newcommand{\hatvtheta}{\hat{\vtheta}}
\newcommand{\hatsetw}{\widehat{\mathcal{W}}}
\newcommand{\hatgap}{\hat{\Delta}}
\newcommand{\tildevw}{\tilde{\vw}}
\newcommand{\tildea}{\tilde{a}}
\newcommand{\tildeg}{\tilde{g}}
\newcommand{\brmsplus}[1][b]{#1_{+}}
\newcommand{\brmsminus}[1][b]{#1_{-}}
\newcommand{\tildeLoss}{\tilde{\Loss}}
\newcommand{\vvplus}{\vw^+}
\newcommand{\vvsubt}{\vw^-}
\newcommand{\vwplus}{\vw^+}
\newcommand{\vwsubt}{\vw^-}
\newcommand{\gammaplus}{\gamma^+}
\newcommand{\gammasubt}{\gamma^-}
\newcommand{\vxplus}{\vx^+}
\newcommand{\vxsubt}{\vx^-}
\newcommand{\vmuplus}{\vmu^+}
\newcommand{\vmusubt}{\vmu^-}
\newcommand{\barvmuplus}{\bar{\vmu}^+}
\newcommand{\barvmusubt}{\bar{\vmu}^-}
\newcommand{\finftime}{f^{\infty}}
\newcommand{\simiid}{\overset{\text{i.i.d.}}{\sim}}
\newcommand{\cone}[1][]{{\mathcal{C}^{#1}}}
\newcommand{\ocone}[1][]{{\mathring{\mathcal{C}}^{#1}}}
\newcommand{\kone}[1][]{\mathcal{K}^{#1}}
\newcommand{\hone}[1][]{\mathcal{H}^{#1}}
\newcommand{\mup}[1][]{\mathcal{M}^{#1}_+}
\newcommand{\mun}[1][]{\mathcal{M}^{#1}_-}
\newcommand{\dist}[2]{\mathrm{dist}\left(#1,#2\right)}
\newcommand{\distsm}[2]{\mathrm{dist}(#1,#2)}
\newcommand{\zvmuplus}{{\barvmu_2^+}}
\newcommand{\zvmusubt}{{\barvmu_2^-}}
\newcommand{\tmpts}{t_{\mathrm{s}}}
\newcommand{\Splus}{\mathcal{S}^+}
\newcommand{\Lambdaplus}{\Lambda^+}
\title{Gradient Descent on Two-layer Nets: \\ Margin Maximization and Simplicity Bias}
\author{%
  Kaifeng Lyu\thanks{Equal contribution}~~\thanks{Most of the work is done when Kaifeng Lyu and Runzhe Wang were at Tsinghua University.} \\
  Princeton University\\
  \texttt{klyu@cs.princeton.edu} \\
  \And
  Zhiyuan Li\footnotemark[1] \\
  Princeton University\\
  \texttt{zhiyuanli@cs.princeton.edu} \\
  \AND
  Runzhe Wang\footnotemark[1]~~\footnotemark[2] \\
  Princeton University\\
  \texttt{runzhew@princeton.edu} \\
  \And
  Sanjeev Arora \\
  Princeton University\\
  \texttt{arora@cs.princeton.edu}
}
\begin{document}

\maketitle

\begin{abstract}
The generalization mystery of overparametrized deep nets has motivated efforts
to understand how gradient descent (GD) converges to low-loss solutions that
generalize well. Real-life neural networks are initialized from small random
values and trained with cross-entropy loss for classification (unlike the "lazy"
or "NTK" regime of training where analysis was more successful), and a recent
sequence of results \citep{lyu2020gradient,chizat20logistic,ji2020directional}
provide theoretical evidence that GD may converge to the "max-margin" solution
with zero loss, which presumably generalizes well. However, the global
optimality of margin is proved only in some settings where neural nets are
infinitely or exponentially wide. The current paper is able to establish this
global optimality for two-layer Leaky ReLU nets trained with gradient flow on
linearly separable and symmetric data, regardless of the width. The analysis
also gives some theoretical justification for recent empirical findings
\citep{kalimeris2019sgd} on the so-called simplicity bias of GD towards linear
or other "simple" classes of solutions, especially early in training. On the
pessimistic side, the paper suggests that such results are fragile. A simple
data manipulation can make gradient flow converge to a linear classifier with
suboptimal margin.
\end{abstract}

\section{Introduction}

\vspace{-0.03in}
One major mystery in deep learning is why deep neural networks generalize
despite  overparameterization~\citep{zhang2017rethinking}. To tackle this issue,
many recent works turn to study the {\em implicit bias} of gradient descent (GD)
--- what kind of theoretical characterization can we give for the low-loss
solution found by GD?

The seminal works by \citet{soudry2018implicit,soudry2018iclrImplicit} revealed
an interesting connection between GD and margin maximization: for linear
logistic regression on linearly separable data, there can be multiple linear
classifiers that perfectly fit the data, but GD with any initialization always
converges to the max-margin (hard-margin SVM) solution, even when there is no
explicit regularization. Thus the solution found by GD has the same margin-based
generalization bounds as hard-margin SVM. Subsequent works on linear models have
extended this theoretical understanding of GD to
SGD~\citep{nacson2018stochastic}, other gradient-based
methods~\citep{gunasekar2018characterizing}, other loss functions with certain
poly-exponential tails~\citep{nacson2019convergence}, linearly non-separable
data~\citep{ji2018risk,ji2019risk}, deep linear
nets~\citep{ji2018gradient,gunasekar2018implicit}.

Given the above results, a natural question to ask is whether GD has the same
implicit bias towards max-margin solutions for machine learning models in
general. \citet{lyu2020gradient} studied the relationship between GD and margin
maximization on {\em deep homogeneous neural network}, i.e., neural network
whose output function is (positively) homogeneous with respect to its
parameters. For homogeneous neural networks, only the direction of parameter
matters for classification tasks. For logistic and exponential loss,
\citet{lyu2020gradient} assumed that GD decreases the loss to a small value and
achieves full training accuracy at some time point, and then provided an
analysis for the training dynamics after this time
point~(\Cref{thm:converge-kkt-margin}), which we refer to as \textit{late phase
analysis}.  It is shown that GD decreases the loss to $0$ in the end and
converges to a direction satisfying the Karush-Kuhn-Tucker (KKT) conditions of a
constrained optimization problem \eqref{eq:prob-P} on margin maximization.

However, given the non-convex nature of neural networks, KKT conditions do not
imply global optimality for margins. Several attempts are made to prove the
global optimality specifically for two-layer nets. \citet{chizat20logistic}
provided a mean-field analysis for infinitely wide two-layer Squared ReLU nets
showing that gradient flow converges to the solution with global max margin,
which also corresponds to the max-margin classifier in some non-Hilbertian space
of functions. \citet{ji2020directional} extended the proof to finite-width
neural nets, but the width needs to be exponential in the input dimension (due
to the use of a covering condition). Both works build upon late phase analyses.
Under a restrictive assumption that the data is orthogonally separable, i.e.,
any data point $\vx_i$ can serve as a perfect linear separator,
\citet{phuong2021the} analyzed the full trajectory of gradient flow on two-layer
ReLU nets with small initialization, and established the convergence to a
piecewise linear classifier that maximizes the margin, irrespective of network
width.

In this paper, we study the implicit bias of gradient flow on two-layer neural
nets with Leaky ReLU activation~\citep{maas2013leakyrelu} and logistic loss. To
avoid the {\em lazy}  or {\em Neural Tangent Kernel (NTK)} regime where the
weights are initialized to large random values and do not change much during
training
\citep{jacot2018ntk,chizat2019lazy,du2018gradient,du2018global,allenzhu2018gobeyond,allenzhu2018convergence,zou2018stochastic,arora2019fine},
we use small initialization to encourage the model to learn features actively,
which is closer to real-life neural network training.

When analyzing convergence behavior of training on neural networks, one can
simplify the problem and gain insights by assuming that the data distribution
has a simple structure. Many works particularly study the case where the labels
are generated by an unknown teacher network that is much smaller/simpler than
the (student) neural network to be trained.
Following~\citet{brutzkus2018sgd,sarussi2021towards} and many other works, we
consider the case where the dataset is linearly separable, namely the labels are
generated by a linear teacher, and study the training dynamics of two-layer
Leaky ReLU nets on such dataset.

\subsection{Our Contribution}

Among all the classifiers that can be represented
by the two-layer Leaky ReLU nets, we show \textbf{any global-max-margin
classifier is exactly linear} under one more data assumption: the dataset is
{\em symmetric}, i.e., if $\vx$ is in the training set, then so is $-\vx$. Note
that such symmetry can be ensured by simple data augmentation.

Still, little is known about what kind of classifiers neural network trained by
GD learns. Though \citet{lyu2020gradient} showed that gradient flow converges
to a classifier along KKT-margin direction, we note that this result is not
sufficient to guarantee the global optimality since such classifier can have
nonlinear decision boundaries. See \Cref{fig:decision_boundary} (left) for an example.

In this paper, we provide a multi-phase analysis for the full trajectory of
gradient flow, in contrast with previous late phase analyses which only analyzes
the trajectory after achieving $100\%$ training accuracy. We show that
\textbf{gradient flow with small initialization converges to a global-max-margin linear classifier}
(\Cref{thm:maxmar-linear}). The proof leverages power iteration to show that
neuron weights align in two directions in an early phase of training, inspired
by \citet{li2021towards}. We further show the alignment at any constant training
time by associating the dynamics of wide neural net with that of two-neuron
neural net, and finally, extend the alignment to the infinite time limit by
applying Kurdyka-Łojasiewicz (KL) inquality in a similar way as
\citet{ji2020directional}. The alignment at convergence implies that the
convergent classifier is linear.

The above results also justify a recent line of works studying the so-called
\textit{simplicity bias}: GD first learns linear functions in the early phase of
training, and the complexity of the solution increases as training goes
on~\citep{kalimeris2019sgd,hu2020surprising,shah2020pitfalls}. Indeed, our
result establishes a form of \textit{extreme simplicity bias} of GD: \textit{if
the dataset can be fitted by a linear classifier, then GD learns a linear
classifier not only in the beginning but also at convergence.}
 
On the pessimistic side, this paper suggests that such global margin
maximization result could be fragile. Even for linearly separable data,
global-max-margin classifiers may be nonlinear without the symmetry assumption.
In particular, we show that for any linearly separable dataset, \textbf{gradient
flow can be led to converge to a linear classifier with suboptimal margin by
adding only $3$ extra data points} (\Cref{thm:hinted_main}). See
\Cref{fig:decision_boundary} (right) for an example.

\section{Related Works}

\myparagraph{Generalization Aspect of Margin Maxmization.} Margin often appears
in the generalization bounds for neural
networks~\citep{bartlett2017norm,neyshabur2017norm}, and larger margin leads to
smaller bounds. \citet{jiang2020Fantastic} conducted an empirical study for the
causal relationships between complexity measures and generalization errors, and
showed positive results for normalized margin, which is defined by the output
margin divided by the product (or powers of the sum) of Frobenius norms of
weight matrices from each layer. On the pessimistic side, negative results are
also shown if Frobenius norm is replaced by spectral norm. In this paper, we do
use the normalized margin with Frobenius norm (see \Cref{sec:prelim}).

\myparagraph{Learning on Linearly Separable Data.} Some works
studied the training dynamics of (nonlinear) neural networks on linearly
separable data (labels are generated by a linear teacher).
\citet{brutzkus2018sgd} showed that SGD on two-layer Leaky ReLU nets with hinge
loss fits the training set in finite steps and generalizes well.
\citet{frei2021provable} studied online SGD (taking a fresh sample from the
population in each step) on the two-layer Leaky ReLU nets with logistic loss.
For any data distribution, they proved that there exists a time step in the
early phase such that the net has a test error competitive with that of the best
linear classifier over the distribution, and hence generalizes well on linearly
separable data. Both two papers reveal that the weight vectors in the first
layer have positive correlations with the weight of the linear teacher, but
their analyses do not imply that the learned classifier is linear. In the NTK
regime, \citet{ji2020polylogarithmic,chen2021how} showed that GD on shallow/deep
neural nets learns a kernel predictor with good generalization on linearly
separable data, and it suffices to have width polylogarithmic in the number of
training samples. Still, they do not imply that the learned classifier is
linear. \citet{pellegrini2020analytic} provided a mean-field analysis for
two-layer ReLU net showing that training with hinge loss and infinite data leads
to a linear classifier, but their analysis requires the data distribution to be
spherically symmetric (i.e., the probability density only depends on the
distance to origin), which is a more restrictive assumption than ours.
\citet{sarussi2021towards} provided a late phase analysis for gradient flow on
two-layer Leaky ReLU nets with logistic loss, which establishes the convergence
to linear classifier based on an assumption called \textit{Neural Agreement
Regime} (NAR): starting from some time point, for any training sample, the
outputs of all the neurons have the same sign. However, it is unclear why this
can happen a priori. Comparing with our work, we analyze the full trajectory of
gradient flow and establish the convergence to linear classifier without
assuming NAR. \citet{phuong2021the} analyzed the full trajectory for gradient
flow on orthogonally separable data, but every KKT-margin direction attains the
global max margin~(see \Cref{sec:orthogonal_separable}) in their setting, which
it is not necessarily true in general. In our setting, KKT-margin direction with
suboptimal margin does exist.
 
\myparagraph{Simplicity Bias.} \citet{kalimeris2019sgd} empirically observed
that neural networks in the early phase of training are learning linear
classifiers, and provided evidence that SGD learns functions of increasing
complexity. \citet{hu2020surprising} justified this view by proving that the
learning dynamics of two-layer neural nets and simple linear classifiers are
close to each other in the early phase, for dataset drawn from a data
distribution where input coordinates are independent after some linear
transformation. The aforementioned work by \citet{frei2021provable} can be seen
as another theoretical justification for online SGD on aribitrary data
distribution. \citet{shah2020pitfalls} pointed out that extreme simplicity bias
can lead to suboptimal generalization and negative effects on adversarial
robustness. 

\myparagraph{Small Initialization.} Several theoretical works studying neural
network training with small initialization can be connected to simplicity bias.
\citet{maennel2018gradient} uncovered a weight quantization effect in training
two-layer nets with small initialization: gradient flow biases the weight
vectors to a certain number of directions determined by the input data
(independent of neural network width). It is hence argued that gradient flow has
a bias towards ``simple'' functions, but their proof is not entirely rigorous
and no clear definition of simplicity is given. This weight quantization effect
has also been studied under the names of weight
clustering~\citep{brutzkus2019why},
condensation~\citep{luo2021phase,xu2021towards}. \citet{williams2019gradient}
studied univariate regression and showed that two-layer ReLU nets with small
initialization tend to learn linear splines. For the matrix factorization
problem,  which can be related to training neural networks with linear or
quadratic activations, we can measure the complexity of the learned solution by
rank. A line of works showed that gradient descent learns solutions with
gradually increasing
rank~\citep{li2018algorithmic,arora2019implicit,gidel2019implicit,gissin2020the,li2021towards}.
Such results have been generalized to tensor factorization where the complexity
measure is replaced by tensor rank~\citep{razin2021implicit}. Beyond small
initialization of our interest and large initialization in the lazy or NTK
regime,  \citet{woodworth20a,moroshko2020implicit,mehta2021extreme} studied
feature learning when the initialization scale transitions from small to large
scale.

\section{Preliminaries} \label{sec:prelim}

We denote the set $\{1, \dots, n\}$ by $[n]$ and the unit sphere $\{\vx \in \R^d
: \normtwosm{\vx} = 1\}$ by $\sphS^{d-1}$. We call a function $h: \R^D \to \R$
$L$-\textit{homogeneous} if $h(c\vtheta) = c^L h(\vtheta)$ for all $\vtheta \in
\R^D$ and $c > 0$. For $S\subseteq \R^D$, $\conv(S)$ denotes the convex hull of
$S$. For locally Lipschitz function $f: \R^D \to \R$, we define Clarke's
subdifferential
\citep{clarke1975generalized,clarke2008nonsmooth,davis2018stochastic} to be
$\cpartial f(\vtheta) := \conv\left\{ \lim_{n \to \infty} \nabla f(\vtheta_n) :
f \text{ differentiable at }\vtheta_n, \lim_{n \to \infty}\vtheta_n = \vtheta
\right\}$ (see also \Cref{sec:add-not}).

\begin{figure}[t]
\vspace{-0.1cm}
\centering
\includegraphics[width=\textwidth]{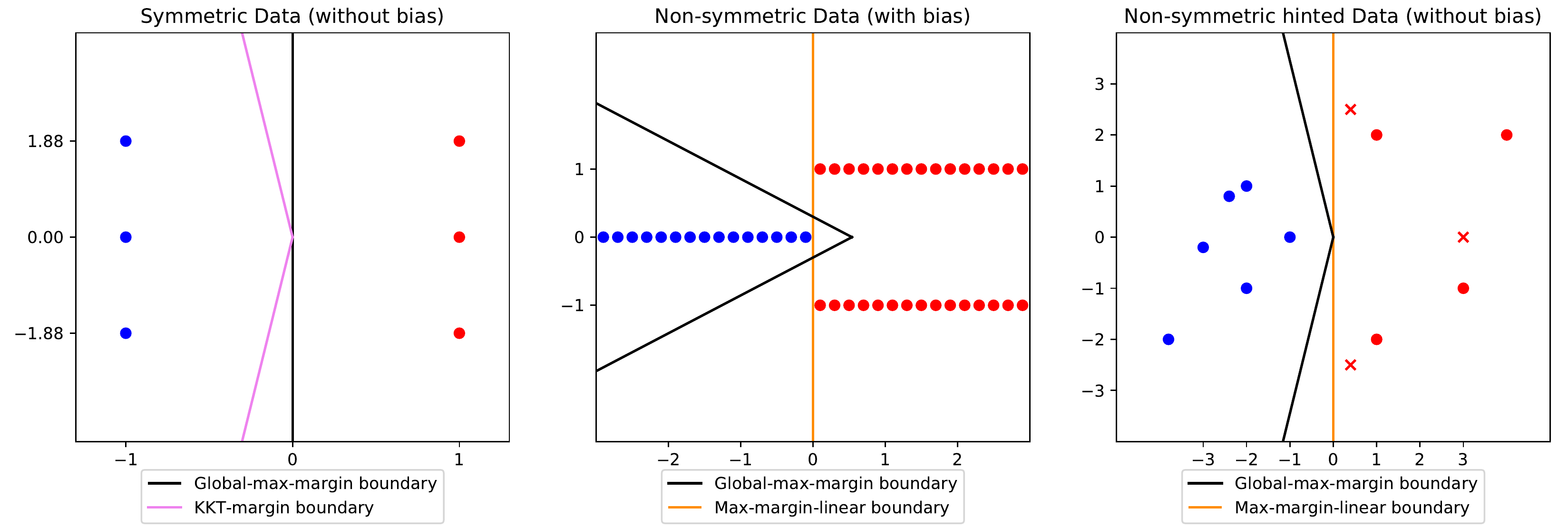}
\caption{Two-layer Leaky ReLU nets ($\alphaLK=1/2$) with KKT margin and global
	max margin on linearly separable data. See \Cref{sec:proof-figure} for detailed discussions. \textbf{Left}: \Cref{thm:sym_main} is
	not vacuous: a symmetric dataset can have KKT directions with suboptimal
	margin, but our theory shows that gradient flow from small initialization
	goes to global max margin. \textbf{Middle}: The linear classifier
	(orange) is along a KKT-margin direction with a much smaller margin
	comparing to the (nonlinear) global-max-margin classifier (black), but our
	theory suggests that gradient flow converges to the linear classifier.
	\textbf{Right}: Adding three extra data points (marked as ``\texttt{x}'';
	see \Cref{def:hint}) to a linearly separable dataset makes the linear
	classifier (orange) has suboptimal margin but causes the neural net to be
	biased to it.}\label{fig:decision_boundary}
	\vspace{-0.05in}
\end{figure}

\vspace{-0.04in}
\subsection{Logistic Loss Minimization and Margin Maximization} \label{sec:review-margin-maximization}

For a neural net, we use $f_{\vtheta}(\vx) \in \R$ to denote the output logit on
input $\vx \in \R^d$ when the parameter is $\vtheta \in \R^D$. We say that the
neural net is $L$-\textit{homogeneous} if $f_{\vtheta}(\vx)$ is $L$-homogeneous
with respect to $\vtheta$, i.e., $f_{c\vtheta}(\vx) = c^L f_{\vtheta}(\vx)$ for
all $\vtheta \in \R^D$ and $c > 0$. VGG-like CNNs can be made homogeneous if we
remove all the bias terms expect those in the first
layer~\citep{lyu2020gradient}.

Throughout this paper, we restrict our attention to $L$-homogeneous neural nets
with $f_{\vtheta}(\vx)$ definable with respect to $\vtheta$ in an o-minimal
structure for all $\vx$. (See \citealt{coste2000introduction} for reference for
o-minimal structures.) This is a technical condition needed by
\Cref{thm:converge-kkt-margin}, and it is a mild regularity condition as almost all modern
neural networks satisfy this condition, including the two-layer Leaky ReLU
networks studied in this paper.

For a dataset $\DatS = \{(\vx_1, y_1), \dots, (\vx_n, y_n)\}$, we define
$q_i(\vtheta) := y_i f_{\vtheta}(\vx_i)$ to be the \textit{output margin on the
data point} $(\vx_i, y_i)$, and $\qmin(\vtheta) := \min_{i \in [n]}
q_i(\vtheta)$ to be the \textit{output margin on the dataset} $\DatS$ (or
\textit{margin} for short). It is easy to see that $q_1(\vtheta), \dots,
q_n(\vtheta)$ are $L$-homogeneous functions, and so is $\qmin(\vtheta)$. We
define the \textit{normalized margin} $\gamma(\vtheta) :=
\qmin\left(\frac{\vtheta}{\normtwosm{\vtheta}}\right) =
\frac{\qmin(\vtheta)}{\normtwosm{\vtheta}^L}$ to be the output margin (on the
dataset) for the normalized parameter $\frac{\vtheta}{\normtwosm{\vtheta}}$.

We refer the problem of finding $\vtheta$ that maximizes $\gamma(\vtheta)$ as
\textit{margin maximization}. Note that once we have found an optimal solution
$\vthetaopt \in \R^D$, $c\vthetaopt$ is also optimal for all $c > 0$. We can put
the norm constraint on $\vtheta$ to eliminate this freedom on rescaling:
\begin{equation} \label{eq:prob-M}
	\max_{\vtheta \in \sphS^{D-1}}~~\gamma(\vtheta). \tag{M}
\end{equation}
Alternatively, we can also constrain the margin to have $\qmin \ge 1$ and minimize the norm:
\begin{equation} \label{eq:prob-P}
	\min~~\frac{1}{2}\normtwosm{\vtheta}^2 \quad \text{s.t.}\quad  q_i(\vtheta) \ge 1, \quad \forall i \in [n]. \tag{P}
\end{equation}
One can easily show that $\vthetaopt$ is a global maximizer of \eqref{eq:prob-M}
if and only if $\frac{\vthetaopt}{(\qmin(\vthetaopt))^{1/L}}$ is a global
minimizer of \eqref{eq:prob-P}. For convenience, we make the following
convention: if $\frac{\vtheta}{\normtwosm{\vtheta}}$ is a local/global maximizer
of \eqref{eq:prob-M}, then we say $\vtheta$ is along a \textit{local-max-margin
direction}/\textit{global-max-margin direction}; if
$\frac{\vtheta}{(\qmin(\vtheta))^{1/L}}$ satisfies the KKT conditions of
\eqref{eq:prob-P}, then we say $\vtheta$ is along a \textit{KKT-margin
direction}.

Gradient flow with logistic loss is defined by the following differential
inclusion, 
\begin{equation}\label{eq:subgradient_flow}
\frac{\dd \vtheta}{\dd t} \in -\cpartial \Loss(\vtheta), \quad \textrm{with } 	\Loss(\vtheta) := \frac{1}{n}\sum_{i=1}^{n} \ell(q_i(\vtheta)), 
\end{equation}
where $\ell(q):=\ln(1 + e^{-q})$ is the logistic loss.
\citet{lyu2020gradient,ji2020directional} showed that
$\vtheta(t)/\normtwosm{\vtheta(t)}$ always  converges to a KKT-margin direction.
We restate the results below.
\begin{theorem}[\citealt{lyu2020gradient,ji2020directional}]
	\label{thm:converge-kkt-margin} For homogeneous neural networks, if $\Loss(\vtheta(0)) < \frac{\ln 2}{n}$, then
	$\Loss(\vtheta(t)) \to 0$, $\normtwosm{\vtheta(t)} \to +\infty$, and
	$\frac{\vtheta(t)}{\normtwosm{\vtheta(t)}}$ converges to a KKT-margin
	direction as $t \to +\infty$.
\end{theorem}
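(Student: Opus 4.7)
The plan is to follow the smoothed-margin framework of the two cited papers. Introduce the smoothed margin $\tildegamma(\vtheta) := \log(1/(n\Loss(\vtheta)))/\normtwosm{\vtheta}^L$ as a smooth surrogate for the normalized margin $\gamma(\vtheta)$: the two agree asymptotically since $n\Loss(\vtheta)=\sum_i\ell(q_i(\vtheta))$ behaves like $e^{-\qmin(\vtheta)}$ once all $q_i$ are large. The proof splits into four steps: (i) show the output margin is positive at and past initialization; (ii) prove $\tildegamma$ is non-decreasing along gradient flow; (iii) deduce $\normtwosm{\vtheta(t)}\to\infty$ and $\Loss(\vtheta(t))\to 0$; (iv) upgrade to convergence of the normalized direction and verify the KKT conditions via a Kurdyka-\L ojasiewicz (KL) argument.

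For (i), $\ell$ is strictly decreasing with $\ell(0)=\ln 2$, so $\Loss(\vtheta(0))<\ln 2/n$ forces every $q_i(\vtheta(0))>0$. The chain rule for subgradient curves of definable locally Lipschitz functions gives $\frac{\dd}{\dd t}\Loss(\vtheta(t))=-\normsm{\dot\vtheta(t)}^2\le 0$ almost everywhere, so $\Loss$ is non-increasing and $\qmin(\vtheta(t))>0$ persists for all $t\ge 0$. For (ii), I differentiate $\tildegamma$ along the flow and combine two ingredients. The first is Euler's identity from $L$-homogeneity: for every $\vg\in\cpartial q_i(\vtheta)$, $\langle\vtheta,\vg\rangle=Lq_i(\vtheta)$, so $\langle\vtheta,-\cpartial\Loss(\vtheta)\rangle=-\frac{L}{n}\sum_i\ell'(q_i)q_i>0$; this alone already shows $\normtwosm{\vtheta(t)}$ is strictly increasing. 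The second is the elementary logistic inequality $-\sum_i\ell'(q_i)q_i\ge \log(1/(n\Loss))\cdot\sum_i(-\ell'(q_i))$, valid when all $q_i>0$. Combining these with Cauchy-Schwarz produces $\frac{\dd}{\dd t}\tildegamma(\vtheta(t))\ge 0$ along the flow.

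For (iii), if $\normtwosm{\vtheta(t)}$ stayed bounded, then $\vtheta(t)$ would be confined to a compact set on which each $|\ell'(q_i)|q_i$ is bounded away from $0$, so $\frac{\dd}{\dd t}\tfrac{1}{2}\normtwosm{\vtheta}^2=-\frac{L}{n}\sum_i\ell'(q_i)q_i$ would be bounded below by a positive constant, contradicting boundedness. Hence $\normtwosm{\vtheta(t)}\to\infty$, and together with the monotonicity bound $\tildegamma(\vtheta(t))\ge\tildegamma(\vtheta(0))>0$ and the identity $\log(1/(n\Loss))=\tildegamma\cdot\normtwosm{\vtheta}^L$, this gives $\Loss(\vtheta(t))\to 0$.

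The hard step is (iv). Definability of $f_{\vtheta}$ in an o-minimal structure makes $\Loss$ definable, and hence a nonsmooth KL inequality $\normsm{\cpartial\Loss(\vtheta)}\ge \Psi'(\Loss(\vtheta))$ holds locally for a definable desingularizing $\Psi$. Passing to spherical coordinates $\hat\vtheta:=\vtheta/\normtwosm{\vtheta}\in\sphS^{D-1}$ and reparametrizing time by $s:=\log\normtwosm{\vtheta(t)}$, the induced dynamics is (up to rescaling) a subgradient flow for the definable function $\tildegamma$ on the compact sphere, and the KL inequality yields finite trajectory length $\int_0^\infty\normsm{\dot{\hat\vtheta}(s)}\dd s<\infty$, forcing convergence $\hat\vtheta(t)\to \hat\vtheta^\infty$. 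Finally, rescaling by $\qmin(\hat\vtheta^\infty)^{1/L}$ and passing to the limit in the first-order stationarity condition $\langle\vtheta,-\cpartial\Loss(\vtheta)\rangle\to L\cdot(\text{margin term})$ recovers the KKT conditions of \eqref{eq:prob-P}: the scaled parameter lies in the conic hull of $\{y_i\vg:\vg\in\cpartial f_{\vtheta}(\vx_i)\}$ over the indices achieving the limiting margin. The principal obstacle throughout is handling nonsmoothness: verifying the chain rule, the KL inequality, and the limiting KKT conditions at points where $f_{\vtheta}$ may fail to be differentiable requires the o-minimality hypothesis together with the calculus of Clarke subdifferentials for definable subgradient curves.
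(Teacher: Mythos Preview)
The paper does not prove this theorem: it is quoted verbatim as a result of \citet{lyu2020gradient} and \citet{ji2020directional}, and no proof is supplied anywhere in the paper or its appendix. So there is no ``paper's own proof'' to compare against.

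That said, your sketch is a faithful high-level summary of how those cited works establish the result, with two caveats. First, your smoothed margin is not the one actually used: the paper (following Lyu--Li) defines $\alpha(\vtheta)=\ell^{-1}(n\Loss(\vtheta))$ and $\tildegamma(\vtheta)=\alpha(\vtheta)/\normtwosm{\vtheta}^L$, not $\log(1/(n\Loss))/\normtwosm{\vtheta}^L$; the two agree asymptotically but the monotonicity calculation in step (ii) is cleaner with the correct one. Second, step (iv) oversimplifies: the spherical dynamics is not literally a subgradient flow for $\tildegamma$ on $\sphS^{D-1}$, and the KL/desingularizing argument in \citet{ji2020directional} (Lemmas 3.4--3.7 there, restated in this paper as Lemmas~G.1--G.4) works by decomposing $\barcpartial\tildegamma$ into radial and tangential parts and bounding the swept arc length $\zeta(t)$ against $\Psi(\gamma^*-\tildegamma)$, rather than by a direct change of variables. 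Your compactness argument in (iii) also implicitly needs the monotone lower bound $\tildegamma(\vtheta(t))\ge\tildegamma(\vtheta(0))>0$ from (ii) to keep $q_i$ bounded away from zero on the alleged compact set. With those adjustments, the outline is correct.
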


\subsection{Two-Layer Leaky ReLU Networks on Linearly Separable Data}

Let $\phi(x) = \max\{x, \alphaLK x\}$ be Leaky ReLU, where $\alphaLK \in (0,
1)$. Throughout the following sections, we consider a two-layer neural net
defined as below,
\[
f_{\vtheta}(\vx) = \sum_{k=1}^{m} a_k \phi(\vw_k^{\top} \vx).
\]
where $\vw_1, \dots, \vw_m \in \R^d$ are the weights in the first layer, $a_1,
\dots, a_m \in \R$ are the weights in the second layer, and $\vtheta = (\vw_1,
\dots, \vw_m, a_1, \dots, a_m) \in \R^D$ is  the concatenation of all trainable
parameters, where $D = md+m$. We can verify that $f_{\vtheta}(\vx)$ is
$2$-homogeneous with respect to $\vtheta$.

Let $\DatS := \{(\vx_1, y_1), \dots, (\vx_n, y_n)\}$ be the training set. For
simplicity, we assume that $\normtwosm{\vx_i} \le 1$. We focus on linearly
separable data, thus we assume that $\DatS$ is linearly separable throughout the
paper.
\begin{assumption}[Linear Separable] \label{ass:lin}
	There exists a $\vw \in \R^d$ such that $y_i \dotp{\vw}{\vx_i} \ge 1$ for all $i \in [n]$.
\end{assumption}
\begin{definition}[Max-margin Linear Separator]
	For the linearly separable dataset $\DatS$, we say that $\vwopt \in
	\sphS^{d-1}$ is the max-margin linear separator if $\vwopt$ maximizes
	$\min_{i \in [n]} y_i \dotp{\vw}{\vx_i}$ over $\vw \in \sphS^{d-1}$.
\end{definition}

\section{Training on Linearly Separable and Symmetric Data} \label{sec:sym}
In this section, we study the implicit bias of gradient flow assuming the
training data is linearly separable and \textit{symmetric}. We say a dataset is
symmetric if whenever $\vx$ is present in the training set, the input $-\vx$ is
also present. By linear separability, $\vx$ and $-\vx$ must have different
labels because $\dotp{\vwopt}{\vx} = -\dotp{\vwopt}{-\vx}$, where $\vwopt$ is
the max-margin linear separator. The formal statement for this assumption is
given below.

\begin{assumption}[Symmetric] \label{ass:sym}
	$n$ is even and $\vx_{i} = -\vx_{i+n/2}, y_i = 1, y_{i+n/2} = -1$ for $1 \le i \le n/2$.
\end{assumption}

This symmetry can be ensured via data augmentation. Given a dataset, if it is
known that the ground-truth labels are produced by an unknown linear classifier,
then one can augment each data point $(\vx, y)$ by flipping the sign, i.e.,
replace it with two data points $(\vx, y)$, $(-\vx, -y)$ (and thus the dataset
size is doubled). 

Our results show that gradient flow directionally converges to a
global-max-margin direction for two-layer Leaky ReLU networks, when the dataset
is linearly separable and symmetric. To achieve such result, the key insight is
that any global-max-margin direction represents a linear classifier, which we
will see in \Cref{sec:sym-kkt}. Then we will present our main convergence
results in \Cref{sec:sym-dynamics}.

\subsection{Global-Max-Margin Classifiers are Linear} \label{sec:sym-kkt}

\Cref{thm:maxmar-linear} below characterizes the global-max-margin direction in
our case by showing that margin maximization and simplicity bias coincide with
each other: a network that representing the \textit{max-margin linear
classifier} (i.e., $f_{\vtheta}(\vx) = c\dotp{\vwopt}{\vx}$ for some $c > 0$)
can simultaneously achieve the goals of being simple and maximizing the margin.

\begin{theorem}\label{thm:maxmar-linear} Under \Cref{ass:lin,ass:sym}, for the
	two-layer Leaky ReLU network with width $m \ge 2$, any global-max-margin
	direction $\vthetaopt \in \sphS^{D-1}$, $f_{\vthetaopt}$ represents a linear
	classifier. Moreover, we have $f_{\vthetaopt}(\vx) = \frac{1 + \alphaLK}{4}
	\dotp{\vwopt}{\vx}$ for all $\vx \in \R^d$, where $\vwopt$ is the max-margin
	linear separator.
\end{theorem}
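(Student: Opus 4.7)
The plan is to use the dataset symmetry to reduce \eqref{eq:prob-P} for the nonlinear network to an equivalent linear-separator problem, obtain a tight lower bound on $\tfrac12\|\vtheta\|^2$, and then analyse the equality case to recover the full functional form of $f_{\vthetaopt}$.

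First I would exploit the Leaky ReLU identity $\phi(u)-\phi(-u)=(1+\alphaLK)u$ (verified separately on $u\ge 0$ and $u<0$) to form the odd part of the network,
\[
g(\vx):=\tfrac12\bigl(f_\vtheta(\vx)-f_\vtheta(-\vx)\bigr)=\tfrac{1+\alphaLK}{2}\,\dotp{\vu}{\vx},\qquad \vu:=\sum_{k=1}^{m}a_k\vw_k,
\]
which is linear in $\vx$. By \Cref{ass:sym}, any feasible $\vtheta$ for \eqref{eq:prob-P} satisfies $y_i g(\vx_i)=\tfrac12(q_i+q_{i+n/2})\ge 1$, so $\vu$ is (up to the factor $\tfrac{1+\alphaLK}{2}$) a linear separator with unit margin. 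Since $\vwopt$ attains the max linear margin $\gammaopt$, this forces $\|\vu\|\ge\tfrac{2}{(1+\alphaLK)\gammaopt}$, with equality iff $\vu=\tfrac{2}{(1+\alphaLK)\gammaopt}\vwopt$. Chaining the triangle inequality $\|\vu\|\le\sum_k|a_k|\|\vw_k\|$ with AM--GM $|a_k|\|\vw_k\|\le\tfrac12(a_k^2+\|\vw_k\|^2)$ then yields $\tfrac12\|\vtheta\|^2\ge\tfrac{2}{(1+\alphaLK)\gammaopt}$, and this bound is attained by the explicit two-neuron configuration $(\vw_1,a_1)=(s\vwopt,s)$, $(\vw_2,a_2)=(-s\vwopt,-s)$ with $s^2=\tfrac{1}{(1+\alphaLK)\gammaopt}$ (feasible because $m\ge 2$); so this is exactly the optimal value of \eqref{eq:prob-P}.

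The main obstacle will be showing that \emph{every} optimum, not just the above construction, yields the stated linear $f_{\vthetaopt}$. Saturating the chain forces (i) $|a_k|=\|\vw_k\|$ for every $k$ and (ii) every nonzero $a_k\vw_k$ to be a nonnegative multiple of $\vu\parallel\vwopt$; hence each neuron has the form $\vw_k=\epsilon_k r_k\vwopt$, $a_k=\epsilon_k r_k$ with $\epsilon_k\in\{\pm 1\}$, $r_k\ge 0$. By $1$-homogeneity of $\phi$ the network collapses to $f_\vtheta(\vx)=C_+\phi(\dotp{\vwopt}{\vx})-C_-\phi(-\dotp{\vwopt}{\vx})$ with $C_\pm:=\sum_{\epsilon_k=\pm 1}r_k^2$. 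To pin down $C_+=C_-$ I would pick a max-margin support vector $\vx_i$ with $y_i=+1$ and $\dotp{\vwopt}{\vx_i}=\gammaopt$; at the optimum both $q_i=q_{i+n/2}=1$, because their average $y_i g(\vx_i)$ equals $1$ and each summand is $\ge 1$. Evaluating $f_\vtheta$ at $\vx_i$ and at its antipode $-\vx_i$ produces the pair $(C_++\alphaLK C_-)\gammaopt=1$ and $(\alphaLK C_++C_-)\gammaopt=1$; subtracting gives $(1-\alphaLK)(C_+-C_-)=0$, hence $C_+=C_-=\tfrac{1}{(1+\alphaLK)\gammaopt}$. Applying the odd-part identity once more collapses $f_\vtheta(\vx)$ to $\gammaopt^{-1}\dotp{\vwopt}{\vx}$.

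Finally, by $2$-homogeneity and $\|\vtheta\|^2=\tfrac{4}{(1+\alphaLK)\gammaopt}$, the unit-norm representative $\vthetaopt=\vtheta/\|\vtheta\|$ satisfies $f_{\vthetaopt}(\vx)=\tfrac{1}{\|\vtheta\|^2}f_\vtheta(\vx)=\tfrac{1+\alphaLK}{4}\dotp{\vwopt}{\vx}$, matching the stated formula. The symmetry assumption is used twice---once to turn $\vu$ into a feasible linear separator, and once to match a tight support vector with an equally tight antipode---which is precisely what makes \Cref{ass:sym} essential to the argument.
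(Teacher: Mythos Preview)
Your proof is correct and shares the paper's high-level strategy---symmetrise to extract a linear separator, bound the objective, then analyse the equality case---but the execution differs in two places worth noting. First, the paper constructs an auxiliary two-neuron network $\vtheta'$ and shows $\gamma(\vtheta')\ge\gamma(\vthetaopt)$ together with $\|\vtheta'\|\le\|\vthetaopt\|$ via Cauchy--Schwarz (equality case: $\vw_k=a_k\vc$); you instead bound the \eqref{eq:prob-P} objective directly through $\tfrac12\|\vtheta\|^2\ge\sum_k|a_k|\|\vw_k\|\ge\|\vu\|$ via triangle inequality plus AM--GM, which is a bit more elementary and avoids the auxiliary construction. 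Second, to force $C_+=C_-$ the paper uses the even-part identity $\phi(z)+\phi(-z)=(1-\alphaLK)|z|$ to obtain $\sum_k a_k|a_k|=0$, whereas you evaluate $q_i$ and $q_{i+n/2}$ at a tight support vector and subtract; both reach the same conclusion. Your route is arguably cleaner bookkeeping for this particular statement, while the paper's ``symmetrisation does not decrease normalized margin'' viewpoint is a reusable structural observation.
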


The result of \Cref{thm:maxmar-linear} is based on the observation that
replacing each neuron $(a_k,\vw_k)$ in a network with two neurons of oppositing
parameters $(a_k,\vw_k)$ and $(-a_k,-\vw_k)$ does not decrease the normalized
margin on the symmetric dataset, while making the classifier linear in function
space. Thus if any direction attains the global max margin, we can construct a
new global-max-margin direction which corresponds to a linear classifier. We can
show that every weight vector $\vw_k$ of this linear classifier must be in the
direction of $\vwopt$ or $-\vwopt$. Then the original classifier must also be
linear in the same direction. 

\subsection{Convergence to Global-Max-Margin Directions} \label{sec:sym-dynamics}

Though \Cref{thm:converge-kkt-margin} guarantees that gradient flow
directionally converges to a KKT-margin direction if the loss is optimized
successfully, we note that KKT-margin directions can be non-linear and have
complicated decision boundaries. See \Cref{fig:decision_boundary} (left) for an
example. Therefore, to establish the convergence to linear classifiers,
\Cref{thm:converge-kkt-margin} is not enough and we need a new analysis for the
trajectory of gradient flow.

We use initialization $\vw_k \simiid \Normal(\vzero, \sigmainit^2 \mI)$, $a_k
\simiid \Normal(0, \cAinit^2\sigmainit^2)$, where $\cAinit$ is a fixed constant
throughout this paper and $\sigmainit$ controls the initialization scale. We
call this distribution as $\vtheta_0 \sim \Dinit(\sigmainit)$. An alternative
way to generate this distribution is to first draw $\barvtheta_0 \sim
\Dinit(1)$, and then set $\vtheta_0 = \sigmainit \barvtheta_0$. With small
initialization, we can establish the following convergence result.
\begin{theorem}\label{thm:sym_main} Under \Cref{ass:lin,ass:sym} and certain
	regularity conditions (see \Cref{ass:dotp-mu-x-non-zero,ass:non-branching}
	below), consider gradient flow on a Leaky ReLU network with width $m \ge 2$
	and initialization $\vtheta_0 = \sigmainit \barvtheta_0$ where $\barvtheta_0
	\sim \Dinit(1)$. With probability $1 - 2^{-(m-1)}$ over the random draw of
	$\barvtheta_0$, if the initialization scale is sufficiently small, then
	gradient flow directionally converges and $\finftime(\vx) := \lim_{t \to
	+\infty} f_{\vtheta(t)/\normtwosm{\vtheta(t)}}(\vx)$ represents
	the max-margin linear classifier. That is,
	\[
	\Pr_{\barvtheta_0 \sim \Dinit(1)}\left[\exists \sigmainitmax > 0 \text{ s.t. } \forall \sigmainit < \sigmainitmax, \forall \vx \in \R^d, \finftime(\vx) = C\dotp{\vwopt}{\vx}\right] \ge 1 - 2^{-(m-1)},
	\]
	where $C := \frac{1+\alphaLK}{4}$ is a scaling factor.
\end{theorem}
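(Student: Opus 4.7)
The plan is a three-phase analysis of the gradient flow trajectory, combined with the characterization in \Cref{thm:maxmar-linear} to identify the limit classifier. The high-probability event is the complement of ``all $a_k$'s have the same sign,'' which under the Gaussian initialization has probability $2^{-(m-1)}$; on its complement there exist at least one $a_k>0$ and one $a_k<0$, both of which are needed to represent a linear function as a sum of Leaky ReLU units.

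First, I would analyze the early phase, while $\normtwosm{\vtheta(t)}$ is still $O(\sigmainit)$. There $q_i(\vtheta)\approx 0$, so $\ell'(q_i)\approx -\tfrac12$ and the dynamics simplifies to
\[
\dot{\vw}_k \;\approx\; \frac{a_k}{2n}\sum_{i=1}^{n} y_i\,\phi'(\vw_k^\top\vx_i)\,\vx_i,\qquad \dot{a}_k \;\approx\; \frac{1}{2n}\sum_{i=1}^{n} y_i\,\phi(\vw_k^\top\vx_i).
\]
Since $\phi'$ depends only on the sign of $\vw_k^\top\vx_i$, this is piecewise linear in $\vw_k$ coupled with a scalar growth of $a_k$. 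After rescaling time by $\log(1/\sigmainit)$, one can apply a ``power-iteration'' style analysis in the spirit of \citet{li2021towards}: each $(\vw_k,a_k)$ behaves like a rank-one growth system whose leading direction is some $\vmuplus$ for $a_k>0$ and $\vmusubt$ for $a_k<0$, determined by the active-set configuration of $\phi'$. Under \Cref{ass:sym}, the data operator commutes with $\vx\mapsto-\vx$, which forces $\vmuplus=-\vmusubt=\vwopt$. Hence at the end of the early phase every neuron satisfies $\vw_k/\normtwosm{\vw_k}\approx \sgn(a_k)\,\vwopt$, while $\normtwosm{\vtheta}$ is still small.

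Next, I would propagate this alignment up to any fixed time $T$ by a coupling argument. Once each $\vw_k$ is close to $\sgn(a_k)\vwopt$, every $\phi(\vw_k^\top\vx_i)$ is in a locally linear regime, so the $m$-neuron flow stays uniformly close to a reduced two-neuron flow (one aggregated ``positive'' and one ``negative'' neuron) by a Gr\"{o}nwall estimate on the alignment error. The reduced flow, being one-dimensional in function space, drives $\Loss$ below $\tfrac{\ln 2}{n}$ in finite time, so by shrinking $\sigmainit$ we can arrange that at some fixed $T$ the $m$-neuron flow is simultaneously (i) tightly aligned and (ii) already below the loss threshold of \Cref{thm:converge-kkt-margin}.

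The main obstacle is the third step: extending alignment from $t=T$ to $t\to\infty$. \Cref{thm:converge-kkt-margin} gives directional convergence but only to a KKT-margin direction, and such directions include nonlinear, suboptimal ones (\Cref{fig:decision_boundary}). Following the Kurdyka--\L{}ojasiewicz framework of \citet{ji2020directional}, I would show that $\gamma(\vtheta(t))$ is monotone nondecreasing after $T$, bound the total spherical arc length of $\vtheta(t)/\normtwosm{\vtheta(t)}$ on $\sphS^{D-1}$ past $T$ by a quantity that vanishes with the alignment error at $T$, and conclude that the limiting direction $\hatvthetainfdir$ inherits alignment: each $\hat{\vw}_k^{(\infty)}$ is parallel to $\sgn(\hat{a}_k^{(\infty)})\vwopt$. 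Using the identity $\phi(z)-\phi(-z)=(1+\alphaLK)z$ together with the balancedness $|\hat a_k^{(\infty)}|=\normtwosm{\hat{\vw}_k^{(\infty)}}$ (preserved by gradient flow on 2-homogeneous nets), a direct summation yields $\finftime(\vx)=\tfrac{1+\alphaLK}{4}\dotp{\vwopt}{\vx}$, matching the global-max-margin classifier of \Cref{thm:maxmar-linear}. The hardest part will be this final KL-based arc-length bound: the KKT set has spurious nonlinear branches, and I must exploit both the data symmetry and the Leaky ReLU structure to rule out migration to such branches from a neighborhood of the aligned direction.
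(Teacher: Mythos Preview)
Your three-phase skeleton matches the paper's, but there are two genuine gaps.

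\textbf{Wrong early-phase attractor.} Under the symmetry assumption the $G$-function is exactly linear, $G(\vw)=\langle\vw,\tilde\vmu\rangle$ with $\tilde\vmu=\tfrac{1+\alphaLK}{2}\vmu$ and $\vmu=\tfrac{1}{n}\sum_i y_i\vx_i$; the power-iteration matrix has top eigenvector $(\barvmu,1)$. So Phase~I aligns each neuron to $\pm\barvmu$, \emph{not} to $\pm\vwopt$. In general $\barvmu\neq\vwopt$ (the data mean direction is not the SVM direction). Your claim ``$\vmuplus=-\vmusubt=\vwopt$'' is incorrect, and this error is load-bearing: it lets you skip the actual hard step.

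\textbf{Missing two-neuron margin analysis.} Because alignment at the end of Phase~I/II is to $\barvmu$, you still need to show that the reduced two-neuron flow, starting from $(\brmsplus[\barb]\barvmu,\brmsminus[\barb]\barvmu,\brmsplus[\barb],\brmsminus[\barb])$, directionally converges to $\tfrac14(\vwopt,-\vwopt,1,-1)$. The paper does this as a separate theorem by (i) classifying all two-neuron KKT-margin directions with $a_1>0,a_2<0$ (proving they force $\vw_1=-\vw_2$ and hence the $\vwopt$ direction), and (ii) a trajectory argument ruling out the degenerate one-neuron KKT directions $\tfrac{1}{\sqrt2}(\vwopt,\vzero,1,0)$ and $\tfrac{1}{\sqrt2}(\vzero,-\vwopt,0,-1)$. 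Only after this do they invoke the KL/arc-length local-convergence result (your Phase~III) near the \emph{embedded} two-neuron limit $\pi_{\barvb}(\tfrac14(\vwopt,-\vwopt,1,-1))$, which is a local-max-margin direction. Your Phase~III plan applies the KL bound near a point that is still $\barvmu$-aligned, where the local-max-margin hypothesis is not available.

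A smaller point: the high-probability event is not ``both signs occur among the $\bara_k$'' but ``both signs occur among $\barb_k\propto\langle\barvw_k,\barvmu\rangle+\bara_k$,'' the coefficients on the top eigenvector. The probability $1-2^{-(m-1)}$ happens to be the same, but the correct event is what feeds the two-neuron embedding in Phase~II.
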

Combining \Cref{thm:maxmar-linear} and \Cref{thm:sym_main}, we can conclude that
gradient flow achieves the global max margin in our case.
\begin{corollary}
	In the settings of \Cref{thm:sym_main}, gradient flow on linearly separable
	and symmetric data directionally converges to the global-max-margin
	direction with probability $1 - 2^{-(m-1)}$.
\end{corollary}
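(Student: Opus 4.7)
The plan is to prove the corollary as essentially a direct composition of the two preceding theorems. Theorem~\ref{thm:sym_main} identifies the limiting classifier function of gradient flow, and Theorem~\ref{thm:maxmar-linear} characterizes which functions correspond to global-max-margin directions. Since both point to the same function $\vx \mapsto \frac{1+\alphaLK}{4}\dotp{\vwopt}{\vx}$, any limit direction of gradient flow must be a global-max-margin direction.

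More concretely, I would first condition on the event of probability at least $1 - 2^{-(m-1)}$ provided by Theorem~\ref{thm:sym_main}. On this event, for all sufficiently small $\sigmainit$, the normalized parameter $\hat{\vtheta}(t) := \vtheta(t)/\normtwosm{\vtheta(t)}$ converges to some limit $\hatvthetainfdir \in \sphS^{D-1}$, and the induced function $\finftime$ satisfies $\finftime(\vx) = f_{\hatvthetainfdir}(\vx) = C\dotp{\vwopt}{\vx}$ for every $\vx \in \R^d$, where $C = (1+\alphaLK)/4$.

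Next I would use Theorem~\ref{thm:maxmar-linear} to compare normalized margins. For any global-max-margin direction $\vthetaopt \in \sphS^{D-1}$, that theorem gives $f_{\vthetaopt}(\vx) = C\dotp{\vwopt}{\vx}$ for all $\vx \in \R^d$. Combining with the previous step, $f_{\hatvthetainfdir}(\vx_i) = f_{\vthetaopt}(\vx_i)$ for every training point $(\vx_i, y_i)$, and hence
\[
\gamma(\hatvthetainfdir) = \min_{i \in [n]} y_i f_{\hatvthetainfdir}(\vx_i) = \min_{i \in [n]} y_i f_{\vthetaopt}(\vx_i) = \gamma(\vthetaopt),
\]
where the first and last equalities use that both $\hatvthetainfdir$ and $\vthetaopt$ lie on the unit sphere. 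Thus $\hatvthetainfdir$ attains the global maximum of $\gamma$ on $\sphS^{D-1}$, i.e., it is a global-max-margin direction.

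There is no real obstacle here beyond bookkeeping: the two theorems are designed to dovetail, with the same constant $C = (1+\alphaLK)/4$ appearing in both. The only subtlety to keep in mind is that Theorem~\ref{thm:sym_main} describes the limit via its function values $\finftime(\vx)$ rather than directly asserting parameter convergence to a distinguished $\vthetaopt$; the normalized margin, however, depends only on these function values on the training set, so pointwise agreement of $f_{\hatvthetainfdir}$ and $f_{\vthetaopt}$ is exactly what is needed to conclude global optimality of margin.
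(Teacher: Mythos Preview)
Your proposal is correct and matches the paper's approach: the corollary is stated as an immediate consequence of combining \Cref{thm:maxmar-linear} and \Cref{thm:sym_main}, and you have filled in exactly the right bookkeeping, namely that the limit direction $\hatvthetainfdir$ and any global-max-margin direction $\vthetaopt$ induce the same function $\vx \mapsto \frac{1+\alphaLK}{4}\dotp{\vwopt}{\vx}$ on the training set and hence share the same normalized margin.
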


\subsection{Additional Notations and Assumptions}
Let $\vmu := \frac{1}{n}\sum_{i=1}^{n} y_i \vx_i$, which is non-zero since
$\dotp{\vmu}{\vw_*} = \frac{1}{n} \sum_{i \in [n]} y_i \vw_*^{\top} \vx_i \ge
1$. Let $\barvmu := \frac{\vmu}{\normtwosm{\vmu}}$. We use $\phitheta(\vtheta_0,
t) \in \R^d$ to the value of $\vtheta$ at time $t$ for $\vtheta(0) = \vtheta_0$.

We make the following technical assumption, which holds if we are allowed to add
a slight perturbation to the training set.
\begin{assumption} \label{ass:dotp-mu-x-non-zero}
	For all $i \in [n]$, $\dotp{\vmu}{\vx_i} \ne 0$.
\end{assumption}
Another technical issue we face is that the gradient flow may not be unique due
to non-smoothness. It is possible that $\phitheta(\vtheta_0, t)$ is not
well-defined as the solution of \eqref{eq:subgradient_flow} may not be unique.
See \Cref{sec:nonunique} for more discussions. In this case, we assign
$\phitheta(\vtheta_0, \,\cdot\,)$ to be an arbitrary gradient flow trajectory
starting from $\vtheta_0$. In the case where $\phitheta(\vtheta_0, t)$ has only
one possible value for all $t \ge 0$, we say that $\vtheta_0$ is a
\textit{non-branching starting point}. We assume the following technical
assumption.
\begin{assumption} \label{ass:non-branching} For any $m \ge 2$, there exist $r,
	\epsilon > 0$ such that $\vtheta$ is a non-branching starting point if its
	neurons can be partitioned into two groups: in the first group, $a_k =
	\normtwosm{\vw_k} \in (0, r)$ and all $\vw_k$ point to the same direction
	$\vvplus \in \sphS^{d-1}$ with $\normtwosm{\vvplus - \barvmu} \le \epsilon$;
	in the second group, $-a_k = \normtwosm{\vw_k} \in (0, r)$ and all $\vw_k$
	point to the same direction $\vvsubt \in \sphS^{d-1}$ with
	$\normtwosm{\vvsubt + \barvmu} \le \epsilon$.
\end{assumption}

\section{Proof Sketch for the Symmetric Case} \label{sec:sketch}

In this section, we provide a proof sketch for \Cref{thm:sym_main}. Our proof
uses a multi-phase analysis, which divides the training process into $3$ phases,
from small initialization to the final convergence. We will now elaborate the
analyses for them one by one.

\subsection{Phase I: Dynamics Near Zero} \label{sec:sketch-phase-i}

Gradient flow starts with small initialization. In Phase I, we analyze the
dynamics when gradient flow does not go far away from zero. Inspired by
\citet{li2021towards}, we relate such dynamics to power iterations and show that
every weight vector $\vw_k$ in the first layer moves towards the directions of
either $\barvmu$ or $-\barvmu$. To see this, the first step is to note that
$f_{\vtheta}(\vx_i) \approx 0$ when $\vtheta$ is close to $\vzero$. Applying
Taylor expansion on $\ell(y_i f_{\vtheta}(\vx_i))$,
\begin{equation} 
	\Loss(\vtheta) = \frac{1}{n} \sum_{i \in [n]} \ell(y_i f_{\vtheta}(\vx_i)) \approx \frac{1}{n}\sum_{i \in [n]} \left(\ell(0) + \ell'(0) y_i f_{\vtheta}(\vx_i)\right).
\end{equation}
Expanding $f_{\vtheta}(\vx_i)$ and reorganizing the terms, we have
\begin{align*}
	\Loss(\vtheta) \approx \frac{1}{n}\sum_{i \in [n]} \ell(0) + \frac{1}{n}\sum_{i \in [n]} \ell'(0) \sum_{k \in [m]} y_ia_k \phi(\vw_k^{\top} \vx_i) &= \ell(0) + \frac{\ell'(0)}{n} \sum_{k \in [m]}\sum_{i \in [n]} y_ia_k \phi(\vw_k^{\top} \vx_i) \\
	&= \ell(0) - \sum_{k \in [m]} a_kG(\vw_k),
\end{align*}
where $G$-function~\citep{maennel2018gradient} is defined below:
\[
	G(\vw) := \frac{-\ell'(0)}{n}\sum_{i \in [n]} y_i \phi(\vw^{\top} \vx_i) = \frac{1}{2n}\sum_{i \in [n]} y_i \phi(\vw^{\top} \vx_i).
\]
This means gradient flow optimizes each $-a_kG(\vw_k)$ separately near origin.
\begin{equation} \label{eq:phase-i-a-G-approx}
	\frac{\dd \vw_k}{\dd t} \approx a_k \cpartial G(\vw_k), \qquad
	\frac{\dd a_k}{\dd t} \approx G(\vw_k).
\end{equation}
In the case where \Cref{ass:sym} holds, we can pair each $\vx_i$ with $-\vx_i$
and use the identity $\phi(z) - \phi(-z) = \max\{z, \alphaLK z\} - \max\{-z,
-\alphaLK z\} = (1+\alphaLK)z$ to show that $G(\vw)$ is linear:
\begin{align*}
G(\vw) = \frac{1}{2n} \sum_{i \in [n/2]} \left(\phi(\vw^{\top} \vx_i) - \phi(-\vw^{\top} \vx_i)\right) &= \frac{1}{2n} \sum_{i \in [n/2]} (1 + \alphaLK) \vw^{\top}\vx_i = \dotp{\vw}{\scaledvmu},
\end{align*}
where $\scaledvmu := \frac{1+\alphaLK}{2}\vmu = \frac{1+\alphaLK}{2n} \sum_{i
\in [n]} y_i \vx_i$. Substituting this formula for $G$ into
\eqref{eq:phase-i-a-G-approx} reveals that the dynamics of two-layer neural nets
near zero has a close relationship to power iteration (or matrix exponentiation)
of a matrix $\mMmu \in \R^{(d+1) \times (d+1)}$ that only depends on data.
\[
	\frac{\dd}{\dd t} \begin{bmatrix}
	\vw_k \\
	a_k
	\end{bmatrix} \approx \mMmu\begin{bmatrix}
	\vw_k \\
	a_k
	\end{bmatrix}, \qquad \text{where} \qquad \mMmu := \begin{bmatrix}
	\vzero & \scaledvmu \\
	\scaledvmu^{\top} & 0
	\end{bmatrix}.
\]
Simple linear algebra shows that $\lambda_0 := \normtwosm{\scaledvmu}$,
$\frac{1}{\sqrt{2}}(\barvmu, 1) \in \R^{d+1}$ are the unique top eigenvalue and
eigenvector of $\mMmu$, which suggests that $(\vw_k(t), a_k(t)) \in \R^{d+1}$
aligns to this top eigenvector direction if the approximation
\eqref{eq:phase-i-a-G-approx} holds for a sufficiently long time. 
With small initialization, this can indeed be true and we obtain the following lemma.
\begin{definition}[M-norm] \label{def:norm-M} For parameter vector $\vtheta =
	(\vw_1, \dots, \vw_m, a_1, \dots, a_m)$, we define the M-norm to be
	$\normMsm{\vtheta} = \max_{k \in [m]} \left\{ \max\{\normtwosm{\vw_k},
	\abssm{a_k}\} \right\}$.
\end{definition}
\begin{lemma}\label{lm:phase1-diff} Let $r > 0$ be a small value. With
	probability $1$ over the random draw of $\barvtheta_0 = (\bar\vw_1, \dots,
	\barvw_m, \bara_1, \dots, \bara_m) \sim \Dinit(1)$, if we take $\sigmainit
	\le \frac{r^3}{\sqrt{m}\normM{\barvtheta_0}}$, then any neuron $(\vw_k,
	a_k)$ at time $T_1(r) := \frac{1}{\lambda_0} \ln
	\frac{r}{\sqrt{m}\sigmainit\normM{\barvtheta_0}}$ can be decomposed into
	\[
		\vw_k(T_1(r)) = r\barb_k \barvmu + \Delta\vw_k, \qquad a_k(T_1(r)) = r\barb_k + \Delta a_k,
	\]
	where $\barb_k := \frac{\dotpsm{\barvw_k}{\barvmu} + \bara_k}{2\sqrt{m} \normM{\barvtheta_0}}$ and the error term $\Delta \vtheta := (\Delta\vw_1, \dots, \Delta \vw_m, \Delta a_1, \dots, \Delta a_m)$ is bounded by $\normMsm{\Delta \vtheta} \le \frac{Cr^3}{\sqrt{m}}$ for some universal constant $C$.
\end{lemma}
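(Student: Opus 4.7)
The plan is to make rigorous the heuristic derivation already carried out in the sketch preceding the lemma: namely, that near the origin the gradient flow on each neuron is well-approximated by a linear ODE $\dxdt{}(\vw_k, a_k)^\top = \mMmu (\vw_k, a_k)^\top$, and that the resulting dynamics is exactly power iteration on the data-dependent matrix $\mMmu$. Under this linear surrogate, each neuron gets pushed toward the top eigenvector of $\mMmu$, and after time $T_1(r)$ the top-eigenvector component grows to magnitude $\Theta(r)$ while all other components remain $O(\sigmainit)$ or smaller. The actual nonlinear trajectory will then be shown to follow this linear trajectory up to an $O(r^3/\sqrt{m})$ error on the interval $[0, T_1(r)]$.

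\textbf{Linearization and eigenanalysis.} First I would use the symmetry assumption and the identity $\phi(z) - \phi(-z) = (1+\alphaLK) z$ to replace $G(\vw)$ by the linear function $\dotp{\vw}{\scaledvmu}$ exactly (not only as an approximation), so that the only source of error in the ODE comes from the Taylor remainder of $\ell$ at $0$: writing $\ell(q) = \ell(0) + \ell'(0) q + R(q)$ with $\abssm{R(q)} \le C q^2$ for $\abssm{q}$ bounded, the cubic homogeneity of $q_i(\vtheta)$ in $(\vw_k, a_k)$ for a single neuron (actually quadratic, since $f_\vtheta$ is $2$-homogeneous) yields a remainder in the loss whose gradient is $O(\normM{\vtheta}^3)$. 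Next I compute the eigen-decomposition of $\mMmu$ explicitly: it has rank $2$, with top eigenpair $(\lambda_0, \frac{1}{\sqrt{2}}(\barvmu, 1))$, bottom eigenpair $(-\lambda_0, \frac{1}{\sqrt{2}}(-\barvmu, 1))$, and a $(d-1)$-dimensional null space consisting of vectors whose first $d$ coordinates are orthogonal to $\barvmu$ and whose last coordinate is $0$. Solving $\dxdt{}(\vw_k^{\mathrm{lin}}, a_k^{\mathrm{lin}}) = \mMmu (\vw_k^{\mathrm{lin}}, a_k^{\mathrm{lin}})$ with initial value $\sigmainit(\barvw_k, \bara_k)$ and evaluating at $T_1(r) = \lambda_0^{-1} \ln(r/(\sqrt{m}\sigmainit \normM{\barvtheta_0}))$ gives the top-eigenvector coefficient $\tfrac{\sigmainit}{\sqrt{2}} (\dotpsm{\barvw_k}{\barvmu} + \bara_k) \cdot e^{\lambda_0 T_1(r)}$; projecting back onto $(\vw, a)$ coordinates produces exactly the leading term $r \barb_k (\barvmu, 1)$ claimed in the lemma, while the bottom-eigenvector contribution is at most $\sigmainit^2 \lesssim r^3/\sqrt{m}$ by the choice of $\sigmainit$, and the null-space contribution stays $O(\sigmainit)$, which is also absorbed into the error.

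\textbf{Error control via Grönwall.} The main technical step, and the main obstacle, is coupling the true trajectory to the linear one over an interval on which both grow exponentially. Let $\Delta\vtheta(t) := \vtheta(t) - \vtheta^{\mathrm{lin}}(t)$. The equation for $\Delta\vtheta$ reads $\dxdt{\Delta\vtheta} \in -\cpartial \Loss(\vtheta) - (-\nabla \LossF^{\mathrm{lin}}(\vtheta^{\mathrm{lin}}))$, where $\LossF^{\mathrm{lin}}(\vtheta) = \ell(0) - \sum_k a_k \dotpsm{\vw_k}{\scaledvmu}$ is the linearized loss. Decomposing into (a)~the Lipschitz part, which contributes $\normM{\mMmu} \normM{\Delta\vtheta}$ to $\dxdt{}\normM{\Delta\vtheta}$, and (b)~the Taylor-remainder part, which contributes $O(\normM{\vtheta}^3)$ per neuron and hence $O(\sqrt{m}\normM{\vtheta(t)}^3)$ in total (because $\normtwosm{\vtheta} \le \sqrt{m}\normM{\vtheta}$ enters the cubic loss), Grönwall's inequality gives $\normM{\Delta\vtheta(t)} \lesssim \sqrt{m} \int_0^t e^{\lambda_0 (t-s)} \normM{\vtheta^{\mathrm{lin}}(s)}^3 \dd s$; since $\normM{\vtheta^{\mathrm{lin}}(s)} \lesssim \sqrt{m} \sigmainit \normM{\barvtheta_0} e^{\lambda_0 s}$ and the integral is dominated by its upper endpoint, evaluating at $t = T_1(r)$ yields $\normM{\Delta\vtheta(T_1(r))} \lesssim r^3/\sqrt{m}$, matching the claim. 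One has to run this Grönwall argument in a bootstrap fashion (enforcing $\normM{\vtheta(t)} \le 2\normM{\vtheta^{\mathrm{lin}}(t)}$ throughout) to legitimize the a priori bound on $\normM{\vtheta(t)}$; this is the delicate part of the proof.

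\textbf{Probability-one and well-definedness.} Finally, the probability-one clause is immediate: for $\barvtheta_0 \sim \Dinit(1)$ the quantities $\dotpsm{\barvw_k}{\barvmu} + \bara_k$ are continuous in $\barvtheta_0$ and almost surely nonzero, so the decomposition $\barb_k$ is well-defined, and $\normM{\barvtheta_0}$ is almost surely positive, so $T_1(r)$ makes sense. Gradient flow is unique on $(0, T_1(r))$ because the linearized region stays inside the smooth regime of the Leaky ReLU with high probability, which can be verified from the explicit form of the linear solution together with \Cref{ass:dotp-mu-x-non-zero}; hence $\phitheta(\vtheta_0, T_1(r))$ is unambiguous. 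This closes the proof of \Cref{lm:phase1-diff}.
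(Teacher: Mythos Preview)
Your approach is essentially the paper's: linearize to $\tildeLoss(\vtheta) = \ell(0) - \sum_k a_k\dotpsm{\vw_k}{\scaledvmu}$, solve the linear ODE via the eigendecomposition of $\mMmu$, and couple the true trajectory to the linear one by a bootstrap Gr\"onwall argument (which the paper packages separately as \Cref{lm:phase1-main}). Two bookkeeping corrections: the Taylor-remainder gradient is bounded by $m\normMsm{\vtheta}^3$ in M-norm, not $\sqrt{m}\normMsm{\vtheta}^3$ (the $m$ comes from $\abssm{f_\vtheta(\vx_i)} \le m\normMsm{\vtheta}^2$), and $\normMsm{\vtheta^{\mathrm{lin}}(s)} \le \sigmainit\normMsm{\barvtheta_0}\, e^{\lambda_0 s}$ carries no $\sqrt{m}$ prefactor; with the correct factors the Gr\"onwall output is $\tfrac{4m(\sigmainit\normMsm{\barvtheta_0})^3}{\lambda_0}\, e^{3\lambda_0 T_1(r)} = \tfrac{4r^3}{\lambda_0\sqrt{m}}$, which matches the claim (your stated factors would not). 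Finally, your uniqueness argument for gradient flow on $[0,T_1(r)]$ is both unnecessary---the bound holds for any trajectory satisfying the differential inclusion, which is how the paper states and proves it---and not obviously valid, since $\vw_k(t)$ may well cross the nondifferentiable hyperplanes $\{\dotpsm{\vw}{\vx_i}=0\}$ while rotating from its random initial direction toward $\pm\barvmu$.
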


\subsection{Phase II: Near-Two-Neuron Dynamics}

By \Cref{lm:phase1-diff}, we know that at time $T_1(r)$ we have $\vw_k(T_1(r))
\approx r\barb_k \barvmu$ and $a_k(T_1(r)) \approx r\barb_k$, where $\barvb \in
\R^{d}$ is some fixed vector. This motivates us to couple the training dynamics
of $\vtheta(t) = (\vw_1(t), \dots, \vw_m(t), a_1(t), \dots, a_m(t))$ after the
time $T_1(r)$ with another gradient flow starting from the point $(r\barb_1
\barvmu, \dots, r\barb_m \barvmu, r\barb_1, \dots, r\barb_m)$. Interestingly,
the latter dynamic can be seen as a dynamic of two neurons ``embedded''
into the $m$-neuron neural net, and we will show that $\vtheta(t)$ is close to this
``embedded'' two-neuron dynamic for a long time. Now we first
introduce our idea of embedding a two-neuron network into an $m$-neuron network. 

\myparagraph{Embedding.} For any $\vb \in \R^m$, we say that $\vb$ is a
\textit{good embedding vector} if it has at least one positive entry and one
negative entry, and all the entries are non-zero. For a good embedding vector
$\vb$, we use $\brmsplus := \sqrt{\sum_{j\in[m]} \onec{b_j>0}b_j^2}$ and
$\brmsminus := -\sqrt{\sum_{j\in[m]} \onec{b_j<0}b_j^2}$ to denote the
root-sum-squared of the positive entries and the negative root-sum-squared of
the negative entries. For parameter $\hatvtheta := (\hat{\vw}_1, \hat{\vw}_2,
\hat{a}_1, \hat{a}_2)$ of a two-neuron neural net with $\hat{a}_1 > 0$ and
$\hat{a}_2 < 0$, we define the \textit{embedding} from two-neuron into
$m$-neuron neural nets as $\pi_{\vb}(\hat{\vw}_1, \hat{\vw}_2, \hat{a}_1,
\hat{a}_2) = (\vw_1, \dots, \vw_m, a_1, \dots, a_m)$, where
\[
a_k   = \begin{cases}
\frac{b_k}{\brmsplus} \hat{a}_1, & \textrm{if } b_k > 0 \\
\frac{b_k}{\brmsminus} \hat{a}_2,& \textrm{if } b_k < 0 \\
\end{cases}, \qquad
\vw_k = \begin{cases}
\frac{b_k}{\brmsplus} \hat{\vw}_1, & \textrm{if } b_k > 0 \\
\frac{b_k}{\brmsminus} \hat{\vw}_2, &\textrm{if } b_k < 0 \\
\end{cases}.
\]
It is easy to check that $f_{\hatvtheta}(\vx) = f_{\pi_{\vb}(\hatvtheta)}(\vx)$
by the homogeneity of the activation ($\phi(cz) = c\phi(z)$ for $c > 0$):
\begin{align*}
	f_{\pi_{\vb}(\hatvtheta)}(\vx) &= \sum_{b_k > 0} a_k \phi(\vw_k^\top \vx) + \sum_{b_k < 0} a_k \phi(\vw_k^\top \vx) \\
	&= \sum_{b_k > 0} \frac{b_k^2}{\brmsplus^2} \hat{a}_1 \phi(\hat{\vw}_1^{\top} \vx) + \sum_{b_k < 0} \frac{b_k^2}{\brmsminus^2} \hat{a}_2 \phi(\hat{\vw}_2^{\top} \vx) = \hat{a}_1 \phi(\hat{\vw}_1^{\top} \vx) + \hat{a}_2 \phi(\hat{\vw}_2^{\top} \vx) = f_{\hatvtheta}(\vx).
\end{align*}
Moreover, by taking the chain rule, we can obtain the following lemma showing
that the trajectories starting from $\hatvtheta$ and $\pi_{\vb}(\hatvtheta)$ are
essentially the same.
\begin{lemma} \label{lm:two-neuron-embedding} Given $\hatvtheta := (\hat{\vw}_1,
	\hat{\vw}_2, \hat{a}_1, \hat{a}_2)$ with $\hat{a}_1 > 0$ and $\hat{a}_2 <
	0$, if both $\hatvtheta$ and $\pi_{\vb}(\hatvtheta)$ are non-branching
	starting points, then $\phitheta(\pi_{\vb}(\hatvtheta), t) =
	\pi_{\vb}(\phitheta(\hatvtheta, t))$ for all $t \ge 0$.
\end{lemma}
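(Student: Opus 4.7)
I would define the candidate curve $\tilde{\vtheta}(t) := \pi_{\vb}(\phitheta(\hatvtheta, t))$ on the $m$-neuron parameter space, show it satisfies the gradient-flow differential inclusion starting from $\pi_{\vb}(\hatvtheta)$, and then invoke the uniqueness guaranteed by the non-branching assumption at $\pi_{\vb}(\hatvtheta)$ to conclude. Since $\hatvtheta$ is itself non-branching, the reference curve $\phitheta(\hatvtheta,\cdot)$ is uniquely defined, so $\tilde{\vtheta}(\cdot)$ is well-defined with $\tilde{\vtheta}(0) = \pi_{\vb}(\hatvtheta)$.

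The key technical step will be to verify $\dot{\tilde{\vtheta}}(t) \in -\cpartial \Loss(\tilde{\vtheta}(t))$ for a.e.\ $t$. Because $\pi_{\vb}$ is a fixed linear map, differentiation commutes with it and $\dot{\tilde{\vtheta}} = \pi_{\vb}(\dot{\hatvtheta})$; explicitly, for $b_k>0$ one has $\dot{\tilde{\vw}}_k = \tfrac{b_k}{\brmsplus}\dot{\hat{\vw}}_1$ and $\dot{\tilde a}_k = \tfrac{b_k}{\brmsplus}\dot{\hat a}_1$, and analogously with $\brmsminus,\hat{\vw}_2,\hat a_2$ for $b_k<0$. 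On the other side, I would pick any $\hat\xi \in \cpartial \Loss(\hatvtheta)$ realized via slope selections $s_i^{(j)} \in \cpartial\phi(\hat{\vw}_j^{\top}\vx_i)$ for $j\in\{1,2\}$ and $i\in[n]$. For each $k$ with $b_k>0$, positive homogeneity of $\phi$ gives $\tilde{\vw}_k^{\top}\vx_i = \tfrac{b_k}{\brmsplus}\hat{\vw}_1^{\top}\vx_i$, so $\cpartial\phi$ at the two arguments coincides (single-valued with the same value when nonzero, both equal to $[\alphaLK,1]$ at zero); setting $s_i^{(k)} := s_i^{(1)}$ for all $k$ with $b_k > 0$ (and $s_i^{(k)} := s_i^{(2)}$ for $b_k<0$) and substituting into the chain-rule formula for $\cpartial \Loss(\tilde{\vtheta})$, together with $f_{\tilde{\vtheta}}(\vx_i) = f_{\hatvtheta}(\vx_i)$, will produce an element $\xi \in \cpartial\Loss(\tilde{\vtheta})$ equal to $\pi_{\vb}(\hat\xi)$. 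Hence $\pi_{\vb}(\cpartial\Loss(\hatvtheta)) \subseteq \cpartial\Loss(\tilde{\vtheta})$, which upgrades the two-neuron differential inclusion to the $m$-neuron one.

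Once this inclusion is in hand, $\tilde{\vtheta}(\cdot)$ is a valid gradient flow trajectory starting at $\pi_{\vb}(\hatvtheta)$, and the non-branching assumption on $\pi_{\vb}(\hatvtheta)$ will force $\phitheta(\pi_{\vb}(\hatvtheta),t) = \tilde{\vtheta}(t) = \pi_{\vb}(\phitheta(\hatvtheta,t))$ for all $t\ge 0$. The main obstacle will be the nonsmoothness of Leaky ReLU at zero: a priori each neuron in the $m$-neuron network may choose its own Clarke slope at an activation boundary, and one must rule out the possibility that such independent selections break the embedding identity. The observation that resolves this is that $\pi_{\vb}$ scales every neuron within a group by a single positive factor, so all $\tilde{\vw}_k$ with $b_k>0$ hit their activation boundary on a given data point simultaneously with $\hat{\vw}_1$; a single common slope therefore propagates coherently across the whole group, and the resulting $m$-neuron subgradient is exactly $\pi_{\vb}$ of the two-neuron one.
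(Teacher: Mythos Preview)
Your proposal is correct and follows essentially the same route as the paper: define $\tilde\vtheta(t)=\pi_{\vb}(\phitheta(\hatvtheta,t))$, show it is a valid gradient-flow trajectory for the $m$-neuron loss via the inclusion $\pi_{\vb}(\cpartial\Loss(\hatvtheta))\subseteq\cpartial\Loss(\pi_{\vb}(\hatvtheta))$, and then invoke the non-branching assumption at $\pi_{\vb}(\hatvtheta)$ for uniqueness. The one place where the paper's execution is a bit tighter is the nonsmooth step: rather than arguing via slope selections (where you would still need to justify that the synchronized-slope element actually lies in $\cpartial\Loss(\tilde\vtheta)$ and not merely in the chain-rule superset), the paper proves the inclusion directly from the limit definition of the Clarke subdifferential---at differentiable $\hatvtheta'\in\OmegaS$ one has $\nabla\Loss(\pi_{\vb}(\hatvtheta'))=\pi_{\vb}(\nabla\Loss(\hatvtheta'))$, and linearity of $\pi_{\vb}$ lets it commute with limits and convex hulls.
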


\myparagraph{Approximate Embedding.} Back to our analysis for Phase II, $\barvb$
is a good embedding vector with high probability (see lemma below). Let
$\hatvtheta := (\brmsplus[\barb], \brmsplus[\barb]\barvmu, \brmsminus[\barb],
\brmsminus[\barb]\barvmu)$. By \Cref{lm:phase1-diff}, $\pi_{\barvb}(r\hatvtheta)
= (r\barb_1 \barvmu, \dots, r\barb_m \barvmu, r\barb_1, \dots, r\barb_m) \approx
\vtheta(T_1(r))$, which means $r\hatvtheta \to \vtheta(T_1(r))$ is approximately
an embedding. Suppose that the approximation happens to be exact, namely
$\pi_{\barvb}(r\hatvtheta) = \vtheta(T_1(r))$, then $\vtheta(T_1(r) + t) =
\pi_{\barvb}(\phitheta(r\hatvtheta, t))$ by \Cref{lm:two-neuron-embedding}.
Inspired by this, we consider the case where $\sigmainit \to 0, r \to 0$ so that
the approximate embedding is infinitely close to the exact one, and prove the
following lemma. We shift the training time by $T_2(r)$ to avoid trivial limits
(such as $\vzero$).
\begin{lemma} \label{lm:phase2-main} Follow the notations in
	\Cref{lm:phase1-diff} and take $\sigmainit \le
	\frac{r^3}{\sqrt{m}\normM{\barvtheta_0}}$. Let $T_2(r) :=
	\frac{1}{\lambda_0}\ln \frac{1}{r}$, then $T_{12} := T_1(r) + T_2(r) =
	\frac{1}{\lambda_0} \ln \frac{1}{\sqrt{m} \sigmainit \normM{\barvtheta_0}}$
	regardless the choice of $r$. For width $m \ge 2$, with probability
	$1-2^{-(m-1)}$ over the random draw of $\barvtheta_0 \sim \Dinit(1)$, the
	vector $\barvb \in \R^m$ is a good embedding vector, and for the two-neuron
	dynamics starting with rescaled initialization in the direction of
	$\hatvtheta := (\brmsplus[\barb], \brmsplus[\barb]\barvmu,
	\brmsminus[\barb], \brmsminus[\barb]\barvmu)$, the following limit exists
	for all $t$,
	\begin{equation}
		\tildevtheta(t) := \lim_{r \to 0} \phitheta\left(r\hatvtheta, T_2(r) + t\right) \ne \vzero,
	\end{equation}
	and moreover, for the $m$-neuron dynamics of $\vtheta(t)$, the following holds for all $t$,
	\begin{equation} \label{eq:phase2-main-sigma-to-zero}
		\lim_{\sigmainit \to 0} \vtheta\left(T_{12} + t\right) = \pi_{\barvb}(\tildevtheta(t)).
	\end{equation}
\end{lemma}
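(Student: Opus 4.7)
The three claims to establish are (i) $\barvb$ is a good embedding vector with probability $1-2^{-(m-1)}$, (ii) the two-neuron limit $\tildevtheta(t)$ exists and is non-zero for every $t$, and (iii) the $m$-neuron trajectory $\vtheta(T_{12}+t)$ converges to $\pi_{\barvb}(\tildevtheta(t))$ as $\sigmainit\to 0$. For (i), $\barb_k=(\dotpsm{\barvw_k}{\barvmu}+\bara_k)/(2\sqrt{m}\normMsm{\barvtheta_0})$ is a continuous random variable, hence non-zero a.s., and since the joint distribution of $(\barvw_k,\bara_k)$ is symmetric under negation, conditional on $\normMsm{\barvtheta_0}$ the signs $\sign(\barb_k)$ are i.i.d.\ uniform on $\{\pm 1\}$. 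The probability that all $m$ signs agree is $2\cdot 2^{-m}=2^{-(m-1)}$, and on the complement $\barvb$ has at least one positive and one negative entry.

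For (ii), I would reuse the linearization underpinning \Cref{lm:phase1-diff}. By construction each of the two neurons of $\hatvtheta$ lies in the top eigenspace of $\mMmu$ (with opposite signs of the second-layer weight), so the linearized flow merely rescales $r\hatvtheta \mapsto re^{\lambda_0 s}\hatvtheta$, and at $s=T_2(r)$ it equals $\hatvtheta$ exactly. A Gronwall-type bound on the $O(\normMsm{\vtheta}^2)$ Taylor remainder of $\Loss$ near the origin (mirroring the proof of \Cref{lm:phase1-diff} with $m=2$) then yields $\phitheta(r\hatvtheta,T_2(r))=\hatvtheta+O(r^2)$. Past this point the trajectory is on an $O(1)$ scale where \Cref{ass:non-branching} makes the Clarke subgradient flow from $\hatvtheta$ unique and continuously dependent on its starting point, so $\tildevtheta(t):=\lim_{r\to 0}\phitheta(r\hatvtheta,T_2(r)+t)$ exists for every $t$ and coincides with the subgradient flow initiated at $\hatvtheta$ evaluated at time $t$. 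Non-vanishing follows because $\vzero$ is an equilibrium of $\cpartial\Loss$ (by $2$-homogeneity) while $\hatvtheta\ne\vzero$, so uniqueness forbids the trajectory from reaching $\vzero$ in finite time.

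For (iii), combine \Cref{lm:phase1-diff} and \Cref{lm:two-neuron-embedding}. Phase I gives $\vtheta(T_1(r))=\pi_{\barvb}(r\hatvtheta)+O(r^3/\sqrt{m})$ in the M-norm, and applying \Cref{lm:two-neuron-embedding} to the exact embedding yields
\[
\phitheta(\pi_{\barvb}(r\hatvtheta),T_2(r)+t) = \pi_{\barvb}\bigl(\phitheta(r\hatvtheta,T_2(r)+t)\bigr) \xrightarrow[r\to 0]{} \pi_{\barvb}(\tildevtheta(t)).
\]
It remains to absorb the $O(r^3/\sqrt{m})$ discrepancy between $\vtheta(T_1(r))$ and $\pi_{\barvb}(r\hatvtheta)$ at the start of Phase II. Choosing $\sigmainit$ to decay faster than $r^3$ (equivalently, letting $r$ shrink more slowly than $\sigmainit^{1/3}$) suppresses the Phase I error relative to the $O(r)$ scale of the linearized Phase II dynamics; a further near-origin Gronwall step propagates this smallness up to time $T_{12}$, after which the $O(1)$-scale continuous-dependence argument from (ii) gives the same limit for $\vtheta(T_{12}+t)$ as for $\pi_{\barvb}(\phitheta(r\hatvtheta,T_2(r)+t))$, establishing \eqref{eq:phase2-main-sigma-to-zero}.

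\textbf{Main obstacle.} The hardest step is justifying continuous dependence of the Clarke subgradient flow on initial conditions near a non-branching point. Between ``kink times'' (where some $\vw_k^\top\vx_i$ crosses zero) the flow is a smooth ODE and classical Gronwall applies directly; the difficulty is showing that the sequence of kink times and the induced activation-pattern itinerary is stable under small perturbations of the initial condition, so that perturbed trajectories track the reference trajectory segment by segment. This is precisely where \Cref{ass:dotp-mu-x-non-zero} (ruling out the degenerate alignment $\dotpsm{\vmu}{\vx_i}=0$) and \Cref{ass:non-branching} (guaranteeing transversal kink crossings near the reference trajectory) are needed; without them one could imagine perturbations that slide along a non-smooth ridge and pick out a different selection from the Clarke subdifferential, spoiling the limit.
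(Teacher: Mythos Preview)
Your part (i) is correct and matches the paper's argument.

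Part (ii) contains a genuine error. The claim $\phitheta(r\hatvtheta,T_2(r))=\hatvtheta+O(r^2)$ is false: the deviation from the linearized flow is $O(1)$, not $O(r^2)$. If you actually run the Gr\"onwall bound from \Cref{lm:phase1-diff}, the cubic nonlinearity contributes an error of order $\normMsm{\vtheta_0}^3\exp(3\lambda_0 T_2(r))=r^3\normMsm{\hatvtheta}^3\cdot r^{-3}=\normMsm{\hatvtheta}^3$, which does not vanish as $r\to 0$. (A one-dimensional sanity check: for $\dot\theta=\lambda\theta-\theta^3$ started at $r$, one computes $\lim_{r\to 0}\theta(T_2(r))=(1+1/\lambda)^{-1/2}\ne 1$.) Consequently $\tildevtheta(0)\ne\hatvtheta$ in general, and $\tildevtheta(t)$ is \emph{not} the subgradient flow launched from $\hatvtheta$; it is a genuinely new trajectory that must be constructed as a limit. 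Your non-vanishing argument, which relied on this identification, therefore also fails.

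The paper circumvents this by never claiming convergence to $\hatvtheta$. Instead it fixes a specific \emph{negative} time $t_0$ (chosen so that $e^{3\lambda_0 t_0}$ is small enough to keep the activation pattern frozen) and, for $t\le t_0$, proves directly that $r\mapsto\phitheta(r\hatvtheta,T_2(r)+t)$ is Cauchy: given $r'<r$, one observes that $\phitheta(r'\hatvtheta,T_2(r')-T_2(r))$ lies within $O(r^3)$ of $r\hatvtheta$ and then applies a perturbation lemma comparing two nearby trajectories over the window $[0,T_2(r)+t_0]$. This yields existence of $\tildevtheta(t)$ together with the quantitative bound $\normMsm{\tildevtheta(t)-e^{\lambda_0 t}\hatvtheta}=O(e^{3\lambda_0 t})$, which is what the paper uses (in place of your equilibrium argument) to verify $\tildevtheta(t)\ne\vzero$ via \Cref{thm:loss-convergence}. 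For $t>t_0$ the paper extends by a compactness argument: the trajectories $\vtheta_i(T_{12}+\cdot)$ are uniformly Lipschitz on compact time intervals, so by Arzel\`a--Ascoli a subsequence converges uniformly, and one checks (via upper-semicontinuity of $\cpartial\Loss$) that the limit is itself a valid gradient-flow trajectory from $\pi_{\barvb}(\tildevtheta(t_0))$; the non-branching assumption then pins it down uniquely. This is different from your proposed ``kink-time stability'' route and does not require tracking activation-pattern itineraries.
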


\subsection{Phase III: Dynamics near Global-Max-Margin Direction}

With some efforts, we have the following characterization for the two-neuron dynamics.
\begin{theorem} \label{thm:two-neuron-max-margin}
	For $m = 2$, if initially $a_1 = \normtwosm{\vw_1}$, $a_2 = -\normtwosm{\vw_2}$, $\dotp{\vw_1}{\vwopt} > 0$ and $\dotp{\vw_2}{\vwopt} < 0$, then $\vtheta(t)$ directionally converges to the following global-max-margin direction,
	\[
	\lim_{t \to +\infty} \frac{\vtheta(t)}{\normtwosm{\vtheta(t)}} = \frac{1}{4}(\vwopt, -\vwopt, 1, -1),
	\]
	where $\vwopt$ is the max-margin linear separator.
\end{theorem}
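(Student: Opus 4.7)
My plan is a three-part analysis: establish invariants of the flow, invoke the general KKT-margin convergence result (Theorem~\ref{thm:converge-kkt-margin}), and identify the limit via a sharp upper bound on the normalized margin tailored to symmetric data.

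First, I would extract two invariants. The $2$-homogeneity of $f_\vtheta$ gives the standard conservation law $\frac{\dd}{\dd t}(a_k^2-\normtwosm{\vw_k}^2)=0$, so the initial balance is preserved: $a_1(t)=\normtwosm{\vw_1(t)}$ and $a_2(t)=-\normtwosm{\vw_2(t)}$ for all $t\ge 0$. Differentiating $\dotp{\vw_1}{\vwopt}$ along the flow yields
\[
\frac{\dd}{\dd t}\dotp{\vw_1}{\vwopt}\;=\;a_1(t)\sum_i g_i(t)\,y_i\,\phi'(\vw_1^\top \vx_i)\,\dotp{\vx_i}{\vwopt}\;\ge\;\alphaLK\gammalin\,a_1(t)\sum_i g_i(t)\;>\;0,
\]
using $g_i:=-\ell'(y_if_\vtheta(\vx_i))/n>0$, $\phi'(\cdot)\ge\alphaLK$, and $y_i\dotp{\vx_i}{\vwopt}\ge \gammalin:=\min_j y_j\dotp{\vwopt}{\vx_j}>0$ from linear separability. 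Hence $\dotp{\vw_1(t)}{\vwopt}>0$ and, symmetrically, $\dotp{\vw_2(t)}{\vwopt}<0$ for all $t$. Standard arguments for homogeneous nets then give $\qmin(\vtheta(t))\to+\infty$ and $\Loss(\vtheta(t))\to 0$, so Theorem~\ref{thm:converge-kkt-margin} yields directional convergence $\vtheta(t)/\normtwosm{\vtheta(t)}\to\vthetaopt=(\vw_1^*,\vw_2^*,a_1^*,a_2^*)\in\sphS^{D-1}$ to a KKT-margin direction, inheriting balance and (weak) sign constraints by continuity.

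Next I would derive a sharp upper bound on normalized margin using the data symmetry. Pairing $(\vx_i,y_i)$ with $(-\vx_i,-y_i)$ and invoking the Leaky ReLU identity $\phi(z)-\phi(-z)=(1+\alphaLK)z$ gives $q_i(\vtheta)+q_{i+n/2}(\vtheta)=(1+\alphaLK)\,y_i\dotp{\vu}{\vx_i}$ where $\vu:=a_1\vw_1+a_2\vw_2$. Hence $\qmin(\vtheta)\le \tfrac{1+\alphaLK}{2}\gammalin\normtwosm{\vu}$; AM-GM further gives $\normtwosm{\vu}\le \tfrac{1}{2}\normtwosm{\vtheta}^2$, yielding the universal bound $\gamma(\vtheta)\le \tfrac{1+\alphaLK}{4}\gammalin$. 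Equality requires (i) $\vu\parallel\vwopt$ with $q_{i^*}=q_{i^*+n/2}$ at an SVM support vector $i^*$, (ii) balance $|a_k|=\normtwosm{\vw_k}$, and (iii) $a_1\vw_1$ and $a_2\vw_2$ positively collinear; the target $\tfrac{1}{2}(\vwopt,-\vwopt,1,-1)$ saturates all three and thus realizes the global maximum normalized margin.

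Finally, I would identify the limit. Invoking the monotonicity of the smoothed normalized margin along gradient flow for homogeneous networks (\citet{lyu2020gradient,ji2020directional}), the limit value $\gamma(\vthetaopt)$ equals $\lim_{t\to\infty}\gamma(\vtheta(t))$; using the sign/balance constraints from the first step, I would argue that this limit attains the global maximum $\tfrac{1+\alphaLK}{4}\gammalin$. The equality conditions above then pin down the limit uniquely: (ii) and (iii) together with opposite-sign $a_k$ force $\vw_2^*=-c\vw_1^*$ and $a_2^*=-ca_1^*$, (i) gives $\vw_1^*\parallel\vwopt$, and the pair-matching $q_{i^*}=q_{i^*+n/2}$ reduces to $1+c^2\alphaLK=\alphaLK+c^2$, forcing $c=1$; combined with $\normtwosm{\vthetaopt}=1$, this yields $\vthetaopt=\tfrac{1}{2}(\vwopt,-\vwopt,1,-1)$ (the factor $\tfrac{1}{4}$ in the theorem statement appears to be a typo, as the right-hand side must be unit-norm). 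The hardest step is the claim that $\gamma(\vthetaopt)$ achieves the global maximum rather than settling at a smaller KKT value compatible with our constraints: monotonicity alone only guarantees the limit is some KKT value. My plan for plugging this gap is to leverage the unbounded growth of $\dotp{\vw_1(t)}{\vwopt}$ and $\dotp{\vw_2(t)}{-\vwopt}$ to drive the trajectory into the neural agreement regime $\sgn(\vw_1^\top\vx_i)=y_i$, $\sgn(\vw_2^\top\vx_i)=-y_i$ in finite time, where the KKT stationarity equations simplify so that each $\vw_k^*/a_k^*$ lies in the conic hull of the positive-class data weighted by $\phi'$-values at $\vthetaopt$, and together with the data symmetry and the sign/balance constraints this rules out any suboptimal KKT configuration.
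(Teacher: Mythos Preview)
Your first two parts are sound and line up with what the paper does: the balance and sign invariants are exactly Corollary~\ref{cor:weight-equal} and Lemma~\ref{lm:dotp-vw-vwopt}, and your symmetric-pairing margin bound is the content of Theorem~\ref{thm:maxmar-linear}. You are also right that the stated factor $\tfrac14$ is a typo for $\tfrac12$.

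The genuine gap is the one you flag yourself: ruling out that the limit KKT direction is one of the two degenerate ones $\tfrac{1}{\sqrt 2}(\vwopt,\vzero,1,0)$ or $\tfrac{1}{\sqrt 2}(\vzero,-\vwopt,0,-1)$. Your proposed fix does not close it. First, you only show that $t\mapsto\dotp{\vw_2(t)}{-\vwopt}$ is \emph{increasing}; your lower bound $\alphaLK\gammalin\,|a_2(t)|\sum_i g_i(t)$ involves $|a_2|$ and the loss-weights $g_i$, both of which may shrink once the loss is small, so ``unbounded growth'' is not established. Second, even if $\dotp{\vw_2(t)}{-\vwopt}\to\infty$, this does not control the \emph{direction} $\vw_2/\normtwosm{\vw_2}$, so the Neural Agreement Regime $\sgn(\vw_2^\top\vx_i)=-y_i$ need not hold. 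Third, and most importantly, even if NAR held along the trajectory, the normalized limit can still have $\vw_2^*=\vzero$: if $\normtwosm{\vw_2(t)}/\normtwosm{\vtheta(t)}\to 0$, none of your inequalities obstruct this. So the KKT limit $\tfrac{1}{\sqrt 2}(\vwopt,\vzero,1,0)$ is fully compatible with everything you have derived, and your equality-case analysis does not apply to it because its normalized margin $\tfrac{\alphaLK}{2}\gammalin$ is strictly below the maximum.

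The paper's proof takes a different route. It first classifies \emph{all} width-2 KKT-margin directions with $a_1\ge 0,\ a_2\le 0$ (Lemma~\ref{lm:two-neuron-kkt-all}): there are exactly three, the two degenerate ones and the target. The nontrivial ingredient here is Lemma~\ref{lm:two-neuron-kkt-equal}, which shows via a cross-inequality (Lemma~\ref{lm:cross-ineq}) that any KKT point with $a_1>0,a_2<0$ must satisfy $\vw_1=-\vw_2$. Then, to exclude the degenerate directions, the paper argues by contradiction: assuming $\vtheta(t)/\normtwosm{\vtheta(t)}\to\tfrac{1}{\sqrt 2}(\vwopt,\vzero,1,0)$, one shows that after a finite time the ratio $\dotp{\vw_2}{-\vwopt}/\dotp{\vw_1}{\vwopt}$ is \emph{non-decreasing} (by a case analysis on the angles $\beta_1,\beta_2$ between $\vw_k$ and $\pm\vwopt$ and on the relative sizes of $g_i/g_{i'}$), which yields a positive lower bound on $\normtwosm{\vw_2}/\normtwosm{\vw_1}$ and contradicts $\vw_2^*=\vzero$. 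This ratio-monotonicity argument is the crux you are missing; your margin-monotonicity/NAR sketch does not substitute for it.
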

It is not hard to verify that $\tildevtheta(t)$ satisfies the conditions
required by \Cref{thm:two-neuron-max-margin}. Given this result, a first attempt
to establish the convergence of $\vtheta(t)$ to global-max-margin direction is
to take $t \to +\infty$ on both sides of \eqref{eq:phase2-main-sigma-to-zero}.
However, this only proves that $\vtheta\left(T_{12} + t\right)$ directionally
converges to the global-max-margin direction if we take the limit $\sigmainit
\to 0$ first then take $t \to +\infty$, while we are interested in the
convergent solution when $t \to +\infty$ first then $\sigmainit \to 0$ (i.e.,
 solution gradient flow converges to with infinite training time, if it starts from sufficiently small initialization).
These two double limits are not equivalent because the order of limits cannot be
exchanged without extra conditions. 

To overcome this issue, we follow a similar proof strategy as
\citet{ji2020directional} to prove local convergence near a local-max-margin
direction, as formally stated below. \Cref{thm:phase-4-general} holds for
$L$-homogeneous neural networks in general and we believe is of independent
interest.
\begin{theorem} \label{thm:phase-4-general} Consider any $L$-homogeneous neural
	networks with logistic loss. Given a local-max-margin direction
	$\barvthetaopt \in \sphS^{D-1}$ and any $\delta > 0$, there exists
	$\epsilon_0 > 0$ and $\rho_0 \ge 1$ such that for any $\vtheta_0$ with norm
	$\normtwosm{\vtheta_0} \ge \rho_0$ and direction
	$\normtwo{\frac{\vtheta_0}{\normtwosm{\vtheta_0}} - \barvthetaopt} \le
	\epsilon_0$, gradient flow starting with $\vtheta_0$ directionally converges
	to some direction $\barvtheta$ with the same normalized margin $\gamma$ as
	$\barvthetaopt$, and $\normtwosm{\barvtheta - \barvthetaopt} \le \delta$.
\end{theorem}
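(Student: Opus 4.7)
The plan is to adapt the Łojasiewicz-based directional-convergence analysis of \citet{lyu2020gradient,ji2020directional} to a purely local neighborhood of $\barvthetaopt$. Write $\gammaopt := \gamma(\barvthetaopt)$ and let $\tildegamma(\vtheta) := \log(1/(n\Loss(\vtheta)))/\normtwo{\vtheta}^L$ denote the smoothed margin. Recall that once $\Loss(\vtheta) < \ln 2 / n$, one has $\tildegamma \le \gamma(\vtheta/\normtwo{\vtheta})$, $\tildegamma(\vtheta(t))$ is non-decreasing along gradient flow, $\normtwo{\vtheta(t)} \to \infty$, $\Loss(\vtheta(t)) \to 0$, and both $\tildegamma(\vtheta(t))$ and $\gamma(\bar\vtheta(t))$ share the same limit as $t \to \infty$.

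First I would pin down the initial data. By continuity of $\qmin$ on $\sphS^{D-1}$, shrinking $\epsilon_0$ gives $\qmin(\bar\vtheta_0) \ge \gammaopt - o_{\epsilon_0}(1)$; then taking $\rho_0$ large makes $\qmin(\vtheta_0) = \normtwo{\vtheta_0}^L \qmin(\bar\vtheta_0)$ arbitrarily large, hence $\Loss(\vtheta_0) < \ln 2 / n$ and $\tildegamma(\vtheta_0) \ge \gammaopt - o(1)$. \Cref{thm:converge-kkt-margin} then applies. Next I would establish a trapping property for the spherical trajectory $\bar\vtheta(t)$. Because $\gamma$ is continuous and definable on $\sphS^{D-1}$, and $\barvthetaopt$ is a local maximum, a standard definability argument (the structural/curve-selection lemma for definable sets in o-minimal structures) yields, for every $\delta > 0$, some $\eta > 0$ such that the connected component of $\{\bar\vtheta \in \sphS^{D-1} : \gamma(\bar\vtheta) \ge \gammaopt - \eta\}$ containing $\barvthetaopt$ lies inside $B(\barvthetaopt, \delta)$. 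Picking $\epsilon_0,\rho_0$ so that $\tildegamma(\vtheta_0) > \gammaopt - \eta$ and $\bar\vtheta_0 \in B(\barvthetaopt, \delta/2)$, the chain $\gamma(\bar\vtheta(t)) \ge \tildegamma(\vtheta(t)) \ge \tildegamma(\vtheta_0) > \gammaopt - \eta$, together with continuity of $\bar\vtheta(t)$, forbids the trajectory from ever leaving that component, hence keeps it inside $B(\barvthetaopt, \delta)$.

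With the trajectory trapped, I would invoke the Kurdyka-Łojasiewicz (KL) inequality for the definable function $\tildegamma$ to obtain directional convergence. Following the machinery of \citet{ji2020directional}, the projection of $\dd \vtheta/\dd t$ onto $\sphS^{D-1}$, after a time change involving $\normtwo{\vtheta(t)}$ and $\Loss(\vtheta(t))$, behaves as a (sub)gradient ascent on $\tildegamma$. The KL inequality near the critical level $\{\gamma = \gammaopt\}$ provides a desingularizing function $\psi$ so that $\int_0^\infty \normtwo{\dd \bar\vtheta/\dd s}\, \dd s < \infty$, giving convergence of $\bar\vtheta(t)$ to some $\barvtheta \in B(\barvthetaopt, \delta)$. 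Continuity yields $\gamma(\barvtheta) = \lim_{t} \gamma(\bar\vtheta(t)) = \lim_{t} \tildegamma(\vtheta(t))$, and the local-maximum property of $\barvthetaopt$ combined with $\barvtheta \in B(\barvthetaopt, \delta)$ gives $\gamma(\barvtheta) \le \gammaopt$; the chosen initialization forces $\gamma(\barvtheta) \ge \tildegamma(\vtheta_0)$, which can be made arbitrarily close to $\gammaopt$ by shrinking $\epsilon_0$ and enlarging $\rho_0$, and the definable structure of the KKT set near $\barvthetaopt$ together with the fact that $\barvtheta$ must be a KKT-margin direction (as an accumulation of a convergent flow) upgrades this to the exact identity $\gamma(\barvtheta) = \gammaopt$.

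The main obstacle is the interplay between the weak monotonicity of $\tildegamma$ and the KL inequality in carrying out both the trapping and the Cauchy-type arc-length estimate quantitatively. The super-level sets of $\gamma$ near $\barvthetaopt$ can have nontrivial topology, so the existence of the shrinking connected component relies genuinely on o-minimal structure rather than on smoothness. Moreover, the spherical flow is only a reparametrized gradient flow of $\tildegamma$, with a reparametrization factor depending on the loss, so importing the KL tail bound requires the careful integrability argument from \citet{lyu2020gradient,ji2020directional}, localized to the chosen neighborhood. Pinning down the precise relationship $\gamma(\barvtheta) = \gammaopt$ (rather than merely $\gamma(\barvtheta) \ge \gammaopt - o(1)$) requires additionally leveraging the KKT structure of the limit together with the definability of the critical value function $\gamma$ restricted to the local KKT set.
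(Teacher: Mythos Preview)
Your trapping argument has a genuine gap. You claim that for every $\delta>0$ there is $\eta>0$ such that the connected component of $\{\bar\vtheta:\gamma(\bar\vtheta)\ge\gammaopt-\eta\}$ containing $\barvthetaopt$ lies in $B(\barvthetaopt,\delta)$. This is false whenever the set of local maximizers $\{\gamma=\gammaopt\}$ through $\barvthetaopt$ is positive-dimensional: that entire set sits in every such super-level component, regardless of $\eta$, so the component cannot shrink below its diameter. In the paper's intended application this is exactly what happens---$\barvthetaopt=\pi_{\barvb}(\tildevtheta_\infty)$ and varying the embedding vector $\barvb$ produces a continuum of global-max-margin directions. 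O-minimality gives finitely many components but does not make them small; curve selection does not help either.

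The paper reverses the logic. It first proves a \emph{uniform} KL-type inequality (\Cref{lm:phase-4-main-kl,lm:phase-4-main-dyn}): there is a single desingularizing function $\Psi$ on $[0,\nu)$, independent of the initial point, such that the spherical arc-length satisfies $\frac{\dd\zeta}{\dd t}\le -c\,\frac{\dd}{\dd t}\Psi(\gammaopt-\tildegamma(\vtheta(t)))$ for as long as $\tildegamma<\gammaopt$. Integrating gives $\zeta(t)\le c\,\Psi(\gammaopt-\tildegamma(\vtheta_0))$, and since $\Psi(0)=0$ this is driven to $0$ by choosing $\epsilon_0,\rho_0$ so that $\tildegamma(\vtheta_0)$ is close to $\gammaopt$. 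The arc-length bound \emph{is} the trapping; no super-level-set geometry is needed. Obtaining $\Psi$ uniform in $\vtheta_0$ is the nontrivial point (the paper combines two desingularizing functions from \citet{ji2020directional}, one for the radial-dominant and one for the spherical-dominant regime), and your sketch does not address it.

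For the exact equality $\gamma(\barvtheta)=\gammaopt$, your appeal to ``definability of the critical value function on the local KKT set'' is vague. The paper's argument is sharper: it shows (\Cref{thm:converge-to-asymptotic-clarke-critical}) that $\gamma(\barvtheta)$ is always an \emph{asymptotic Clarke critical value} of $\tildegamma$, and since there are only finitely many such values (\Cref{lm:tilde-gamma-finite-critical}), once $\barvtheta$ is trapped close enough to $\barvthetaopt$ the only candidate is $\gammaopt$.
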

Using \Cref{thm:phase-4-general}, we can finish the proof for
\Cref{thm:sym_main} as follows. First we note that the two-neuron
global-max-margin direction $\frac{1}{4}(\vwopt, -\vwopt, 1, -1)$ after
embedding is a global-max-margin direction for $m$-neurons, and we can prove
that any direction with distance no more than a small constant $\delta > 0$ is
still a global-max-margin direction. Then we can take $t$ to be large enough so
that $\pi_{\barvb}(\tildevtheta(t))$ satisfies the conditions in
\Cref{thm:phase-4-general}. According to \eqref{eq:phase2-main-sigma-to-zero},
we can also make the conditions hold for $\vtheta\left(T_{12} + t\right)$ by
taking $\sigmainit$ and $r$ to be sufficiently small. Finally, applying
\Cref{thm:phase-4-general} finishes the proof.

\section{Non-symmetric Data Complicates the Picture} \label{sec:nonsym}
Now we turn to study the case without assuming symmetry and the question is
whether the implicit bias to global-max-margin solution still holds.
Unfortunately, it turns out the convergence to global-max-margin classifier is
very fragile --- for any linearly separable dataset, we can add $3$ extra data
points so that every linear classifier has suboptimal margin but still gradient
flow with small initialization converges to a linear classifier.\footnote{Here
linear classifier refers to a classifier whose decision boundary is linear.}
See \Cref{def:hint} for the construction and \Cref{fig:decision_boundary}
(right) for an example.

Unlike the symmetric case, we use balanced Gaussian initialization instead of
purely random Gaussian initialization: $\vw_k \sim \Normal(\vzero, \sigmainit^2
\mI)$, $a_k = s_k \normtwosm{\vw_k}$, where $s_k \sim \unif\{\pm 1\}$. We call
this distribution as $\vtheta_0 \sim \Diniteq(\sigmainit)$. This adaptation can
greatly simplify our analysis since it ensures that $a_k(t) =
s_k\normtwosm{\vw_k(t)}$ for all $t \ge 0$~(\Cref{cor:weight-equal}). Similar as
the symmetric case, an alternative way to generate this distribution is to first
draw $\barvtheta_0 \sim \Diniteq(1)$, and then set $\vtheta_0 = \sigmainit
\barvtheta_0$.

\begin{definition}[($H$, $K$ $\epsilon$, $\vwperp$)-Hinted
	Dataset]\label{def:hint} Given a linearly separable dataset $\DatS$ with
	max-margin linear separator $\vwopt$, for constants $H, K, \epsilon > 0$ and
	unit vector $\vwperp \in \sphS^{d-1}$ perpendicular to $\vwopt$, we define
	the ($H$, $K$, $\epsilon$, $\vwperp$)-hinted dataset $\DatS'$ by the dataset
	containing all the data points in $\DatS$ and the following $3$ data points
	(numbered by $1, 2, 3$) that can serve as hints to the max-margin linear
	separator $\vwopt$:
	\[
		(\vx_1, y_1) = (H \vwopt, 1), \qquad (\vx_2, y_2) = (\epsilon \vwopt + K\vwperp, 1), \qquad (\vx_3, y_3) = (\epsilon \vwopt - K\vwperp, 1).
	\]
\end{definition}

\begin{theorem}\label{thm:hinted_main} Given a linearly separable dataset
	$\DatS$ and a unit vector $\vwperp \in \sphS^{d-1}$ perpendicular to the
	max-margin linear separator $\vwopt$, for any sufficiently large $H > 0, K >
	0$ and sufficiently small $\epsilon > 0$, the following statement holds for
	the ($H$, $K$, $\epsilon$, $\vwperp$)-Hinted Dataset $\DatS'$. Under a
	regularity assumption for gradient flow (see
	\Cref{ass:non-branching-non-sym}), consider gradient flow on a Leaky ReLU
	network with width $m \ge 1$ and initialization $\vtheta_0 = \sigmainit
	\barvtheta_0$ where $\barvtheta_0 \sim \Diniteq(1)$. With probability $1
	-2^{-m}$ over the draw of $\barvtheta_0$, if the initialization scale is
	sufficiently small, then gradient flow directionally converges and
	$\finftime(\vx) := \lim_{t \to +\infty}
	f_{\vtheta(t)/\normtwosm{\vtheta(t)}}(\vx)$ represents the one-Leaky-ReLU
	classifier $\frac{1}{2}\phi(\dotp{\vwopt}{\vx})$ with linear decision
	boundary. That is,
	\[
	\Pr_{\barvtheta_0 \sim \Dinit(1)}\left[\exists \sigmainitmax > 0 \text{ s.t. } \forall \sigmainit < \sigmainitmax, \forall \vx \in \R^d, \finftime(\vx) = \frac{1}{2}\phi(\dotpsm{\vwopt}{\vx})\right] \ge 1 - \delta.
	\]
	Moreover, the convergent classifier only attains a suboptimal margin.
\end{theorem}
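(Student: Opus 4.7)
The plan is to adapt the three-phase analysis of \Cref{sec:sketch} to the non-symmetric hinted setting. The balanced initialization $\barvtheta_0 \sim \Diniteq(1)$ preserves $a_k(t) = s_k\normtwosm{\vw_k(t)}$ for all $t \ge 0$, which halves the effective parameter count and simplifies bookkeeping. In Phase I, I Taylor-expand the logistic loss near zero to obtain $\dot\vw_k \approx a_k\cpartial G(\vw_k)$, $\dot a_k \approx G(\vw_k)$ with $G(\vw) = \tfrac{-\ell'(0)}{n'}\sum_i y_i\phi(\vw^\top\vx_i)$. Using $\phi(z) = \tfrac{1+\alphaLK}{2}z + \tfrac{1-\alphaLK}{2}\abs{z}$, the three hint points contribute
\begin{equation*}
\tfrac{1+\alphaLK}{4n'}(H+2\epsilon)\dotp{\vw}{\vwopt} + \tfrac{1-\alphaLK}{4n'}\bigl(\abssm{\epsilon\dotp{\vw}{\vwopt}+K\dotp{\vw}{\vwperp}}+\abssm{\epsilon\dotp{\vw}{\vwopt}-K\dotp{\vw}{\vwperp}}\bigr)
\end{equation*}
to $G$, whose linear part dominates the contribution from $\DatS$ for $H$ large, while the $\abs{\cdot}$-penalty (dominant for $K$ large) pins $\vw$ to have negligible $\vwperp$-component. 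A \Gronwall argument analogous to \Cref{lm:phase1-diff} then shows that after time of order $\log(1/\sigmainit)$, each positive-sign neuron has $\vw_k \approx r\tilde b_k\vwopt$ and each negative-sign neuron has $\vw_k \approx -r\tilde b_k\vwopt$, where $\tilde b_k>0$ depend on $\barvtheta_0$. With probability at least $1-2^{-m}$, at least one positive-sign neuron is present, which we henceforth assume.

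In Phase II, I invoke the two-neuron embedding of \Cref{lm:two-neuron-embedding} and the double-limit argument of \Cref{lm:phase2-main} to couple the $m$-neuron trajectory to an embedded two-neuron flow with $\hat\vw_1 \approx \hat a_1\vwopt$ and $\hat\vw_2 \approx \hat a_2\vwopt$, $\hat a_1 > 0 > \hat a_2$. In Phase III, under this aligned ansatz the classifier reads $f(\vx) = (A+\alphaLK B)z$ for $z>0$ and $(\alphaLK A + B)z$ for $z<0$, with $A = \hat a_1^2$, $B = \hat a_2^2$, $z = \dotp{\vwopt}{\vx}$. For small $\epsilon$, the binding margin constraints come from the hint points $\vx_2,\vx_3$ (both attaining margin $(A+\alphaLK B)\epsilon$), and their contributions to $-\partial\Loss/\partial\hat\vw_k$ lie in the $\vwopt$ direction (the $\pm K\vwperp$ parts cancel by symmetry), making the aligned ansatz asymptotically self-consistent. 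Under this ansatz, a direct computation gives
\begin{equation*}
\tfrac{\dd}{\dd t}\log|\hat a_1| \;\asymp\; \epsilon\abssm{\ell'((A+\alphaLK B)\epsilon)}, \qquad \tfrac{\dd}{\dd t}\log|\hat a_2| \;\asymp\; \alphaLK\epsilon\abssm{\ell'((A+\alphaLK B)\epsilon)},
\end{equation*}
so $\log(|\hat a_2|/|\hat a_1|)\to-\infty$ and $|\hat a_2|/|\hat a_1|\to 0$. Consequently the two-neuron flow directionally converges to $(\vwopt,\vzero,1,0)/\sqrt{2}$, which represents exactly $\tfrac{1}{2}\phi(\dotp{\vwopt}{\vx})$, and the embedding lifts this to directional convergence of the $m$-neuron flow (up to rescaling within each sign-class) to a direction representing the same function.

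To show the resulting margin $\epsilon/2$ is globally suboptimal, I exhibit a two-neuron competitor with \emph{two positive} neurons at $\vwopt+\delta\vwperp$ and $\vwopt-\delta\vwperp$ for a well-chosen $\delta>0$: this activates the $K\vwperp$-components of $\vx_2,\vx_3$ through the kink of $\phi$ constructively, giving margin of order $\delta K \gg \epsilon$ on these points with comparable parameter norm. The principal obstacle is Phase I: the non-smooth $\cpartial\abs{\cdot}$-term in $G$ blocks a direct application of the linear power-iteration lemma \Cref{lm:phase1-diff}, so a nonlinear \Gronwall comparison is required, exploiting the dominance of the large-$H$ linear part while the $\abs{\cdot}$-penalty rules out drift in the $\vwperp$ direction. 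A secondary obstacle is verifying self-consistency of the aligned ansatz in Phase III beyond the asymptotic regime: the $\DatS$-contribution to the $\vwperp$-component of the gradient must be shown to decay exponentially in the growing margin, so that the aligned two-neuron dynamics provides a good approximation throughout, enabling either the direct log-ratio analysis above or a \Cref{thm:phase-4-general}-style local-convergence argument around the limit direction.
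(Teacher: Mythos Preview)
Your proposal takes a different route from the paper and has a genuine gap in Phase I. The paper does not redo a bespoke three-phase analysis for the hinted dataset; it verifies that for suitable $H,K,\epsilon$ the hinted dataset satisfies the hypotheses of the general non-symmetric theorem (\Cref{assump:cone} via \Cref{ass:principal-dir} with $\vwgar=\vwopt$, \Cref{ass:vmuplus-greater}, \Cref{ass:margin-negative-net}) and then invokes \Cref{thm:nonsym_main}. The key mechanism inside that theorem is a \emph{growth-rate asymmetry in Phase I}: once all neurons have entered $\pm\cone[\delta/3]$, positive-sign neurons evolve under the linear map $\mM_+$ and grow like $e^{\lambda_0^+ t}$ with $\lambda_0^+=\normtwosm{\vmuplus}$, while negative-sign neurons evolve under $\mM_-$ and grow like $e^{\lambda_0^- t}$ with $\lambda_0^-=\normtwosm{\vmusubt}$. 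For the hinted dataset the point $\vx_1=H\vwopt$ (label $+1$) contributes $H\vwopt$ to $n\vmuplus$ but only $\alphaLK H\vwopt$ to $n\vmusubt$, so $\lambda_0^+>\lambda_0^-$ for large $H$. After time of order $\tfrac{1}{\lambda_0^+}\log\tfrac{r}{\sigmainit}$ the positive neurons are $\Theta(r)$ while the negative ones are $O(r\cdot\sigmainit^{\kappa})$ with $\kappa=1-\lambda_0^-/\lambda_0^+>0$ and hence vanish as $\sigmainit\to 0$. That is why Phase II couples to a \emph{one}-neuron dynamic (see \Cref{lem:non-sym-phase1-main} and \Cref{lm:nonsym-phase2-main}), whose convergence in Phase III is immediate (\Cref{thm:one-neuron-max-margin}); local attractivity of the embedded one-neuron direction is then handled via \Cref{thm:nonsym-local-opt} and the \Cref{thm:phase-4-general} machinery.

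Your Phase I claim that both sign classes reach the same scale $r$ is therefore incorrect, and the two-neuron Phase III you build on it addresses a difficulty that does not arise. Even on its own terms, your log-ratio computation rests on the unproved aligned ansatz (both neurons remain exactly on $\pm\vwopt$), and there is no analogue of \Cref{lm:two-neuron-kkt-all} for non-symmetric data to fall back on. Your reading of the role of $K$ in Phase I is also off: the $\abssm{\cdot}$-terms from $\vx_2,\vx_3$ contribute positively to $G$ (all hints have label $+1$) and are \emph{increasing} in $\abssm{\dotp{\vw}{\vwperp}}$, so they push positive neurons \emph{away} from $\vwopt$, not toward it; alignment is secured by the large-$H$ hint alone (this is precisely what the verification of \Cref{ass:principal-dir} encodes). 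In the paper $K$ plays no role in Phase I; it is used only so that the support vectors $\vx_2,\vx_3$ share the label $+1$ (making \Cref{ass:margin-negative-net} hold) and to make the one-neuron margin $\tfrac{\epsilon}{2}$ globally suboptimal. Your competitor construction for the last point is reasonable.
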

\Cref{thm:hinted_main} is actually a simple corollary general theorem under data
assumptions that hold for a broader class of linearly separable data. From a
high-level perspective, we only require two assumptions: (1).~There is a direction
such that data points have large inner products with this direction on average;
(2).~The support vectors for the max-margin linear separator $\vwopt$ have
nearly the same labels. The first hint data point is for the first condition and
the second and third data point is for the second condition. We defer formal
statements of the assumptions and theorems to \Cref{sec:nonsym-thm}.

\section{Conclusions and Future Works}

We study the implicit bias of gradient flow in training two-layer Leaky ReLU
networks on linearly separable datasets. When the dataset is symmetric, we show
any global-max-margin classifier is exactly linear and gradient flow converges
to a global-max-margin direction. On the pessimistic side, we show such margin
maximization result is fragile --- for any linearly separable dataset, we can
lead gradient flow to converge to a linear classifier with suboptimal margin by
adding only $3$ extra data points. A critical assumption for our convergence
analysis is the linear separability of data. We left it as a future work to
study simplicity bias and global margin maximization without assuming linear
separability.

\begin{ack}
The authors acknowledge support from NSF, ONR, Simons Foundation, DARPA and SRC.
ZL is also supported by Microsoft Research PhD Fellowship.
\end{ack}

\bibliography{main}

\newpage

\appendix

\section{Theorem Statements for the Non-symmetric Case} \label{sec:nonsym-thm}

\subsection{Assumptions and Main Theorems}

For every $\vx_i$, define $\vxplus_i := \vx_i$ if $y_i = 1$ and $\vxplus_i :=
\alphaLK\vx_i$ if $y_i = -1$. Similarly, we define $\vxsubt_i := \alphaLK\vx_i$
if $y_i = 1$ and $\vxplus_i := \vx_i$ if $y_i = -1$. Then we define $\vmuplus$
to be the mean vector of $y_i \vxplus_i$, and $\vmusubt$ to be the mean vector
of $y_i \vxsubt_i$, that is,
\begin{align}\label{eq:mu_plus_minus}
\vmuplus := \frac{1}{n}\sum_{i \in [n]} y_i \vxplus_i, \qquad \vmusubt := \frac{1}{n}\sum_{i \in [n]} y_i \vxsubt_i.
\end{align}
\Cref{thm:hinted_main} is indeed a simple corollary of \Cref{thm:nonsym_main}
below which holds for a broader class of datasets. Now we illustrate the
assumptions one by one.

We first make the following assumption saying that there is a principal
direction $\vwgar \in \sphS^{d-1}$ such that data points on average have much
larger inner products with $\vwgar$ than any other direction perpendicular to
$\vwgar$. This ensures at small initialization, the moving direction of each
neurons lies in  a small cone around the the direction of $\pm\vwgar$, and thus
will converge to that cone eventually. The opening angle of this small cone is
$2\arcsin\frac{\gammagar}{\max_{i \in [n]} \normtwosm{\vx_i}}$, which ensures
the sign pattern inside the cone $\left\{\dotpsm{\vw}{\vx_i}\right\}_{i =1}^{n}$
is unique and indeed equal to $\{y_i\}_{i=1}^n$, and thus all neurons converge
to two directions, $\vmu^+$ and $\vmu^-$ (defined in \eqref{eq:mu_plus_minus}).
\begin{assumption}[Existence of Principal Direction] \label{ass:principal-dir}
	There exists a unit-norm vector $\vwgar$ such that $\gammagar := \min_{i \in [n]} y_i\dotp{\vwgar}{\vx_i} > 0$ and
    \[
        \frac{\frac{1}{n}\sum_{i \in [n]} \normtwosm{\mPgar \vx_i}}{\alphaLK\dotp{\vmu}{\vwgar}} < \frac{\gammagar}{\max_{i \in [n]} \normtwosm{\mPgar \vx_i}},
    \]
    where $\mPgar := \mI - \vwgar {\vwgar}^{\top}$ is the projection matrix onto
    the space perpendicular to $\vwgar$, and $\vmu := \frac{1}{n}\sum_{i \in
    [n]} y_i \vx_i$ is the mean vector of $y_i\vx_i$.
\end{assumption}

Indeed, our main theorem is based on a weaker assumption than
\Cref{ass:principal-dir}, which is \Cref{assump:cone} below, but the geometric
meaning of \Cref{assump:cone} is not as clear as \Cref{ass:principal-dir}. We
will show in \Cref{lem:assump_cone_imply_principal_direction} that
\Cref{ass:principal-dir} implies \Cref{assump:cone}.
\begin{assumption} \label{assump:cone}
	For all $i \in [n]$, we have
	\[
	\dotp{\vmu}{y_i\vx_i} > \frac{1-\alphaLK}{n \cdot \alphaLK}\sum_{j\in [n]}\max\{-\dotpsm{y_i\vx_i}{y_j\vx_j}, 0\}.
	\]
\end{assumption}

In general, the norms $\normtwosm{\vmuplus}$ and $\normtwosm{\vmusubt}$ should
not be equal: for any given dataset $\DatS$, we can make $\normtwosm{\vmuplus}
\ne \normtwosm{\vmusubt}$ by adding arbitrarily small perturbations to the data
points. This motivates us to assume that $\normtwosm{\vmuplus} \ne
\normtwosm{\vmusubt}$. Without loss of generality, we can assume that
$\normtwosm{\vmuplus} > \normtwosm{\vmusubt}$ for convenience
(\Cref{ass:vmuplus-greater}). When the reverse is true, i.e.,
$\normtwosm{\vmuplus} < \normtwosm{\vmusubt}$, we can change the direction of
the inequality by flipping all the labels in the dataset so that our theorems
can apply. We include the theorem statements for this reversed case in
\Cref{sec:nonsym-thm-reversed}.

\begin{assumption} \label{ass:vmuplus-greater}
	The norm of $\vmuplus$ is strictly larger than $\vmusubt$, i.e., $\normtwosm{\vmuplus} > \normtwosm{\vmusubt}$.
\end{assumption}

Now we define $\vwplus$ to be the max-margin linear separator of the dataset consisting of $(\vxplus_i, y_i)$, where $i \in [n]$, and define $\gammaplus$ to be this max margin. That is,
\[
	\vwplus := \argmax_{\vw \in \sphS^{d-1}} \left\{\min_{i \in [n]} y_i\dotp{\vw}{\vxplus_i}\right\}, \qquad \gammaplus := \max_{\vw \in \sphS^{d-1}} \left\{\min_{i \in [n]} y_i\dotp{\vw}{\vxplus_i}\right\}.
\]
The reason that we care about $\vwplus$ and $\gammaplus$ is because that it can
be related to margin maximization on one-neuron Leaky ReLU nets. The following
lemma is easy to prove.
\begin{lemma}
	For $m = 1$, if $\vtheta = (\vw_1, a_1) \in \sphS^{D-1}$ is a KKT-margin
	direction and $a_1 \ge 0$, then $\vtheta = (\frac{1}{\sqrt{2}}\vwplus,
	\frac{1}{\sqrt{2}})$, and it attains the global max margin $\frac{1}{2}
	\gammaplus$.
\end{lemma}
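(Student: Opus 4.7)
The plan is to reduce the KKT conditions of the one-neuron margin maximization problem to those of a convex linear SVM in $\vw_1$ and then recover the claimed form. Take the representative $\hatvtheta := \vtheta/\sqrt{\qmin(\vtheta)} = (\hat\vw_1, \hat a_1)$, so $\hatvtheta$ is feasible for \eqref{eq:prob-P} with $\qmin(\hatvtheta) = 1$ and satisfies its KKT conditions. From $a_1 \ge 0$ we get $\hat a_1 \ge 0$, and the constraint $q_i(\hatvtheta) = \hat a_1 y_i \phi(\hat\vw_1^\top \vx_i) \ge 1 > 0$ then forces $\hat a_1 > 0$ together with $\sign(\hat\vw_1^\top \vx_i) = y_i$ for every $i$. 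In particular no activation vanishes, so $\phi$ is differentiable at every $\hat\vw_1^\top \vx_i$ and the Clarke subdifferential of $q_i$ collapses to an ordinary gradient. A direct case check on the sign of $y_i$ using the definition of $\vxplus_i$ yields the bilinear identity $y_i\phi(\hat\vw_1^\top \vx_i) = \hat\vw_1^\top(y_i \vxplus_i)$, so
\[
q_i(\hatvtheta) = \hat a_1 \hat\vw_1^\top (y_i \vxplus_i), \qquad \nabla_{\vw_1} q_i = \hat a_1 y_i \vxplus_i, \qquad \nabla_{a_1} q_i = \hat\vw_1^\top(y_i \vxplus_i).
\]

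Introducing multipliers $\lambda_i \ge 0$ and writing $\vu := \sum_i \lambda_i y_i \vxplus_i$, the stationarity conditions become $\hat\vw_1 = \hat a_1 \vu$ and $\hat a_1 = \hat\vw_1^\top \vu$; substituting the first into the second gives $\hat a_1 = \hat a_1\|\vu\|^2$, hence $\|\vu\|=1$ and $\hat a_1 = \|\hat\vw_1\|$. I then rescale to expose a convex problem: define $\tilde\vu := \hat a_1^2 \vu$ and $\tilde\lambda_i := \hat a_1^2 \lambda_i$, so that the feasibility, complementary slackness, and stationarity identities transform into
\[
\tilde\vu = \sum_i \tilde\lambda_i y_i \vxplus_i, \qquad y_i \tilde\vu^\top \vxplus_i \ge 1, \qquad \tilde\lambda_i\bigl(y_i \tilde\vu^\top \vxplus_i - 1\bigr) = 0,
\]
which are exactly the KKT conditions of the convex program $\min \tfrac12\|\vw\|^2$ subject to $y_i\vw^\top\vxplus_i \ge 1$. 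Convexity makes these conditions sufficient, and the unique minimizer is $\tilde\vu = \vwplus/\gammaplus$ (the scaled max-margin linear separator for the transformed data).

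Combining $\vu = \vwplus/(\gammaplus\hat a_1^2)$ with $\|\vu\| = 1$ forces $\hat a_1 = 1/\sqrt{\gammaplus}$ and $\hat\vw_1 = \vwplus/\sqrt{\gammaplus}$, so $\|\hatvtheta\| = \sqrt{2/\gammaplus}$ and $\vtheta = \hatvtheta/\|\hatvtheta\| = (\vwplus/\sqrt{2},\,1/\sqrt{2})$, with margin $\gamma(\vtheta) = \qmin(\vtheta) = \gammaplus/2$ read off at any support vector. To verify that $\gammaplus/2$ is the global maximum of $\gamma$ over $\sphS^{D-1}$ restricted to $a_1 \ge 0$, I apply the same sign analysis to an arbitrary $\vtheta' = (\vw_1', a_1')$ with $a_1' \ge 0$ and positive margin: $\gamma(\vtheta') = a_1' \min_i {\vw_1'}^\top(y_i \vxplus_i) \le a_1' \|\vw_1'\|\gammaplus \le \gammaplus/2$ by Cauchy--Schwarz and AM--GM, with equality only at the candidate direction. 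The main conceptual step is the rescaling $\tilde\vu = \hat a_1^2 \vu$ that converts the one-neuron stationarity system into the convex SVM KKT conditions; the nonsmoothness of Leaky ReLU causes no difficulty here because the support pattern is forced to contain no zero activations.
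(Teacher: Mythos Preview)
Your proof is correct. The paper does not actually spell out a proof of this lemma (it is introduced as ``easy to prove''), but your argument matches exactly the reasoning the paper uses in the proof of the closely related \Cref{thm:one-neuron-max-margin}: force the sign pattern $\sign(\hat\vw_1^\top\vx_i)=y_i$ from feasibility, rewrite $y_i\phi(\hat\vw_1^\top\vx_i)$ as $\hat\vw_1^\top(y_i\vxplus_i)$, and identify the stationarity conditions with those of the convex linear SVM on $\{(\vxplus_i,y_i)\}$, whose unique solution is the scaled $\vwplus$. Your explicit rescaling $\tilde\vu=\hat a_1^{2}\vu$ makes this identification fully transparent, and the AM--GM step cleanly certifies optimality of the margin $\tfrac12\gammaplus$ over the half-sphere $\{a_1\ge 0\}$.

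One small remark on scope: you (correctly) restrict the ``global max margin'' claim to $a_1\ge 0$. That is the right reading of the lemma, since the statement is conditioned on $a_1\ge 0$; the analogous maximum over $a_1\le 0$ equals $\tfrac12\gammasubt$, and without further assumptions there is no ordering between $\gammaplus$ and $\gammasubt$.
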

The third assumption we made is that this margin cannot be obtained when all
$a_i$ are negative, regardless of the width. This assumption holds when all the
support vectors $\vx_i^+$ have positive labels, i.e., $y_i=1$. Conceptually,
this assumption is about whether nearly all the support vectors have positive
labels (or negative labels in the reversed case where $\normtwosm{\vmuplus} <
\normtwosm{\vmusubt}$).
\begin{assumption} \label{ass:margin-negative-net}
	For any $m \ge 1$ and any $\vtheta = (\vw_1, \dots, \vw_m, a_1, \dots, a_m) \in \R^D$, if $a_k \le 0$ for all $k \in [m]$, then the normalized margin $\gamma(\vtheta)$ on the dataset $\{(\vx_i, y_i) : i \in [n], y_i\dotpsm{\vwplus}{\vxplus_i} = \gammaplus \}$ is less than $\frac{1}{2}\gammaplus$.
\end{assumption}

Similar to \Cref{ass:non-branching} in the symmetric case, we  need \Cref{ass:non-branching-non-sym} on non-branching starting point due to the technical difficulty for the potential non-uniqueness of gradient flow trajectory.
\begin{assumption} \label{ass:non-branching-non-sym} For any $m \ge 1$, there
	exist $r, \epsilon > 0$ such that $\vtheta$ is a non-branching starting
	point if $a_k = \normtwosm{\vw_k} \in (0, r)$ holds for all $k \in [m]$, and
	all $\vw_k$ point to the same direction $\vv \in \sphS^{d-1}$ with
	$\normtwo{\vv - \frac{\vmuplus}{\normtwosm{\vmuplus}}} \le \epsilon$.
\end{assumption}

Now we are ready to state our theorem, and we defer the proofs to
\Cref{sec:proof-nonsym}.
\begin{theorem}\label{thm:nonsym_main} Under
	\Cref{ass:lin,assump:cone,ass:vmuplus-greater,ass:margin-negative-net,ass:non-branching-non-sym},
	consider gradient flow on a Leaky ReLU network with width $m \ge 1$ and
	initialization $\vtheta_0 = \sigmainit \barvtheta_0$ where $\barvtheta_0
	\sim \Diniteq(1)$. With probability $1 - 2^{-m}$ over the draw of
	$\barvtheta_0$, if the initialization scale is sufficiently small, then
	gradient flow directionally converges and $\finftime(\vx) := \lim_{t \to
	+\infty} f_{\vtheta(t)/\normtwosm{\vtheta(t)}}(\vx)$ represents the
	one-Leaky-ReLU classifier $\frac{1}{2}\phi(\dotp{\vwplus}{\vx})$ with linear
	decision boundary. That is,
	\[
	\Pr_{\barvtheta_0 \sim \Dinit(1)}\left[\exists \sigmainitmax > 0 \text{ s.t. } \forall \sigmainit < \sigmainitmax, \forall \vx \in \R^d, \finftime(\vx) = \frac{1}{2}\phi(\dotpsm{\vwplus}{\vx})\right] \ge 1 - 2^{-m}.
	\]
\end{theorem}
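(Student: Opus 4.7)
The plan is to adapt the three-phase analysis from the symmetric case (\Cref{thm:sym_main}) to the non-symmetric setting, exploiting the balanced initialization which enforces $a_k(t) = s_k \|\vw_k(t)\|$ for all $t \ge 0$, thereby reducing effective degrees of freedom. As in the symmetric proof, I decompose the trajectory into (I) a near-origin power-iteration phase that aligns neurons to a single data-determined direction, (II) an intermediate phase that embeds the $m$-neuron dynamics into a one-neuron dynamics rotating toward $\vwplus$, and (III) a late phase handled by the local convergence result \Cref{thm:phase-4-general}. The probability bound $1 - 2^{-m}$ will come from a sign-pattern event on the random $s_k$'s during Phase I.

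For Phase I, the near-origin dynamics is governed by $\dot\vw_k \approx a_k\,\cpartial G(\vw_k)$ and $\dot a_k \approx G(\vw_k)$ with $G(\vw) = \frac{1}{2n}\sum_i y_i\phi(\vw^\top\vx_i)$. Unlike the symmetric case, $G$ is not linear, but it is $1$-homogeneous and piecewise linear: in the cone around $\vmu$ where $\sgn(\vw^\top\vx_i) = y_i$, the Clarke subgradient is $\frac{1}{2}\vmuplus$, and in the opposite cone it is $\frac{1}{2}\vmusubt$. The content of \Cref{assump:cone} is precisely the quantitative statement that $\vmu$ is a ``strongly attracting'' direction of this piecewise dynamics: starting from a generic initialization, positive-sign neurons are driven into the cone around $+\vmu$ (and negative-sign neurons into the cone around $-\vmu$) within time $T_1 \asymp \frac{1}{\|\vmuplus\|}\log(1/\sigmainit)$. \Cref{ass:vmuplus-greater} then gives a growth-rate asymmetry ensuring the positive-sign neurons eventually dominate. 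The output of this phase is an analogue of \Cref{lm:phase1-diff} with explicit error bounds.

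For Phase II, once all $s_k=+1$ neurons have concentrated around $\vmuplus/\|\vmuplus\|$, the sign pattern $\sgn(\vw_k^\top\vx_i) = y_i$ is locked in, so $\phi(\vw_k^\top\vx_i) = \vw_k^\top\vxplus_i$ and the active contribution to the loss reduces to linear logistic regression on $(\vxplus_i, y_i)$. Using a single-group analogue of the embedding $\pi_{\vb}$ from \Cref{lm:two-neuron-embedding} and taking the double limit $\sigmainit\to 0, r\to 0$ in the spirit of \Cref{lm:phase2-main}, the $m$-neuron iterate at time $T_{12}+t$ coincides, in the limit, with the embedding of a one-neuron flow evaluated at time $t$; by the classical implicit-bias result of \citet{soudry2018implicit}, this flow directionally converges to $\vwplus$, yielding the target classifier $\frac{1}{2}\phi(\langle\vwplus,\cdot\rangle)$ after embedding.

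The main obstacle is Phase III, where the limit order must be exchanged: the double-limit argument above takes $\sigmainit\to 0$ before $t\to\infty$, while the theorem asks for the reverse. This is resolved exactly as in the symmetric proof via \Cref{thm:phase-4-general}, but first I must verify that the target direction $\barvthetaopt$ representing $\frac{1}{2}\phi(\langle\vwplus,\cdot\rangle)$ is a local-max-margin direction of the $m$-neuron problem with no rival direction of equal margin nearby. This is where \Cref{ass:margin-negative-net} is essential: it rules out the most dangerous rival, namely a direction whose active neurons all have $a_k \le 0$ and which could otherwise tie the margin $\frac{1}{2}\gammaplus$. Combined with \Cref{assump:cone} and \Cref{ass:vmuplus-greater}, this isolates $\barvthetaopt$ as a strict local-max-margin direction, so I can use the double-limit conclusion to place the iterate inside the neighborhood required by \Cref{thm:phase-4-general} at some finite time with large norm, and then invoke that theorem to conclude directional convergence to a direction of the same normalized margin, which by the uniqueness just established must represent $\frac{1}{2}\phi(\langle\vwplus,\vx\rangle)$.
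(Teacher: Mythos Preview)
Your three-phase outline matches the paper's proof (Phase I via \Cref{lem:non_sym_first_converge_to_cone_realistic,lem:non-sym-phase1-main}, Phase II via \Cref{lm:nonsym-phase2-main} and \Cref{thm:one-neuron-max-margin}, Phase III via \Cref{thm:nonsym-local-opt} and \Cref{thm:phase-4-general}), including the role of \Cref{ass:vmuplus-greater} for the growth-rate gap and \Cref{ass:margin-negative-net} for the local-max-margin property. Two refinements the paper adds that you should be aware of: (i) because the embedding vector $\barvb(\sigmainit)$ depends on $\sigmainit$ through $\vw_k(T_0)$, the paper passes to a subsequence along which $\barvb(\sigma_{p_j})\to\hatvb$ and then recovers the full statement by contradiction (\Cref{thm:nonsym_main2}); and (ii) the target direction is not a strict local-max-margin on $\sphS^{D-1}$ but only within the sign-constrained set $\bar{\mathcal{Q}}=\{\vtheta: a_k\bara_k\ge 0\}$, which is invariant under the flow by \Cref{cor:weight-equal}, so \Cref{thm:phase-4-general} is invoked in a variant form and the conclusion is that nearby equal-margin directions represent the \emph{same function} rather than being equal as parameters.
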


\subsection{Applying Theorem \ref{thm:nonsym_main} to prove Theorem
\ref{thm:hinted_main}}
We give a proof of \Cref{thm:hinted_main} here given the result of \Cref{thm:nonsym_main}.
\begin{proof}
With a ($H$, $K$ $\epsilon$, $\vwperp$)-Hinted Dataset (\Cref{def:hint}) with proper $H, K, \epsilon$, we only need to show that \Cref{assump:cone,ass:vmuplus-greater,ass:margin-negative-net} hold for \Cref{thm:hinted_main}.  Specifically, we choose the parameters such that
\begin{itemize}
\item $K>0$;
\item $\epsilon<\alphaLK\min_{i>3}y_i\dotp{\vwopt}{\vx_i}$;
\item $H>\max\{\epsilon,H_0,n\normtwo{\vmusubt}+\normtwosm{\sum_{j>1}y_j\vxplus_j}\}$, where\\ 
$H_0=\frac{\max_{i \in [n]} \normtwosm{\mP^* \vx_i}\sum_{i \in [n]} \normtwosm{\mP^* \vx_i}}{\alphaLK\min_{i>1}\dotp{y_i\vx_i}{\vwopt}}-\sum_{i>1}\dotp{\vwopt}{y_i\vx_i}$ and $\mP^*=\mI - \vwopt {\vwopt}^{\top}$ is the projection matrix onto the orthogonal space of $\vwopt$.
\end{itemize}

Notice that $H_0$ is indepenent of $H$ as the data point $\vx_1$ has projection
$\normtwosm{\mP^* x_1}=0$. For \Cref{ass:principal-dir}, $\vwgar=\vwopt$ is a
valid principal direction in this case, as
\[
	\max_{i \in [n]} \normtwosm{\mPgar \vx_i}\frac{\frac{1}{n}\sum_{i \in [n]}\normtwosm{\mPgar\vx_i}}{\alphaLK\gammagar}=\frac{1}{n}(H_0+\sum_{i>1}\dotp{w^*}{y_i\vx_i})<\dotp{\vmu}{\vwgar}.
\]
Then \Cref{assump:cone} follows from \Cref{ass:principal-dir} by
\Cref{lem:assump_cone_imply_principal_direction}. Since
$H>n\normtwo{\vmusubt}+\normtwosm{\sum_{j>1}y_j\vxplus_j}$,
\[
	\normtwo{\vmuplus} \geq \frac{1}{n}H-\normtwo{\frac{1}{n}\sum_{j>1}y_j\vxplus_j}>\normtwo{\vmusubt},
\]
and thus \Cref{ass:vmuplus-greater} holds. Furthermore, with
$\epsilon<\alphaLK\min_{i>3}y_i\dotp{\vwopt}{\vx_i}$ and $H>\epsilon$,
$(\vx_2,y_2)=(\epsilon\vwopt+K\vwperp,1)$ and
$(\vx_3,y_3)=(\epsilon\vwopt-K\vwperp,1)$ are the only support vectors for the
linear margin problem on $\{(\vx_i,y_i)\}$ and that on $\{(\vxplus_i,y_i)\}$ as
well. Then $\vwplus=\vwopt$ and $\gammaplus=\epsilon$. For a neuron with
$a_k<0$, the total output margin on the hints $(\vx_2,y_2)$ and $(\vx_3,y_3)$ is
$a_k\phi(\vw_k^\top \vx_2)+a_k\phi(\vw_k^\top \vx_3)\leq 2\alphaLK\epsilon
|a_k|\normtwosm{\vw_k}\leq \alphaLK\epsilon (a^2_k+\normtwosm{\vw_k}^2)$. Thus
the normalized margin for multiple such neurons is at most
$\frac{\alphaLK\epsilon}{2}< \frac{\epsilon}{2}$, so
\Cref{ass:margin-negative-net} will also be true.
\end{proof}

\subsection{Results in the Reversed Case} \label{sec:nonsym-thm-reversed}

In a reversed case where $\normtwosm{\vmuplus} < \normtwosm{\vmusubt}$, we can apply \Cref{thm:nonsym_main}
by flipping the labels in the dataset. Below we state the assumptions and the theorem in the reversed case.

\begin{assumption} \label{ass:vmuplus-less}
	$\normtwosm{\vmuplus} < \normtwosm{\vmusubt}$.
\end{assumption}

Now similarly we define $\vwsubt$ and $\gammasubt$.
\[
	\vwsubt := \argmax_{\vw \in \sphS^{d-1}} \left\{\min_{i \in [n]} y_i\dotp{\vw}{\vxsubt}\right\}, \qquad \gammasubt := \max_{\vw \in \sphS^{d-1}} \left\{\min_{i \in [n]} y_i\dotp{\vw}{\vxsubt_i}\right\}.
\]
\begin{assumption} \label{ass:margin-positive-net}
	For any $m \ge 1$ and any $\vtheta = (\vw_1, \dots, \vw_m, a_1, \dots, a_m) \in \R^D$, if $a_k \le 0$ for all $k \in [m]$, then the normalized margin $\gamma(\vtheta)$ on the dataset $\{(\vx_i, y_i) : i \in [n], y_i\dotpsm{\vwsubt}{\vxsubt_i} = \gammasubt \}$ is less than $\frac{1}{2}\gammasubt$.
\end{assumption}

\begin{theorem}\label{thm:nonsym_reversed} Under
	\Cref{ass:lin,assump:cone,ass:vmuplus-less,ass:margin-positive-net,ass:non-branching-non-sym},
	consider gradient flow on a Leaky ReLU network with width $m \ge 1$ and
	initialization $\vtheta_0 = \sigmainit \barvtheta_0$ where $\barvtheta_0
	\sim \Diniteq(1)$. With probability $1 - 2^{-m}$ over the draw of
	$\barvtheta_0$, there is an sufficiently small initialization scale, such
	that gradient flow directionally converges and $\finftime(\vx) := \lim_{t
	\to +\infty} f_{\vtheta(t)/\normtwosm{\vtheta(t)}}(\vx)$ represents the
	one-Leaky-ReLU classifier $-\frac{1}{2}\phi(-\dotp{\vwsubt}{\vx})$ with
	linear decision boundary. That is,
	\[
	\Pr_{\barvtheta_0 \sim \Dinit(1)}\left[\exists \sigmainitmax > 0 \text{ s.t. } \forall \sigmainit < \sigmainitmax, \forall \vx \in \R^d, \finftime(\vx) = -\frac{1}{2}\phi(-\dotpsm{\vwsubt}{\vx})\right] \ge 1 - 2^{-m}.
	\]
\end{theorem}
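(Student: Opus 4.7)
The plan is to reduce \Cref{thm:nonsym_reversed} to the already-stated \Cref{thm:nonsym_main} via a label-flipping symmetry, exploiting the fact that the two-layer Leaky ReLU network is odd in the second-layer weights: $f_{(\vw_1,\dots,\vw_m,-a_1,\dots,-a_m)}(\vx) = -f_{(\vw_1,\dots,\vw_m,a_1,\dots,a_m)}(\vx)$. Given the dataset $\DatS = \{(\vx_i, y_i)\}$ satisfying the reversed hypotheses, I would introduce the label-flipped dataset $\DatS' := \{(\vx_i, -y_i)\}$ and show that all assumptions of \Cref{thm:nonsym_main} hold for $\DatS'$. Under the relabeling, the roles of $\vxplus_i$ and $\vxsubt_i$ swap, so $\vmuplus' = -\vmusubt$, $\vmusubt' = -\vmuplus$, $\vwplus' = -\vwsubt$, and $\gammaplus' = \gammasubt$. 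In particular, \Cref{ass:vmuplus-less} for $\DatS$ becomes \Cref{ass:vmuplus-greater} for $\DatS'$.

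Next, I would verify each assumption transfers: \Cref{ass:lin} by negating the separator; \Cref{assump:cone} by noting that $\vmu' = -\vmu$ and $y_i'\vx_i = -y_i\vx_i$, so the inner products appearing on both sides of the inequality are invariant; and \Cref{ass:margin-positive-net} by observing that, since sign-flipping the $a_k$'s flips the network output, the normalized margin of $\vtheta$ under labels $-y_i$ equals the normalized margin of the sign-flipped $\vtheta''$ under labels $y_i$, which converts an ``all $a_k \le 0$'' condition on $\DatS$ into an ``all $a_k \ge 0$'' condition on $\DatS'$ (and vice versa), matching \Cref{ass:margin-negative-net} for $\DatS'$. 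An analogous sign-flip reinterprets \Cref{ass:non-branching-non-sym} for $\DatS'$ as the statement required of trajectories on $\DatS$ passing through the symmetric cone around $-\vmusubt/\normtwosm{\vmusubt}$.

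I then would establish that the map $(\vw_1,\dots,\vw_m,a_1,\dots,a_m) \mapsto (\vw_1,\dots,\vw_m,-a_1,\dots,-a_m)$ conjugates gradient flow on $\DatS$ with gradient flow on $\DatS'$. A short chain-rule calculation shows $\Loss'(\vtheta') = \Loss(\vtheta)$, the $\vw_k$-component of Clarke's subdifferential is preserved, while the $a_k$-component flips sign; this is exactly compatible with $a_k' = -a_k$ under the flow, so if $\vtheta(t)$ is gradient flow on $\DatS$, then $\vtheta'(t)$ is gradient flow on $\DatS'$. Since the initialization $\Diniteq(\sigmainit)$ takes $a_k = s_k\normtwosm{\vw_k}$ with $s_k \sim \unif\{\pm 1\}$ independently, flipping all the $a_k$'s yields another sample of $\Diniteq(\sigmainit)$, so the probability $1 - 2^{-m}$ is preserved.

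Applying \Cref{thm:nonsym_main} to $\DatS'$ yields, with probability $1-2^{-m}$, a small enough $\sigmainitmax$ such that $\lim_{t\to\infty} f_{\vtheta'(t)/\normtwosm{\vtheta'(t)}}(\vx) = \tfrac{1}{2}\phi(\dotpsm{\vwplus'}{\vx}) = \tfrac{1}{2}\phi(-\dotpsm{\vwsubt}{\vx})$. Using $\normtwosm{\vtheta'(t)} = \normtwosm{\vtheta(t)}$ and $f_{\vtheta'/\normtwosm{\vtheta'}} = -f_{\vtheta/\normtwosm{\vtheta}}$, this translates back to $\finftime(\vx) = -\tfrac{1}{2}\phi(-\dotpsm{\vwsubt}{\vx})$, which is the claim. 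The main obstacle I anticipate is the bookkeeping around \Cref{ass:non-branching-non-sym}, which is written with respect to $\vmuplus$ and positive $a_k = \normtwosm{\vw_k}$: I expect the cleanest resolution is to state the reversed-case analog directly and note, by the sign-flip conjugation, that it is the precise hypothesis under which the flow in the reversed regime is non-branching; everything else is a routine translation of quantities through the label flip.
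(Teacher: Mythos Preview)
Your proposal is correct and follows exactly the approach indicated by the paper, which simply remarks that in the reversed case one can ``apply \Cref{thm:nonsym_main} by flipping the labels in the dataset'' and then restates the assumptions and theorem without further proof. Your write-up supplies the bookkeeping the paper omits: the sign-flip conjugation $a_k \mapsto -a_k$ interchanges $\Loss$ on $\DatS$ with $\Loss$ on $\DatS'$, the swap $\vxplus_i \leftrightarrow \vxsubt_i$ gives $\vmuplus' = -\vmusubt$, $\vwplus' = -\vwsubt$, $\gammaplus' = \gammasubt$, and the $\Diniteq$ initialization is preserved in law. One small caution: as written in the paper, \Cref{ass:margin-positive-net} has ``$a_k \le 0$'' where the label-flip argument naturally produces the condition ``$a_k \ge 0$'' (this appears to be a typo in the paper, consistent with the name ``positive-net''); your translation is the right one. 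Your anticipated obstacle with \Cref{ass:non-branching-non-sym} is real but handled exactly as you suggest: the reversed-case analog (neurons with $-a_k = \normtwosm{\vw_k}$ pointing near $-\vmusubt/\normtwosm{\vmusubt}$) is, under the sign-flip, precisely \Cref{ass:non-branching-non-sym} for $\DatS'$.
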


\section{Additional Preliminaries and Lemmas}
In this section, we will introduce additional notations and give some
preliminary results for the dynamics of the two-layer Leaky ReLU network. The
only assumption we will use for the results in the section is that the input
norm is bounded $\max_{i\in [n]}\normtwo{\vx_i} \le 1$ and we do not assume
other properties of the dataset (such as symmetry) except we assume it
explicitly.

\subsection{Additional Notations} \label{sec:add-not}

For notational convenience for calculation with subgradients, we generalize the
following notations for vectors to vector sets. More specifically, we define
\begin{itemize}
	\item $\forall A,B \subseteq \R^d$, $A+B := \{\vx+\vy: \vx\in A,\vy\in B\}$ and $A- B := A+(-B)$;
	\item $\forall A \subseteq \R^d$, $\lambda\in \R$, $\lambda A := \{\lambda \vx: \vx\in A \}$;
	\item Let $\normsm{\cdot}$ be any norm on $\R^d$, $\forall A\subseteq \R^d$, $\norm{A}:= \{\norm{x}: x\in A\} \subseteq \R$;
	\item $\forall A\subseteq \R^d$ and $\vy \in\R^d$, $\dotp{\vy}{A}\equiv \dotp{A}{\vy}:= \{\dotp{\vx}{\vy}: \vx\in A\}$;
	\item We use $\distsm{\vx}{\vy} := \normtwosm{\vx - \vy}$ to denote the $\normltwo$-distance between $\vx \in \R^d$ and $\vy \in \R^d$, $\distsm{A}{\vy} := \inf_{\vx \in A}\normtwosm{\vx - \vy}$ to denote the minimum $\normltwo$-distance between any $\vx \in A$ and $\vy \in \R^d$, and $\distsm{A}{B} := \inf_{\vx \in A, \vy \in B}\normtwosm{\vx - \vy}$ to denote the minimum $\normltwo$-distance between any $\vx \in A$ and any $\vy \in B$.
\end{itemize}
By Rademacher theorem, any real-valued locally Lipschitz function on $\R^D$ is differentiable almost everywhere (a.e.) in the sense of Lebesgue measure. For a locally Lipschitz function $\Loss: \R^D \to \R$, we use $\nabla \Loss(\vtheta) \in \R^D$ to denote the usual gradient (if $\Loss$ is differentiable at $\vtheta$) and $\cpartial \Loss(\vtheta) \subseteq \R^D$ to denote Clarke's subdifferential. The definition of Clarke's subdifferential is given by \eqref{eq:def-clarke}: for any sequence of differentiable points converging to $\vtheta$, we collect convergent gradients from such sequences and take the convex hull as the Clarke's subdifferential at $\vtheta$.
\begin{equation} \label{eq:def-clarke}
\cpartial \Loss(\vtheta) := \conv\left\{ \lim_{n \to \infty} \nabla \Loss(\vtheta_n) : \Loss \text{ differentiable at }\vtheta_n, \lim_{n \to \infty}\vtheta_n = \vtheta \right\}.
\end{equation}
For any full measure set $\Omega \subseteq \R^D$ that does not contain any non-differentiable points, \eqref{eq:def-clarke} also has the following equivalent form:
\begin{equation} \label{eq:def-clarke2}
\cpartial \Loss(\vtheta) = \conv\left\{ \lim_{n \to \infty} \nabla \Loss(\vtheta_n) : \vtheta_n \in \Omega \text{ for all } n \text{ and } \lim_{n \to \infty}\vtheta_n = \vtheta \right\}.
\end{equation}
The Clarke's subdifferential $\cpartial \Loss(\vtheta)$ is convex compact
if $\Loss$ is locally Lipschitz, and it is upper-semicontinuous with respect to $\vtheta$ (or equivalently it has closed graph) if $\Loss$ is definable.
We use $\barcpartial \Loss(\vtheta) \in \R^D$ to denote the min-norm gradient vector in the Clarke's subdifferential at $\vtheta$, i.e., $\barcpartial \Loss(\vtheta) := \argmin_{\vg \in \cpartial \Loss(\vtheta)} \normtwosm{\vg}$. If $\Loss$ is continuously differentiable at $\vtheta$, then $\cpartial \Loss(\vtheta) = \{\nabla \Loss(\vtheta)\}$ and $\barcpartial \Loss(\vtheta) = \nabla \Loss(\vtheta)$.

If $\vtheta$ can be written as $\vtheta = (\vtheta_1, \vtheta_2) \in \R^{D_1} \times \R^{D_2}$, then we use $\frac{\partial \Loss(\vtheta)}{\partial \vtheta_1} \in \R^{D_1}$ to denote the usual partial derivatives (partial gradient) and $\frac{\cpartial \Loss(\vtheta)}{\partial \vtheta_1} \subseteq \R^{D_1}$ to denote the partial subderivatives (partial subgradient) in the sense of Clarke.

Furthermore, we use the following notations to denote the radial and spherical
components of $\barcpartial \Loss(\vtheta)$ (which will be used in analyzing Phase III):
\[
\barcpartialr \Loss(\vtheta) := \frac{\vtheta\vtheta^{\top}}{\normtwosm{\vtheta}^2}\barcpartial \Loss(\vtheta), \qquad \barcpartialperp \Loss(\vtheta) := \left(\mI - \frac{\vtheta\vtheta^{\top}}{\normtwosm{\vtheta}^2}\right)\barcpartial \Loss(\vtheta).
\]
For univariate function $f: \R \to \R$, we use $f'(z) \in \R$ to denote the usual derivative (if $f$ is differentiable at $z$) and $\cder{f}(z) \subseteq \R$ to denote the Clarke's subdifferential. 

The logistic loss is defined by $\ell(q) = \ln(1 + e^{-q})$, which satisfies
$\ell(0) = \ln 2$, $\ell'(0) = -1/2$, $\abssm{\ell'(q)} \le 1$,
$\abssm{\ell''(q)} \le 1$. Given a dataset $\DatS = \{(\vx_1, y_1), \dots,
(\vx_n, y_n)\}$, we consider gradient flow on two-layer Leaky ReLU network with
output function $f_{\vtheta}(\vx_i)$ and logistic loss $\Loss(\vtheta) :=
\frac{1}{n} \sum_{i \in [n]} \ell(q_i(\vtheta))$, where $q_i(\vtheta) := y_i
f_{\vtheta}(\vx_i)$. Following \citet{davis2018stochastic,lyu2020gradient}, we
say that a function $\vz(t) \in \R^D$ on an interval $I$ is an \textit{arc} if
$\vz$ is absolutely continuous on any compact subinterval of $I$. An arc
$\vtheta(t)$ is a trajectory of gradient flow on $\Loss$ if $\vtheta(t)$
satisfies the following gradient inclusion for a.e.~$t \ge 0$:
\[
	\frac{\dd \vtheta(t)}{\dd t} \in -\cpartial \Loss(\vtheta(t)).
\]
Let $\OmegaS$ be the set of parameter vectors $\vtheta = (\vw_1, \dots, \vw_m, a_1, \dots, a_m)$ so that $\dotpsm{\vw_k}{\vx_i} \ne 0$ for all $i \in [n], k \in [m]$, i.e., no activation function has zero input. For any $\vtheta \in \OmegaS$, $f_{\vtheta}(\vx_i)$ and $\Loss(\vtheta)$ are continuously differentiable at $\vtheta$, and the gradients are given by
\begin{align}
	\frac{\partial f_{\vtheta}(\vx)}{\partial \vw_k} &= a_k \phi'(\vw_k^{\top} \vx_i) \vx_i, & \frac{\partial f_{\vtheta}(\vx)}{\partial a_k} &= \phi(\vw_k^{\top} \vx_i). \label{eq:grad-f} \\
	\frac{\partial \Loss(\vtheta)}{\partial \vw_k} &= \frac{1}{n}\sum_{i \in [n]} \ell'(q_i(\vtheta)) y_i a_k \phi'(\vw_k^{\top} \vx_i) \vx_i, & \frac{\partial \Loss(\vtheta)}{\partial a_k} &= \frac{1}{n}\sum_{i \in [n]} \ell'(q_i(\vtheta)) y_i \phi(\vw_k^{\top} \vx_i). \label{eq:grad-Loss}
\end{align}
Then the Clarke's subdifferential for any $\vtheta$ can be computed from \eqref{eq:def-clarke2} with $\Omega = \OmegaS$ if needed.

Recall that $G$-function (\Cref{sec:sketch-phase-i}) is defined by
\[
G(\vw) := \frac{-\ell'(0)}{n}\sum_{i \in [n]} y_i \phi(\vw^{\top} \vx_i) = \frac{1}{2n}\sum_{i \in [n]} y_i \phi(\vw^{\top} \vx_i).
\]
Define $\tildeLoss(\vtheta)$ to the linear approximation of $\Loss(\vtheta)$:
\[
\tildeLoss(\vtheta) := \ell(0) - \sum_{k \in [m]} a_k G(\vw_k).
\]
For every $\vtheta_0 \in \R^D$, we define $\phitheta(\vtheta_0, t)$ to be the value of $\vtheta(t)$ for gradient flow on $\Loss(\vtheta)$ starting with $\vtheta(0) = \theta_0$. For every $\tildevtheta_0 \in \R^D$, we define $\phisg(\tildevtheta_0, t)$ to be the value of $\tildevtheta(t)$ for gradient flow on $\tildeLoss(\tildevtheta)$ starting with $\tildevtheta(0) = \tildevtheta_0$. In the case where the gradient flow trajectory may not be unique, we assign $\phitheta(\vtheta_0, \,\cdot\,)$ (or $\phisg(\tildevtheta_0, \,\cdot\,)$) by an arbitrary trajectory of gradient flow on $\Loss$ (or $\tildeLoss$) starting from $\vtheta_0$ (or $\tildevtheta_0$).

\subsection{\Gronwall's Inequality}

We frequently use \Gronwall's inequality in our analysis.

\begin{lemma}[\Gronwall's Inequality]
	Let $\alpha, \beta, u$ be real-valued functions defined on $[a, b)$. Suppose that $\beta, u$ are continuous and $\min\{\alpha, 0\}$ is integrable on every compact subinterval of $[a, b)$. If $\beta \ge 0$ and $u$ satisfies the following inequality for all $t \in [a, b)$:
	\[
	u(t) \le \alpha(t) + \int_{a}^{t} \beta(\tau) u(\tau) \dd \tau,
	\]
	then for all $t \in [a, b]$,
	\begin{equation} \label{eq:gron-1}
	u(t) \le \alpha(t) + \int_{a}^{t} \alpha(\tau) \beta(\tau) \exp\left(\int_{\tau}^{t} \beta(\tau') \dd \tau'\right) \dd \tau.
	\end{equation}
	Furthermore, if $\alpha$ is non-decreasing, then for all $t \in [a, b]$,
	\begin{equation} \label{eq:gron-2}
	u(t) \le \alpha(t) \exp\left(\int_{a}^{t} \beta(\tau) \dd \tau\right).
	\end{equation}
\end{lemma}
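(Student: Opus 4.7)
The plan is to reduce the integral inequality to a linear differential inequality by introducing the auxiliary function
\[
v(t) := \int_a^t \beta(\tau) u(\tau)\,\dd\tau,
\]
so that the hypothesis reads $u(t) \le \alpha(t) + v(t)$. Since $\beta \ge 0$ and $u$ is continuous, $v$ is absolutely continuous with $v'(t) = \beta(t) u(t)$ almost everywhere, and the hypothesis gives $v'(t) \le \beta(t)\alpha(t) + \beta(t) v(t)$. This is exactly the form that an integrating factor handles cleanly.

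Next I would multiply by the integrating factor $E(t) := \exp\bigl(-\int_a^t \beta(\tau')\,\dd\tau'\bigr)$, which is well-defined and positive because $\beta$ is continuous and non-negative on $[a,b)$. The product rule gives $(E(t) v(t))' \le E(t)\beta(t)\alpha(t)$ a.e., and the right-hand side is integrable on $[a,t]$ because $\min\{\alpha,0\}$ is integrable and the positive part $\max\{\alpha,0\}$, when multiplied by $E\beta \ge 0$, yields a measurable function whose integral is the quantity we want to bound (finite or $+\infty$, and in the latter case the conclusion is vacuous). Integrating from $a$ to $t$ using $v(a) = 0$ and then multiplying through by $1/E(t) = \exp\bigl(\int_a^t \beta(\tau')\,\dd\tau'\bigr)$ yields
\[
v(t) \le \int_a^t \alpha(\tau)\beta(\tau) \exp\!\left(\int_\tau^t \beta(\tau')\,\dd\tau'\right)\dd\tau,
\]
and plugging this into $u(t) \le \alpha(t) + v(t)$ gives \eqref{eq:gron-1}.

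For \eqref{eq:gron-2}, I would use monotonicity of $\alpha$ to bound $\alpha(\tau) \le \alpha(t)$ inside the integral (noting that the integrand $\beta(\tau)\exp(\int_\tau^t \beta(\tau')\,\dd\tau')$ is non-negative, so this really is an upper bound even when $\alpha(\tau)$ could be negative for some $\tau$), and then evaluate the remaining integral in closed form. Setting $F(\tau) := \int_\tau^t \beta(\tau')\,\dd\tau'$ gives $F'(\tau) = -\beta(\tau)$, so
\[
\int_a^t \beta(\tau) e^{F(\tau)}\,\dd\tau = -\int_a^t F'(\tau) e^{F(\tau)}\,\dd\tau = e^{F(a)} - e^{F(t)} = \exp\!\left(\int_a^t \beta(\tau')\,\dd\tau'\right) - 1,
\]
and combining with $u(t) \le \alpha(t) + v(t)$ telescopes to $u(t) \le \alpha(t)\exp(\int_a^t \beta(\tau')\,\dd\tau')$.

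The main subtlety, rather than a deep obstacle, is checking that the integrating-factor manipulation is valid under the stated weak hypothesis that only $\min\{\alpha,0\}$ is integrable. The clean way to handle this is to truncate $\alpha$ from above, apply the above argument to $\alpha_N := \min\{\alpha, N\}$ (which is integrable on compact subintervals and for which $u(t) \le \alpha_N(t) + v(t)$ holds trivially if $\alpha_N = \alpha$ and is otherwise replaced by the tighter of the two bounds), and then send $N \to \infty$ via monotone convergence on the right-hand side of \eqref{eq:gron-1}; if the limiting integral is $+\infty$ there is nothing to prove, and otherwise the bound passes to the limit. This is the only place where one must be careful; the rest is routine calculus.
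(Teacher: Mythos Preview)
The paper does not actually supply a proof of this lemma; it is stated as a classical result and used as a tool throughout the analysis. Your integrating-factor argument is the standard proof and is correct, including your handling of the monotone-$\alpha$ case and the care taken with the weak integrability hypothesis on $\min\{\alpha,0\}$.
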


\subsection{Homogeneous Functions}

For $L \ge 0$, we say that a function $f: \R^d \to \R$ is (positively)
$L$-homogeneous if $f(c\vtheta) = c^L f(\vtheta)$ for all $c > 0$ and $\vtheta
\in \R^d$. The proof for the following two theorems can be found in
\citet[Theorem B.2]{lyu2020gradient} and \citet[Lemma C.1]{ji2020directional}
respectively.

\begin{theorem} \label{thm:homo-grad}
	For locally Lipschitz and $L$-homogeneous function $f: \R^d \to \R$, we have
	\[
	\cpartial f(c\vtheta) = c^{L-1} \cpartial f(\vtheta).
	\]
	for all $\vtheta \in \R^d$.
\end{theorem}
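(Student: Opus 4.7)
The plan is to reduce the claim to the classical gradient identity $\nabla f(c\vtheta') = c^{L-1}\nabla f(\vtheta')$ at points where $f$ is differentiable, and then propagate this identity through the limit-plus-convex-hull construction of $\cpartial$. A preliminary observation I would record is that for any $c > 0$, the dilation $\vx \mapsto c\vx$ is a bijection from $\R^d$ to itself that sends differentiability points of $f$ to differentiability points: this follows by applying the homogeneity identity $f(c\vx) = c^L f(\vx)$ together with the chain rule in both directions. Hence, at any point $\vtheta'$ where $f$ is differentiable, differentiating $f(c\vx) = c^L f(\vx)$ in $\vx$ at $\vx = \vtheta'$ yields $c\nabla f(c\vtheta') = c^L\nabla f(\vtheta')$, so $\nabla f(c\vtheta') = c^{L-1}\nabla f(\vtheta')$.

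Next, I would unpack the definition of $\cpartial f(c\vtheta)$ using \eqref{eq:def-clarke}. Given any sequence $\vtheta_n \to c\vtheta$ of differentiability points with $\nabla f(\vtheta_n)$ converging to some $\vg$, set $\vtheta_n' := \vtheta_n/c$, so $\vtheta_n' \to \vtheta$ and each $\vtheta_n'$ is a differentiability point by the bijection above. Then $\nabla f(\vtheta_n) = c^{L-1}\nabla f(\vtheta_n')$, so $\nabla f(\vtheta_n')$ converges to $c^{-(L-1)}\vg$, which lies in the limit set defining $\cpartial f(\vtheta)$. This shows that every extreme point of $\cpartial f(c\vtheta)$ is $c^{L-1}$ times an element of the limit set for $\cpartial f(\vtheta)$, yielding $\cpartial f(c\vtheta) \subseteq c^{L-1}\,\cpartial f(\vtheta)$ after taking convex hulls (and using that convex hull commutes with multiplication by the positive scalar $c^{L-1}$). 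The reverse inclusion is obtained by the symmetric argument starting from any sequence $\vtheta_n' \to \vtheta$ and setting $\vtheta_n := c\vtheta_n' \to c\vtheta$.

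Combining the two inclusions gives the claimed identity. The only delicate point in the argument is the bookkeeping that (i) the dilation preserves differentiability sets (which I would verify directly from the chain rule applied to the homogeneity identity), and (ii) convex hull commutes with multiplication by the positive scalar $c^{L-1}$, so that one can move freely between the extreme-point level and the convex-hull level. Neither is a real obstacle; once the smooth-point identity is in hand, the rest is routine manipulation of the definition of Clarke's subdifferential.
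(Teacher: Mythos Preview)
Your argument is correct: the dilation $\vx\mapsto c\vx$ bijects differentiability points, the smooth-point identity $\nabla f(c\vtheta')=c^{L-1}\nabla f(\vtheta')$ follows by differentiating the homogeneity relation, and pushing this through the limit-plus-convex-hull definition of $\cpartial$ gives both inclusions. The paper does not supply its own proof of this statement; it cites \citet[Theorem~B.2]{lyu2020gradient}, and your route is the standard one for this fact. (A minor stylistic point: the phrase ``every extreme point of $\cpartial f(c\vtheta)$'' is slightly loose---what you actually show, and all you need, is that the pre-convex-hull limit set at $c\vtheta$ is $c^{L-1}$ times the pre-convex-hull limit set at $\vtheta$, after which taking convex hulls and using $\conv(c^{L-1}S)=c^{L-1}\conv(S)$ finishes.)
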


\begin{theorem}[Euler’s homogeneous function theorem] \label{thm:homo-euler}
	For locally Lipschitz and $L$-homogeneous function $f: \R^d \to \R$, we have
	\[
		\forall \vg \in \cpartial f(\vtheta): \quad \dotp{\vg}{\vtheta} = Lf(\vtheta),
	\]
	for all $\vtheta \in \R^d$.
\end{theorem}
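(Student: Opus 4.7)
The plan is to prove Euler's identity for Clarke subdifferentials by bootstrapping from the classical smooth version: first verify it pointwise at points of differentiability, then extend through limits, and finally through convex combinations, matching the three-step structure of the definition \eqref{eq:def-clarke} of $\cpartial f$.

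First I would handle the differentiable case. Fix $\vtheta \in \R^d$ at which $f$ is (Fr\'echet) differentiable, and set $g(c) := f(c\vtheta)$ for $c > 0$. Local Lipschitzness plus $L$-homogeneity make $g$ real-analytic on $(0,\infty)$ via $g(c) = c^L f(\vtheta)$, whence $g'(1) = L f(\vtheta)$. Computing $g'(1)$ the other way by the chain rule gives $g'(1) = \dotp{\nabla f(\vtheta)}{\vtheta}$, so $\dotp{\nabla f(\vtheta)}{\vtheta} = L f(\vtheta)$ at every such $\vtheta$. The edge case $\vtheta = \vzero$ is automatic: both sides vanish (for $L > 0$ homogeneity forces $f(\vzero) = 0$, and $\dotp{\vg}{\vzero} = 0$ for any $\vg$; for $L = 0$ the function is constant along rays and the identity is trivial).

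Second, I would pass to limits. By Rademacher's theorem, the set of differentiable points has full Lebesgue measure, so we may pick any sequence $\vtheta_n \to \vtheta$ with $f$ differentiable at each $\vtheta_n$ and $\nabla f(\vtheta_n) \to \vg$ convergent. From step one, $\dotp{\nabla f(\vtheta_n)}{\vtheta_n} = L f(\vtheta_n)$ for every $n$. The right-hand side converges to $L f(\vtheta)$ by continuity of $f$ (local Lipschitzness). For the left-hand side, the gradient norms are uniformly bounded near $\vtheta$ (again by local Lipschitzness), so the standard splitting
\[
\abs{\dotp{\nabla f(\vtheta_n)}{\vtheta_n} - \dotp{\vg}{\vtheta}} \le \normtwosm{\nabla f(\vtheta_n)} \normtwosm{\vtheta_n - \vtheta} + \normtwosm{\nabla f(\vtheta_n) - \vg}\normtwosm{\vtheta}
\]
drives it to zero, giving $\dotp{\vg}{\vtheta} = L f(\vtheta)$.

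Third, I would lift to the convex hull. Any $\vg \in \cpartial f(\vtheta)$ can be written as $\vg = \sum_{i=1}^{k} \lambda_i \vg_i$ with $\lambda_i \ge 0$, $\sum_i \lambda_i = 1$, and each $\vg_i$ a limit of gradients at differentiable points, by the definition \eqref{eq:def-clarke} and Carath\'eodory's theorem. Linearity of $\dotp{\cdot}{\vtheta}$ combined with step two yields
\[
\dotp{\vg}{\vtheta} = \sum_{i=1}^{k} \lambda_i \dotp{\vg_i}{\vtheta} = \sum_{i=1}^{k} \lambda_i L f(\vtheta) = L f(\vtheta),
\]
which is the claimed identity. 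I expect no real obstacle here beyond bookkeeping; the only item requiring care is the uniform bound on $\normtwosm{\nabla f(\vtheta_n)}$ in step two, which is immediate from local Lipschitzness of $f$ on a neighborhood of $\vtheta$.
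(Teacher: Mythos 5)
Your proof is correct and is the standard argument for the Clarke-subdifferential version of Euler's theorem: establish the identity at points of differentiability by differentiating $c \mapsto f(c\vtheta) = c^L f(\vtheta)$ at $c=1$, pass to limits of gradients using continuity of $f$ and the uniform gradient bound from local Lipschitzness, and extend to the convex hull by linearity of $\vg \mapsto \dotp{\vg}{\vtheta}$. The paper does not reproduce a proof of this lemma but defers to \citet[Lemma C.1]{ji2020directional}, and your argument matches the expected one; all the delicate points (the chain rule needing only differentiability at the single point $\vtheta$, the boundedness of $\normtwosm{\nabla f(\vtheta_n)}$, and the fact that elements of the convex hull are finite convex combinations of gradient limits) are handled correctly.
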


For the maximizer of a homogeneous function on $\sphS^{d-1}$, we have the following useful lemma.

\begin{lemma} \label{lm:homo-max-grad}
	For locally Lipschitz and $L$-homogeneous function $f: \R^d \to \R$, if $\vtheta \in \sphS^{d-1}$ is a local/global maximizer of $f(\vtheta)$ on $\sphS^{d-1}$ and $f$ is differentiable at $\vtheta$, then $\nabla  f(\vtheta) = L f(\vtheta) \vtheta$.
\end{lemma}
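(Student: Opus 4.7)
The plan is to combine a Lagrange-multiplier argument on the sphere with Euler's homogeneous function theorem (Theorem \ref{thm:homo-euler}). Since $\vtheta \in \sphS^{d-1}$ is a local maximizer of $f$ restricted to the unit sphere, and $f$ is differentiable at $\vtheta$, the usual first-order optimality condition on a smooth manifold says that $\nabla f(\vtheta)$ must be orthogonal to the tangent space $T_{\vtheta}\sphS^{d-1} = \{\vv \in \R^d : \dotpsm{\vv}{\vtheta} = 0\}$. Since the normal space to $\sphS^{d-1}$ at $\vtheta$ is spanned by $\vtheta$ itself, this yields $\nabla f(\vtheta) = \lambda \vtheta$ for some scalar $\lambda \in \R$.

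To verify this rigorously I would pick any tangent vector $\vv$ with $\dotpsm{\vv}{\vtheta} = 0$ and consider the smooth curve $\gamma(s) := \frac{\vtheta + s \vv}{\normtwosm{\vtheta + s \vv}}$ on $\sphS^{d-1}$ with $\gamma(0) = \vtheta$ and $\gamma'(0) = \vv$. Since $f$ is differentiable at $\vtheta$ and $\vtheta$ is a local maximizer on the sphere, the composition $s \mapsto f(\gamma(s))$ attains a local maximum at $s = 0$, so its derivative $\dotpsm{\nabla f(\vtheta)}{\vv}$ must vanish. This holds for all tangent $\vv$, forcing $\nabla f(\vtheta)$ to be parallel to $\vtheta$.

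Next, since $f$ is locally Lipschitz and differentiable at $\vtheta$, Clarke's subdifferential satisfies $\cpartial f(\vtheta) = \{\nabla f(\vtheta)\}$. Applying Euler's homogeneous function theorem (Theorem \ref{thm:homo-euler}) with the subgradient $\vg = \nabla f(\vtheta)$ gives $\dotpsm{\nabla f(\vtheta)}{\vtheta} = L f(\vtheta)$. Substituting $\nabla f(\vtheta) = \lambda \vtheta$ and using $\normtwosm{\vtheta}^2 = 1$ yields $\lambda = L f(\vtheta)$, and therefore $\nabla f(\vtheta) = L f(\vtheta)\, \vtheta$, as claimed.

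There is no real obstacle here: the differentiability of $f$ at $\vtheta$ reduces the problem to classical smooth calculus on the sphere, and Euler's identity then pins down the Lagrange multiplier exactly. The only mild subtlety is making sure we may invoke Euler's theorem in the Clarke sense at a point of differentiability, but this is immediate because $\cpartial f(\vtheta)$ collapses to the singleton $\{\nabla f(\vtheta)\}$ when $f$ is differentiable at $\vtheta$.
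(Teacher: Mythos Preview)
Your proposal is correct and follows essentially the same approach as the paper: first argue that $\nabla f(\vtheta)$ is parallel to $\vtheta$ by first-order optimality on the sphere, then use Euler's identity (Theorem~\ref{thm:homo-euler}) to determine the proportionality constant as $Lf(\vtheta)$. The paper's proof is terser and skips the explicit curve construction and the remark about Clarke's subdifferential collapsing, but the logic is identical.
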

\begin{proof}
	Since $\vtheta$ is a local/global maximizer of $f(\vtheta)$ on $\sphS^{d-1}$ and $f$ is differentiable at $\vtheta$, $\nabla f(\vtheta)$ is parallel to $\vtheta$, i.e., $\nabla f(\vtheta) = c\vtheta$ for some $c \in \R$. By \Cref{thm:homo-euler} we know that $\dotp{\nabla f(\vtheta)}{\vtheta} = L f(\vtheta)$. So $c = L f(\vtheta)$.
\end{proof}

The following is a direct corollary of \Cref{lm:homo-max-grad}.
\begin{lemma} \label{lm:G-opt-grad}
	If $\vw \in \sphS^{d-1}$ attains the maximum of $\abs{G(\vw)}$ on $\sphS^{d-1}$ and $G(\vw)$ is differentiable at $\vw$, then $\nabla G(\vw) = G(\vw) \vw$.
\end{lemma}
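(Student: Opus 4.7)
The plan is to reduce this to the already-proved Lemma \ref{lm:homo-max-grad} by a simple sign analysis, since $G$ is itself positively $1$-homogeneous: because Leaky ReLU satisfies $\phi(c z) = c\phi(z)$ for $c > 0$, we have $G(c\vw) = \frac{1}{2n}\sum_i y_i \phi(c \vw^\top \vx_i) = c\, G(\vw)$, and likewise for $-G$.

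Let $\vw \in \sphS^{d-1}$ be a maximizer of $|G|$ on the sphere. I would split into the three cases $G(\vw) > 0$, $G(\vw) < 0$, and $G(\vw) = 0$. If $G(\vw) > 0$, then by continuity $G$ is positive on a neighborhood of $\vw$ in $\sphS^{d-1}$, so on that neighborhood $G$ and $|G|$ coincide, which makes $\vw$ a local maximizer of $G$ on $\sphS^{d-1}$; applying Lemma \ref{lm:homo-max-grad} to the $1$-homogeneous $G$ gives $\nabla G(\vw) = G(\vw)\vw$. If $G(\vw) < 0$, the same argument shows $\vw$ is a local maximizer of the $1$-homogeneous function $-G$, and applying Lemma \ref{lm:homo-max-grad} to $-G$ gives $-\nabla G(\vw) = (-G(\vw))\vw$, i.e., $\nabla G(\vw) = G(\vw)\vw$ again. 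Finally, if $G(\vw) = 0$, then $|G|$ attains its max value $0$ at $\vw$, so $G \equiv 0$ on $\sphS^{d-1}$; by $1$-homogeneity $G \equiv 0$ on $\R^d$, hence $\nabla G(\vw) = \vzero = G(\vw)\vw$.

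There is no real obstacle here — the only thing to be careful about is that Lemma \ref{lm:homo-max-grad} as stated applies to maximizers of $f$ itself rather than of $|f|$, which is why the sign split is necessary; in particular one should not try to apply it directly to $|G|$, since $|G|$ need not be differentiable at points where $G$ changes sign, and the lemma's hypothesis of differentiability at the maximizer could fail. The case split above routes around this by always invoking the lemma on a signed, $1$-homogeneous, locally-$G$-or-$-G$ function that is differentiable at $\vw$ by assumption.
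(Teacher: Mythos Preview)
Your proposal is correct and follows essentially the same approach as the paper: the paper also notes that $G$ is $1$-homogeneous, observes that a maximizer of $|G|$ on $\sphS^{d-1}$ is a maximizer of either $G$ or $-G$, and then applies Lemma~\ref{lm:homo-max-grad}. Your version is simply a more careful spelling-out of the sign split (including the degenerate $G(\vw)=0$ case), but the argument is the same.
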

\begin{proof}
	Note that $G(\vw)$ is $1$-homogeneous. If $\vw$ attains the maximum of $\abs{G(\vw)}$ on $\sphS^{d-1}$, then $\vw$ is either a maximizer of $G(\vw)$ or $-G(\vw)$. Applying \Cref{lm:homo-max-grad} gives $\nabla G(\vw) = G(\vw) \vw$.
\end{proof}

\subsection{Karush-Kuhn-Tucker Conditions for Margin Maximization}

\begin{definition}[Feasible Point and KKT Point, \citealt{dutta2013approximate,lyu2020gradient}]
	Let $f, g_1, \dots, g_n: \R^D \to \R$ be locally Lipschitz functions. Consider the following constrained optimization problem for $\vtheta \in \R^D$:
	\begin{align*}
		\min          & \quad f(\vtheta) \\
		\text{s.t.}   & \quad g_i(\vtheta) \le 0, \qquad \forall i \in [n].
	\end{align*}
	We say that $\vtheta$ is a \textit{feasible point} if $g_i(\vtheta) \le 0$ for all $i \in [n]$. A feasible point $\vtheta$ is a \textit{KKT point} if it satisfies Karush-Kuhn-Tucker Conditions: there exist $\lambda_1, \dots, \lambda_n \ge 0$ such that
	\begin{enumerate}
		\item $\vzero \in \cpartial f(\vtheta) + \sum_{i \in [n]} \lambda_i \cpartial g_i(\vtheta)$;
		\item $\forall i \in [n]: \lambda_i g_i(\vtheta) = 0$.
	\end{enumerate} 
\end{definition}

Recall that we say that a parameter vector $\vtheta \in \R^D$ of a
$L$-homogeneous network is along a KKT-margin direction if
$\frac{\vtheta}{(\qmin(\vtheta))^{1/L}}$ is a KKT point of \eqref{eq:prob-P},
where $f(\vtheta) = \frac{1}{2}\normtwosm{\vtheta}^2$ and $g_i(\vtheta) = 1 -
q_i(\vtheta)$. Alternatively, we can use the following equivalent definition.
\begin{definition}[KKT-margin Direction for Homogeneous Network, \citealt{lyu2020gradient}] \label{def:kkt-margin}
	For a parameter vector $\vtheta \in \R^D$ of a homogeneous network, we say $\vtheta$ is along a KKT-margin direction if $q_i(\vtheta) > 0$ for all $i \in [n]$ and there exist $\lambda_1, \dots, \lambda_n \ge 0$ such that
	\begin{enumerate}
		\item $\vtheta \in \sum_{i \in [n]} \lambda_i \cpartial q_i(\vtheta)$;
		\item For all $i \in [n]$, if $q_i(\vtheta) \ne \qmin(\vtheta)$ then $\lambda_i = 0$.
	\end{enumerate}
\end{definition}

For two-layer Leaky ReLU network, $q_i(\vtheta) := y_i \sum_{k \in [m]} a_k \phi(\vw_k^{\top} \vx_i)$. Then the KKT-margin direction is defined as follows.
\begin{definition}[KKT-margin Direction for Two-layer Leaky ReLU Network] \label{def:kkt-margin-lkrelu}
	For a parameter vector $\vtheta = (\vw_1, \dots, \vw_m, a_1, \dots, a_m) \in \R^D$ of a two-layer Leaky ReLU network, we say $\vtheta$ is along a KKT-margin direction if $q_i(\vtheta) > 0$ for all $i \in [n]$ and there exist $\lambda_1, \dots, \lambda_n \ge 0$ such that
	\begin{enumerate}
		\item For all $k \in [m]$, $\vw_k \in \sum_{i \in [n]} \lambda_i y_i a_k \cder{\phi}(\vw_k^{\top}\vx_i) \vx_i$;
		\item For all $k \in [m]$, $a_k = \sum_{i \in [n]} \lambda_i y_i \phi(\vw_k^{\top}\vx_i)$;
		\item For all $i \in [n]$, if $q_i(\vtheta) \ne \qmin(\vtheta)$ then $\lambda_i = 0$.
	\end{enumerate}
\end{definition}
For $\vtheta$ along a KKT-margin direction of two-layer Leaky ReLU network, \Cref{lm:a-w-equal} below shows that $\abssm{a_k} = \normtwosm{\vw_k}$ for all $k \in [m]$.
\begin{lemma} \label{lm:a-w-equal}
	If $\vtheta = (\vw_1, \dots, \vw_m, a_1, \dots, a_m) \in \R^D$ is along a KKT-margin direction of a two-layer Leaky ReLU network, then $\abssm{a_k} = \normtwosm{\vw_k}$ for all $k \in [m]$.
\end{lemma}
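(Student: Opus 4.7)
The plan is to exploit the $1$-homogeneity of the Leaky ReLU activation $\phi$ together with the two KKT stationarity conditions on $\vw_k$ and $a_k$. The key elementary fact is Euler's identity at the level of Clarke subgradients: since $\phi(z) = \max\{z, \alphaLK z\}$ is $1$-homogeneous, for any $s \in \cder{\phi}(z)$ one has $sz = \phi(z)$. This can be verified directly by cases: for $z \neq 0$ the function is differentiable with $\phi'(z)z = \phi(z)$, while for $z = 0$ the subdifferential is $[\alphaLK, 1]$ and both sides equal $0$.

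By \Cref{def:kkt-margin-lkrelu}, there exist multipliers $\lambda_1, \dots, \lambda_n \ge 0$ and selections $s_{k,i} \in \cder{\phi}(\vw_k^{\top}\vx_i)$ such that, for each $k \in [m]$,
\begin{equation*}
\vw_k = \sum_{i \in [n]} \lambda_i\, y_i\, a_k\, s_{k,i}\, \vx_i, \qquad a_k = \sum_{i \in [n]} \lambda_i\, y_i\, \phi(\vw_k^{\top}\vx_i).
\end{equation*}
I would take the inner product of the first equation with $\vw_k$, which gives
\begin{equation*}
\normtwosm{\vw_k}^2 = \sum_{i \in [n]} \lambda_i\, y_i\, a_k\, s_{k,i}\, (\vw_k^{\top}\vx_i) = \sum_{i \in [n]} \lambda_i\, y_i\, a_k\, \phi(\vw_k^{\top}\vx_i),
\end{equation*}
where the second equality uses the Euler identity $s_{k,i}(\vw_k^{\top}\vx_i) = \phi(\vw_k^{\top}\vx_i)$ noted above. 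On the other hand, multiplying the second KKT equation by $a_k$ yields
\begin{equation*}
a_k^2 = \sum_{i \in [n]} \lambda_i\, y_i\, a_k\, \phi(\vw_k^{\top}\vx_i).
\end{equation*}
The two right-hand sides coincide, so $a_k^2 = \normtwosm{\vw_k}^2$, which gives $\abssm{a_k} = \normtwosm{\vw_k}$.

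There is no real obstacle here; the only minor subtlety is being careful about the set-valued nature of the first KKT condition, i.e., choosing consistent selections $s_{k,i} \in \cder{\phi}(\vw_k^{\top}\vx_i)$ realizing the inclusion $\vw_k \in \sum_i \lambda_i y_i a_k \cder{\phi}(\vw_k^{\top}\vx_i)\vx_i$, and then verifying the Euler-type identity $s\cdot z = \phi(z)$ pointwise on the one nontrivial subgradient set $\cder{\phi}(0) = [\alphaLK, 1]$. This can also be viewed as a special case of the general fact (\Cref{thm:homo-euler}) that any Clarke subgradient of an $L$-homogeneous function satisfies $\dotp{\vg}{\vtheta} = L f(\vtheta)$, applied here with $L=1$.
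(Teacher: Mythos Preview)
Your proof is correct and follows essentially the same approach as the paper: take the inner product of the $\vw_k$-stationarity condition with $\vw_k$, apply the Euler identity for the $1$-homogeneous $\phi$ (which the paper cites as \Cref{thm:homo-euler}), multiply the $a_k$-stationarity condition by $a_k$, and compare. Your treatment is slightly more explicit about choosing selections $s_{k,i}$ from the Clarke subdifferential and verifying $s\cdot z=\phi(z)$ by cases, but the argument is the same.
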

\begin{proof}
	By \Cref{def:kkt-margin-lkrelu} and \Cref{thm:homo-euler}, we have
	\begin{align*}
		\normtwosm{\vw_k}^2 &\in \dotp{\vw_k}{\sum_{i \in [n]} \lambda_i y_i a_k \cder{\phi}(\vw_k^{\top}\vx_i) \vx_i} = \left\{\sum_{i \in [n]} \lambda_i y_i a_k \phi(\vw_k^{\top}\vx_i)\right\}, \\
		\abssm{a_k}^2 &= a_k \cdot \sum_{i \in [n]} \lambda_i y_i \phi(\vw_k^{\top}\vx_i) = \sum_{i \in [n]} \lambda_i y_i a_k \phi(\vw_k^{\top}\vx_i).
	\end{align*}
	Therefore $\normtwosm{\vw_k}^2 = \abssm{a_k}^2$.
\end{proof}

\subsection{Lemmas for Perturbation Bounds}

Recall that $\normMsm{\vtheta}$ is defined in \Cref{def:norm-M}.

\begin{lemma} \label{lm:f-output-ub}
	For $\normtwosm{\vx} \le 1$, $\abssm{f_{\vtheta}(\vx)} \le m \normMsm{\vtheta}^2$, $\abssm{f_{\vtheta}(\vx) - f_{\tildevtheta}(\vx)} \le m\normMsm{\vtheta - \tildevtheta}\left(\normMsm{\vtheta} + \normMsm{\tildevtheta}\right)$.
\end{lemma}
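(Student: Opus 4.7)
The plan is to bound each neuron's contribution separately and sum. The two key properties of Leaky ReLU with $\alphaLK \in (0,1)$ that I will exploit are: (i) $\abs{\phi(z)} \le \abs{z}$ for all $z \in \R$, since $\phi(z) = \max\{z, \alphaLK z\}$ lies between $\alphaLK z$ and $z$ in absolute value; and (ii) $\phi$ is $1$-Lipschitz, since its subgradient takes values in $[\alphaLK, 1]$. Combined with $\normtwosm{\vx} \le 1$ and the definition $\normMsm{\vtheta} = \max_k\{\normtwosm{\vw_k}, \abssm{a_k}\}$, these two facts imply the pointwise estimates $\abs{\phi(\vw_k^\top \vx)} \le \normtwosm{\vw_k} \le \normMsm{\vtheta}$ and $\abs{\phi(\vw_k^\top \vx) - \phi(\tildevw_k^\top \vx)} \le \normtwosm{\vw_k - \tildevw_k} \le \normMsm{\vtheta - \tildevtheta}$.

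For the first inequality, I would simply write $\abs{f_{\vtheta}(\vx)} \le \sum_{k=1}^m \abssm{a_k}\abssm{\phi(\vw_k^\top \vx)} \le \sum_{k=1}^m \normMsm{\vtheta} \cdot \normMsm{\vtheta} = m\normMsm{\vtheta}^2$.

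For the second inequality, I would use the standard telescoping split on each neuron:
\[
a_k\phi(\vw_k^\top \vx) - \tildea_k \phi(\tildevw_k^\top \vx) = (a_k - \tildea_k)\phi(\vw_k^\top \vx) + \tildea_k\left(\phi(\vw_k^\top \vx) - \phi(\tildevw_k^\top \vx)\right).
\]
The first term is bounded in absolute value by $\normMsm{\vtheta - \tildevtheta} \cdot \normMsm{\vtheta}$ using (i), and the second by $\normMsm{\tildevtheta} \cdot \normMsm{\vtheta - \tildevtheta}$ using (ii). Summing over $k \in [m]$ and applying the triangle inequality yields the claimed bound.

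There is no real obstacle here: the lemma is a routine quantitative Lipschitz estimate, and the only thing worth being careful about is matching the $\normM$-norm (which takes a maximum over neurons) rather than writing the bound in terms of $\normtwo{\cdot}$ on the full parameter vector. Since $\abssm{a_k}, \normtwosm{\vw_k} \le \normMsm{\vtheta}$ and $\abssm{a_k - \tildea_k}, \normtwosm{\vw_k - \tildevw_k} \le \normMsm{\vtheta - \tildevtheta}$ for every $k$, the factor of $m$ in the final bound comes purely from summing $m$ identical per-neuron estimates.
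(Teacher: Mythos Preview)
Your proof is correct and essentially identical to the paper's: both bound each neuron separately using $\abs{\phi(z)}\le\abs{z}$ and the $1$-Lipschitzness of $\phi$, then sum. The only cosmetic difference is the direction of the telescoping split in the second inequality (the paper inserts $a_k\phi(\tildevw_k^\top\vx)$ as the intermediate term while you insert $\tildea_k\phi(\vw_k^\top\vx)$), which yields the same bound.
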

\begin{proof}
	The proof is straightforward by definition of $f_{\vtheta}(\vx)$ and $\normMsm{\vtheta}$. For the first inequality,
	\[
	\abssm{f_{\vtheta}(\vx)}
	\le \sum_{k=1}^{m} \abssm{a_k \phi(\vw_k^{\top} \vx)}
	\le \sum_{k=1}^{m} \abssm{a_k} \cdot \abssm{\vw_k^{\top} \vx}
	\le \sum_{k=1}^{m} \abssm{a_k} \cdot \normtwosm{\vw_k} \le m\normMsm{\vtheta}^2.
	\]
	For the second inequality,
	\begin{align*}
	\abssm{f_{\vtheta}(\vx) - f_{\tildevtheta}(\vx)}
	&\le \sum_{k=1}^{m} \abssm{a_k \phi(\vw_k^{\top} \vx) - \tildea_k \phi(\tildevw_k^{\top} \vx)} \\
	&\le \sum_{k=1}^{m} \abssm{a_k \phi(\vw_k^{\top} \vx) - a_k \phi(\tildevw_k^{\top} \vx)} + \abssm{a_k \phi(\tildevw_k^{\top} \vx) - \tildea_k \phi(\tildevw_k^{\top} \vx)} \\
	&\le \sum_{k=1}^{m} \abssm{a_k} \cdot \normtwosm{\vw_k - \tildevw_k} + \abssm{a_k - \tildea_k} \cdot \normtwosm{\tildevw_k} \\
	&\le m\normMsm{\vtheta - \tildevtheta}\left(\normMsm{\vtheta} + \normMsm{\tildevtheta}\right),
	\end{align*}
	which completes the proof.
\end{proof}

We have the following bound for the difference between $\cpartial \Loss(\vtheta)$ and $\cpartial \tildeLoss(\vtheta)$.
\begin{lemma} \label{lm:diff-Loss-tildeLoss-partial} Assume that
	$\normtwosm{\vx_i} \le 1$ for all $i \in [n]$. For any $\vtheta = (\vw_1,
	\dots, \vw_m, a_1, \dots, a_m) \in \R^D$, we have the following bounds for
	the partial derivatives of $\Loss(\vtheta)-\tildeLoss(\vtheta)$:
	\begin{align*}
		\normtwo{\frac{\cpartial(\Loss(\vtheta)-\tildeLoss(\vtheta))}{\partial \vw_k}} &\subseteq \left(-\infty, m\normMsm{\vtheta}^2 \abssm{a_k}\right], & \abs{\frac{\partial(\Loss(\vtheta)-\tildeLoss(\vtheta))}{\partial a_k}} &\le m\normMsm{\vtheta}^2 \normtwosm{\vw_k}.
	\end{align*}
	for all $k \in [m]$.
\end{lemma}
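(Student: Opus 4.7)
My plan is to prove both bounds by a direct term-by-term comparison of the (Clarke) subgradients of $\Loss$ and $\tildeLoss$. For the $a_k$-partial, formula \eqref{eq:grad-Loss} gives $\frac{\partial \Loss}{\partial a_k} = \frac{1}{n}\sum_i \ell'(q_i(\vtheta)) y_i \phi(\vw_k^\top \vx_i)$, while direct differentiation of $\tildeLoss(\vtheta) = \ell(0) - \sum_k a_k G(\vw_k)$ yields $\frac{\partial \tildeLoss}{\partial a_k} = -G(\vw_k) = \frac{\ell'(0)}{n}\sum_i y_i \phi(\vw_k^\top \vx_i)$. Subtracting gives the clean expression
\[
\frac{\partial(\Loss - \tildeLoss)}{\partial a_k} \;=\; \frac{1}{n}\sum_{i \in [n]}\bigl(\ell'(q_i(\vtheta)) - \ell'(0)\bigr)\, y_i\, \phi(\vw_k^\top \vx_i).
\]
The same calculation for $\vw_k$ produces the analogous formula, except that $\phi(\vw_k^\top \vx_i)$ is replaced by $a_k\, \cder{\phi}(\vw_k^\top \vx_i)\, \vx_i$, so each element of $\frac{\cpartial(\Loss-\tildeLoss)}{\partial \vw_k}$ has the same structure multiplied by $a_k \vx_i$ and by some scalar in $\cder{\phi}(\vw_k^\top \vx_i) \subseteq [\alphaLK, 1]$.

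Next I would combine three elementary bounds. First, $|\ell''| \le 1$ gives $|\ell'(q_i(\vtheta)) - \ell'(0)| \le |q_i(\vtheta)|$. Second, \Cref{lm:f-output-ub} bounds $|q_i(\vtheta)| = |f_{\vtheta}(\vx_i)| \le m\normMsm{\vtheta}^2$. Third, for Leaky ReLU with $\alphaLK \in (0,1)$ we have $|\phi(z)| \le |z|$ and every element of $\cder{\phi}(z)$ lies in $[\alphaLK, 1]$, while $\normtwosm{\vx_i} \le 1$ by assumption. Plugging these into the $a_k$-expression yields
\[
\left|\frac{\partial(\Loss - \tildeLoss)}{\partial a_k}\right| \;\le\; \frac{1}{n}\sum_{i} m\normMsm{\vtheta}^2 \cdot \normtwosm{\vw_k} \;=\; m\normMsm{\vtheta}^2 \normtwosm{\vw_k},
\]
and applied to each element $\vg$ of $\frac{\cpartial(\Loss-\tildeLoss)}{\partial \vw_k}$ gives $\normtwosm{\vg} \le \frac{1}{n}\sum_i m\normMsm{\vtheta}^2 \cdot |a_k| \cdot 1 \cdot \normtwosm{\vx_i} \le m\normMsm{\vtheta}^2|a_k|$, which is exactly the stated upper bound (the $-\infty$ on the left side is vacuous since norms are nonnegative).

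There is essentially no conceptual obstacle; the only technicality is handling the kink of $\phi$ at $0$. Here I would appeal to the representation \eqref{eq:def-clarke2} with $\Omega = \OmegaS$: on $\OmegaS$ the partial gradients are continuously differentiable and the termwise bounds above hold pointwise, so they survive taking limits of differentiable points and convex hulls. Hence the bound transfers uniformly to every element of the Clarke partial subdifferential, completing the proof.
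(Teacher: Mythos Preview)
Your proposal is correct and follows essentially the same approach as the paper. The paper's proof routes the difference through a Taylor expansion with integral remainder and the Leibniz rule, writing $\nabla_{\vtheta}(\Loss-\tildeLoss) = \frac{1}{n}\sum_i \delta_i \nabla_{\vtheta} f_{\vtheta}(\vx_i)$ with $\delta_i = -\int_0^{q_i(\vtheta)} \ell''(z)\,\dd z$, but since $\int_0^{q_i}\ell''(z)\,\dd z = \ell'(q_i)-\ell'(0)$ this is literally your termwise difference; both then bound $|\delta_i|\le |f_{\vtheta}(\vx_i)|\le m\normMsm{\vtheta}^2$ via \Cref{lm:f-output-ub} and pass to general $\vtheta$ through limits in $\OmegaS$ exactly as you describe.
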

\begin{proof}
	We only need to prove the following bounds for gradients at any $\vtheta \in \OmegaS$, i.e., $\dotp{\vw_k}{\vx_i} \ne 0$ for all $i \in [n], k\in[m]$. For the general case where $\vtheta$ can be non-differentiable, we can prove the same bounds for Clarke's sub-differential at every point $\vtheta \in \R^D$ by taking limits in $\OmegaS$ through \eqref{eq:def-clarke2}.
	\begin{align*}
	\normtwo{\frac{\partial(\Loss(\vtheta)-\tildeLoss(\vtheta))}{\partial \vw_k}} &\le m\normMsm{\vtheta}^2 \abssm{a_k}, & \abs{\frac{\partial(\Loss(\vtheta)-\tildeLoss(\vtheta))}{\partial a_k}} &\le m\normMsm{\vtheta}^2 \normtwosm{\vw_k}.
	\end{align*}
	By Taylor expansion, we have
	\[
		\ell(y_i f_{\vtheta}(\vx_i)) = \ell(0) + \ell'(0) y_if_{\vtheta}(\vx_i) + \int_{0}^{y_if_{\vtheta}(\vx_i)} \ell''(z) (y_if_{\vtheta}(\vx_i) - z) \dd z.
	\]
	Taking average over $i \in [n]$ gives
	\begin{align*}
	\Loss(\vtheta) &= \ell(0) + \frac{1}{n} \sum_{i \in [n]} \ell'(0) y_if_{\vtheta}(\vx_i) + \frac{1}{n} \sum_{i \in [n]}\int_{0}^{y_if_{\vtheta}(\vx_i)} \ell''(z) (y_if_{\vtheta}(\vx_i) - z) \dd z \\
	&= \tildeLoss(\vtheta) + \frac{1}{n} \sum_{i \in [n]}\int_{0}^{y_if_{\vtheta}(\vx_i)} \ell''(z) (y_if_{\vtheta}(\vx_i) - z) \dd z.
	\end{align*}
	By Leibniz integral rule,
	\begin{align*}
	\nabla_{\vtheta}\left(\Loss(\vtheta)-\tildeLoss(\vtheta)\right) &= \nabla_{\vtheta}\left(\frac{1}{n} \sum_{i \in [n]}\int_{0}^{y_if_{\vtheta}(\vx_i)} \ell''(z) (y_if_{\vtheta}(\vx_i) - z) \dd z\right)\\
	&= -\frac{1}{n} \sum_{i \in [n]}\int_{0}^{y_if_{\vtheta}(\vx_i)} \ell''(z) y_i \nabla_{\vtheta}(f_{\vtheta}(\vx_i)) \dd z \\
	&= -\frac{1}{n} \sum_{i \in [n]} \left(\int_{0}^{y_if_{\vtheta}(\vx_i)} \ell''(z) \dd z \right) y_i \nabla_{\vtheta}(f_{\vtheta}(\vx_i)).
	\end{align*}
	Since $\ell''(z) \le 1$, there exists $\delta_i \in [-\abssm{f_{\vtheta}(\vx_i)}, \abssm{f_{\vtheta}(\vx_i)}]$ for all $i \in [n]$ such that
	\begin{equation} \label{eq:gradient_coupling}
	\nabla_{\vtheta}\left(\Loss(\vtheta)-\tildeLoss(\vtheta)\right) = \frac{1}{n} \sum_{i \in [n]} \delta_i \nabla_{\vtheta}(f_{\vtheta}(\vx_i)).
	\end{equation}
	Writing the formula with respect to $\vw_k, a_k$, we have
	\begin{align*}
	\normtwo{\frac{\partial(\Loss(\vtheta)-\tildeLoss(\vtheta))}{\partial \vw_k}}
	&\le \frac{1}{n} \sum_{i \in [n]} \abssm{\delta_i} \cdot \normtwo{\frac{\partial f_{\vtheta}(\vx_i)}{\partial \vw_k}}
	\le \frac{1}{n} \sum_{i \in [n]} \abssm{f_{\vtheta}(\vx_i)} \cdot \normtwo{a_k \phi'(\dotp{\vw_k}{\vx_i}) \vx_i}. \\
	\abs{\frac{\partial(\Loss(\vtheta)-\tildeLoss(\vtheta))}{\partial a_k}}
	&\le \frac{1}{n} \sum_{i \in [n]} \abssm{\delta_i} \cdot \normtwo{\frac{\partial f_{\vtheta}(\vx_i)}{\partial a_k}}
	\le \frac{1}{n} \sum_{i \in [n]} \abssm{f_{\vtheta}(\vx_i)} \cdot \abs{\phi(\dotp{\vw_k}{\vx_i})}.
	\end{align*}
	By \Cref{lm:f-output-ub}, $\abssm{f_{\vtheta}(\vx_i)} \le m \normMsm{\vtheta}^2$. Since Leaky ReLU is $1$-Lipschitz and $\normtwosm{\vx_i} \le 1$, we have $\normtwo{a_k \phi'(\dotp{\vw_k}{\vx_i}) \vx_i} \le \abssm{a_k}$, $\abs{\phi(\dotp{\vw_k}{\vx_i})} \le \normtwosm{\vw_k}$. Then we have
	\begin{align*}
	\normtwo{\frac{\partial(\Loss(\vtheta)-\tildeLoss(\vtheta))}{\partial \vw_k}}
	&\le \frac{1}{n} \sum_{i \in [n]} m \normMsm{\vtheta}^2 \cdot \abssm{a_k} = m \normMsm{\vtheta}^2 \cdot \abssm{a_k}, \\
	\abs{\frac{\partial(\Loss(\vtheta)-\tildeLoss(\vtheta))}{\partial a_k}}
	&\le \frac{1}{n} \sum_{i \in [n]} m \normMsm{\vtheta}^2 \cdot \normtwosm{\vw_k} = m \normMsm{\vtheta}^2 \cdot \normtwosm{\vw_k},
	\end{align*}
	which completes the proof for $\vtheta \in \OmegaS$ and thus the same bounds hold for the general case.
\end{proof}
\Cref{lm:diff-Loss-tildeLoss-partial} is a lemma for bounding the partial subderivatives. For the full subgradient, we have the following lemma.
\begin{lemma} \label{lm:diff-Loss-tildeLoss}
	Assume that $\normtwosm{\vx_i} \le 1$ for all $i \in [n]$. For any $\vtheta \in \R^D$, we have
	\[
		\forall \vg \in \cpartial\!\left(\Loss(\vtheta) - \tildeLoss(\vtheta)\right): \qquad \normM{\vg} \le m \normMsm{\vtheta}^3.
	\]
\end{lemma}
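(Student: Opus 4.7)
The plan is to reduce this lemma directly to the per-coordinate bounds established in \Cref{lm:diff-Loss-tildeLoss-partial}. Recall that the M-norm is defined (\Cref{def:norm-M}) as the maximum over $k \in [m]$ of $\max\{\normtwosm{\vw_k}, \abssm{a_k}\}$, so by extension, for a subgradient vector $\vg = (\vg_{\vw_1}, \dots, \vg_{\vw_m}, g_{a_1}, \dots, g_{a_m}) \in \cpartial(\Loss(\vtheta)-\tildeLoss(\vtheta))$, we have $\normM{\vg} = \max_{k \in [m]} \max\{\normtwosm{\vg_{\vw_k}}, \abssm{g_{a_k}}\}$. Thus it suffices to bound each block separately.

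First I would invoke \Cref{lm:diff-Loss-tildeLoss-partial} to obtain, for every $k \in [m]$, the partial-subgradient bounds
\[
\normtwo{\vg_{\vw_k}} \le m\normMsm{\vtheta}^2 \abssm{a_k}, \qquad \abssm{g_{a_k}} \le m\normMsm{\vtheta}^2 \normtwosm{\vw_k}.
\]
(Here I use that the projection of any element of $\cpartial(\Loss - \tildeLoss)(\vtheta)$ onto the $\vw_k$- and $a_k$-coordinates lies in the corresponding partial Clarke subdifferential, which is a standard fact coming from the convex-hull-of-limiting-gradients definition \eqref{eq:def-clarke2}.) Then I would combine these with the trivial inequalities $\abssm{a_k} \le \normMsm{\vtheta}$ and $\normtwosm{\vw_k} \le \normMsm{\vtheta}$, which are immediate from the definition of the M-norm, to conclude $\normtwosm{\vg_{\vw_k}} \le m\normMsm{\vtheta}^3$ and $\abssm{g_{a_k}} \le m\normMsm{\vtheta}^3$ for each $k$.

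Finally I would take the maximum over $k \in [m]$ of both quantities to conclude $\normM{\vg} \le m\normMsm{\vtheta}^3$, as desired. There is essentially no obstacle here beyond the bookkeeping of confirming that the Clarke subdifferential of a sum respects blockwise projection onto the $(\vw_k, a_k)$ components; this is routine since the Clarke subdifferential is defined via limits of gradients in $\OmegaS$ (see \eqref{eq:def-clarke2}), where ordinary gradients certainly split into blocks, and convex hulls preserve the blockwise norm bounds.
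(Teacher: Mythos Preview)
Your proposal is correct and matches the paper's own proof essentially verbatim: the paper simply notes $\abssm{a_k}\le\normMsm{\vtheta}$ and $\normtwosm{\vw_k}\le\normMsm{\vtheta}$ and combines these with \Cref{lm:diff-Loss-tildeLoss-partial}. Your extra remark about blockwise projection of the Clarke subdifferential is a reasonable point of care that the paper leaves implicit, but the argument is otherwise identical.
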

\begin{proof}
	Note that $\abssm{a_k} \le \normMsm{\vtheta}$ and $\normtwosm{\vw_k} \le \normMsm{\vtheta}$. Combining this with \Cref{lm:diff-Loss-tildeLoss-partial} gives $\normMsm{\cpartial \Loss(\vtheta)} \subseteq (-\infty, m\normMsm{\vtheta}^3]$.
\end{proof}

When $\tildeLoss(\vtheta)$ is smooth, we have the following direct corollary.
\begin{corollary} \label{cor:diff-Loss-tildeLoss-smooth}
	Assume that $\normtwosm{\vx_i} \le 1$ for all $i \in [n]$. If $\tildeLoss$ is continuously differentiable at $\vtheta \in \R^D$, then we have
	\[
	\forall \vg \in \left(\cpartial \Loss(\vtheta) - \nabla\tildeLoss(\vtheta)\right): \qquad \normM{\vg} \le m \normMsm{\vtheta}^3.
	\]
\end{corollary}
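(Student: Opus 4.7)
The plan is to derive this corollary directly from \Cref{lm:diff-Loss-tildeLoss} using the standard sum rule for Clarke's subdifferential. Recall that Clarke's subdifferential enjoys the property that whenever one of two locally Lipschitz functions is continuously differentiable at a point, the subdifferential of the sum (or difference) decomposes cleanly. Concretely, if $\tildeLoss$ is continuously differentiable at $\vtheta$, then
\[
\cpartial\!\left(\Loss - \tildeLoss\right)(\vtheta) \;=\; \cpartial \Loss(\vtheta) \;-\; \nabla \tildeLoss(\vtheta),
\]
where the right-hand side is the Minkowski difference of the set $\cpartial \Loss(\vtheta)$ and the singleton $\{\nabla \tildeLoss(\vtheta)\}$ (using the set operations defined in \Cref{sec:add-not}).

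To justify this identity without invoking a black-box result, I would return to the definition \eqref{eq:def-clarke2} and argue on a full-measure set $\Omega \subseteq \R^D$ where both $\Loss$ and $\tildeLoss$ are differentiable. For any sequence $\vtheta_n \in \Omega$ with $\vtheta_n \to \vtheta$, continuous differentiability of $\tildeLoss$ at $\vtheta$ gives $\nabla \tildeLoss(\vtheta_n) \to \nabla \tildeLoss(\vtheta)$, so $\lim_{n \to \infty} \nabla(\Loss - \tildeLoss)(\vtheta_n)$ exists if and only if $\lim_{n \to \infty} \nabla \Loss(\vtheta_n)$ exists, in which case the two limits differ by exactly $\nabla \tildeLoss(\vtheta)$. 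Taking convex hulls (and noting that subtracting a constant vector commutes with $\mathrm{conv}$) yields the decomposition above.

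Given the decomposition, the corollary is immediate: any $\vg \in \cpartial \Loss(\vtheta) - \nabla \tildeLoss(\vtheta)$ lies in $\cpartial(\Loss - \tildeLoss)(\vtheta)$, and \Cref{lm:diff-Loss-tildeLoss} already establishes $\normM{\vg} \le m \normMsm{\vtheta}^3$ for every such $\vg$. There is no real obstacle here; the only thing to be careful about is that $\normM{\cdot}$ is applied to a vector in $\R^D$ (not to a set), so we are really asserting the bound pointwise for each element of the set $\cpartial \Loss(\vtheta) - \nabla \tildeLoss(\vtheta)$, which is exactly what \Cref{lm:diff-Loss-tildeLoss} provides after invoking the sum rule. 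The result then follows in one line.
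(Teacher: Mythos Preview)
Your proposal is correct and matches the paper's approach: the paper simply labels this a ``direct corollary'' of \Cref{lm:diff-Loss-tildeLoss} without writing out a proof, and the sum rule you invoke (Clarke subdifferential decomposes cleanly when one summand is $C^1$) is exactly the one-line justification needed.
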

Note that $\cpartial(\Loss(\vtheta) - \tildeLoss(\vtheta)) \ne \cpartial \Loss(\vtheta) - \nabla\tildeLoss(\vtheta)$ because the exact sum rule does not hold for Clarke's subdifferential when $\tildeLoss(\vtheta)$ is not smooth. In the non-smooth case, we have the following lemma:
\begin{lemma}\label{lem:non_sym_eps}
	Assume that $\normtwosm{\vx_i} \le 1$ for all $i \in [n]$. For any $\eps>0$ and $\normM{\vtheta}\le \sqrt{\frac{\eps}{2m}}$, we have
	\[
		\forall k\in [m],\quad   \frac{\cpartial \Loss(\vtheta)}{\partial \vw_k} \subseteq \left\{ -\frac{a_k}{2n}\sum_{i=1}^n (1+\eps_i)\alpha_i y_i\vx_i : \alpha_i\in \cder{\phi}(\vw_k^{\top} \vx_i), \eps_i \in [-\eps,\eps], \forall i\in[n]\right\}.
	\]
\end{lemma}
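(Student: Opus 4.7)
\textbf{Proof proposal for \Cref{lem:non_sym_eps}.} The plan is to first derive the identity at smooth points using a first‑order Taylor expansion of $\ell'$ around $0$, and then extend it to arbitrary $\vtheta$ via the limit definition \eqref{eq:def-clarke2} of the Clarke subdifferential.

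First, I would work on the full‑measure set $\OmegaS$ where every $\vw_k^{\top}\vx_i$ is nonzero, so that $\Loss$ is continuously differentiable and \eqref{eq:grad-Loss} applies:
\[
\frac{\partial \Loss(\vtheta)}{\partial \vw_k} \;=\; \frac{1}{n}\sum_{i\in[n]} \ell'(q_i(\vtheta))\, y_i a_k \phi'(\vw_k^{\top}\vx_i)\,\vx_i .
\]
Since $\ell'(0)=-\tfrac12$ and $|\ell''|\le 1$, a mean‑value estimate gives $\ell'(q_i(\vtheta)) = -\tfrac12(1+\eps_i)$ with $|\eps_i|\le 2|q_i(\vtheta)|$. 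By \Cref{lm:f-output-ub}, $|q_i(\vtheta)|\le m\normM{\vtheta}^2 \le \eps/2$ under the assumption $\normM{\vtheta}\le \sqrt{\eps/(2m)}$, so $|\eps_i|\le \eps$. Substituting into the gradient formula yields exactly
\[
\frac{\partial \Loss(\vtheta)}{\partial \vw_k} \;=\; -\frac{a_k}{2n}\sum_{i=1}^n (1+\eps_i)\,\phi'(\vw_k^{\top}\vx_i)\,y_i\vx_i,
\]
which has the form claimed on the right‑hand side, noting that $\phi'(\vw_k^{\top}\vx_i)\in\cder{\phi}(\vw_k^{\top}\vx_i)$ whenever $\vw_k^{\top}\vx_i\ne 0$.

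Next I would handle general $\vtheta$ via \eqref{eq:def-clarke2}. Pick any sequence $\vtheta_n\in\OmegaS$ with $\vtheta_n\to\vtheta$; by the case above, $\frac{\partial\Loss(\vtheta_n)}{\partial\vw_k}$ has the desired form with parameters $\alpha_i^{(n)}=\phi'(\vw_k^{(n)\top}\vx_i)\in\{\alphaLK,1\}$ and $\eps_i^{(n)}$ satisfying $|\eps_i^{(n)}|\le 2m\normM{\vtheta_n}^2$. Passing to the limit, and observing that $\cder{\phi}(z)$ is $\{\phi'(z)\}$ for $z\ne 0$ and the full interval $[\alphaLK,1]$ for $z=0$, the limits of $\alpha_i^{(n)}$ lie in $\cder{\phi}(\vw_k^{\top}\vx_i)$ while the limits of $\eps_i^{(n)}$ lie in $[-\eps,\eps]$ by continuity of $\normM{\cdot}$ and the assumption on $\vtheta$. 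Finally, since the target set
\[
\Bigl\{-\tfrac{a_k}{2n}\textstyle\sum_i \beta_i y_i\vx_i : \beta_i\in[(1-\eps)\alphaLK,\,1+\eps]\Bigr\}
\]
is a Minkowski sum of line segments and therefore convex, taking convex hulls of such limits preserves membership in the target set, which establishes the inclusion for the full Clarke partial subdifferential.

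The main subtlety, though a mild one, will be the boundary behaviour of $\eps_i^{(n)}$: since the bound $|\eps_i^{(n)}|\le 2m\normM{\vtheta_n}^2$ is only guaranteed in the limit to be $\le \eps$, one must argue carefully that every accumulation point $\eps_i$ satisfies $|\eps_i|\le 2m\normM{\vtheta}^2\le \eps$, rather than just $\le \eps+o(1)$. This follows immediately from continuity of $\normM{\cdot}$ and the closedness of $[-\eps,\eps]$, and once this is established the convexity of the target set takes care of the rest.
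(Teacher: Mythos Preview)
Your approach is essentially identical to the paper's: both Taylor-expand $\ell'$ around $0$ at smooth points in $\OmegaS$ to write the gradient as $-\frac{a_k}{2n}\sum_i (1+\eps_i)\phi'(\vw_k^\top\vx_i)y_i\vx_i$ with $|\eps_i|\le 2|f_\vtheta(\vx_i)|$, and then pass to the Clarke subdifferential via \eqref{eq:def-clarke2}. One small imprecision: the set you display at the end, with $\beta_i\in[(1-\eps)\alphaLK,1+\eps]$, is a strict \emph{superset} of the lemma's right-hand side whenever some $\vw_k^\top\vx_i\ne 0$ (since then $\cder{\phi}(\vw_k^\top\vx_i)$ is a singleton), so proving convexity of that larger set does not directly close the convex-hull step; you should instead note that the \emph{actual} right-hand side is itself a Minkowski sum of line segments (one per $i$, with the $i$-th segment determined by the interval $\{(1+\eps_i)\alpha_i:\eps_i\in[-\eps,\eps],\,\alpha_i\in\cder{\phi}(\vw_k^\top\vx_i)\}$) and hence convex.
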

\begin{proof}
	If $\vtheta \in \OmegaS$, by \eqref{eq:gradient_coupling}, there exists $\delta_i \in [-\abssm{f_{\vtheta}(\vx_i)}, \abssm{f_{\vtheta}(\vx_i)}]$ for all $i \in [n]$ such that
	\[
		\nabla \Loss(\vtheta) = \nabla\tildeLoss(\vtheta) + \frac{1}{n} \sum_{i \in [n]} \delta_i \nabla_{\vtheta}(f_{\vtheta}(\vx_i)).
	\]
	Writing it with respect to $\vw_k$, we have
	\begin{align*}
		\frac{\partial \Loss}{\partial \vw_k}
		&= -a_k \frac{\partial G(\vw_k)}{\partial \vw_k} + \frac{1}{n} \sum_{i \in [n]} \delta_i \frac{\partial f_{\vtheta}(\vx_i)}{\partial \vw_k} \\
		&= -\frac{a_k}{2n} \sum_{i \in [n]} y_i \phi'(\vw_k^\top \vx_i) \vx_i + \frac{1}{n} \sum_{i \in [n]} \delta_i a_k \phi'(\vw_k^{\top} \vx_i) \vx_i \\
		&= -\frac{a_k}{2n} \sum_{i \in [n]} y_i(1 - 2y_i\delta_i) \phi'(\vw_k^{\top} \vx_i) \vx_i.
	\end{align*}
	Regarding Clarke's subdifferential at a point $\vtheta \in \R^D$, we can take limits in a neighborhood of $\vtheta$ in $\OmegaS$ through \eqref{eq:def-clarke2}, then
	\[
		\frac{\cpartial \Loss}{\partial \vw_k} \subseteq \left\{-\frac{a_k}{2n} \sum_{i \in [n]} y_i(1+\eps_i) \alpha_i \vx_i : \alpha_i \in \cder{\phi}(\vw_k^{\top} \vx_i), \eps_i \in [-2\abssm{f_{\vtheta}(\vx_i)}, 2\abssm{f_{\vtheta}(\vx_i)}], \forall i \in [n]\right\}.
	\]
	We conclude the proof by noticing that $[-2\abssm{f_{\vtheta}(\vx_i)}, 2\abssm{f_{\vtheta}(\vx_i)}] \subseteq [-\eps, \eps]$ by \Cref{lm:f-output-ub}.
\end{proof}

\subsection{Basic Properties of Gradient Flow}

The following lemma is a simple corollary from \citet{davis2018stochastic}. 
\begin{lemma} \label{lm:descent}
	For gradient flow  $\vtheta(t)$ on a two-layer Leaky ReLU network with logistic loss, we have
	\[
	\frac{\dd \vtheta(t)}{\dd t} = -\barcpartial \Loss(\vtheta(t)), \qquad \frac{\dd \Loss(\vtheta(t))}{\dd t} = -\normtwo{\frac{\dd \vtheta(t)}{\dd t}}^2
	\]
	for a.e. $t \ge 0$.
\end{lemma}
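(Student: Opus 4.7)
The plan is to treat this as a direct application of the chain-rule machinery for gradient flow on Clarke-subdifferentiable, definable functions established by \citet{davis2018stochastic}. First I would verify the hypotheses of that framework for our $\Loss$: the loss $\Loss(\vtheta) = \frac{1}{n}\sum_{i} \ell(q_i(\vtheta))$ is a composition of the smooth function $\ell$ with $q_i(\vtheta) = y_i f_{\vtheta}(\vx_i)$, which is a polynomial in $\vw_k, a_k$ composed with the (semialgebraic, hence definable) Leaky ReLU activation. Thus $\Loss$ is locally Lipschitz and definable in the o-minimal structure of semialgebraic sets, so in particular it admits a \emph{chain rule}: along any absolutely continuous arc $\vtheta(t)$, the composition $t \mapsto \Loss(\vtheta(t))$ is absolutely continuous and for a.e.\ $t$,
\[
\frac{\dd}{\dd t} \Loss(\vtheta(t)) = \dotp{\vg}{\dot\vtheta(t)} \qquad \text{for every } \vg \in \cpartial \Loss(\vtheta(t)).
\]
(That is, the value $\dotp{\vg}{\dot\vtheta(t)}$ does not depend on the choice of $\vg$ in the subdifferential.)

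Next, I would combine this chain rule with the defining inclusion $\dot\vtheta(t) \in -\cpartial\Loss(\vtheta(t))$ holding for a.e.\ $t\ge 0$. Pick any such $t$ and set $\vh := -\dot\vtheta(t) \in \cpartial\Loss(\vtheta(t))$. Applying the chain rule with $\vg = \vh$ gives
\[
\frac{\dd}{\dd t}\Loss(\vtheta(t)) = \dotp{\vh}{\dot\vtheta(t)} = -\normtwosm{\dot\vtheta(t)}^2,
\]
which is the second equality in the lemma.

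For the first equality, I would use the chain rule in the other direction: since $\dotp{\vg}{\dot\vtheta(t)}$ is independent of $\vg \in \cpartial\Loss(\vtheta(t))$ and equals $\dotp{\vh}{\dot\vtheta(t)} = -\normtwosm{\vh}^2$, the convex compact set $\cpartial\Loss(\vtheta(t))$ lies in the affine hyperplane $\{\vg : \dotp{\vg}{\vh} = \normtwosm{\vh}^2\}$. The orthogonal projection of the origin onto this hyperplane is $\vh$ itself (and has norm $\normtwosm{\vh}$). Since $\vh \in \cpartial\Loss(\vtheta(t))$ lies in the hyperplane and attains that minimum norm, it is the unique min-norm element, i.e.\ $\vh = \barcpartial\Loss(\vtheta(t))$. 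Hence $\dot\vtheta(t) = -\barcpartial\Loss(\vtheta(t))$, giving the first equality.

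The only substantive step is the chain-rule claim itself, and I expect this to be the part requiring the citation rather than a self-contained argument: it fails for arbitrary locally Lipschitz functions (one needs path-differentiability/Whitney stratifiability), and the reason it holds here is precisely the o-minimal definability of $\Loss$, which is exactly the regularity assumed for \Cref{thm:converge-kkt-margin}. Once that is in hand, the rest of the argument is a one-line projection computation. I would present the lemma's proof as two short paragraphs: one invoking \citet[Lemma 5.2]{davis2018stochastic} (or the corresponding statement in their paper) to get the chain rule, and one doing the min-norm identification above.
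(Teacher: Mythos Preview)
Your proposal is correct and matches the paper's approach: the paper simply states that the lemma is ``a simple corollary from \citet{davis2018stochastic}'' without giving any further argument, so you have in fact supplied more detail than the paper itself. Your two-step plan---invoke the definable chain rule to get the descent equality, then use the hyperplane/projection observation to identify $-\dot\vtheta(t)$ as the min-norm subgradient---is exactly how this corollary is extracted from that reference.
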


The following lemma is from \citet{du2018algorithmic}. We provide a simple proof here for completeness.
\begin{lemma} \label{lm:weight-balance}
	For gradient flow $\vtheta(t) = (\vw_1(t), \dots, \vw_m(t), a_1(t), \dots, a_m(t))$ on a two-layer Leaky ReLU network with logistic loss, the following holds for all $t \ge 0$,
	\[
		\frac{1}{2}\frac{\dd \normtwosm{\vw_k}^2}{\dd t} = \frac{1}{2}\frac{\dd \abssm{a_k}^2}{\dd t} = -\frac{1}{n} \sum_{i=1}^{n} \ell'(q_i(\vtheta)) y_i a_k \phi(\vw_k^{\top} \vx_i),
	\]
	where $q_i(\vtheta) := y_i f_{\vtheta}(\vx_i)$. Therefore, $\frac{\dd}{\dd t}(\normtwosm{\vw_k}^2 - \abssm{a_k}^2) = 0$ for all $t \ge 0$.
\end{lemma}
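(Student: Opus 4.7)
The plan is to apply the chain rule to the two scalar quantities $\tfrac12\normtwosm{\vw_k(t)}^2$ and $\tfrac12 a_k(t)^2$ along the gradient flow trajectory and show they have the same time derivative by exploiting the $1$-homogeneity of Leaky ReLU. The key identity behind the balance property is: for every $z\in\R$ and every $\alpha\in\cder{\phi}(z)$, we have $\alpha z=\phi(z)$. This is Euler's identity (\Cref{thm:homo-euler}) applied to the $1$-homogeneous function $\phi$, and it can also be checked case-by-case from the definition of Leaky ReLU (both sides vanish when $z=0$; for $z\ne 0$, $\phi'(z)z=\phi(z)$).

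First I would recall from \Cref{lm:descent} that for a.e.\ $t\ge 0$, $\tfrac{\dd \vtheta}{\dd t}=-\barcpartial \Loss(\vtheta(t))$, and in particular the $\vw_k$- and $a_k$-components satisfy $\tfrac{\dd\vw_k}{\dd t}\in-\tfrac{\cpartial\Loss}{\partial\vw_k}(\vtheta(t))$ and $\tfrac{\dd a_k}{\dd t}\in-\tfrac{\cpartial\Loss}{\partial a_k}(\vtheta(t))$, coming from a single selection inside $\cpartial\Loss(\vtheta(t))$. By the chain rule for absolutely continuous curves,
\[
\tfrac12\tfrac{\dd\normtwosm{\vw_k}^2}{\dd t}=\vw_k^{\top}\tfrac{\dd\vw_k}{\dd t},\qquad \tfrac12\tfrac{\dd a_k^2}{\dd t}=a_k\tfrac{\dd a_k}{\dd t},
\]
holds for a.e.\ $t\ge 0$.

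Next I would evaluate these inner products by working first at points $\vtheta\in\OmegaS$ where $\Loss$ is continuously differentiable so that \eqref{eq:grad-Loss} applies. Using $\phi'(\vw_k^{\top}\vx_i)(\vw_k^{\top}\vx_i)=\phi(\vw_k^{\top}\vx_i)$,
\[
\vw_k^{\top}\tfrac{\partial\Loss}{\partial\vw_k}=\tfrac{1}{n}\sum_{i=1}^{n}\ell'(q_i(\vtheta))\,y_i a_k\,\phi'(\vw_k^{\top}\vx_i)(\vw_k^{\top}\vx_i)=\tfrac{1}{n}\sum_{i=1}^{n}\ell'(q_i(\vtheta))\,y_i a_k\,\phi(\vw_k^{\top}\vx_i)=a_k\tfrac{\partial\Loss}{\partial a_k}.
\]
Hence on $\OmegaS$ the two derivatives coincide and equal the claimed expression. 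For a general $\vtheta\in\R^D$, I would extend the identity to Clarke's subdifferential via \eqref{eq:def-clarke2}: any pair $(\vg_{\vw_k},g_{a_k})$ arising from the same element of $\cpartial\Loss(\vtheta)$ is a limit (and then convex combination) of partial gradients $(\nabla_{\vw_k}\Loss(\vtheta_n),\nabla_{a_k}\Loss(\vtheta_n))$ at differentiable points $\vtheta_n\to\vtheta$, so continuity preserves the equality $\vw_k^{\top}\vg_{\vw_k}=a_k g_{a_k}$ — note the continuity is of the specific expressions above, each of which stays equal to $\tfrac{1}{n}\sum_i \ell'(q_i(\vtheta_n))\,y_i a_k\,\alpha_{i,n}\,(\vw_k^{\top}\vx_i)$ with $\alpha_{i,n}\in\cder{\phi}(\vw_k(n)^{\top}\vx_i)$, and Euler's identity collapses this to the value $\tfrac{1}{n}\sum_i \ell'(q_i(\vtheta))\,y_i a_k\,\phi(\vw_k^{\top}\vx_i)$ in the limit regardless of which $\alpha_{i,n}$ is chosen.

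Combining the two chain-rule formulas with the identity above yields both derivatives equal to $-\tfrac{1}{n}\sum_i\ell'(q_i(\vtheta))y_i a_k\phi(\vw_k^{\top}\vx_i)$ for a.e.\ $t$. Integrating, the absolutely continuous function $\normtwosm{\vw_k(t)}^2-a_k(t)^2$ has zero weak derivative, hence is constant, giving the final claim. The main obstacle is the careful handling of Clarke's subdifferential at non-differentiable points (where some $\vw_k^{\top}\vx_i=0$); the reason it ultimately works is that the ``problematic'' factor $\alpha_i\in\cder{\phi}(0)=[\alphaLK,1]$ gets multiplied by $\vw_k^{\top}\vx_i=0$ in the $\vw_k$-computation while the $a_k$-computation already uses $\phi(\vw_k^{\top}\vx_i)=0$, so both sides agree independently of the subgradient selection.
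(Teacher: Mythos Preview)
Your proposal is correct and follows essentially the same route as the paper: compute the inner products $\vw_k^{\top}\tfrac{\dd\vw_k}{\dd t}$ and $a_k\tfrac{\dd a_k}{\dd t}$ at differentiable points using \eqref{eq:grad-Loss}, collapse both to the same expression via Euler's identity for the $1$-homogeneous $\phi$, and extend to all $\vtheta$ by taking limits through $\OmegaS$ as in \eqref{eq:def-clarke2}. The only small addition the paper makes is to observe that the right-hand side $-\tfrac{1}{n}\sum_i\ell'(q_i(\vtheta))y_ia_k\phi(\vw_k^{\top}\vx_i)$ is continuous in $\vtheta$ (hence in $t$), which upgrades the derivative identity from ``a.e.\ $t$'' to ``all $t\ge 0$'' as stated in the lemma; your integration argument already secures the conservation law regardless.
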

\begin{proof}
	By \eqref{eq:grad-f}, we have the following for any $\vtheta \in \OmegaS$,
	\begin{align*}
		a_k \cdot \frac{\partial f_{\vtheta}(\vx)}{\partial a_k} &= a_k\phi(\vw_k^{\top} \vx_i), & \dotp{\vw_k}{\frac{\partial f_{\vtheta}(\vx)}{\partial \vw_k}} &= a_k \phi'(\vw_k^{\top} \vx_i) \vw_k^{\top}\vx_i.
	\end{align*}
	By $1$-homogeneity of $\phi$ and \Cref{thm:homo-euler}, we have $\phi'(\vw_k^{\top} \vx_i) \vw_k^{\top}\vx_i = \phi(\vw_k^{\top} \vx_i)$, which implies that $\dotp{\vw_k}{\frac{\partial f_{\vtheta}(\vx)}{\partial \vw_k}} = a_k \phi(\vw_k^{\top} \vx_i)$.
	
	For any $\vtheta \in \R^D$, we can take limits in $\OmegaS$ through \eqref{eq:def-clarke2} to show that the same equation holds in general.
	\begin{align*}
	a_k \cdot \frac{\cpartial f_{\vtheta}(\vx)}{\partial a_k} = \dotp{\vw_k}{\frac{\cpartial f_{\vtheta}(\vx)}{\partial \vw_k}} &= \left\{a_k \phi(\vw_k^{\top} \vx_i)\right\}.
	\end{align*}
	By chain rule, for a.e. $t \ge 0$ we have
	\begin{align*}
		\frac{1}{2}\frac{\dd \abssm{a_k}^2}{\dd t} = \frac{\dd a_k}{\dd t} \cdot a_k &\in -\frac{1}{n} \sum_{i=1}^{n} \ell'(q_i(\vtheta)) y_i \frac{\cpartial f_{\vtheta}(\vx_i)}{\partial a_k} \cdot a_k. \\
		\frac{1}{2}\frac{\dd \normtwosm{\vw_k}^2}{\dd t} = \dotp{\frac{\dd \vw_k}{\dd t}}{\vw_k} &\in -\frac{1}{n} \sum_{i=1}^{n} \ell'(q_i(\vtheta))  y_i\dotp{\frac{\cpartial f_{\vtheta}(\vx_i)}{\partial \vw_k}}{\vw_k}.
	\end{align*}
	Therefore we have
	\[
		\frac{1}{2}\frac{\dd \abssm{a_k}^2}{\dd t} = \frac{1}{2}\frac{\dd \normtwosm{\vw_k}^2}{\dd t} = -\frac{1}{n} \sum_{i=1}^{n} \ell'(q_i(\vtheta))  y_i a_k \phi(\vw_k^{\top} \vx_i),
	\]
	for a.e. $t \ge 0$. Note that $-\frac{1}{n} \sum_{i=1}^{n} \ell'(q_i(\vtheta)) y_i a_k \phi(\vw_k^{\top} \vx_i)$ is continuous in $\vtheta$ and thus continuous in time $t$. This means we can further deduce that this equation holds for all $t \ge 0$. This automatically proves that $\frac{\dd}{\dd t}(\normtwosm{\vw_k}^2 - \abssm{a_k}^2) = 0$.
\end{proof}

The following lemma shows that if a neuron has zero weights, then it stays with
zero weights forever. Conversely, this also implies that the weights stay
non-zero if they are initially non-zero.
\begin{lemma} \label{lm:weight-zero}
	If $a_k(t_0) = 0$ and $\vw_k(t_0) = \vzero$ at some time $t_0 \ge 0$, then $a_k(t) = 0$ and $\vw_k(t) = \vzero$ for all $t \ge 0$.
\end{lemma}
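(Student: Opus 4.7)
The plan is a Grönwall-type argument on the squared-norm $V_k(t) := \frac{1}{2}(\normtwo{\vw_k(t)}^2 + a_k(t)^2)$: the goal is to show that $\frac{\dd V_k}{\dd t} \le 2 V_k$, so that $V_k(t_0) = 0$ together with $V_k \ge 0$ forces $V_k \equiv 0$ for $t \ge t_0$, whence $\vw_k \equiv \vzero$ and $a_k \equiv 0$.

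The first step is to establish the pointwise bounds
\[
\normtwo{\frac{\cpartial \Loss(\vtheta)}{\partial \vw_k}} \subseteq [0, \abs{a_k}], \qquad \abs{\frac{\cpartial \Loss(\vtheta)}{\partial a_k}} \subseteq [0, \normtwo{\vw_k}],
\]
valid for every $\vtheta \in \R^D$. On the full-measure set $\OmegaS$ these are immediate from the partial-gradient formulas in \eqref{eq:grad-Loss} using $\abssm{\ell'} \le 1$, $\abssm{\phi'} \le 1$, $\abssm{\phi(z)} \le \abssm{z}$ (Leaky ReLU is $1$-Lipschitz), and $\normtwo{\vx_i} \le 1$. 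They then extend to arbitrary $\vtheta$ through the limiting characterization \eqref{eq:def-clarke2} and the convexity of the corresponding norm balls.

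Next, by Lemma~\ref{lm:descent} the trajectory satisfies $\frac{\dd \vtheta}{\dd t} = -\barcpartial \Loss(\vtheta)$ for a.e.~$t$, so there exist measurable selections $\vg_{\vw_k}(t) \in \frac{\cpartial \Loss(\vtheta(t))}{\partial \vw_k}$ and $g_{a_k}(t) \in \frac{\cpartial \Loss(\vtheta(t))}{\partial a_k}$ with $\frac{\dd \vw_k}{\dd t} = -\vg_{\vw_k}$ and $\frac{\dd a_k}{\dd t} = -g_{a_k}$. The chain rule together with Cauchy--Schwarz, the first-step bounds, and AM-GM yields, for a.e.~$t$,
\[
\frac{\dd V_k}{\dd t} = -\dotp{\vw_k}{\vg_{\vw_k}} - a_k g_{a_k} \le \normtwo{\vw_k}\abs{a_k} + \abs{a_k}\normtwo{\vw_k} \le \normtwo{\vw_k}^2 + a_k^2 = 2V_k.
\]

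Since $V_k$ is absolutely continuous on every compact subinterval (as a $C^1$ composition with an arc), Grönwall's inequality \eqref{eq:gron-2} applied with $\alpha \equiv V_k(t_0) = 0$ and $\beta \equiv 2$ gives $V_k(t) \le 0$ for all $t \ge t_0$. Combined with $V_k \ge 0$, this forces $V_k(t) = 0$, i.e.~$\vw_k(t) = \vzero$ and $a_k(t) = 0$ for all $t \ge t_0$; the lemma's statement ``for all $t \ge 0$'' is the case $t_0 = 0$. The only subtlety I expect is that Clarke's subdifferential at $\vw_k = \vzero$ can be a nontrivial set (the Leaky ReLU arguments vanish simultaneously), but the first-step norm bounds hold uniformly over every element of this set, so the Grönwall estimate is robust to the set-valuedness of the gradient inclusion regardless of which trajectory is selected in the non-unique case.
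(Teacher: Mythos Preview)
Your forward Gr\"onwall argument on $V_k(t) = \tfrac{1}{2}(\normtwo{\vw_k(t)}^2 + a_k(t)^2)$ is correct and essentially matches the paper's approach (the paper first invokes the weight-balance lemma to reduce to $\normtwo{\vw_k}^2$ alone, but this is a cosmetic difference). However, you have misread the statement. The hypothesis is that the neuron vanishes at \emph{some} time $t_0 \ge 0$, and the conclusion is that it vanishes for \emph{all} $t \ge 0$, including the interval $[0,t_0)$. Your last sentence, ``the lemma's statement `for all $t \ge 0$' is the case $t_0 = 0$,'' is a misinterpretation: the lemma is not restricted to $t_0 = 0$, and the backward-in-time direction is part of the claim.

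The fix is immediate with your own estimates: the same Cauchy--Schwarz bound gives the two-sided inequality $\bigl|\tfrac{\dd V_k}{\dd t}\bigr| \le 2V_k$ for a.e.~$t$, so for $t \in [0,t_0]$ one has
\[
V_k(t) \le V_k(t_0) + \int_{t}^{t_0} 2 V_k(\tau)\, \dd\tau = \int_{t}^{t_0} 2 V_k(\tau)\, \dd\tau,
\]
and Gr\"onwall applied on the reversed interval forces $V_k \equiv 0$ on $[0,t_0]$ as well. The paper's proof carries out exactly this backward step.
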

\begin{proof}
	By \Cref{lm:weight-balance}, we know that $\normtwosm{\vw_k} = \abssm{a_k}$ hold for all $t \ge 0$. Also, we have $\frac{1}{2}\abs{\frac{\dd \normtwosm{\vw_k}^2}{\dd t}} = \frac{1}{2}\abs{\frac{\dd \abssm{a_k}^2}{\dd t}} \le C \cdot \abssm{a_k} \normtwosm{\vw_k} = C \normtwosm{\vw_k}^2$, where $C > 0$ is some constant. Then
	\[
		\normtwosm{\vw_k(t)}^2 \le \normtwosm{\vw_k(t_0)}^2 + \int_{t_0}^{t} 2C \normtwosm{\vw_k(\tau)}^2 \dd \tau.
	\]
	By \Gronwall's inequality \eqref{eq:gron-2} this implies that $\normtwosm{\vw_k(t)} = 0$ for all $t \ge t_0$. Similarly,
	\[
		\normtwosm{\vw_k(t)}^2 \le \normtwosm{\vw_k(t_0)}^2 + \int_{t}^{t_0} 2C \normtwosm{\vw_k(\tau)}^2 \dd \tau.
	\]
	By \Gronwall's inequality \eqref{eq:gron-2} again, $\normtwosm{\vw_k(t)} = 0$ for all $t \le t_0$, which completes the proof.
\end{proof}

A direct corollary of \Cref{lm:weight-balance} and \Cref{lm:weight-zero} is the
following characterization in the case where the weights are initially balanced.
\begin{corollary} \label{cor:weight-equal}
	If $\abs{a_k} = \normtwosm{\vw_k}$ initially for $t = 0$, then this equation holds for all $t \ge 0$. Moreover,
	\begin{enumerate}
		\item If $a_k(0) = \normtwosm{\vw_k(0)}$, then $a_k(t) = \normtwosm{\vw_k(t)}$ for all $t \ge 0$;
		\item If $a_k(0) = -\normtwosm{\vw_k(0)}$, then $a_k(t) = -\normtwosm{\vw_k(t)}$ for all $t \ge 0$.
	\end{enumerate}
	
\end{corollary}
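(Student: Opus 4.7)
The plan is to derive both assertions directly from the two preceding lemmas: the conservation law of Lemma \ref{lm:weight-balance} and the absorbing property of the zero state from Lemma \ref{lm:weight-zero}. The first claim is essentially immediate; the second requires a short continuity argument to rule out sign flips.

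First I would note that Lemma \ref{lm:weight-balance} gives $\frac{\dd}{\dd t}(\normtwosm{\vw_k}^2 - a_k^2) = 0$ for all $t \ge 0$, so the quantity $\normtwosm{\vw_k(t)}^2 - a_k(t)^2$ is constant along the trajectory. Under the hypothesis $\abssm{a_k(0)} = \normtwosm{\vw_k(0)}$ this constant is zero, which immediately yields $\abssm{a_k(t)} = \normtwosm{\vw_k(t)}$ for every $t \ge 0$ and establishes the first statement of the corollary.

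For the refined sign statements, consider case 1, where $a_k(0) = \normtwosm{\vw_k(0)}$. If $a_k(0) = 0$ then $\vw_k(0) = \vzero$, and Lemma \ref{lm:weight-zero} forces $a_k(t) = 0 = \normtwosm{\vw_k(t)}$ for all $t \ge 0$, so the claim holds trivially. Otherwise $a_k(0) > 0$. Since the gradient flow trajectory is absolutely continuous, $t \mapsto a_k(t)$ is continuous; if $a_k$ ever attained a negative value there would exist $t_1 > 0$ with $a_k(t_1) = 0$, and the balance identity $\abssm{a_k(t_1)} = \normtwosm{\vw_k(t_1)}$ would then force $\vw_k(t_1) = \vzero$. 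Invoking Lemma \ref{lm:weight-zero} at $t_0 = t_1$ (which propagates the zero state to all $t \ge 0$, including $t = 0$) would yield $a_k(0) = 0$, contradicting $a_k(0) > 0$. Hence $a_k(t) \ge 0$ throughout, and together with $\abssm{a_k(t)} = \normtwosm{\vw_k(t)}$ we conclude $a_k(t) = \normtwosm{\vw_k(t)}$ for all $t \ge 0$. Case 2 is entirely symmetric: applying the same argument to $-a_k$ yields $a_k(t) = -\normtwosm{\vw_k(t)}$ throughout.

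There is no substantive obstacle in this proof. The only delicate point worth flagging is the two-sided reach of Lemma \ref{lm:weight-zero}: its conclusion is stated for all $t \ge 0$ regardless of where $t_0$ sits, so once the trajectory touches the origin at any positive time it must have been at the origin at $t = 0$ as well, which is precisely what the contrapositive step uses. The continuity of $a_k$ needed for the intermediate-value step is a direct consequence of gradient flow trajectories being arcs.
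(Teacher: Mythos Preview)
Your proof is correct and follows exactly the approach the paper indicates: it derives the corollary directly from Lemma~\ref{lm:weight-balance} (the conservation law) and Lemma~\ref{lm:weight-zero} (the absorbing zero state), with the sign preservation handled by a continuity/intermediate-value argument. The paper itself gives no explicit proof beyond labeling it a direct corollary of those two lemmas, and your write-up fills in precisely the details one would expect, including the correct use of the two-sided reach of Lemma~\ref{lm:weight-zero}.
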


\subsection{A Useful Theorem for Loss Convergence}
In this section we prove a useful theorem for loss convergence, which will be
used later in our analysis for both symmetric and non-symmetric datasets.
\begin{theorem} \label{thm:loss-convergence}
	Under \Cref{ass:lin}, for any linear seprator $\vwopt$ of the data with positive linear margin (e.g. $y_i\dotp{\vwopt}{x_i}\geq\gamma^*>0$ for all $i\in [n]$), if initially there exists $k \in [m]$ such that
	\[
	\sgn(a_k(0))\dotp{\vw_k(0)}{\vwopt} > 0, \qquad \dotp{\vw_k(0)}{\vwopt}^2 > \normtwosm{\vw_k(0)}^2 - \abs{a_k(0)}^2,
	\]
	then $a_k(t)\neq 0$ for all $t>0$, and $\Loss(\vtheta(t)) \to 0$ and $\normtwosm{\vtheta(t)} \to +\infty$ as $t \to +\infty$.
\end{theorem}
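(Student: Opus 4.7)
By Lemma~\ref{lm:weight-balance}, the quantity $C_k := \normtwosm{\vw_k(t)}^2 - a_k(t)^2$ is conserved along the flow, so the second hypothesis becomes $D_k(0)^2 > C_k$, where $D_k(t) := \dotp{\vw_k(t)}{\vwopt}$. Without loss of generality assume $a_k(0) > 0$, so $D_k(0) > 0$; the case $a_k(0) < 0$ is handled identically with all signs reversed. From \eqref{eq:grad-Loss}, using that every element of $\cder{\phi}$ lies in $[\alphaLK, 1]$ and $y_i\dotp{\vwopt}{\vx_i} \ge \gammaopt > 0$, one obtains, for a.e.\ $t$ with $a_k(t) > 0$ and for every choice of Clarke subgradient,
\[
\dot D_k(t) \;\ge\; \frac{a_k(t)\,\alphaLK\,\gammaopt}{n}\sum_{i \in [n]} (-\ell'(q_i(\vtheta(t)))) \;>\; 0.
\]

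A standard first-crossing-time argument then rules out $a_k(t) = 0$: if $t_1 > 0$ were the first such time, then $a_k > 0$ on $[0, t_1)$ would force $D_k(t_1) \ge D_k(0) > 0$, but the conservation law gives $a_k(t_1)^2 = \normtwosm{\vw_k(t_1)}^2 - C_k \ge D_k(t_1)^2 - C_k \ge D_k(0)^2 - C_k > 0$, a contradiction. This simultaneously yields the uniform lower bound $\abssm{a_k(t)} \ge a^* := \sqrt{D_k(0)^2 - C_k} > 0$ for all $t \ge 0$.

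To prove $\Loss(\vtheta(t)) \to 0$, I would argue by contradiction. Since $\Loss$ is non-increasing (Lemma~\ref{lm:descent}), the only alternative is $\Loss(\vtheta(t)) \ge L^* > 0$ for all $t$. Then at every $t$ some $i$ satisfies $\ell(q_i) \ge L^*$, and a direct computation on logistic loss shows that $\ell(q) \ge L^*$ implies $-\ell'(q) \ge 1 - e^{-L^*}$. Plugging this into the lower bound on $\dot D_k$ gives
\[
\dot D_k(t) \;\ge\; c := \frac{a^*\,\alphaLK\,\gammaopt\,(1-e^{-L^*})}{n} \;>\; 0 \qquad \text{for all } t \ge 0,
\]
so $\normtwosm{\vtheta(t)} \ge \normtwosm{\vw_k(t)} \ge D_k(t) \ge D_k(0) + c\,t$ grows at least linearly in $t$. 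But $\int_0^\infty \normtwosm{\dot\vtheta(\tau)}^2\,\dd\tau \le \Loss(\vtheta(0))$ by Lemma~\ref{lm:descent}, and Cauchy--Schwarz then yields $\normtwosm{\vtheta(t)} \le \normtwosm{\vtheta(0)} + \sqrt{t\,\Loss(\vtheta(0))} = O(\sqrt{t})$, contradicting the linear lower bound for large $t$. Hence $\Loss(\vtheta(t)) \to 0$.

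Finally, $\ell(q_i(\vtheta(t))) \le n\,\Loss(\vtheta(t)) \to 0$ implies $q_i(\vtheta(t)) \to +\infty$ for each $i$, and the bound $\abssm{q_i} \le m\,\normtwosm{\vtheta}^2$ from Lemma~\ref{lm:f-output-ub} forces $\normtwosm{\vtheta(t)} \to +\infty$. The main obstacle is the sign-preservation step: it requires coupling the conservation law with the sign-dependent monotonicity of $D_k$ at precisely the right moment, and because Leaky ReLU is non-smooth, the lower bound on $\dot D_k$ must be justified for arbitrary elements of the Clarke subdifferential rather than the classical gradient alone. Once this is in place, the two opposing growth rates ($\Omega(t)$ from forced $D_k$-growth versus $O(\sqrt{t})$ from finite dissipation) close the argument cleanly.
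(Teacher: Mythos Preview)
Your proof is correct and follows essentially the same approach as the paper: both use the conservation of $\normtwosm{\vw_k}^2 - a_k^2$ together with the Leaky-ReLU lower bound on $\sgn(a_k)\dotp{\dot\vw_k}{\vwopt}$ to run a first-crossing argument for sign preservation of $a_k$, and then close with a contradiction from bounded total dissipation $\int_0^\infty\normtwosm{\dot\vtheta}^2\,\dd\tau<\infty$.

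The only packaging differences are minor. The paper derives the contradiction one derivative earlier, bounding $-\tfrac{\dd\Loss}{\dd t}=\normtwosm{\dot\vtheta}^2\ge\dotp{\dot\vw_k}{\vwopt}^2\ge\text{const}$ directly, whereas you integrate once more and compare the resulting $\Omega(t)$ growth of $\normtwosm{\vtheta}$ against the $O(\sqrt{t})$ Cauchy--Schwarz bound; and the paper, having reached $\Loss(\vtheta(t_0))<\tfrac{\ln 2}{n}$, invokes \Cref{thm:converge-kkt-margin} to obtain both $\Loss\to 0$ and $\normtwosm{\vtheta}\to\infty$ simultaneously, while you prove $\Loss\to 0$ for an arbitrary lower bound $L^*$ and then deduce $\normtwosm{\vtheta}\to\infty$ separately from $q_i\to+\infty$. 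Both routes are equally valid; the paper's is slightly more direct, yours is self-contained without appealing to \Cref{thm:converge-kkt-margin}.
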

Before proving \Cref{thm:loss-convergence}, we first prove a lemma on gradient lower bounds.
\begin{lemma} \label{lm:dotp-vw-vwopt}
	For a.e.~$t \ge 0$,
	\begin{align*}
	\dotp{\sgn(a_k) \frac{\dd \vw_k}{\dd t}}{\vwopt} &\ge \abssm{a_k} \alphaLK \gammaopt \cdot \frac{1 - \exp(-n\Loss)}{n}.
	\end{align*}
\end{lemma}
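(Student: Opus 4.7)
The plan is to directly compute $\dotp{\sgn(a_k)\tfrac{\dd\vw_k}{\dd t}}{\vwopt}$ from the gradient-flow equation and then apply three elementary lower bounds, finishing with a short concavity argument on the logistic loss. First I would invoke \Cref{lm:descent} together with \eqref{eq:grad-Loss}, extended through \eqref{eq:def-clarke2}, to write
\[
-\frac{\dd\vw_k}{\dd t} \;\in\; \frac{\cpartial\Loss(\vtheta)}{\partial \vw_k} \;\subseteq\; \left\{\tfrac{1}{n}\sum_{i\in[n]}\ell'(q_i)\,y_i\,a_k\,\alpha_i\,\vx_i : \alpha_i\in\cder\phi(\vw_k^{\top}\vx_i)\right\}
\]
for a.e.\ $t\ge 0$, where $\cder\phi(z)\subseteq[\alphaLK,1]$ everywhere. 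Taking the inner product with $\sgn(a_k)\vwopt$ and using $\sgn(a_k)a_k=\abssm{a_k}$ gives the identity
\[
\dotp{\sgn(a_k)\tfrac{\dd\vw_k}{\dd t}}{\vwopt} \;=\; \frac{\abssm{a_k}}{n}\sum_{i\in[n]}\bigl(-\ell'(q_i)\bigr)\,\alpha_i\, y_i\dotp{\vwopt}{\vx_i}
\]
for the specific $\alpha_i$'s realizing the chosen subgradient.

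Next I would plug in three term-by-term lower bounds: $-\ell'(q_i)=\tfrac{1}{1+e^{q_i}}>0$, the Leaky ReLU slope bound $\alpha_i\ge \alphaLK$, and the linear-margin hypothesis $y_i\dotp{\vwopt}{\vx_i}\ge \gammaopt$. Since all three factors are nonnegative, multiplying them out yields
\[
\dotp{\sgn(a_k)\tfrac{\dd\vw_k}{\dd t}}{\vwopt} \;\ge\; \frac{\abssm{a_k}\,\alphaLK\,\gammaopt}{n}\sum_{i\in[n]}\bigl(-\ell'(q_i)\bigr).
\]

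The remaining step is to prove $\sum_{i\in[n]}(-\ell'(q_i)) \ge 1-e^{-n\Loss}$. Here I would use the elementary identity $-\ell'(q)=1-e^{-\ell(q)}$, which follows from $e^{-\ell(q)}=\tfrac{1}{1+e^{-q}}=\tfrac{e^q}{1+e^q}$. Setting $u_i:=\ell(q_i)\ge 0$ and $U:=\sum_i u_i=n\Loss$, I need $\sum_i(1-e^{-u_i})\ge 1-e^{-U}$. The function $f(x):=1-e^{-x}$ is concave on $[0,U]$ with $f(0)=0$, so $f(x)\ge \tfrac{f(U)}{U}\,x$ for $x\in[0,U]$; summing this linear minorant over $i$ gives $\sum_i f(u_i)\ge f(U)$, which is the required inequality.

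I do not anticipate a serious obstacle: the only subtlety is handling the Clarke subdifferential at points where some $\vw_k^{\top}\vx_i=0$, but this is painless because the lower bounds $-\ell'(q_i)\ge 0$, $\alpha_i\ge \alphaLK$, and $y_i\dotp{\vwopt}{\vx_i}\ge \gammaopt$ hold uniformly over any admissible selection of $\alpha_i\in\cder\phi(\vw_k^{\top}\vx_i)$, so the chain of inequalities is valid for whichever subgradient $\tfrac{\dd\vw_k}{\dd t}$ realizes at a.e.\ $t$.
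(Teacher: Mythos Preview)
Your proposal is correct and follows the paper's proof almost verbatim for the main computation: writing $\tfrac{\dd\vw_k}{\dd t}$ via the subgradient of $\Loss$, taking the inner product with $\sgn(a_k)\vwopt$, and applying the three termwise lower bounds $-\ell'(q_i)>0$, $\alpha_i\ge\alphaLK$, and $y_i\dotp{\vwopt}{\vx_i}\ge\gammaopt$.

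The only divergence is in the last step, bounding $\sum_{i}(-\ell'(q_i))\ge 1-e^{-n\Loss}$. The paper argues by dropping all but the largest summand: $\sum_i(1-e^{-\ell_i})\ge \max_i(1-e^{-\ell_i})=1-e^{-\max_i\ell_i}\ge 1-e^{-n\Loss}$. You instead use the concavity of $x\mapsto 1-e^{-x}$ and the secant-line bound $f(x)\ge\tfrac{f(U)}{U}x$ on $[0,U]$, then sum. Your route is the sound one: the paper's final inequality $1-e^{-\max_i\ell_i}\ge 1-e^{-n\Loss}$ would require $\max_i\ell_i\ge \sum_i\ell_i$, which goes the wrong way since all $\ell_i\ge 0$. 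So your concavity argument not only works but actually patches a slip in the paper's derivation, at no extra cost.
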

\begin{proof}
	By \eqref{eq:grad-Loss}, there exist $\hki[1](t), \dots, \hki[n](t) \in [\alphaLK, 1]$ such that
	\[
		\frac{\dd \vw_k}{\dd t} = \frac{a_k}{n}\sum_{i \in [n]} g_i(\vtheta(t)) \hki(t) y_i\vx_i
	\]
	where $g_i(\vtheta(t))=-\ell'(y_if_{\vtheta(t)}(\vx_i))>0$. Then we have
	\[
		\dotp{\sgn(a_k) \frac{\dd \vw_k}{\dd t}}{\vwopt} \ge \frac{\abssm{a_k}}{n}\sum_{i \in [n]} g_i(\vtheta(t)) \alphaLK \gammaopt.
	\]
	Note that $-\ell'(q) = \frac{1}{1 + e^q} = 1 - \frac{1}{1+e^{-q}} = 1 - \exp(-\ell(q))$ for all $q$. So we have the following lower bound for $\sum_{i \in [n]} g_i(\vtheta(t))$:
	\begin{align*}
	\sum_{i \in [n]} g_i(\vtheta(t)) = \sum_{i \in [n]} -\ell'(y_if_{\vtheta}(\vx_i)) &= \sum_{i \in [n]} \left(1-\exp(-\ell(y_if_{\vtheta}(\vx_i)))\right) \\
	&\ge \max_{i \in [n]} \left(1-\exp(-\ell(y_if_{\vtheta}(\vx_i)))\right) \\
	&\ge 1 - \exp\left(-\max_{i \in [n]}\ell(y_if_{\vtheta}(\vx_i))\right) \\
	&\ge 1 - \exp(-n\Loss).
	\end{align*}
	Therefore,
	\[
		\dotp{\sgn(a_k)\frac{\dd \vw_k}{\dd t}}{\vwopt} \ge \abssm{a_k} \alphaLK \gammaopt \cdot \frac{1 - \exp(-n\Loss)}{n},
	\]
	which completes the proof.
\end{proof}

\begin{proof}[Proof for \Cref{thm:loss-convergence}]
	We only need to show that there exists $t_0$ such that $\Loss(\vtheta(t_0))
	< \frac{\ln 2}{n}$, then we can apply \Cref{thm:converge-kkt-margin} to show
	that $\Loss(\vtheta(t)) \to 0$. Assume to the contrary that
	$\Loss(\vtheta(t)) \ge \frac{\ln 2}{n}$ for all $t \ge 0$. By
	\Cref{lm:dotp-vw-vwopt},
	\[
		\sgn(a_k)\dotp{\frac{\dd \vw_k}{\dd t}}{\vwopt}  \ge \abssm{a_k} \cdot \frac{\alphaLK \gammaopt}{2n}.
	\]
	Let $c := \dotp{\vw_k(0)}{\vwopt}^2 - \normtwosm{\vw_k(0)}^2 + \abs{a_k(0)}^2>0$. First we show that $\sgn(a_k(t))= \sgn(a_k(0))$ for all $t>0$. Otherwise let $\tmpts :=\inf\{t:\sgn(a_k(t))\neq\sgn(a_k(0))\}$, and since $a_k(t)$ is continuous, $a_k(\tmpts)=0$. We know for $t\in [0,\tmpts]$, $\sgn(a_k(0))\frac{\dd}{\dd t} \dotp{\vw_k(t)}{\vwopt}>0$,  and
	\begin{align*}
		\abs{a_k(\tmpts)}^2 = \abs{a_k(0)}^2 - \normtwosm{\vw_k(0)}^2 + \normtwosm{\vw_k(\tmpts)}^2 &\ge \abs{a_k(0)}^2 - \normtwosm{\vw_k(0)}^2 + \dotp{\vw_k(\tmpts)}{\vwopt}^2 \\
		&> \abs{a_k(0)}^2 - \normtwosm{\vw_k(0)}^2 + \dotp{\vw_k(0)}{\vwopt}^2 = c > 0.
	\end{align*}
	This contradicts to the fact that $a_k(\tmpts)=0$, and thus $\sgn(a_k(t))$ does not change during all time. Therefore for any $t>0$, $a_k(t)\neq 0$. Then for all $t>0$,
	\begin{align*}
		\abs{a_k(t)}^2 = \abs{a_k(0)}^2 - \normtwosm{\vw_k(0)}^2 + \normtwosm{\vw_k(t)}^2 &\ge \abs{a_k(0)}^2 - \normtwosm{\vw_k(0)}^2 + \dotp{\vw_k(t)}{\vwopt}^2 \\
		&> \abs{a_k(0)}^2 - \normtwosm{\vw_k(0)}^2 + \dotp{\vw_k(0)}{\vwopt}^2 = c.
	\end{align*}
	\Cref{lm:descent} ensures that $-\frac{\dd \Loss}{\dd t} = \normtwo{\frac{\dd\vtheta}{\dd t}}^2$ for a.e.~$t\ge 0$. Then we have
	\[
	-\frac{\dd \Loss}{\dd t} \ge \normtwo{\frac{\dd\vw_k}{\dd t}}^2 \ge \dotp{\frac{\dd\vw_k}{\dd t}}{\vwopt}^2 \ge \abssm{a_k}^2 \left(\frac{\alphaLK \gammaopt}{2n}\right)^2 \ge c^2 \cdot \left(\frac{\alphaLK \gammaopt}{2n}\right)^2.
	\]
	Then we can conclude that
	\[
		\Loss(\vtheta(0)) - \Loss(\vtheta(t)) \ge c^2 \left(\frac{\alphaLK \gammaopt}{2n}\right)^2t.
	\]
	Integrating on $t$ from $0$ to $+\infty$, we can see that the LHS is upper bounded by $\Loss(\vtheta(0)) - \frac{\ln 2}{n}$ while the RHS is unbounded, which leads to a contradiction. Therefore, there exist time $t_0$ such that $\Loss(\vtheta(t_0)) < \frac{\ln 2}{n}$, and thus $\Loss(\vtheta(t)) \to 0$ as $t \to +\infty$.
\end{proof}

\section{Proofs for Linear Maximality for the Symmetric Case}
For linearly separable and symmetric data, we show that all global-max-margin directions represent linear functions in \Cref{thm:maxmar-linear}. We give a proof here.

\begin{proof}[Proof for \Cref{thm:maxmar-linear}]
    Let $\vthetaopt= (\vw_1, \dots, \vw_m, a_1, \dots, a_m) \in \sphS^{D-1}$ be any
    global-max-margin direction with output
    margin $\qmin(\vthetaopt) = \gamma(\vthetaopt)$. As the dataset is symmetric,
    \[
        \gamma(\vthetaopt) = \min_{i \in [n]}\{y_if_{\vthetaopt}(\vx_i),-y_if_{\vthetaopt}(-\vx_i)\}.
    \]
    Now we define $A := \sqrt{\sum_{k \in [m]} a_k^2}$ and let $\vtheta'=(\vw'_1, \dots, \vw'_m, a'_1, \dots, a'_m)$ where
    \begin{align*}
        \vw'_1 &= \frac{1}{\sqrt{2} A}\sum_{k \in [m]} a_k \vw_k, & \vw'_2&=-\vw'_1, &  a'_1&=\frac{A}{\sqrt{2}}, & a'_2&=-a'_1,
    \end{align*}
    and $a'_k = 0, \vw'_k = \vzero$ for $k > 2$. We claim that $\gamma(\vtheta') \ge \gamma(\vthetaopt)$. First we prove that $q_i(\vtheta') \ge \gamma(\vthetaopt)$ by repeatedly applying $\phi(z) - \phi(-z) = (1+\alphaLK)z$.
    \begin{align*}
        q_i(\vtheta') = y_i f_{\vtheta'}(\vx_i) & =  y_i \left(a'_1\phi(\dotpsm{{\vw}'_1}{\vx_i})+a'_2\phi(\dotpsm{{\vw}'_2}{\vx_i}) \right)\\
        & = y_i (1+\alphaLK) a'_1 \dotp{{\vw}'_1}{\vx_i}\\
        & = \frac{y_i}{2} \sum_{k \in [m]} \dotp{ (1+\alphaLK) a_k\vw_k}{\vx_i} \\
        & = \frac{y_i}{2} \sum_{k \in [m]} \left(a_k\phi({{\vw}_k}^{\top} \vx_i)-a_k\phi(-{{\vw}_k}^{\top} \vx_i)\right) \\
        & = \frac{1}{2}(y_if_{\vthetaopt}(\vx_i)-y_if_{\vthetaopt}(-\vx_i))\geq \gamma(\vthetaopt).
    \end{align*}
    Meanwhile, by the Cauchy-Schwarz inequality,
    \[
    	\normtwosm{\vtheta'}^2= A^2 + \normtwo{\sum_{k \in [m]} \frac{a_k}{A} \vw_k}^2\le A^2 + \sum_{k \in [m]} \left(\frac{a_k}{A}\right)^2 \cdot \sum_{k \in [m]} \normtwosm{\vw_k}^2= A^2 + \sum_{k \in [m]} \normtwosm{\vw_k}^2 = \normtwosm{\vthetaopt}^2.
    \]
    Thus $\gamma(\vtheta') = \frac{\qmin(\vtheta')}{\normtwosm{\vtheta'}^2} \ge \gamma(\vthetaopt)$. As $\vthetaopt$ is already a global-max-margin direction, equalities should hold in all the inequalities above, so
    \[
    	\min_{i \in [n]}\{y_if_{\vthetaopt}(\vx_i)-y_if_{\vthetaopt}(-\vx_i)\}=2\gamma(\vthetaopt), \qquad 
    	\normtwo{\sum_{k \in [m]} \frac{a_k}{A} \vw_k}^2= \sum_{k \in [m]} \left(\frac{a_k}{A}\right)^2 \cdot \sum_{k \in [m]} \normtwosm{\vw_k}^2.
    \]
    Then we know the following:
    \begin{itemize}
        \item There is $\vc\in \R^d$ that $\vw_k=a_k\vc$ for all $k$;
        \item There is $j \in [n]$ that $y_jf_{\vthetaopt}(\vx_j)=-y_jf_{\vthetaopt}(-\vx_j)=\gamma(\vthetaopt)$.
    \end{itemize}
    Note that $\phi(z) + \phi(-z) = (1-\alphaLK) \abssm{z}$. Then we have
    \[
    	0 = f_{\vthetaopt}(\vx_j)+f_{\vthetaopt}(-\vx_j)=\sum_{k \in [m]} a_k\left(\phi(a_k\vc^\top \vx_j)+\phi(-a_k\vc^\top \vx_j)\right)=\sum_{k=1}^m (1-\alphaLK) a_k \abssm{a_k \vc^\top \vx_j}.
    \]
    Certainly $\vc^\top \vx_j\neq 0$ as otherwise the margin would be zero. Then $\sum_{k \in [m]} a_k|a_k|=0$,
    which means $\sum_{k:a_k\geq 0}a^2_k = \sum_{k:a_k<0}a^2_k = \frac{1}{2}A^2$, and therefore
    \begin{align*}
    	f_{\vthetaopt}(\vx)=\sum_{k=1}^m a_k\phi(a_k\vc^\top \vx)&=\sum_{k=1}^m a_k|a_k|\phi(\sgn(a_k)\vc^\top \vx)\\
    	&=\frac{1}{2}A^2(\phi(\vc^\top \vx)-\phi(-\vc^\top \vx))=\frac{1}{2}A^2(1+\alphaLK)\vc^\top \vx
    \end{align*}
    is a linear function in $\vx$.

    Finally, let $\gamma_{\vwopt}=\min_{i\in[n]}y_i \dotp{\vwopt}{\vx_i}$ be the
    maximum linear margin, where $\vwopt \in \sphS^{d-1}$ is the max-margin
    linear separator. As $\normtwo{\vthetaopt}^2=1=(1+\normtwo{\vc}^2)A^2$,
    \begin{align*}
        \gamma(\vthetaopt) = \frac{1}{2}A^2(1+\alphaLK)\min_{i\in[n]}y_i
    \vc^\top \vx_i &\leq
    \frac{1}{2}A^2(1+\alphaLK)\normtwo{\vc}\gamma_{\vwopt} \\
    &=\frac{\normtwo{\vc}}{2(1+\normtwo{\vc}^2)}(1+\alphaLK)\gamma_{\vwopt}\leq
    \frac{1}{4}(1+\alphaLK)\gamma_{\vwopt}.
    \end{align*}
    By choosing $\vc=\vwopt$ with $A=\frac{1}{\sqrt{2}}$, the
    network is able to attain the margin
    $\frac{1}{4}(1+\alphaLK)\gamma_{\vwopt}$. As $\vthetaopt$ is already a
    global-max-margin direction, we know again that the equalities must hold.
    Therefore we know
    \begin{itemize}
        \item $\min_{i\in[n]}y_i \vc^\top \vx_i=\normtwo{\vc}\gamma_{\vwopt}$;
        \item $\frac{\normtwo{\vc}}{1+\normtwo{\vc}^2}=\frac{1}{2}$.
    \end{itemize}
    Then we know $\vc=\vwopt$ due to the uniqueness of the max-margin linear separator, and thus $A=\frac{1}{\sqrt{2}}$. Therefore the function is $f_{\vthetaopt}(\vx) = \frac{1 + \alphaLK}{4} \dotp{\vwopt}{\vx}$.
\end{proof}

\section{Proofs for Phase I}
In the subsequent sections we first show the proofs for the symmetric datasets under \Cref{ass:sym}. Additional proofs for the non-symmetric counterparts are provided in \Cref{sec:proof-nonsym}.

As we have illustrated in \Cref{sec:sketch-phase-i}, we have $G(\vw) = \dotp{\vw}{\scaledvmu}$ under \Cref{ass:sym}. Then we have
\[
	\tildeLoss(\vtheta) := \ell(0) - \sum_{k \in [m]} a_k G(\vw_k) =  \ell(0) - \sum_{k \in [m]} a_k \dotp{\vw_k}{\scaledvmu}.
\]
This means the dynamics of $\tildevtheta(t) = (\tilde{\vw}_1(t), \dots, \tilde{\vw}_m(t), \tilde{a}_1(t), \dots, \tilde{a}_m(t)) = \phisg(\tildevtheta_0, t)$ can be described by linear ODE:
\[
	\frac{\dd \tilde{\vw}_k}{\dd t} = a_k \scaledvmu, \qquad \frac{\dd \tilde{a}_k}{\dd t} = \dotp{\tilde{\vw}_k}{\scaledvmu}.
\]

\begin{lemma} \label{lm:phisg-growth}
	Let $\tildevtheta(t) = \phisg(\tildevtheta_0, t)$. Then
	\[
		\normMsm{\tildevtheta(t)} \le \exp(t\lambda_0)\normMsm{\tildevtheta_0}.
	\]
\end{lemma}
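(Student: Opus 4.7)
The dynamics decouple across neurons: for each $k \in [m]$, the pair $(\tilde{\vw}_k, \tilde{a}_k) \in \R^{d+1}$ evolves autonomously according to the linear ODE
\[
\frac{\dd}{\dd t}\begin{bmatrix}\tilde{\vw}_k \\ \tilde{a}_k\end{bmatrix}
 = \mMmu \begin{bmatrix}\tilde{\vw}_k \\ \tilde{a}_k\end{bmatrix},
 \qquad \mMmu = \begin{bmatrix}\vzero & \scaledvmu \\ \scaledvmu^{\top} & 0\end{bmatrix}.
\]
The plan is to bound $\max\{\normtwosm{\tilde{\vw}_k(t)}, \abssm{\tilde{a}_k(t)}\}$ for a single $k$ by $e^{t\lambda_0}\normMsm{\tildevtheta_0}$ and then take the maximum over $k$.

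The first ingredient is the operator-norm bound on the matrix exponential. Since $\mMmu$ is symmetric, its spectral norm equals its largest eigenvalue in absolute value, and the simple linear-algebra calculation already recalled in the body of the paper shows that $\mMmu$ has eigenvalues $\pm\lambda_0$ (with eigenvectors $\tfrac{1}{\sqrt{2}}(\barvmu,\pm 1)$) together with $0$ of multiplicity $d-1$. Hence $\normtwosm{e^{t\mMmu}} = e^{t\lambda_0}$, giving
\[
\normtwosm{\tilde{\vw}_k(t)}^2 + \tilde{a}_k(t)^2 \;\le\; e^{2t\lambda_0}\bigl(\normtwosm{\tilde{\vw}_k(0)}^2 + \tilde{a}_k(0)^2\bigr).
\]

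The second ingredient is a conservation law analogous to \Cref{lm:weight-balance}: differentiating gives $\tfrac{\dd}{\dd t}\normtwosm{\tilde{\vw}_k}^2 = 2\tilde{a}_k\dotp{\tilde{\vw}_k}{\scaledvmu} = \tfrac{\dd}{\dd t}\tilde{a}_k^2$, so
\[
\normtwosm{\tilde{\vw}_k(t)}^2 - \tilde{a}_k(t)^2 \;=\; \normtwosm{\tilde{\vw}_k(0)}^2 - \tilde{a}_k(0)^2.
\]
Adding this to the previous inequality and dividing by $2$ yields
\[
\normtwosm{\tilde{\vw}_k(t)}^2 \;\le\; \tfrac{1}{2}(e^{2t\lambda_0}+1)\normtwosm{\tilde{\vw}_k(0)}^2 + \tfrac{1}{2}(e^{2t\lambda_0}-1)\tilde{a}_k(0)^2 \;\le\; e^{2t\lambda_0}\normMsm{\tildevtheta_0}^2,
\]
and the symmetric manipulation (subtracting instead of adding) gives the identical bound on $\tilde{a}_k(t)^2$. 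Taking square roots and then the maximum over $k$ delivers the claim.

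The only delicate point is getting the clean factor $e^{t\lambda_0}$ rather than $\sqrt{2}\,e^{t\lambda_0}$: the naive chain $\max\{\normtwosm{\tilde{\vw}_k},\abssm{\tilde{a}_k}\}\le(\normtwosm{\tilde{\vw}_k}^2+\tilde{a}_k^2)^{1/2}$ and $(\normtwosm{\tilde{\vw}_k(0)}^2+\tilde{a}_k(0)^2)^{1/2}\le\sqrt{2}\normMsm{\tildevtheta_0}$ loses a factor of $\sqrt{2}$, and the conservation law above is precisely what is needed to recover it. No other obstacles are anticipated.
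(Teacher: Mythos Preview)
Your proof is correct but takes a different route from the paper. The paper's argument is a direct Gr\"onwall estimate: from the ODE one has $\normtwo{\frac{\dd \tilde{\vw}_k}{\dd t}} = \normtwosm{\tilde{a}_k \scaledvmu} \le \lambda_0 \abssm{\tilde{a}_k}$ and $\abs{\frac{\dd \tilde{a}_k}{\dd t}} = \abssm{\dotpsm{\tilde{\vw}_k}{\scaledvmu}} \le \lambda_0 \normtwosm{\tilde{\vw}_k}$, so each scalar quantity entering the M-norm satisfies $u(t) \le \normMsm{\tildevtheta_0} + \int_0^t \lambda_0 \normMsm{\tildevtheta(\tau)}\,\dd\tau$; taking the maximum over all of them and applying Gr\"onwall's inequality yields the claimed bound immediately, with no loss of constants.

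Your approach instead goes through the explicit solution $e^{t\mMmu}$ in the Euclidean norm on $\R^{d+1}$, which by itself would incur a $\sqrt{2}$ loss when converting between $(\normtwosm{\tilde{\vw}_k}^2 + \tilde{a}_k^2)^{1/2}$ and $\max\{\normtwosm{\tilde{\vw}_k},\abssm{\tilde{a}_k}\}$, and you then invoke the conserved quantity $\normtwosm{\tilde{\vw}_k}^2 - \tilde{a}_k^2$ to cancel that loss. This works, and it makes the spectral structure of $\mMmu$ explicit, but the paper's Gr\"onwall route is shorter and sidesteps the need for the conservation law altogether.
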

\begin{proof}
	By definition and Cauchy-Schwartz inequality,
	\[
		\normtwo{\frac{\dd \tilde{\vw}_k}{\dd t}} \le \normtwosm{\tilde{a}_k \scaledvmu} \le \lambda_0 \abssm{\tilde{a}_k}, \qquad
		\abs{\frac{\dd \tilde{a}_k}{\dd t}} \le \abssm{\tilde{\vw}_k^{\top} \scaledvmu} \le \lambda_0 \normtwosm{\tilde{\vw}_k}.
	\]
	So we have $\normMsm{\tilde{\vtheta}(t)} \le \normMsm{\vtheta_0} + \int_{0}^{t} \lambda_0 \normMsm{\tilde{\vtheta}(\tau)} \dd\tau$. Then we can finish the proof by \Gronwall's inequality~\eqref{eq:gron-1}.
\end{proof}

\begin{lemma} \label{lm:phase1-main}
	For initial point $\vtheta_0 \ne \vzero$, we have
	\[
	\normM{\vtheta(t) - \phisg(\vtheta_0, t)} \le \frac{4m\normMsm{\vtheta_0}^3}{\lambda_0} \exp(3\lambda_0 t),
	\]
	for all $t \le \frac{1}{\lambda_0} \ln \frac{\sqrt{\lambda_0/4}}{\sqrt{m}\normMsm{\vtheta_0}}$.
\end{lemma}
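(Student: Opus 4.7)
The plan is to control the deviation $\Delta(t) := \vtheta(t) - \phisg(\vtheta_0,t)$ between the true gradient flow on $\Loss$ and the linearized gradient flow on $\tildeLoss$ via a Grönwall-type argument, using the fact that $\tildeLoss$ is the exact linear approximation of $\Loss$ near the origin.

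First I would set up the key differential inequality. Writing $\tildevtheta(t) := \phisg(\vtheta_0,t)$, both flows satisfy the same initial condition, so $\Delta(0)=\vzero$. For a.e.~$t$, $\dot{\Delta}(t) \in -\cpartial \Loss(\vtheta(t)) + \nabla\tildeLoss(\tildevtheta(t))$, which I would split as
\begin{equation*}
    \dot\Delta(t) \in \bigl[-\cpartial\Loss(\vtheta(t)) + \nabla\tildeLoss(\vtheta(t))\bigr] + \bigl[\nabla\tildeLoss(\vtheta(t)) - \nabla\tildeLoss(\tildevtheta(t))\bigr].
\end{equation*}
The first bracket is bounded in $\normMsm{\cdot}$ by $m\normMsm{\vtheta(t)}^3$ using \Cref{cor:diff-Loss-tildeLoss-smooth} (note that $\tildeLoss$ is genuinely smooth in the symmetric case since $G(\vw)=\dotpsm{\vw}{\scaledvmu}$ is linear). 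The second bracket is easy: since $\nabla_{\vw_k}\tildeLoss=-a_k\scaledvmu$ and $\nabla_{a_k}\tildeLoss=-\dotpsm{\vw_k}{\scaledvmu}$, a direct calculation bounds its M-norm by $\lambda_0 \normM{\Delta(t)}$. Combined with the standard fact $\frac{\dd}{\dd t}\normM{\Delta(t)} \le \normM{\dot\Delta(t)}$ for a.e.~$t$, I get the differential inequality
\begin{equation*}
    \frac{\dd}{\dd t}\normM{\Delta(t)} \le m\normMsm{\vtheta(t)}^3 + \lambda_0 \normM{\Delta(t)}.
\end{equation*}

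Next I would run a bootstrap argument to close the inequality. Let $T^\star$ be the largest time on which $\normM{\Delta(t)} \le \normMsm{\tildevtheta(t)}$. On $[0,T^\star]$, by \Cref{lm:phisg-growth} we have $\normMsm{\vtheta(t)} \le 2\normMsm{\tildevtheta(t)} \le 2e^{\lambda_0 t}\normMsm{\vtheta_0}$, giving
\begin{equation*}
    \frac{\dd}{\dd t}\normM{\Delta(t)} \le 8m\normMsm{\vtheta_0}^3 e^{3\lambda_0 t} + \lambda_0 \normM{\Delta(t)}.
\end{equation*}
Multiplying by the integrating factor $e^{-\lambda_0 t}$, so that $\frac{\dd}{\dd t}\bigl(e^{-\lambda_0 t}\normM{\Delta(t)}\bigr) \le 8m\normMsm{\vtheta_0}^3 e^{2\lambda_0 t}$, integrating from $0$ to $t$ using $\Delta(0)=\vzero$, and dropping a negative term, yields
\begin{equation*}
    \normM{\Delta(t)} \le \frac{4m\normMsm{\vtheta_0}^3}{\lambda_0}\,e^{3\lambda_0 t},
\end{equation*}
which is precisely the claimed bound on $[0,T^\star]$.

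Finally I would verify the bootstrap closes on the full range $t \le \frac{1}{\lambda_0}\ln\frac{\sqrt{\lambda_0/4}}{\sqrt{m}\normMsm{\vtheta_0}}$. On this range, $m\normMsm{\vtheta_0}^2 e^{2\lambda_0 t} \le \lambda_0/4$, so the derived bound gives $\normM{\Delta(t)} \le \frac{4m\normMsm{\vtheta_0}^2 e^{2\lambda_0 t}}{\lambda_0}\cdot\normMsm{\vtheta_0}e^{\lambda_0 t} \le \normMsm{\vtheta_0}e^{\lambda_0 t} \le \normMsm{\tildevtheta(t)}$, with the last inequality using \Cref{lm:phisg-growth} together with the observation that $\tildevtheta$ inherits the initial norm scale (after noting $\normMsm{\tildevtheta(t)} \ge \normMsm{\vtheta_0}$ in the regime where the top eigendirection dominates, or more conservatively by using a slightly looser bootstrap constant which does not change the proof). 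The main subtlety is the last step: justifying the bootstrap against $\normMsm{\tildevtheta(t)}$ rather than the cruder $e^{\lambda_0 t}\normMsm{\vtheta_0}$, which one could instead handle by running the bootstrap against $\normMsm{\vtheta(t)}\le 2 e^{\lambda_0 t}\normMsm{\vtheta_0}$ directly and letting the definition of $T^\star$ use this version; either way the algebra goes through since $e^{2\lambda_0 t}m\normMsm{\vtheta_0}^2/\lambda_0$ is the controlling small parameter. The only real care needed is the non-smooth gradient flow bookkeeping (Clarke subdifferentials, a.e.~differentiability of the M-norm along an absolutely continuous arc), which is standard.
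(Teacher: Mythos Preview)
Your approach is essentially the paper's: split $\dot\Delta$ into the $\Loss$-vs-$\tildeLoss$ error (bounded by $m\normMsm{\vtheta}^3$ via \Cref{cor:diff-Loss-tildeLoss-smooth}) plus the linear term $\lambda_0\normMsm{\Delta}$, bootstrap on $\normMsm{\vtheta(t)}\le 2e^{\lambda_0 t}\normMsm{\vtheta_0}$, and apply \Gronwall. The one real wobble is your first choice of bootstrap hypothesis $\normMsm{\Delta(t)}\le\normMsm{\tildevtheta(t)}$: to close it you would need a \emph{lower} bound on $\normMsm{\tildevtheta(t)}$, but \Cref{lm:phisg-growth} only gives the upper bound $\normMsm{\tildevtheta(t)}\le e^{\lambda_0 t}\normMsm{\vtheta_0}$, and your remark about ``the top eigendirection dominates'' is not enough (the $\mMmu$-flow can shrink components along the negative eigendirection, so $\normMsm{\tildevtheta(t)}$ need not exceed $\normMsm{\vtheta_0}$). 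Your alternative---bootstrapping directly on $\normMsm{\vtheta(t)}\le 2e^{\lambda_0 t}\normMsm{\vtheta_0}$---is exactly what the paper does (they define $t_0:=\inf\{t:\normMsm{\vtheta(t)}\ge 2\normMsm{\vtheta_0}e^{\lambda_0 t}\}$ and then show $t_0$ exceeds the stated time bound by combining the derived $\Delta$-bound with \Cref{lm:phisg-growth}), so just commit to that version from the start.
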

\begin{proof}
	Let $\tilde{\vtheta}(t) = \phisg(\vtheta_0, t)$. By \Cref{cor:diff-Loss-tildeLoss-smooth}, the following holds for a.e. $t \ge 0$,
	\begin{align*}
		\normM{\frac{\dd \vtheta}{\dd t} - \frac{\dd \tildevtheta}{\dd t}} &\le \sup\left\{\normMsm{\vdelta  - \nabla \tildeLoss(\vtheta)} : \vdelta \in  \cpartial \Loss(\vtheta) \right\} + \normMsm{\nabla \tildeLoss(\vtheta) - \nabla \tildeLoss(\tildevtheta)} \\
		&\le m\normMsm{\vtheta(t)}^3 + \lambda_0\normMsm{\vtheta - \tilde{\vtheta}}.
	\end{align*}
	Taking integral, we have
	\[
	\normMsm{\vtheta(t) - \tilde{\vtheta}(t)} \le \int_{0}^t \left(m\normMsm{\vtheta(\tau)}^3 + \lambda_0\normMsm{\vtheta(\tau) - \tilde{\vtheta}(\tau)} \right) \dd \tau.
	\]
	Let $t_0 := \inf\{t \ge 0 : \normMsm{\vtheta(t)} \ge 2\normMsm{\vtheta_0} \exp(\lambda_0 t) \}$. Then for all $0 \le t \le t_0$ (or for all $t \ge 0$ if $t_0 = +\infty$),
	\begin{align*}
	\normMsm{\vtheta(t) - \tilde{\vtheta}(t)}
	&\le \int_{0}^t \left(8m\normMsm{\vtheta_0}^3 \exp(3\lambda_0 \tau) + \lambda_0\normMsm{\vtheta(\tau) - \tilde{\vtheta}(\tau)} \right) \dd \tau \\
	&\le \frac{8m\normMsm{\vtheta_0}^3}{3 \lambda_0} \exp(3\lambda_0 t) + \lambda_0 \int_{0}^t \normMsm{\vtheta(\tau) - \tilde{\vtheta}(\tau)} \dd \tau.
	\end{align*}
	By \Gronwall's inequality~\eqref{eq:gron-2},
	\begin{align*}
	\normMsm{\vtheta(t) - \tilde{\vtheta}(t)} &\le \frac{8m\normMsm{\vtheta_0}^3}{3 \lambda_0} \left( \exp(3\lambda_0 t) + \lambda_0 \int_{0}^t \exp(3\lambda_0 \tau) \exp(\lambda_0(t-\tau)) \dd \tau\right) \\
	&= \frac{8m\normMsm{\vtheta_0}^3}{3 \lambda_0} \left( \exp(3\lambda_0 t) + \frac{1}{2}\exp(3\lambda_0t)\right) = \frac{4m\normMsm{\vtheta_0}^3}{\lambda_0} \exp(3\lambda_0 t),
	\end{align*}
	If $t_0 < \frac{1}{2\lambda_0} \ln \frac{\lambda_0}{4m\normMsm{\vtheta_0}^2}$, then
	\begin{align*}
		\normMsm{\vtheta(t)}
		&\le \normMsm{\tilde{\vtheta}(t)} + \frac{4m\normMsm{\vtheta_0}^3}{\lambda_0} \exp(3\lambda_0 t) \\
		&\le \normMsm{\tilde{\vtheta}(t)} + \frac{4m\normMsm{\vtheta_0}^2}{\lambda_0} \exp(2\lambda_0 t_0) \cdot \normMsm{\vtheta_0} \exp(\lambda_0 t) \\
		&< \normMsm{\tilde{\vtheta}(t)} + \normMsm{\vtheta_0} \exp(\lambda_0 t).
	\end{align*}
	By \Cref{lm:phisg-growth}, $\normMsm{\tilde{\vtheta}(t)} \le \normMsm{\vtheta_0} \exp(\lambda_0 t)$. So $\normMsm{\vtheta(t)} < 2\normMsm{\vtheta_0} \exp(\lambda_0 t)$ for all $0 \le t \le t_0$, which contradicts to the definition of $t_0$. Therefore, $t_0 \ge \frac{1}{2\lambda_0} \ln \frac{\lambda_0}{4m\normMsm{\vtheta_0}^2} = \frac{1}{\lambda_0} \ln \frac{\sqrt{\lambda_0/4}}{\sqrt{m}\normMsm{\vtheta_0}}$.
\end{proof}

\begin{proof}[Proof for \Cref{lm:phase1-diff}]
	Let $\tilde{\vtheta}(t) = (\tilde{\vw}_1(t), \dots, \tilde{\vw}_m(t), \tilde{a}_1(t), \dots, \tilde{a}_m(t)) = \phisg(\vtheta_0, t)$. Then
	\[
		[\tilde{\vw}_k(t), \tilde{a}_k(t)]^{\top} = \exp(T_1(r)\mMmu) [\tilde{\vw}_k(0), \tilde{a}_k(0)]^{\top} = \exp(T_1(r)\mMmu) [\sigmainit\barvw_k, \sigmainit \bara_k]^{\top},
	\]
	where $\mMmu$ is defined in \Cref{sec:sketch-phase-i},
	\[
		\mMmu := \begin{bmatrix}
		\vzero & \scaledvmu \\
		\scaledvmu^{\top} & 0
		\end{bmatrix}.
	\]
	Let $\barvmu_2 := \frac{1}{\sqrt{2}}[\barvmu, 1]^\top$ be the top eigenvector of $\mMmu$, which is associated with eigenvalue $\lambda_0$. All the other eigenvalues of $\mMmu$ are no greater than $0$. Note that
	\[
		\exp(T_1(r)\lambda_0) \barvmu_2 \barvmu_2^{\top} \begin{bmatrix}
		\sigmainit\barvw_k \\
		\sigmainit \bara_k
		\end{bmatrix} = 
		\frac{r}{\sqrt{m} \normMsm{\vtheta_0}} \left(\frac{\sigmainit}{\sqrt{2}}(\barvmu^{\top} \barvw_k + \bara_k)\right) \barvmu_2
		= \sqrt{2} r \barb_k \barvmu_2 = r \barb_k\begin{bmatrix}
		\barvmu \\
		1
		\end{bmatrix}.
	\]
	So we have
	\begin{align*}
		\normtwo{\begin{bmatrix}
			\tilde{\vw}_k(T_1(r)) \\
			\tilde{a}_k(T_1(r))
			\end{bmatrix} - r \barb_k\begin{bmatrix}
			\barvmu \\
			1
			\end{bmatrix}}
		&= \normtwo{\left(\exp(T_1(r)\mMmu) - \exp(T_1(r)\lambda_0) \barvmu_2 \barvmu_2^{\top}\right) \begin{bmatrix}
			\sigmainit\barvw_k \\
			\sigmainit \bara_k
		\end{bmatrix}
		} \\
		&\le \sigmainit \normtwo{\begin{bmatrix}
			\barvw_k \\
			\bara_k
			\end{bmatrix}} \le \sqrt{2} \sigmainit \normMsm{\barvtheta_0}.
	\end{align*}
	With probability $1$, $\barvtheta_0 \ne \vzero$. For $r \le \sqrt{\lambda_0/4}$, we have $T_1(r) = \frac{1}{\lambda_0} \ln \frac{r}{\sqrt{m} \normMsm{\vtheta_0}} \le \frac{1}{\lambda_0} \ln \frac{\sqrt{\lambda_0/4}}{\sqrt{m}\normMsm{\vtheta_0}}$. Then by \Cref{lm:phase1-main},
	\[
		\normMsm{\vtheta(T_1(r)) - \tilde{\vtheta}(T_1(r))} \le \frac{4m\normMsm{\vtheta_0}^3}{\lambda_0} \exp(3\lambda_0 T_1(r)) = \frac{4r^3}{\lambda_0 \sqrt{m}}.
	\]
	By triangle inequality, we have
	\begin{align*}
		\normM{\begin{bmatrix}
				\vw_k(T_1(r)) \\
				a_k(T_1(r))
				\end{bmatrix} - r \barb_k\!\!\begin{bmatrix}
				\barvmu \\
				1
				\end{bmatrix}}
		&\le \normM{\begin{bmatrix}
				\vw_k(T_1(r)) \\
				a_k(T_1(r))
				\end{bmatrix} - \begin{bmatrix}
				\tilde{\vw}_k(T_1(r)) \\
				\tilde{a}_k(T_1(r))
				\end{bmatrix}} + \normM{\begin{bmatrix}
				\tilde{\vw}_k(T_1(r)) \\
				\tilde{a}_k(T_1(r))
				\end{bmatrix} - r \barb_k\!\!\begin{bmatrix}
				\barvmu \\
				1
				\end{bmatrix}} \\
		& \le \frac{4r^3}{\lambda_0 \sqrt{m}} + \sqrt{2}\sigmainit \normMsm{\barvtheta_0} \le \frac{Cr^3}{\sqrt{m}},
	\end{align*}
	for some universal constant $C$, where the last step is due to our choice of $\sigmainit \le \frac{r^3}{\sqrt{m}\normM{\barvtheta_0}}$.
\end{proof}

\section{Proofs for Phase II}

\subsection{Proof for Exact Embedding}
To prove \Cref{lm:two-neuron-embedding}, we start from the following lemma.
\begin{lemma} \label{lm:two-neuron-gf-embed-gf}
	Given $\hatvtheta_0 := (\hat{\vw}_1, \hat{\vw}_2, \hat{a}_1, \hat{a}_2)$ with $\hat{a}_1 > 0$ and $\hat{a}_2 < 0$, then $\vtheta(t)=\pi_{\vb}(\phitheta(\hatvtheta_0, t))$ is a gradient flow trajectory on $\Loss(\vtheta)$ starting from $\vtheta(0)=\pi_{\vb}(\hatvtheta_0)$.
\end{lemma}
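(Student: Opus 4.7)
The plan is to verify directly that the arc $\vtheta(t) := \pi_{\vb}(\phitheta(\hatvtheta_0,t))$ satisfies the gradient flow inclusion $\frac{\dd \vtheta(t)}{\dd t} \in -\cpartial \Loss(\vtheta(t))$ at a.e.~$t \ge 0$, with the initial condition $\vtheta(0) = \pi_{\vb}(\hatvtheta_0)$ holding by construction. Since $\pi_{\vb}$ is a linear (hence Lipschitz) map and $\hatvtheta(t) := \phitheta(\hatvtheta_0,t)$ is absolutely continuous, so is $\vtheta(t)$, and linearity gives the chain-rule identity $\frac{\dd \vtheta(t)}{\dd t} = \pi_{\vb}\bigl(\frac{\dd \hatvtheta(t)}{\dd t}\bigr)$ at a.e.~$t$, with $\pi_{\vb}$ acting on the derivative exactly as it does on a parameter vector of the same block structure.

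The core technical step is a pointwise scaling identity $\nabla \Loss(\pi_{\vb}(\hatvtheta)) = \pi_{\vb}\bigl(\nabla \Loss(\hatvtheta)\bigr)$ at every $\hatvtheta \in \OmegaS$. First, $\pi_{\vb}(\hatvtheta) \in \OmegaS$ whenever $\hatvtheta \in \OmegaS$, because $\vw_k^{\top}\vx_i = (b_k/\brmsplus)\hat{\vw}_1^{\top}\vx_i$ is nonzero iff $\hat{\vw}_1^{\top}\vx_i$ is (and analogously for $b_k<0$). Combining the identity $f_{\pi_{\vb}(\hatvtheta)}(\vx) = f_{\hatvtheta}(\vx)$ shown immediately before the lemma with $\phi(cz)=c\phi(z)$ and $\phi'(cz)=\phi'(z)$ for $c>0$, the formulas in \eqref{eq:grad-Loss} yield, for $b_k>0$,
\[
\frac{\partial \Loss}{\partial \vw_k}(\pi_{\vb}(\hatvtheta)) = \frac{b_k}{\brmsplus}\frac{\partial \Loss}{\partial \hat{\vw}_1}(\hatvtheta), \qquad \frac{\partial \Loss}{\partial a_k}(\pi_{\vb}(\hatvtheta)) = \frac{b_k}{\brmsplus}\frac{\partial \Loss}{\partial \hat{a}_1}(\hatvtheta),
\]
with analogous identities using index $2$ and $\brmsminus$ for $b_k<0$; assembling these block-wise gives the claimed identity. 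To upgrade from gradients to Clarke subdifferentials, I invoke the characterization \eqref{eq:def-clarke2} with $\Omega = \OmegaS$: any $\vg \in \cpartial \Loss(\hatvtheta)$ is a convex combination of limits $\lim_n \nabla \Loss(\hatvtheta_n)$ along sequences in $\OmegaS$ converging to $\hatvtheta$. By continuity of the linear map $\pi_{\vb}$, $\nabla \Loss(\pi_{\vb}(\hatvtheta_n)) = \pi_{\vb}(\nabla \Loss(\hatvtheta_n)) \to \pi_{\vb}(\vg)$, and since linear maps preserve convex hulls this yields the inclusion $\pi_{\vb}\bigl(\cpartial \Loss(\hatvtheta)\bigr) \subseteq \cpartial \Loss(\pi_{\vb}(\hatvtheta))$.

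Combining the three pieces finishes the argument: at a.e.~$t$, the gradient flow inclusion for $\hatvtheta$ supplies some $\vg(t) \in \cpartial \Loss(\hatvtheta(t))$ with $\frac{\dd \hatvtheta(t)}{\dd t} = -\vg(t)$, whence the chain rule gives $\frac{\dd \vtheta(t)}{\dd t} = -\pi_{\vb}(\vg(t)) \in -\cpartial \Loss(\vtheta(t))$ by the subdifferential inclusion just established. I expect the main obstacle to be this Clarke-subdifferential extension step, since general calculus rules for Clarke subdifferentials can be lossy; however, the piecewise-linear structure of Leaky ReLU confines non-differentiability to a finite hyperplane arrangement $\{\hat{\vw}_j^{\top}\vx_i = 0\}$, so approaching sequences in $\OmegaS$ can be chosen by small generic perturbations and the limit argument goes through cleanly.
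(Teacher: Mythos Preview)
Your proposal is correct and matches the paper's approach essentially step for step: the paper also first establishes the pointwise identity $\nabla \Loss(\pi_{\vb}(\hatvtheta)) = \pi_{\vb}(\nabla \Loss(\hatvtheta))$ on $\OmegaS$ (it isolates this as a separate sub-lemma), extends it to the inclusion $\pi_{\vb}(\cpartial \Loss(\hatvtheta)) \subseteq \cpartial \Loss(\pi_{\vb}(\hatvtheta))$ via the Clarke limit characterization, and then combines this with the linearity of $\pi_{\vb}$ applied to $\frac{\dd \hatvtheta}{\dd t}$.
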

First we notice the following fact.
\begin{lemma} \label{lm:grad_embed}
For any $\hatvtheta$ and $\vg \in \cpartial \Loss(\hatvtheta)$, $\pi_{\vb}(\vg)\in \cpartial \Loss(\pi_{\vb}(\hatvtheta))$.
\end{lemma}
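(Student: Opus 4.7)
The plan is to lift the claimed inclusion from the classical gradient at differentiable points to the whole Clarke subdifferential via the limit-of-gradients characterization \eqref{eq:def-clarke2}.

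\textbf{Step 1 (extend $\pi_{\vb}$ to a linear map).} Although defined only on the cone $\{\hat a_1>0,\hat a_2<0\}$, the formula for $\pi_{\vb}$ is clearly the restriction of a linear map from $\R^{2d+2}$ to $\R^{md+m}$. A short check using $\sum_{b_k>0}b_k^2=\brmsplus^2$ and $\sum_{b_k<0}b_k^2=\brmsminus^2$ shows it is actually an isometry onto its image. Being linear, $\pi_{\vb}$ commutes with ordinary limits and convex combinations, which is what powers the subsequent steps.

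\textbf{Step 2 (pointwise identity at differentiable points).} Let $\hat\Omega$ be the full-measure set of two-neuron parameters $\hatvtheta'$ with $\hat\vw_j'^{\top}\vx_i\ne 0$ for all $i\in[n]$ and $j\in\{1,2\}$. I will show: if additionally $\hat a_1'>0$ and $\hat a_2'<0$, then $\pi_{\vb}(\hatvtheta')\in\OmegaS$ (so the $m$-neuron $\Loss$ is classically differentiable there) and
\[
	\nabla\Loss(\pi_{\vb}(\hatvtheta')) \;=\; \pi_{\vb}\!\left(\nabla\Loss(\hatvtheta')\right).
\]
The first assertion is immediate because $\vw_k^{\top}\vx_i=(b_k/\brmsplus)\hat\vw_1'^{\top}\vx_i\ne 0$ when $b_k>0$ and similarly for $b_k<0$. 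The second follows from \eqref{eq:grad-Loss}, the identity $q_i(\pi_{\vb}(\hatvtheta'))=q_i(\hatvtheta')$ already used in the excerpt, together with $\phi(cz)=c\phi(z)$ and $\phi'(cz)=\phi'(z)$ for $c>0$; tracking the $b_k/\brmsplus$ and $b_k/\brmsminus$ factors coordinate by coordinate gives the displayed equality.

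\textbf{Step 3 (pass to Clarke's subdifferential).} Take any $\vg\in\cpartial\Loss(\hatvtheta)$. By \eqref{eq:def-clarke2} applied with $\Omega=\hat\Omega$, write $\vg=\sum_{\ell}\lambda_\ell\vg^{(\ell)}$ with $\vg^{(\ell)}=\lim_n\nabla\Loss(\hatvtheta^{(\ell,n)})$ for sequences $\hatvtheta^{(\ell,n)}\in\hat\Omega$ converging to $\hatvtheta$. Since $\hat a_1>0$ and $\hat a_2<0$, continuity forces $\hat a_1^{(\ell,n)}>0$ and $\hat a_2^{(\ell,n)}<0$ for all large $n$, so Step~2 applies and
\[
	\pi_{\vb}(\vg^{(\ell)}) \;=\; \lim_n \pi_{\vb}\!\left(\nabla\Loss(\hatvtheta^{(\ell,n)})\right) \;=\; \lim_n \nabla\Loss(\pi_{\vb}(\hatvtheta^{(\ell,n)})).
\]
Since $\pi_{\vb}(\hatvtheta^{(\ell,n)})\to\pi_{\vb}(\hatvtheta)$ and $\Loss$ is classically differentiable at each $\pi_{\vb}(\hatvtheta^{(\ell,n)})$, the final limit lies in $\cpartial\Loss(\pi_{\vb}(\hatvtheta))$ by \eqref{eq:def-clarke}. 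Convexity of $\cpartial\Loss(\pi_{\vb}(\hatvtheta))$ and linearity of $\pi_{\vb}$ then give $\pi_{\vb}(\vg)=\sum_{\ell}\lambda_\ell\pi_{\vb}(\vg^{(\ell)})\in\cpartial\Loss(\pi_{\vb}(\hatvtheta))$.

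The main obstacle is really just Step~2: one has to be careful that when $\hatvtheta'\in\hat\Omega$ lies strictly inside the cone $\{\hat a_1'>0,\hat a_2'<0\}$, the embedded point also avoids the non-smooth locus of the $m$-neuron loss, so that the pointwise gradient identity can be written down. Once that is in place, Steps 1 and 3 are essentially a standard approximation argument exploiting upper-semicontinuity of Clarke's subdifferential and linearity of the embedding.
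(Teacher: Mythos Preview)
Your proof is correct and follows essentially the same route as the paper: establish the pointwise identity $\nabla\Loss(\pi_{\vb}(\hatvtheta'))=\pi_{\vb}(\nabla\Loss(\hatvtheta'))$ at points where all activations are nonzero (using $f_{\pi_{\vb}(\hatvtheta')}=f_{\hatvtheta'}$ and the positive homogeneity of $\phi$), then pass to the Clarke subdifferential by taking limits along sequences in $\hat\Omega$ and convex hulls. Your explicit framing of $\pi_{\vb}$ as a linear isometry and your care about the cone $\{\hat a_1>0,\hat a_2<0\}$ are tidy additions, but the core argument matches the paper's.
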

Below we use $\pi_{\vb}(S)=\{\pi_{\vb}(s): \vs\in S\}$ to denote the embedding of a parameter set.
\begin{proof}
For every $\hatvtheta=(\hat{\vw}_1, \hat{\vw}_2, \hat{a}_1, \hat{a}_2) \in \OmegaS$ (i.e., no activation function has zero input), let $\vtheta=\pi_{\vb}(\hatvtheta)=(\vw_1,\dots,\vw_m, a_1,\dots,a_m)$, and clearly $\vtheta\in \OmegaS$. Then $\cpartial \Loss(\hatvtheta)=\{\nabla \Loss(\hatvtheta)\}$ and $ \cpartial\Loss(\vtheta)=\{\nabla \Loss(\vtheta)\}$ are the usual differentials. In this case, we can apply the chain rule as \[\nabla \Loss(\hatvtheta)=\frac{1}{n}\sum\limits_{i\in [n]}y_i\ell'(y_if_{\hatvtheta}(\vx_i)) \frac{\partial f_{\hatvtheta}(\vx_i)}{\partial \hatvtheta},\]
\[\nabla \Loss(\vtheta)=\frac{1}{n}\sum\limits_{i\in [n]}y_i\ell'(y_if_{\vtheta}(\vx_i)) \frac{\partial f_{\vtheta}(\vx_i)}{\partial \vtheta}.\]
Notice that the embedding preserves the function value,
\begin{align*}
		f_{\vtheta}(\vx_i) =\sum\limits_{j=1}^m a_j\phi(\vw_j^\top \vx_i) 
		&=\sum\limits_{j:b_j>0} \frac{b^2_j}{b^2_{+}}\hata_1\phi(\hatvw_1^\top \vx_i)+\sum\limits_{j:b_j<0} \frac{b^2_j}{b^2_{-}}\hata_2\phi(\hatvw_2^\top \vx_i) \\
		&=\hata_1\phi(\hatvw_1^\top \vx_i)+\hata_2\phi(\hatvw_2^\top \vx_i)=f_{\hatvtheta}(\vx_i);
\end{align*}
and the also preserves the gradient
	\begin{align*}
		\frac{\partial f_{\vtheta}(\vx_i)}{\partial \vw_k} &=a_k{\phi'}(\vw_k^\top \vx_i)\vx_i=\begin{cases}
		\frac{b_k}{b_+}\hata_1 {\phi'}(\hatvw_1^\top \vx_i)\vx_i & \quad \text{if } b_k>0\\
		\frac{b_k}{b_-}\hata_2 {\phi'}(\hatvw_2^\top \vx_i)\vx_i & \quad \text{if } b_k<0
		\end{cases}, \\
		\frac{\partial f_{\vtheta}(\vx_i)}{\partial a_k} &=\phi(\vw_k^\top \vx_i)=
		\begin{cases}
		\frac{b_k}{b_+}\phi(\hatvw_1^\top \vx_i) & \quad \text{if } b_k>0\\
		\frac{b_k}{b_-}\phi(\hatvw_2^\top \vx_i) & \quad \text{if } b_k<0\\
		\end{cases},
	\end{align*}
	so $\frac{\partial f_{\vtheta}(\vx_i)}{\partial \vtheta}=\pi_{\vb}\left(\frac{\partial f_{\hatvtheta}(\vx_i)}{\partial\hatvtheta}\right)$. Then from the chain rule above we can see $\nabla \Loss(\vtheta)=\pi_{\vb}(\nabla \Loss(\hatvtheta))$, and we proved the lemma in this case.
	
	In the general case, by the definition of Clarke's subdifferential, 
	\[\cpartial \Loss(\vtheta):= \conv\left\{ \lim_{n \to \infty} \nabla
	\Loss(\vtheta_n) : \Loss \text{ differentiable at }\vtheta_n, \lim_{n \to
	\infty}\vtheta_n = \vtheta \right\}.\]
	For any $\hatvtheta_n\to\hatvtheta$
	with $\hatvtheta_n\in\OmegaS$,
	$\pi_{\vb}(\hatvtheta_n)\to\pi_{\vb}(\hatvtheta)$, and
	\[
		\lim_{n \to \infty} \nabla \Loss(\pi_{\vb}(\hatvtheta_n))=\lim_{n \to \infty} \pi_{\vb}(\nabla \Loss(\hatvtheta_n))=\pi_{\vb}\left(\lim_{n \to \infty}\nabla\Loss(\hatvtheta_n)\right).
	\] Taking the convex hull, it follows that
	$\pi_{\vb}(\cpartial \Loss(\hatvtheta))\subseteq \cpartial
	\Loss(\pi_{\vb}(\hatvtheta))$, and we finished the proof.
\end{proof}
\begin{proof}[Proof for \Cref{lm:two-neuron-gf-embed-gf}]
	 For notations we write $\hatvtheta(t):=\phitheta(\hatvtheta_0, t)$ and
	 $\vtheta(t) = \pi_{\vb}(\hatvtheta(t))$.  Then $\frac{\dd}{\dd
	 t}\hatvtheta(t)\in -\cpartial\Loss(\hatvtheta(t))$ for a.e.~$t$. At these
	 $t$, $\frac{\dd}{\dd t}\vtheta(t)=\pi_{\vb}(\frac{\dd}{\dd t}\hatvtheta(t))
	 \in \pi_{\vb}(-\cpartial\Loss(\hatvtheta(t)))$. From \Cref{lm:grad_embed}
	 we know $ \pi_{\vb}(\cpartial\Loss(\hatvtheta(t)))\subseteq \cpartial
	 \Loss(\vtheta(t))$. Then $\frac{\dd}{\dd t}\vtheta(t)\in -\cpartial
	 \Loss(\vtheta(t))$ for a.e.~$t$, and therefore $\vtheta(t)$ is indeed a
	 gradient flow trajectory.
\end{proof}
\begin{proof}[Proof for \Cref{lm:two-neuron-embedding}]
	By  \Cref{lm:two-neuron-gf-embed-gf}, $\pi_{\vb}(\phitheta(\hatvtheta_0,
	t))$ is indeed a gradient flow trajectory. Then, as
	$\pi_{\vb}(\phitheta(\hatvtheta_0, 0))=\pi_{\vb}(\hatvtheta_0)$, as well as
	the fact that $\hatvtheta_0$ and $\pi_{\vb}(\hatvtheta_0)$ are non-branching
	starting points, the gradient flow trajectory is unique and therefore
	$\pi_{\vb}(\phitheta(\hatvtheta_0, t))=\phitheta(\pi_{\vb}(\hatvtheta_0),t)$
	for all $t\geq 0$.
\end{proof}

\subsection{A General Theorem for Limiting Trajectory Near Zero}
Before analyzing Phase II, we first give a characterization for gradient flow on Leaky ReLU networks with logistic loss, starting near $r\hatvtheta$, where $\hatvtheta$ is a well-aligned parameter vector defined below. We only assume that the inputs are bounded $\normtwosm{\vx_i} \le 1$ and $\lambda := \max\{\abssm{G(\vw)} : \vw \in \sphS^{d-1}\} > 0$. Theorems in the section will be used again in the non-symmetric case.

\begin{definition}[Well-aligned Parameter Vector] \label{def:well-aligned}
	We say that $\hatvtheta := (\hatvw_1, \dots, \hatvw_m, \hata_1, \dots, \hata_m)$ is a \textit{well-aligned parameter vector} if it satisfies the following for some $1 \le p \le m$:
	\begin{enumerate}
		\item For $1 \le k \le p$, $\frac{\hatvw_k}{\normtwosm{\hatvw_k}}$ attains the maximum value of $\abssm{G(\vw)}$ on $\sphS^{d-1}$, i.e., $\abs{G(\frac{\hatvw_k}{\normtwosm{\hatvw_k}})} = \lambda$;
		\item For $1 \le k \le p$, $\hata_k = \sgn(G(\hatvw_k)) \normtwosm{\hatvw_k}$;
		\item For $1 \le k \le p$, $\dotp{\hatvw_k}{\vx_i} \ne 0$ for all $i \in [n]$;
		\item For $p + 1 \le k \le m$, $\hatvw_k = \vzero$, $\hata_k = 0$.
	\end{enumerate}
\end{definition}
Our analysis for Phase I shows that weight vectors approximately align to either of $\barvmu$ or $-\barvmu$, and both of them are maximizers of $\abssm{G(\vw)}$. Therefore, gradient flow goes near a well-aligned parameter vector (with $p = m$) at the end of Phase I.

The following is the main theorem of this subsection.

\begin{theorem} \label{thm:phase-2-general}
	Let $\hatvtheta$ be a well-aligned parameter vector. Let $\hatgap := \min_{k\in [p], i \in [n]} \frac{\abssm{\dotp{\hatvw_k}{\vx_i}}}{\normMsm{\hatvtheta}} > 0$. Define $T_2(r) := \frac{1}{\lambda} \ln \frac{1}{r}$ and let $t_0$ be the following time constant
	\begin{equation} \label{eq:phase-2-general-def-t0}
		t_0 := \frac{1}{2\lambda} \ln \frac{\lambda \hatgap}{16 m \normMsm{\hatvtheta}^2}.
	\end{equation}
	Then for all $t \in (-\infty, t_0]$, the following is true:
	\begin{enumerate}
		\item $\lim_{r \to 0} \phitheta(r\hatvtheta, T_2(r) + t)$ exists. This limit is independent of the choice of $\phitheta$ when the gradient flow may not be unique.
		
		\item $\lim_{r \to 0} \phitheta(r\hatvtheta, T_2(r) + t)$ lies near $e^{\lambda t}\hatvtheta$:
			\[
				\normM{\lim_{r \to 0} \phitheta\left(r\hatvtheta, T_2(r) + t\right) - e^{\lambda t}\hatvtheta} \le \frac{4m \normMsm{\hatvtheta}^3}{\lambda} e^{3\lambda t}.
			\]
		\item Let $\vtheta_1, \vtheta_2, \dots$ be a series of parameters converging to $\vzero$, $r_1, r_2, \dots$ be a series of positive real numbers converging to $0$. If $\normtwosm{\vtheta_s - r_s \hatvtheta} \le C r_s^{1+\kappa}$ for some $C > 0, \kappa > 0$, then
		\[
			\lim_{s \to \infty} \phitheta\left(\vtheta_s, T_2(r_s) + t\right) = \lim_{r \to 0} \phitheta(r\hatvtheta, T_2(r) + t).
		\]
	\end{enumerate}
\end{theorem}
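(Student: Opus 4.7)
The plan is to exploit the eigenvector structure of $\hatvtheta$ under the linearized flow $\phisg$ and transfer it to the true flow $\phitheta$ via \Cref{lm:phase1-main}. First I would check that the ansatz $\phisg(\hatvtheta, t) = e^{\lambda t}\hatvtheta$ satisfies the linearized ODE $\dot{\vw}_k = a_k \nabla G(\vw_k)$, $\dot{a}_k = G(\vw_k)$: for each $k \le p$, \Cref{lm:G-opt-grad} applied to the extremizer $\hatvw_k/\normtwosm{\hatvw_k}$ of $\abs{G}$ on $\sphS^{d-1}$ gives $\nabla G(\hatvw_k) = s_k\lambda\,\hatvw_k/\normtwosm{\hatvw_k}$ with $s_k := \sgn(G(\hatvw_k))$, and combined with $\hata_k = s_k\normtwosm{\hatvw_k}$ both sides of the ODE reduce to $\lambda(\hatvw_k, \hata_k)$. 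Hence $\phisg(r\hatvtheta, T_2(r)+t) = e^{\lambda t}\hatvtheta$ for every $r>0$ and every $t$.

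Part 2 then follows by invoking \Cref{lm:phase1-main} with $\vtheta_0 = r\hatvtheta$ and $T = T_2(r)+t$: the factor $r^3 e^{3\lambda T}$ collapses to $e^{3\lambda t}$, yielding $\normM{\phitheta(r\hatvtheta, T_2(r)+t) - e^{\lambda t}\hatvtheta} \le \frac{4m\normMsm{\hatvtheta}^3}{\lambda}e^{3\lambda t}$ uniformly in $r$. The lemma's admissibility condition reduces to $e^{\lambda t_0} \le \sqrt{\lambda/(4m)}/\normMsm{\hatvtheta}$, which the choice $e^{\lambda t_0} = \frac{1}{4}\sqrt{\lambda\hatgap/m}/\normMsm{\hatvtheta}$ satisfies since $\hatgap \le 1$.

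For Part 1 I would use the semigroup identity $\phitheta(r\hatvtheta, T_2(r)+t) = \phitheta(\phitheta(r\hatvtheta, T_2(r)), t)$ and prove that the ``anchor'' $\phitheta(r\hatvtheta, T_2(r))$ is Cauchy as $r\to 0$. Given $0 < r_2 < r_1$ small, split $\phitheta(r_2\hatvtheta, \cdot)$ at time $T_2(r_2)-T_2(r_1)$; \Cref{lm:phase1-main} places the state there within $\frac{4m r_1^3\normMsm{\hatvtheta}^3}{\lambda}$ of $r_1\hatvtheta$. Continuing the flow for another $T_2(r_1)$ units and comparing with $\phitheta(r_1\hatvtheta, T_2(r_1))$ via a Gr\"onwall-type continuous-dependence estimate --- where the Hessian of $\tildeLoss$ is of order $\lambda$ along a trajectory of norm $O(r_1 e^{\lambda \tau}\normMsm{\hatvtheta})$ --- produces at worst an $e^{\lambda T_2(r_1)} = 1/r_1$ amplification, so the Cauchy bound is $O(r_1^2)\to 0$. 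Independence from the choice of $\phitheta$ follows once I verify that every trajectory stays in the smooth regime of $\Loss$ on $[0, T_2(r)+t_0]$: the Part 2 bound together with the definition of $t_0$ via $\hatgap$ keeps each $\dotpsm{\vw_k(\tau)}{\vx_i}$ of the same sign as $\dotpsm{\hatvw_k}{\vx_i}$, so the flow is classical and uniquely determined. Part 3 uses the same continuous-dependence argument: the $O(r_s^{1+\kappa})$ perturbation absorbs the $1/r_s$ Gr\"onwall amplification and gives the same limit.

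The main obstacle is certifying smoothness of $\Loss$ along the full trajectory, since both the uniqueness of $\phitheta$ and the Gr\"onwall comparison depend on it. The $t_0$ in \eqref{eq:phase-2-general-def-t0} is calibrated precisely so that the Part 2 deviation $\frac{4m\normMsm{\hatvtheta}^3}{\lambda}e^{3\lambda t_0}$ --- which upper-bounds the change in each $\dotpsm{\vw_k(\tau)}{\vx_i}$ via $\normtwosm{\vx_i}\le 1$ --- stays strictly below $e^{\lambda t_0}\hatgap\normMsm{\hatvtheta}$, the minimum of $\abs{\dotpsm{e^{\lambda\tau}\hatvw_k}{\vx_i}}$ at the worst time $\tau = T_2(r)+t_0$. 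Substituting the explicit $e^{\lambda t_0}$ reduces this to an algebraic inequality that holds by construction, preserving the sign pattern of the activations on the whole interval and legitimizing every step above.
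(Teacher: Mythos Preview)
Your overall architecture matches the paper's proof: identify $e^{\lambda t}\hatvtheta$ as the reference trajectory via the eigenvector property of well-aligned vectors (your use of \Cref{lm:G-opt-grad} is exactly right), bound the deviation of $\phitheta$ from the reference by a Gr\"onwall comparison, and establish the limit by a Cauchy argument that routes the $r_2$-trajectory through a waypoint near $r_1\hatvtheta$.

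The genuine gap is the direct invocation of \Cref{lm:phase1-main}. That lemma is proved only in the symmetric setting, where $G(\vw)=\dotpsm{\vw}{\scaledvmu}$ is globally linear and hence $\nabla\tildeLoss$ is globally $\lambda$-Lipschitz in the M-norm; this is precisely what fixes the Gr\"onwall amplification rate at $\lambda$. In the general setting of \Cref{thm:phase-2-general}, $G$ is only piecewise linear: $\nabla G$ equals $\lambda\,\hatvw_k/\normtwosm{\hatvw_k}$ on the activation-pattern region $\hatsetw_k$ containing $\hatvw_k$, but takes other values elsewhere. Outside $\hatsetw_k$ you have no reason for the rate to be $\lambda$, so the amplification over $[0,T_2(r)+t]$ need not be $e^{\lambda(T_2(r)+t)}=r^{-1}e^{\lambda t}$, and both the Part~2 bound and the Cauchy estimate collapse. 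Your last paragraph recognizes the issue but resolves it circularly: you invoke the Part~2 deviation bound to verify sign preservation, yet deriving that bound already presupposes sign preservation. What is actually needed---and what the paper does in \Cref{lm:phase-2-general-diff-to-hatvtheta}---is a bootstrap: set $t_1:=\inf\{t:\exists k,\ \vw_k(t)\notin\hatsetw_k\}$ and $t_2:=\inf\{t:\normMsm{\vtheta(t)}\ge 2re^{\lambda t}\normMsm{\hatvtheta}\}$, prove the deviation estimate on $[0,T_2(r)+\min\{t_0,t_1,t_2\}]$ where the rate-$\lambda$ structure is guaranteed, and only then conclude $t_1,t_2>t_0$ from that estimate.

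A second point your proposal does not cover: for the dormant neurons $k>p$ one has $\hatvw_k=\vzero$, so \Cref{lm:G-opt-grad} says nothing about them, and in Part~3 they start at nonzero values of order $r_s^{1+\kappa}$. They cannot be kept in any fixed activation region. The paper treats them by a separate norm-growth argument (the ``R-norm'' half of \Cref{lm:phase-2-general-diff-to-hatvtheta}), using $|G(\vw)|\le\lambda\normtwosm{\vw}$ and \Cref{lm:weight-balance} to get growth rate $\lambda+m\normMsm{\vtheta}^2$; this again yields amplification $\sim r_s^{-1}$ but through a different mechanism than the $k\le p$ case.
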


Now we prove \Cref{thm:phase-2-general}. Throughout this subsection, we fix a well-aligned parameter vector $\hatvtheta  := (\hatvw_1, \dots, \hatvw_m, \hata_1, \dots, \hata_m)$ with constant $p \in [m]$. We also use $t_0$ and $T_2(r)$ to denote the same constant $t_0$ defined by \eqref{eq:phase-2-general-def-t0} and the same function $T_2(r) := \frac{1}{\lambda} \ln \frac{1}{r}$ as in \Cref{thm:phase-2-general}.

For any parameter $\vtheta = (\vw_1, \dots, \vw_m, a_1, \dots, a_m)$, we use $\normPsm{\vtheta}$, $\normRsm{\vtheta}$ to denote the following semi-norms respectively,
\[
	\normPsm{\vtheta} := \max_{k \in [p]} \left\{\max\{\normtwosm{\vw_k}, \abssm{a_k} \} \right\}, \qquad \normRsm{\vtheta} := \max_{p < k \le m} \left\{\max\{\normtwosm{\vw_k}, \abssm{a_k} \} \right\}.
\]
The M-norm can be expressed in terms of P-norm and R-norm: $\normMsm{\vtheta} = \max\{\normPsm{\vtheta}, \normRsm{\vtheta}\}$. Also note that Condition 4 in \Cref{def:well-aligned} is now equivalent to $\normRsm{\hatvtheta} = 0$.

For $k \in [p]$, define $\hatsetw_k := \{ \vw \in \R^d : \dotp{\hatvw_k}{\vx_i}
\cdot \dotp{\vw}{\vx_i} > 0, \forall i \in [n] \}$ to be the set of weights that
share the same activation pattern as $\hatvw_k$.
\begin{lemma} \label{lm:phase-2-general-diff-to-hatvtheta}
	If $r > 0$ is small enough and the initial point $\vtheta_0$ of gradient flow satisfies  $\normMsm{\vtheta_0 - r\hatvtheta} \le Cr^{1 + \kappa}$ for some $C > 0, \kappa > 0$, then for any $-T_2(r) \le t \le t_0$, the following four properties hold:
	\begin{enumerate}
		\item For all $k \in [p]$, $\vw_k(T_2(r) + t) \in \hatsetw_k$;
		\item $\normMsm{\phitheta(\vtheta_0,T_2(r) + t)} \le 2 e^{\lambda t}\normMsm{\hatvtheta}$;
		\item $\normPsm{\phitheta\left(\vtheta_0, T_2(r) + t\right) - e^{\lambda t}\hatvtheta} \le Cr^{\kappa} e^{\lambda t} + \frac{4m\normMsm{\hatvtheta}^3}{\lambda}  e^{3\lambda t}$;
		\item $\normRsm{\phitheta\left(\vtheta_0, T_2(r) + t\right) - e^{\lambda t}\hatvtheta} \le 2Cr^{\kappa} e^{\lambda t} $.
	\end{enumerate}
\end{lemma}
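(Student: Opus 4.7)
The plan is to prove all four properties simultaneously by a bootstrap on $t$, combined with a comparison between the actual flow $\phitheta(\vtheta_0,\cdot)$ and the linearized flow $\tildevtheta(\cdot):=\phisg(\vtheta_0,\cdot)$ driven by $\tildeLoss$. I would define $t^\star$ to be the supremum of times in $[-T_2(r),t_0]$ at which all four properties hold on $[-T_2(r),t^\star]$, and the goal becomes showing $t^\star=t_0$ for small enough $r$. At $t=-T_2(r)$, where $e^{\lambda t}=r$, the four properties follow immediately from $\normM{\vtheta_0-r\hatvtheta}\le Cr^{1+\kappa}$ together with the fact that $r\hatvw_k$ lies in the open cone $\hatsetw_k$, so the nontrivial work is the time-open extension at $t^\star$.

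Next I would analyze the linearized flow essentially explicitly, exploiting the special structure of $\hatvtheta$. For each P-neuron ($k\in[p]$), once Property 1 holds the activation pattern of $\vw_k$ matches that of $\hatvw_k$, making $G$ linear on the trajectory with $\nabla G(\vw)\equiv \nabla G(\hatvw_k)=s_k\lambda\hatvw_k/\normtwo{\hatvw_k}$ by \Cref{lm:G-opt-grad}. The linear ODE then decouples: the component of $\vw_k$ perpendicular to $\hatvw_k$ is conserved, while the $2\times 2$ system on $(\dotp{\vw_k}{\hatvw_k}/\normtwo{\hatvw_k},\,s_ka_k)$ has off-diagonal entries $\lambda$, with eigenvalues $\pm\lambda$ and eigenvectors $(1,\pm 1)$; in particular $(\hatvw_k,\hata_k)$ lies exactly in the $+\lambda$-eigenspace. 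Starting from $r\hatvtheta+O(r^{1+\kappa})$ and running for time $T_2(r)+t$, the $+\lambda$ contribution yields $e^{\lambda t}\hatvtheta_P+O(r^\kappa e^{\lambda t})$ after absorbing $e^{\lambda T_2(r)}=1/r$, while the $-\lambda$ and conserved pieces contribute only $O(r^{2+\kappa}e^{-\lambda t})$ and $O(r^{1+\kappa})$. For the R-neurons I would apply Euler's homogeneous function theorem to $\frac{d}{d\tau}(\normtwo{\vw_k}^2+a_k^2)=4a_kG(\vw_k)$ and use $\abs{G(\vw)}\le\lambda\normtwo{\vw}$ to bound the growth rate by $\lambda$, yielding $\normR{\tildevtheta(T_2(r)+t)}\le\sqrt{2}Cr^\kappa e^{\lambda t}$.

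The gap $\vtheta-\tildevtheta$ will be controlled via \Cref{cor:diff-Loss-tildeLoss-smooth} and \Cref{lm:diff-Loss-tildeLoss-partial}. The M-norm discrepancy $\normM{\cpartial\Loss(\vtheta)-\nabla\tildeLoss(\vtheta)}\le m\normM{\vtheta}^3$, combined with the $\lambda$-Lipschitzness of $\nabla\tildeLoss$ on each activation region and the Property-2 bound $\normM{\vtheta(s)}\le 2\normM{\hatvtheta}e^{\lambda(s-T_2(r))}$, produces via \Gronwall (mirroring the proof of \Cref{lm:phase1-main}) the estimate $\normM{\vtheta(T_2(r)+t)-\tildevtheta(T_2(r)+t)}\le\tfrac{4m\normM{\hatvtheta}^3}{\lambda}e^{3\lambda t}$, which supplies the second term in Property 3. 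For Property 4 the M-norm bound is too loose, so I would invoke the per-component bound of \Cref{lm:diff-Loss-tildeLoss-partial}, which replaces $m\normM{\vtheta}^3$ by $m\normM{\vtheta}^2\normR{\vtheta}$ in the R-coordinates; bootstrapping against Property 4 then produces an R-norm gap of order $r^\kappa\normM{\hatvtheta}^2 e^{3\lambda t}/\lambda$, strictly smaller than $(2-\sqrt{2})Cr^\kappa e^{\lambda t}$ on $[-T_2(r),t_0]$ by the choice $e^{2\lambda t_0}=\lambda\hatgap/(16m\normM{\hatvtheta}^2)$. Property 1 is then recovered from Property 3: the total P-error at $t\le t_0$ is at most $\tfrac14\normM{\hatvtheta}\hatgap\,e^{\lambda t}+O(r^\kappa e^{\lambda t})$, strictly less than $\abs{\dotp{e^{\lambda t}\hatvw_k}{\vx_i}}\ge\hatgap\normM{\hatvtheta}e^{\lambda t}$, so the sign of every $\dotp{\vw_k}{\vx_i}$ is preserved.

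The main obstacle will be the circular dependency among the four properties: the linearized analysis giving Property 3 needs Property 1 to keep $\nabla G$ constant along the P-neuron trajectories, while Property 1 is itself recovered from Property 3. The bootstrap resolves this by defining $t^\star$ as the largest time at which all four bounds hold simultaneously and then using the strict inequalities forced by the choice of $t_0$, together with continuity of $\vtheta(\cdot)$, to rule out $t^\star<t_0$. A secondary subtlety is that R-neurons have no a priori fixed activation pattern, so $\nabla G$ may jump along their trajectories and the aggregate bound $m\normM{\vtheta}^3$ from \Cref{lm:diff-Loss-tildeLoss} is insufficient; the remedy is to invoke the per-component bound of \Cref{lm:diff-Loss-tildeLoss-partial}, which is sharp enough because the R-coordinates remain $O(r^\kappa e^{\lambda t})$ throughout the interval by the bootstrap.
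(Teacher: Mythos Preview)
Your bootstrap strategy and the P-neuron analysis are essentially correct and close to the paper's approach. The paper simplifies matters by comparing $\vtheta(t)$ directly to the explicit curve $re^{\lambda t}\hatvtheta$ rather than to the linearized flow $\phisg(\vtheta_0,\cdot)$; this avoids having to verify separately that the linearized trajectory itself stays in $\hatsetw_k$, and reduces the P-bound to a single \Gronwall step instead of a two-stage decomposition (explicit linear ODE solution plus gap).

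The real issue is your R-neuron argument. You bound $\normR{\tildevtheta}$ via the energy identity $\tfrac{d}{d\tau}(\normtwo{\tildevw_k}^2+\tildea_k^2)=4\tildea_kG(\tildevw_k)\le 2\lambda(\normtwo{\tildevw_k}^2+\tildea_k^2)$, which is fine for the linearized flow. But to reach Property~4 you then try to control the \emph{gap} $\normR{\vtheta-\tildevtheta}$ via \Cref{lm:diff-Loss-tildeLoss-partial}. That lemma bounds only the R-components of $\cpartial\Loss(\vtheta)-\cpartial\tildeLoss(\vtheta)$; you still need the R-components of $\cpartial\tildeLoss(\vtheta)-\cpartial\tildeLoss(\tildevtheta)$, and for R-neurons this term is \emph{not} $\lambda$-Lipschitz: those neurons have no fixed activation pattern, $\cpartial G$ can jump, and even within a fixed region $\normtwo{\nabla G}$ is generically $\ne\lambda$. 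With only the crude Lipschitz rate $1/2\ge\lambda$, the \Gronwall factor becomes $e^{(T_2(r)+t)/2}=r^{-1/(2\lambda)}e^{t/2}$, which for $\lambda<1/2$ grows faster than the required $r^{-1}e^{\lambda t}$, so your claimed $O(r^\kappa e^{3\lambda t})$ gap estimate is not justified.

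The fix is exactly what the paper does: skip $\tildevtheta$ for the R-block and apply your Euler argument \emph{directly to $\vtheta$}. Combining \Cref{thm:homo-euler} with \Cref{lm:diff-Loss-tildeLoss-partial} gives
\[
\tfrac{1}{2}\tfrac{d}{d\tau}\normtwo{\vw_k}^2\le a_kG(\vw_k)+m\normM{\vtheta}^2\abs{a_k}\,\normtwo{\vw_k}\le(\lambda+m\normM{\vtheta}^2)\normR{\vtheta}^2,
\]
and a single \Gronwall step yields $\normR{\vtheta(T_2(r)+t)}\le Cr^\kappa e^{\lambda t}\exp\bigl(\tfrac{2m\normM{\hatvtheta}^2}{\lambda}e^{2\lambda t}\bigr)<2Cr^\kappa e^{\lambda t}$ by the definition of $t_0$. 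This needs only the scalar inequality $\abs{G(\vw)}\le\lambda\normtwo{\vw}$ and bypasses the Lipschitz obstruction entirely.
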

\begin{proof}
	Let $\vtheta(t) := \phitheta(\vtheta_0, t)$ be gradient flow on $\Loss$ starting from $\vtheta_0$, and $\tildevtheta(t) := re^{\lambda t}\hatvtheta$.
	Let $t_1, t_2$ be the following time constants and define $t_{\max} := \min\{t_0, t_1, t_2\}$:
	\begin{align*}
		t_1 &:= \inf\{ t \ge 0 : \exists k \in [p], \vw_k(t) \notin \hatsetw_k \} - T_2(r), \\
		t_2 &:= \inf\{t \ge 0: \normMsm{\vtheta(t)} \ge 2 r e^{\lambda t} \normMsm{\hatvtheta} \} - T_2(r).
	\end{align*}
	We also define $r_{\max} := \left(\frac{\normMsm{\hatvtheta} \hatgap}{8C}\right)^{1/\kappa}$. We only consider the dynamics for $r \le r_{\max}$, $t < T_2(r) + t_{\max}$. Our goal is to show that 
	\[
		\normPsm{\vtheta(t) - \tildevtheta(t)} \le Cr^{1+\kappa} e^{\lambda t} + \frac{4m\normMsm{\hatvtheta}^3}{\lambda}  r^3e^{3\lambda t}, \qquad \normRsm{\vtheta(t) - \tildevtheta(t)} \le 2Cr^{1+\kappa} e^{\lambda t}
	\]
	within the time interval $[0, T_2(r) + t_{\max})$ (and thus it also holds for $[0, T_2(r) + t_{\max}]$ by continuity), and to show that $t_0$ is actually equal to $t_{\max}$, i.e., $t_0$ is the minimum among $t_0, t_1, t_2$. It is easy to see that proving these suffice to deduce the original lemma statement, given the translation of time $e^{\lambda T_2(r)}=\frac{1}{r}$.
	
	For $k \in [m]$, by \Cref{lm:diff-Loss-tildeLoss-partial} we have
	\begin{equation} \label{eq:phase-2-general-diff-to-hatvtheta-tilde}
		\normtwo{\frac{\dd \vw_k}{\dd t} - a_k \cpartial G(\vw_k)} \subseteq (-\infty, m \normMsm{\vtheta}^2 \abssm{a_k} ], \qquad \abs{\frac{\dd a_k}{\dd t} - G(\vw_k)} \le m \normMsm{\vtheta}^2 \normtwosm{\vw_k}.
	\end{equation}
	For $\tildevtheta(t)$, a simple calculus shows that for all $t \ge 0$,
	\begin{align}
		\forall k \in [p]:& \qquad \frac{\dd \tildevw_k}{\dd t} = \lambda \tildea_k \frac{\hatvw_k}{\normtwosm{\hatvw_k}}, \qquad \frac{\dd \tildea_k}{\dd t} = \lambda \dotp{\frac{\hatvw_k}{\normtwosm{\hatvw_k}}}{\tildevw_k}. \label{eq:phase-2-general-diff-to-hatvtheta-a} \\
		\forall p < k \le m:& \qquad \abssm{\tildea_k} = \normtwosm{\tildevw_k} = 0. \label{eq:phase-2-general-diff-to-hatvtheta-b}
	\end{align}
	\myparagraph{Bounding $\normPsm{\vtheta(t) - \tildevtheta(t)}$.} For $k \in [p]$, $\cpartial G(\vw_k) = \{\nabla G(\vw_k)\} = \{\nabla G(\hatvw_k)\}$. Also note that $\nabla G(\hatvw_k) = \nabla G(\frac{\hatvw_k}{\normtwosm{\hatvw_k}}) = \lambda \frac{\hatvw_k}{\normtwosm{\hatvw_k}}$ by \Cref{lm:G-opt-grad}. Then $a_k \cpartial G(\vw_k) = \{\lambda a_k \frac{\hatvw_k}{\normtwosm{\hatvw_k}} \}$ and $G(\vw_k) = \lambda \dotp{\frac{\hatvw_k}{\normtwosm{\hatvw_k}}}{\vw_k}$. Combining these with \eqref{eq:phase-2-general-diff-to-hatvtheta-tilde} gives
	\begin{equation}
	\max\left\{\normtwo{\frac{\dd \vw_k}{\dd t} - \lambda a_k \frac{\hatvw_k}{\normtwosm{\hatvw_k}}}, \abs{\frac{\dd a_k}{\dd t} - \lambda \dotp{\frac{\hatvw_k}{\normtwosm{\hatvw_k}}}{\vw_k}} \right\} \le m \normMsm{\vtheta}^3.
	\end{equation}
	Then by \eqref{eq:phase-2-general-diff-to-hatvtheta-a} we have
	\begin{align*}
		\normP{\frac{\dd \vtheta}{\dd t} - \frac{\dd \tildevtheta}{\dd t}} &\le m\normMsm{\vtheta}^3 + 
		\max_{k \in [p]}\left\{
			\normtwo{\lambda (a_k - \tildea_k) \frac{\hatvw_k}{\normtwosm{\hatvw_k}}},
			\abs{\lambda \dotp{\frac{\hatvw_k}{\normtwosm{\hatvw_k}}}{\vw_k - \tildevw_k}}
		\right\} \\
		&\le m\normMsm{\vtheta}^3 + \lambda \normPsm{\vtheta - \tildevtheta}.
	\end{align*}
	Taking the integral gives $\normPsm{\vtheta(t) - \tildevtheta(t)} \le \normPsm{\vtheta(0) - \tildevtheta(0)} + \int_{0}^{t} (m\normMsm{\vtheta(\tau)}^3 + \lambda \normPsm{\vtheta(\tau) - \tildevtheta(\tau)}) \dd \tau$. Note that $t_{\max} \le t_2$. Then
	\begin{align*}
		\normPsm{\vtheta(t) - \tildevtheta(t)} &\le \normPsm{\vtheta(0) - \tildevtheta(0)} + \int_{0}^{t} \left(8m r^3 e^{3\lambda t}\normMsm{\hatvtheta}^3 + \lambda \normPsm{\vtheta(\tau) - \tildevtheta(\tau)}\right) \dd \tau \\
		&\le Cr^{1+\kappa} + \frac{8}{3 \lambda}m r^3e^{3\lambda t} \normMsm{\hatvtheta}^3 + \lambda \int_{0}^{t} \normPsm{\vtheta(\tau) - \tildevtheta(\tau)} \dd \tau.
	\end{align*}
	By \Gronwall's inequality \eqref{eq:gron-1}, we have
	\begin{align*}
		\normPsm{\vtheta(t) - \tildevtheta(t)} &\le Cr^{1+\kappa} + \frac{8}{3 \lambda}m r^3e^{3\lambda t} \normMsm{\hatvtheta}^3 + \int_{0}^{t} \left(Cr^{1+\kappa} + \frac{8}{3 \lambda}m r^3e^{3\lambda \tau} \normMsm{\hatvtheta}^3 \right) \lambda e^{\lambda(t - \tau)}  \dd \tau \\
		&\le Cr^{1+\kappa} + Cr^{1+\kappa} (e^{\lambda t} - 1) + \frac{8}{3 \lambda}m r^3e^{3\lambda t} \normMsm{\hatvtheta}^3 + \frac{8}{3\lambda}m r^3  \cdot \frac{e^{\lambda t}}{2}(e^{2 \lambda t} - 1) \normMsm{\hatvtheta}^3 \\
		&\le Cr^{1+\kappa} e^{\lambda t} + \frac{8}{3 \lambda}m r^3e^{3\lambda t}  (1 + 1/2) \normMsm{\hatvtheta}^3.
	\end{align*}
	Therefore we can conclude that
	\begin{equation} \label{eq:phase-2-general-diff-to-hatvtheta-aa}
		\normPsm{\vtheta(t) - \tildevtheta(t)} \le Cr^{1+\kappa} e^{\lambda t} + \frac{4m\normMsm{\hatvtheta}^3}{\lambda} r^3  e^{3\lambda t}.
	\end{equation}
	\myparagraph{Bounding $\normRsm{\vtheta(t) - \tildevtheta(t)}$.} For $p < k \le m$, we can combine \Cref{thm:homo-euler} and \eqref{eq:phase-2-general-diff-to-hatvtheta-tilde} to give the following bound for the norm growth:
	\begin{align*}
	\frac{1}{2}\frac{\dd \normtwosm{\vw_k}^2}{\dd t} = \frac{1}{2}\frac{\dd \abssm{a_k}^2}{\dd t} &\le a_k G(\vw_k) + \abssm{a_k} \cdot m \normMsm{\vtheta}^2 \normtwosm{\vw_k}.
	\end{align*}
	This implies
	\begin{equation}
	\frac{1}{2}\frac{\dd \abssm{a_k}^2}{\dd t} = \frac{1}{2}\frac{\dd \normtwosm{\vw_k}^2}{\dd t} \le \normRsm{\vtheta}^2 (\lambda + m \normMsm{\vtheta}^2).
	\end{equation}
	Taking the integral gives $\normRsm{\vtheta(t)}^2 \le \normRsm{\vtheta(0)}^2 + \int_{0}^{t} 2\normRsm{\vtheta(\tau)}^2 (\lambda + m \normMsm{\vtheta(\tau)}^2) \dd \tau$.
	Note that $t_{\max} \le t_2$ and $\normRsm{\vtheta(0)} \le Cr^{1+\kappa}$. Then
	\[
		\normRsm{\vtheta(t)}^2 \le C^2 r^{2(1+\kappa)} + \int_{0}^{t} 2\normRsm{\vtheta(\tau)}^2 (\lambda + 4mr^2e^{2\lambda\tau} \normMsm{\hatvtheta}^2) \dd \tau
	\]
	By \Gronwall's inequality \eqref{eq:gron-2}, we have
	\begin{align*}
		\normRsm{\vtheta(t)}^2 &\le C^2 r^{2(1+\kappa)} \exp\left(\int_{0}^{t} 2(\lambda + 4mr^2e^{2\lambda\tau}\normMsm{\hatvtheta}^2) \dd \tau\right) \\
		&\le C^2 r^{2(1+\kappa)} \exp\left(2\lambda t + \frac{4m\normMsm{\hatvtheta}^2}{\lambda} r^2e^{2\lambda t}\right). 
	\end{align*}
	Taking the square root gives
	\[
		\normRsm{\vtheta(t)} \le C r^{1+\kappa} \exp\left(\lambda t + \frac{2m\normMsm{\hatvtheta}^2}{\lambda} r^2e^{2\lambda t} \right).
	\]
	For $t \le T(r) + t_{\max} \le T(r) + t_0$, we can use the the definition \eqref{eq:phase-2-general-def-t0} of $t_0$ to deduce that $\frac{2m\normMsm{\hatvtheta}^2}{\lambda} r^2e^{2\lambda t}  \le \frac{2m\normMsm{\hatvtheta}^2}{\lambda} e^{2\lambda t_0} = \hatgap/8 \le 1/8$. Therefore, we have
	\begin{equation} \label{eq:phase-2-general-diff-to-hatvtheta-bb}
		\normRsm{\vtheta(t) - \tildevtheta(t)} = \normRsm{\vtheta(t)} \le Cr^{1+\kappa} e^{\lambda t + 1/8} < Cr^{1+\kappa} e^{\lambda t + \ln 2} = 2Cr^{1+\kappa} e^{\lambda t}.
	\end{equation}
	
	\myparagraph{Bounding $t_{\max}$.} To prove the lemma, now we only need to show that $t_{\max} = t_0$. Combining \eqref{eq:phase-2-general-diff-to-hatvtheta-aa} and \eqref{eq:phase-2-general-diff-to-hatvtheta-bb}, we have
	for $t\leq T_2(r)+t_{\max}$,\[
		\normMsm{\vtheta(t) - \tildevtheta(t)} \le 2Cr^{1+\kappa} e^{\lambda t} + \frac{4m\normMsm{\hatvtheta}^3}{\lambda} r^3  e^{3\lambda t}.
	\]
	Since $r \le r_{\max}$, $2Cr^{\kappa} \le \frac{1}{4} \normMsm{\hatvtheta} \hatgap$. By definition \eqref{eq:phase-2-general-def-t0} of $t_0$, $\frac{4m\normMsm{\hatvtheta}^3}{\lambda} r^2 e^{2\lambda t} \le \frac{4m\normMsm{\hatvtheta}^3}{\lambda} e^{2\lambda t_0} \le \frac{1}{4} \normMsm{\hatvtheta}\hatgap$. Then we have $2Cr^{\kappa} + \frac{4m\normMsm{\hatvtheta}^3}{\lambda} r^2 e^{2\lambda t} \le \frac{1}{2}\normMsm{\hatvtheta}\hatgap$ and thus
	\begin{equation} \label{eq:phase-2-general-diff-to-hatvtheta-qqq}
		\normMsm{\vtheta(t) - \tildevtheta(t)} \le re^{\lambda t} \left(2Cr^{\kappa} + \frac{4m\normMsm{\hatvtheta}^3}{\lambda} r^2  e^{2\lambda t}\right) \le \frac{1}{2}re^{\lambda t} \normMsm{\hatvtheta}\hatgap.
	\end{equation}
	For all time $0 \le t < T_2(r) + t_{\max}$, we can use \eqref{eq:phase-2-general-diff-to-hatvtheta-qqq} to deduce
	\begin{align*}
		\sgn(\dotp{\hatvw_k}{\vw_i}) \dotp{\vw(t)}{\vx_i} &\ge \sgn(\dotp{\hatvw_k}{\vx_i}) \dotp{r e^{\lambda t} \hatvw_k}{\vx_i} - \frac{1}{2}re^{\lambda t} \normMsm{\hatvtheta}\hatgap \\
		&= r e^{\lambda t} \left(\abssm{\dotp{\hatvw_k}{\vx_i}} - \frac{1}{2}\normMsm{\hatvtheta}\hatgap\right) \\
		&\ge re^{\lambda t} \normMsm{\hatvtheta}\hatgap / 2  > 0,
	\end{align*}
	which implies $t_1 > t_{\max}$.
	
	For norm growth, we can again use \eqref{eq:phase-2-general-diff-to-hatvtheta-qqq} to deduce
	\begin{align*}
		\normMsm{\vtheta(t)} \le \normMsm{\tildevtheta(t)} + \frac{1}{2}re^{\lambda t} \normMsm{\hatvtheta}\hatgap &= r e^{\lambda t}\left(\normMsm{\hatvtheta} + \frac{1}{2} \normMsm{\hatvtheta} \hatgap \right) \\
		&\le \frac{3}{2}r e^{\lambda t}\normMsm{\hatvtheta} < 2re^{\lambda t} \normMsm{\hatvtheta},
	\end{align*}
	which implies $t_2 > t_{\max}$.
	
	Now we have $t_1 > t_{\max}, t_2 > t_{\max}$. Recall that $t_{\max} := \min\{t_0, t_1, t_2\}$ by definition. Then $t_{\max} = t_0$ must hold, which completes the proof.
\end{proof}

\begin{lemma} \label{lm:phase-2-general-diff}
	If $r > 0$ is small enough and the initial point $\vtheta_0$ of gradient flow satisfies  $\normMsm{\vtheta_0 - r\hatvtheta} \le Cr^{1 + \kappa}$ for some $C > 0, \kappa > 0$, then for all $t \in [-T_2(r), t_0]$,
	\[
		\normMsm{\phitheta\left(\vtheta_0, T_2(r) + t\right) - \phitheta(r\hatvtheta, T_2(r) + t)} \le 4C r^{\kappa} e^{\lambda t}.
	\]
\end{lemma}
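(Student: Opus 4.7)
The plan is to bound $\Delta(\tau) := \vtheta(\tau) - \vtheta'(\tau)$ between the two trajectories $\vtheta(\tau) := \phitheta(\vtheta_0, \tau)$ and $\vtheta'(\tau) := \phitheta(r\hatvtheta, \tau)$ \emph{directly}, not by triangle inequality through $e^{\lambda t}\hatvtheta$: the latter route leaves a common drift term $\tfrac{4m\normMsm{\hatvtheta}^3}{\lambda}e^{3\lambda t}$ from \Cref{lm:phase-2-general-diff-to-hatvtheta} that does not vanish with $r$, whereas in the true difference this drift cancels. First I will apply \Cref{lm:phase-2-general-diff-to-hatvtheta} to both trajectories -- to $\vtheta$ with constant $C$, and to $\vtheta'$ with $C'=0$ (since $\normMsm{r\hatvtheta - r\hatvtheta}=0$) -- giving, for all $\tau \in [0, T_2(r)+t_0]$, $\vw_k(\tau), \vw'_k(\tau)\in \hatsetw_k$ for $k \in [p]$ and $\normMsm{\vtheta(\tau)}, \normMsm{\vtheta'(\tau)} \le 2r e^{\lambda \tau}\normMsm{\hatvtheta}$.

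The R-part is immediate. Since $r\hatvtheta$ has zero R-components, \Cref{lm:weight-zero} forces $\vw'_k(\tau) = \vzero, a'_k(\tau) = 0$ for all $\tau \ge 0$ and $k > p$; hence $\normRsm{\Delta(T_2(r)+t)} = \normRsm{\vtheta(T_2(r)+t)} \le 2Cr^{\kappa}e^{\lambda t}$ by Property 4 of \Cref{lm:phase-2-general-diff-to-hatvtheta}.

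For the P-part I will exploit that $G$ is linear on $\hatsetw_k$. Using $\phi'(\vw_k^\top \vx_i) = \phi'(\hatvw_k^\top \vx_i)$ in that region together with \Cref{lm:G-opt-grad}, I decompose the velocity as $-\tfrac{\partial \Loss}{\partial \vw_k}(\vtheta) = \lambda a_k \tfrac{\hatvw_k}{\normtwosm{\hatvw_k}} + \vb_k(\vtheta)$, where the nonlinear correction $\vb_k(\vtheta) = -\tfrac{a_k}{n}\sum_i \epsilon_i(\vtheta) y_i \phi'(\hatvw_k^\top \vx_i)\vx_i$ is driven by $\epsilon_i(\vtheta) := \ell'(y_i f_{\vtheta}(\vx_i)) - \ell'(0)$, and analogously for the $a_k$-coordinate. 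The split $a_k\epsilon_i(\vtheta) - a'_k\epsilon_i(\vtheta') = (a_k-a'_k)\epsilon_i(\vtheta) + a'_k(\epsilon_i(\vtheta)-\epsilon_i(\vtheta'))$, together with $\abssm{\epsilon_i(\vtheta)} \le m\normMsm{\vtheta}^2$ (from $\abssm{\ell''}\le 1$ and \Cref{lm:f-output-ub}) and $\abssm{\epsilon_i(\vtheta) - \epsilon_i(\vtheta')} \le m\normMsm{\Delta}(\normMsm{\vtheta}+\normMsm{\vtheta'})$, will yield $\normMsm{\vb_k(\vtheta) - \vb_k(\vtheta')} \le 12 m r^2 e^{2\lambda\tau}\normMsm{\hatvtheta}^2 \normMsm{\Delta(\tau)}$. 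This gives the differential inequality
\begin{equation*}
	\frac{\dd}{\dd \tau}\normPsm{\Delta(\tau)} \le \bigl(\lambda + 12 m r^2 e^{2\lambda \tau}\normMsm{\hatvtheta}^2\bigr)\normPsm{\Delta(\tau)} + 24 m C r^{3+\kappa}\normMsm{\hatvtheta}^2 e^{3\lambda \tau},
\end{equation*}
where the forcing term absorbs the R-part bound $\normRsm{\Delta(\tau)}\le 2Cr^{1+\kappa}e^{\lambda\tau}$ that is consumed through the coupling via $\normMsm{\Delta}$.

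Applying \Gronwall's inequality \eqref{eq:gron-1} and substituting $s = T_2(r)+t$ (so that $r^{1+\kappa}e^{\lambda s} = r^\kappa e^{\lambda t}$ and $r^3 e^{3\lambda s} = e^{3\lambda t}$), the initial-error term produces $Cr^\kappa e^{\lambda t}$ after exponential growth and the forcing produces $O(r^\kappa e^{3\lambda t}/\lambda)$. The constant-matching step, which I expect to be the main obstacle, relies on the specific choice $t_0 = \tfrac{1}{2\lambda}\ln\tfrac{\lambda\hatgap}{16m\normMsm{\hatvtheta}^2}$ from \eqref{eq:phase-2-general-def-t0}: it ensures $e^{2\lambda t} \le \tfrac{\lambda\hatgap}{16m\normMsm{\hatvtheta}^2}$ for all $t \le t_0$, simultaneously keeping $\int_0^s 12 m r^2 e^{2\lambda\tau}\normMsm{\hatvtheta}^2 d\tau$ bounded by a small absolute constant and collapsing $e^{3\lambda t} \le \tfrac{\lambda\hatgap}{16m\normMsm{\hatvtheta}^2} e^{\lambda t}$, so that the forcing term reduces to a small multiple of $Cr^\kappa e^{\lambda t}$. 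Combining with the R-bound then delivers $\normMsm{\Delta(T_2(r)+t)} \le 4Cr^\kappa e^{\lambda t}$.
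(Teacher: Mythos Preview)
Your proposal is correct and takes essentially the same approach as the paper: invoke \Cref{lm:phase-2-general-diff-to-hatvtheta} on each trajectory, exploit the fixed activation patterns on $[0,T_2(r)+t_0]$ to get a \Gronwall inequality for the P-part with coefficient $\lambda+12mr^2e^{2\lambda\tau}\normMsm{\hatvtheta}^2$, handle the R-part via \Cref{lm:weight-zero} and Property~4, and close using the definition of $t_0$. The only difference is organizational: rather than feeding the R-bound back into the P-inequality as a forcing term, the paper rewrites the R-bound $\normRsm{\Delta(\tau)}\le 2Cr^{1+\kappa}e^{\lambda\tau}$ as an integral inequality of the same shape, takes the max of the P- and R-inequalities to obtain a single forcing-free inequality $\normMsm{\Delta(\tau)}\le 2Cr^{1+\kappa}+\int_0^\tau(\lambda+12mr^2e^{2\lambda s}\normMsm{\hatvtheta}^2)\,\normMsm{\Delta(s)}\,ds$, and applies \eqref{eq:gron-2} directly --- this sidesteps the constant-matching you flag as the main obstacle and yields $2Cr^{1+\kappa}e^{\lambda\tau+3/8}<4Cr^{1+\kappa}e^{\lambda\tau}$ in one line.
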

\begin{proof}
	Let $\vtheta(t) := \phitheta(\vtheta_0, t)$ and $\tildevtheta(t) := \phitheta(r\hatvtheta, t)$ be gradient flows starting from $\vtheta_0$ and $r\hatvtheta$. For notation simplicity, let $h_{ki} = y_i\phi'(\hatvw_k^{\top} \vx_i)$. Let $g_i := -\ell'(y_i f_{\vtheta}(\vx_i))$, $\tildeg_i := -\ell'(y_i f_{\tildevtheta}(\vx_i))$.
	
	By \Cref{lm:phase-2-general-diff-to-hatvtheta}, we can make $r$ to be small enough so that the four properties hold for both $\vtheta(T_2(r) + t)$ and $\tildevtheta(T_2(r) + t)$ when $t \le t_0$.
	
	\myparagraph{Bounding the Difference for $1 \le k \le p$.} 
	For all $t \le t_0$ and $k \in [p]$, we know that $\phi'(\vw_k^{\top} \vx_i) = \phi'(\tildevw_k^{\top} \vx_i) = h_{ki}$, and thus for $\vw_k, \tildevw_k$ we have
	\begin{align*}
		\normtwo{\frac{\dd \vw_k}{\dd t} - \frac{\dd \tildevw_k}{\dd t}} &= \normtwo{\frac{a_k}{n}\sum_{i=1}^{n} g_i h_{ki} \vx_i - \frac{\tildea_k}{n}\sum_{i=1}^{n} \tildeg_i h_{ki} \vx_i} \\
		&\le \abssm{a_k - \tildea_k} \cdot \underbrace{\normtwo{\frac{1}{n}\sum_{i=1}^{n} \tildeg_i h_{ki} \vx_i}}_{\Lambda(t)} + \abssm{a_k} \cdot \underbrace{\normtwo{\frac{1}{n}\sum_{i=1}^{n} (g_i - \tildeg_i) h_{ki} \vx_i}}_{\Delta(t)} \\
		&=: \Lambda(t) \cdot \abssm{a_k - \tildea_k} + \abssm{a_k} \cdot \Delta(t).
	\end{align*}
	and for $a_k, \tildea_k$ we have
	\begin{align*}
		\normtwo{\frac{\dd a_k}{\dd t} - \frac{\dd \tildea_k}{\dd t}} &= \abs{\frac{1}{n}\sum_{i=1}^{n} g_i \phi(\vw_k^{\top} \vx_i) - \frac{1}{n}\sum_{i=1}^{n} \tildeg_i \phi(\tildevw_k^{\top} \vx_i)} \\
		&= \abs{\frac{1}{n}\sum_{i=1}^{n} g_i h_{ki} \vw_k^{\top} \vx_i - \frac{1}{n}\sum_{i=1}^{n} \tildeg_i h_{ki} \tildevw_k^{\top} \vx_i} \\
		&= \normtwosm{\vw_k - \tildevw_k} \cdot \normtwo{\frac{1}{n}\sum_{i=1}^{n} \tildeg_i h_{ki} \vx_i} + \normtwosm{\vw_k} \cdot \normtwo{\frac{1}{n}\sum_{i=1}^{n} (g_i - \tildeg_i) h_{ki} \vx_i} \\
		&= \Lambda(t) \cdot \normtwosm{\vw_k - \tildevw_k} + \normtwosm{\vw_k} \cdot \Delta(t).
	\end{align*}
	Therefore, $\normP{\frac{\dd \vtheta}{\dd t} - \frac{\dd \tildevtheta}{\dd t}} \le \Lambda(t) \cdot \normMsm{\vtheta - \tildevtheta} + \normMsm{\vtheta} \cdot \Delta(t)$. Now we turn to bound $\Lambda(t)$ and $\Delta(t)$.
	
	By Lipschitzness of $\ell'$ and \Cref{lm:f-output-ub}, we have
	\[
		\abssm{-\ell'(0) - \tildeg_i} \le m\normMsm{\tildevtheta}^2, \qquad \abssm{g_i - \tildeg_i} \le m \normMsm{\vtheta - \tildevtheta} \left( \normMsm{\vtheta} + \normMsm{\tildevtheta}\right).
	\]
	For $\Lambda(t)$, by triangle inequality and \Cref{lm:G-opt-grad} we have
	\[
		\Lambda(t) \le \normtwo{\frac{-\ell'(0)}{n}\sum_{i=1}^{n} h_{ki} \vx_i} + m\normMsm{\tildevtheta}^2 = \normtwosm{\nabla G(\hatvw_k)} + m\normMsm{\tildevtheta}^2 = \lambda + m\normMsm{\tildevtheta}^2,
	\]
	For $\Delta(t)$, we use triangle inequality again to give the following bound:
	\[
		\Delta(t) \le \frac{1}{n}\sum_{i=1}^{n} \abssm{g_i - \tildeg_i} \le m \normMsm{\vtheta - \tildevtheta} \left( \normMsm{\vtheta} + \normMsm{\tildevtheta}\right).
	\]
	Therefore, we can conclude that
	\begin{align*}
		\normP{\frac{\dd \vtheta}{\dd t} - \frac{\dd \tildevtheta}{\dd t}} &\le (\lambda + m\normMsm{\tildevtheta}^2) \cdot \normMsm{\vtheta - \tildevtheta} + \normMsm{\vtheta} \cdot m \normMsm{\vtheta - \tildevtheta} \left( \normMsm{\vtheta} + \normMsm{\tildevtheta}\right) \\
		&\le \left(\lambda + 3m \max\{ \normMsm{\vtheta}, \normMsm{\tildevtheta}\}^2 \right)\normMsm{\vtheta - \tildevtheta} \\
		&\le \left(\lambda + 12mr^2 e^{2\lambda t} \normMsm{\hatvtheta}^2\right)\normMsm{\vtheta - \tildevtheta},
	\end{align*}
	where the last inequality uses the 2nd property in \Cref{lm:phase-2-general-diff-to-hatvtheta}. Note that $\normPsm{\vtheta_0 - r\hatvtheta} \le Cr^{1+\kappa}$. So we can write it into the integral form:
	\begin{equation} \label{eq:norm-P-vtheta-diff-integral}
		\normPsm{\vtheta(t) - \tildevtheta(t)} \le Cr^{1+\kappa} + \int_0^{t} \left(\lambda + 12mr^2 e^{2\lambda \tau} \normMsm{\hatvtheta}^2 \right)\normMsm{\vtheta(\tau) - \tildevtheta(\tau)} \dd \tau.
	\end{equation}	
	\myparagraph{Bounding the Difference for $p < k \le m$.} By \Cref{lm:weight-zero}, $\normRsm{\tildevtheta(t)} = 0$ for all $t \ge 0$, so $\normRsm{\vtheta - \tildevtheta} = \normRsm{\vtheta}$. By the 4th property in \Cref{lm:phase-2-general-diff-to-hatvtheta}, we then have
	\[
		\normRsm{\vtheta(t) - \tildevtheta(t)} = \normRsm{\vtheta(t)} = \normRsm{\vtheta(t) - re^{\lambda t} \hatvtheta} \le 2Cr^{1+\kappa}e^{\lambda t}.
	\]
	So we can verify that $\normRsm{\vtheta(t) - \tildevtheta(t)}$ satisfies the following inequality:
	\begin{equation} \label{eq:norm-R-vtheta-diff-integral}
		\normRsm{\vtheta(t) - \tildevtheta(t)} \le 2Cr^{1+\kappa} + \int_0^{t} \lambda \normRsm{\vtheta(\tau) - \tildevtheta(\tau)} \dd \tau.
	\end{equation}
	
	\myparagraph{Bounding the Difference for All.} Combining \Cref{lm:phase-2-general-diff-to-hatvtheta} and \Cref{lm:phase-2-general-diff-to-hatvtheta}, we have the following inequality for $\normMsm{\vtheta(t) - \tildevtheta(t)}$:
	\[
		\normMsm{\vtheta(t) - \tildevtheta(t)} \le 2Cr^{1+\kappa} + \int_0^{t} \left(\lambda + 12mr^2 e^{2\lambda \tau} \normMsm{\hatvtheta}^2 \right)\normMsm{\vtheta(\tau) - \tildevtheta(\tau)} \dd \tau.
	\]
	By \Gronwall's inequality \eqref{eq:gron-2},
	\begin{align*}
	\normMsm{\vtheta(t) - \tildevtheta(t)} &\le 2Cr^{1+\kappa} \exp\left(\int_{0}^{t} \left(\lambda + 12mr^2 e^{2\lambda \tau} \normMsm{\hatvtheta}^2 \right) \dd \tau\right) \\
	&\le 2Cr^{1+\kappa} \exp\left(\lambda t + \frac{6m\normMsm{\hatvtheta}^2}{\lambda}r^2 e^{2\lambda t} \right).
	\end{align*}
	By definition \eqref{eq:phase-2-general-def-t0} of $t_0$, we have $\frac{6m\normMsm{\hatvtheta}^2}{\lambda}r^2 e^{2\lambda t} \le \frac{6m\normMsm{\hatvtheta}^2}{\lambda} e^{2\lambda t_0} = \frac{3\hatgap}{8} \le 3/8 < \ln 2$.
	Therefore we have the following bound for $\normMsm{\vtheta(t) - \tildevtheta(t)}$:
	\[
		\normMsm{\vtheta(t) - \tildevtheta(t)} \le 2Cr^{1+\kappa} e^{\lambda t + \ln 2} = 4C r^{1+\kappa} e^{\lambda t}.
	\]
	At time $T_2(r) + t \in [0, T_2(r) + t_0]$, this bound can be rewritten as
	\[
		\normMsm{\vtheta(T_2(r) + t) - \tildevtheta(T_2(r) + t)} \le 4C r^{\kappa} e^{\lambda t},
	\]
	which completes the proof.
\end{proof}

\begin{proof}[Proof for \Cref{thm:phase-2-general}]
	First we show that $\lim_{r \to 0} \phitheta(r\hatvtheta, T_2(r) + t)$ exists. We consider the case of $r \le r_{\max}$, where $r_{\max}$ is chosen to be small enough so that the properties in \Cref{lm:phase-2-general-diff-to-hatvtheta} hold. For any $r' < r$, by \Cref{lm:phase-2-general-diff-to-hatvtheta} we have
	\[
		\normM{\phitheta\left(r'\hatvtheta, T_2(r') + \frac{1}{\lambda}\ln r\right) - r\hatvtheta} \le \frac{4m\normMsm{\hatvtheta}^3}{\lambda} r^3 \le C'r^{1+\kappa'},
	\]
	where $C' = \frac{4m\normMsm{\hatvtheta}^3}{\lambda}$, $\kappa' = 2$.
	Applying \Cref{lm:phase-2-general-diff}, we then have
	\[
		\normM{\phitheta\left(\phitheta\left(r'\hatvtheta, T_2(r') + \frac{1}{\lambda}\ln r\right), T_2(r) + t\right) - \phitheta\left(r\hatvtheta, T_2(r) + t\right)} \le 4C'r^{\kappa'}e^{\lambda t}.
	\]
	Note that $T_2(r') + \frac{1}{\lambda}\ln r + T_2(r) + t = T_2(r') + t$. So this proves
	\[
		\normMsm{\phitheta(r'\hatvtheta, T_2(r') + t) - \phitheta(r\hatvtheta, T_2(r) + t)} \le 4C'r^{\kappa'}e^{\lambda t}.
	\]
	For any fixed $t \le t_0$, the RHS converges to $0$ as $r \to 0$, which implies Cauchy convergence of the limit $\lim_{r \to 0} \phitheta(r\hatvtheta, T_2(r) + t)$ and thus the limit exists. By the 1st property in \Cref{lm:phase-2-general-diff-to-hatvtheta}, we know that there is no activation pattern switch in the time interval $t \in [0, T_2(r) + t_0]$ if $r$ is small enough. This means $\Loss$ is locally smooth near the trajectory of $\phitheta(r\hatvtheta, T_2(r) + t)$ and thus the trajectory is unique. Therefore, the limit $\lim_{r \to 0} \phitheta(r\hatvtheta, T_2(r) + t)$ is uniquely defined.
	
	By \Cref{lm:phase-2-general-diff-to-hatvtheta},
	\[
		\normMsm{\phitheta(r\hatvtheta, T_2(r) + t) - e^{\lambda t} \hatvtheta} \le  \frac{4m\normMsm{\hatvtheta}^3}{\lambda}  e^{3\lambda t}.
	\]
	Taking $r \to 0$ on both sides gives the range of the limit $\lim_{r \to 0} \phitheta(r\hatvtheta, T_2(r) + t)$:
	\[
		\normM{\lim_{r \to 0} \phitheta(r\hatvtheta, T_2(r) + t) - e^{\lambda t} \hatvtheta} \le \frac{4m\normMsm{\hatvtheta}^3}{\lambda}  e^{3\lambda t}.
	\]
	For $s \to \infty$, by \Cref{lm:phase-2-general-diff}, we have
	\[
		\lim_{s \to \infty}\normM{\phitheta\left(\hatvtheta_{s}, T_2(r_s) + t\right) - \phitheta\left(r_s\hatvtheta, T_2(r_s) + t\right)} = 0.
	\]
	So $\lim_{s \to \infty} \phitheta(\vtheta_s, T_2(r_s) + t) = \lim_{r \to 0} \phitheta(r\hatvtheta, T_2(r) + t)$ is proved.
\end{proof}

\subsection{Proof for Approximate Embedding}

To analyze Phase II, we need to deal with approximate embedding instead of the exact one. For this, we further divide Phase II into Phase II.1 and II.2 and analyze them in order. At the end of this subsection we will prove \Cref{lm:phase2-main}.

\subsubsection{Proofs for Phase II.1}

Given the discussions in the previous sections, we are ready to present proofs for the phase II dynamics (\Cref{lm:phase2-main}) here. 

We subdivide the dynamics of Phase II into Phase II.1 and Phase II.2. At the end of Phase I, we show that the parameters grow to norm $O(r)$ in time $T_1(r)$. In Phase II.1, we extend the dynamic to time $T_1(r)+T_2(r)$ so that the parameters grow into constant norms (irrelevant to $r$ and $\sigmainit$). Then, when the initialization scale becomes sufficiently small, at the end of Phase II.1 the parameters become sufficiently close to the embedded parameters from two neurons at constant norms, so the subsequent dynamics is a good approximate embedding until the norm of the parameters grow sufficiently large to ensure directional convergence in Phase III. Here we show the results in Phase II.1.
\begin{lemma} \label{lm:phase2-1-good}
	For $m \ge 2$, with probability $1-2^{-(m-1)}$ over the random draw of $\barvtheta_0 \sim \Dinit(1)$, the vector $\barvb \in \R^m$ with entries $\barb_k := \frac{\dotpsm{\barvw_k}{\barvmu} + \bara_k}{2\sqrt{m} \normM{\barvtheta_0}}$ defined as in \Cref{lm:phase1-diff} is a good embedding vector.
\end{lemma}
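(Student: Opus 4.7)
The plan is to observe that $\barb_k$'s sign is controlled entirely by the numerator $N_k := \dotp{\barvw_k}{\barvmu} + \bara_k$, since the common denominator $2\sqrt{m}\,\normM{\barvtheta_0}$ is strictly positive almost surely. So the question of whether $\barvb$ is a good embedding vector reduces to a statement about the $m$-tuple of signs $(\sgn(N_1), \dots, \sgn(N_m))$.

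First I would note that under $\barvtheta_0 \sim \Dinit(1)$ the pairs $(\barvw_k, \bara_k)$ are i.i.d.\ across $k \in [m]$, with $\barvw_k \sim \Normal(\vzero, \mI)$ and $\bara_k \sim \Normal(0, \cAinit^2)$ drawn independently. Hence each $N_k$ is Gaussian with mean $0$ and variance $\normtwosm{\barvmu}^2 + \cAinit^2 = 1 + \cAinit^2 > 0$, and $N_1, \dots, N_m$ are mutually independent. In particular each $N_k$ has a continuous, symmetric distribution, giving $\Pr[N_k = 0] = 0$ and $\Pr[N_k > 0] = \Pr[N_k < 0] = \tfrac{1}{2}$.

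A union bound over $k \in [m]$ then shows that all $N_k$ are non-zero with probability $1$, which (using positivity of $\normM{\barvtheta_0}$ a.s.) gives $\barb_k \ne 0$ for all $k$ almost surely. Conditioned on this full-measure event, the sign vector $(\sgn(\barb_1), \dots, \sgn(\barb_m)) = (\sgn(N_1), \dots, \sgn(N_m))$ is uniformly distributed on $\{-1, +1\}^m$ by independence. The failure event ``$\barvb$ has no positive entry or no negative entry'' is precisely the event that all signs coincide, which has probability $2 \cdot 2^{-m} = 2^{-(m-1)}$. Subtracting from $1$ yields the claim.

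There is no real obstacle here: the only thing to check carefully is that the denominator in the definition of $\barb_k$ cannot conspire to introduce dependence into the sign pattern, which is immediate since it is a common positive scalar. The rest is a direct i.i.d.\ sign computation, and the bound $1 - 2^{-(m-1)}$ is exactly the probability that $m$ independent fair coins are not all the same.
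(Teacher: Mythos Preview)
Your proposal is correct and matches the paper's proof essentially step for step: both reduce to the numerators $N_k = \dotpsm{\barvw_k}{\barvmu} + \bara_k$, observe they are i.i.d.\ $\Normal(0,1+\cAinit^2)$ (using $\normtwosm{\barvmu}=1$), and compute the probability that the $m$ independent fair-coin signs are not all equal as $1-2\cdot 2^{-m}=1-2^{-(m-1)}$.
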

\begin{proof}
	Note that $\barvb$ is a good embedding vector iff $\barvb' = 2\sqrt{m}\normMsm{\barvtheta_0}\barvb$ is a good embedding vector. Recall that $\barvw_k \simiid \Normal(\vzero, \mI)$, $\bara_k \simiid \Normal(0, \cAinit^2)$. By the property of Gaussian variables,
	\[
		\barb'_k = \dotpsm{\barvw_k}{\barvmu} + \bara_k \sim \Normal(0, 1+\cAinit^2).
	\]
	Thus $\barvb' \sim \Normal(\vzero, (1+\cAinit^2)\mI)$. Since it is a continuous probability distribution, $\barvb' \ne \vzero$ with probability $1$. By symmetry and independence, we know that $\Pr[\forall k \in [m]: \barb'_k > 0] = 2^{-m}$ and  $\Pr[\forall k \in [m]: \barb'_k < 0] = 2^{-m}$. So $\barb'$ is a good embedding vector with probability $1 - 2^{-m} - 2^{-m} = 1-2^{-(m-1)}$, and so is $\barvb$.
\end{proof}

\begin{lemma} \label{lm:phase2-1-main}
	Let $T_2(r) := \frac{1}{\lambda_0}\ln \frac{1}{r}$, then $T_{12} := T_1(r) + T_2(r) = \frac{1}{\lambda_0} \ln \frac{1}{\sqrt{m} \sigmainit \normM{\barvtheta_0}}$ regardless the choice of $r$.
	For random draw of $\barvtheta_0 \sim \Dinit(1)$, if $\barvb \in \R^m$ defined as in \Cref{lm:phase1-diff} is a good embedding vector, then there exists $t_0 \in \R$ such that the following holds:
	\begin{enumerate}
		\item For the two-neuron dynamics starting with rescaled initialization in the direction of $\hatvtheta := (\brmsplus[\barb], \brmsplus[\barb]\barvmu, \brmsminus[\barb], \brmsminus[\barb]\barvmu)$, for all $t \in (-\infty, t_0]$, the limit $\tildevtheta(t) := \lim_{r \to 0} \phitheta(r\hatvtheta, T_2(r) + t)$ exists and lies near $e^{\lambda_0 t}\hatvtheta$:
		\[
			\normM{\tildevtheta(t) - e^{\lambda_0 t}\hatvtheta} \le \frac{4m\normMsm{\hatvtheta}^3}{\lambda_0} e^{3\lambda_0 t} = O(e^{3\lambda_0 t}).
		\]
		\item For the $m$-neuron dynamics $\vtheta(t)$, the following holds for all $t \in (-\infty, t_0]$,
		\[
		\lim_{\sigmainit \to 0} \vtheta\left(T_{12} + t\right) = \pi_{\barvb}(\tildevtheta(t)).
		\]
	\end{enumerate}
\end{lemma}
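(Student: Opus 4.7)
The plan is to apply \Cref{thm:phase-2-general} twice: once to the two-neuron gradient flow starting exactly at $r\hatvtheta$ (to get part~1), and once to the $m$-neuron gradient flow starting near $r\hatvtheta_m := \pi_{\barvb}(\hatvtheta) = (\barb_1\barvmu, \dots, \barb_m\barvmu, \barb_1, \dots, \barb_m)$ (to get part~2). First I would verify the algebraic identity $T_{12} = \frac{1}{\lambda_0}\ln\frac{1}{\sqrt{m}\sigmainit\normMsm{\barvtheta_0}}$ by adding $T_1(r)$ and $T_2(r)$ directly, which shows $T_{12}$ is independent of $r$. Next I would check that both $\hatvtheta$ and $\hatvtheta_m$ satisfy \Cref{def:well-aligned}. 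In the symmetric case $G(\vw) = \dotpsm{\vw}{\scaledvmu}$, so the maximum of $\abssm{G}$ on $\sphS^{d-1}$ equals $\lambda_0 = \normtwosm{\scaledvmu}$ and is attained precisely at $\pm\barvmu$. The sign/norm-balance condition $\hat a_k = \sgn(G(\hatvw_k))\normtwosm{\hatvw_k}$ is immediate from $\brmsplus[\barb] > 0$, $\brmsminus[\barb] < 0$, and $\barb_k \ne 0$ for every $k$ (the latter because $\barvb$ is a good embedding vector), while $\dotpsm{\hatvw_k}{\vx_i}\ne 0$ reduces to $\dotpsm{\barvmu}{\vx_i}\ne 0$, which is \Cref{ass:dotp-mu-x-non-zero}.

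For part~(1), applying \Cref{thm:phase-2-general} to the two-neuron dynamics starting from $r\hatvtheta$ directly gives existence of $\tildevtheta(t) = \lim_{r\to 0}\phitheta(r\hatvtheta, T_2(r)+t)$ and the quantitative bound $\normMsm{\tildevtheta(t) - e^{\lambda_0 t}\hatvtheta} \le \frac{8\normMsm{\hatvtheta}^3}{\lambda_0} e^{3\lambda_0 t}$; since $m\ge 2$, this is at most $\frac{4m\normMsm{\hatvtheta}^3}{\lambda_0} e^{3\lambda_0 t}$, as claimed. The cutoff $t_0$ may be taken as the constant produced by \eqref{eq:phase-2-general-def-t0} for the two-neuron network (using the more restrictive of the two applications). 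For part~(2), I would use \Cref{lm:phase1-diff} to write $\vtheta(T_1(r)) = r\hatvtheta_m + \Delta\vtheta$ with $\normMsm{\Delta\vtheta} \le Cr^3/\sqrt{m}$, which matches the hypothesis of the third conclusion of \Cref{thm:phase-2-general} with $\kappa = 2$. Choosing any schedule $r = r(\sigmainit)$ with $r\to 0$ and $\sigmainit/r^3 \to 0$ as $\sigmainit \to 0$ (for instance $r = \sigmainit^{1/4}$) keeps the constraint in \Cref{lm:phase1-diff} satisfied while forcing $r_s := r(\sigmainit_s) \to 0$ and $\vtheta_s := \vtheta(T_1(r_s)) \to \vzero$. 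Since $T_{12}$ is independent of $r$, we have $\vtheta(T_{12}+t) = \phitheta(\vtheta(T_1(r_s)), T_2(r_s)+t)$, so \Cref{thm:phase-2-general}(3) yields $\lim_{\sigmainit \to 0} \vtheta(T_{12}+t) = \lim_{r\to 0}\phitheta(r\hatvtheta_m, T_2(r)+t)$.

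To finish, I would identify the right-hand side with $\pi_{\barvb}(\tildevtheta(t))$ by noting $r\hatvtheta_m = \pi_{\barvb}(r\hatvtheta)$ from the definition of the embedding, and then invoking \Cref{lm:two-neuron-embedding} to get $\phitheta(r\hatvtheta_m, T_2(r)+t) = \pi_{\barvb}(\phitheta(r\hatvtheta, T_2(r)+t))$; continuity (in fact linearity on each sign-class of $\barvb$) of $\pi_{\barvb}$ then passes the limit inside. The main obstacle is the interchange of the two limits $\sigmainit\to 0$ and $r\to 0$, which is precisely what the $\kappa = 2$ estimate in \Cref{thm:phase-2-general}(3) is designed to handle. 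A secondary subtlety is verifying the non-branching hypothesis required by \Cref{lm:two-neuron-embedding}: for sufficiently small $r$, the starting points $r\hatvtheta$ and $r\hatvtheta_m$ each consist of neurons with $a_k = \pm\normtwosm{\vw_k}$ pointing exactly along $\pm\barvmu$ at small norm, so \Cref{ass:non-branching} applies; moreover, property~1 of \Cref{lm:phase-2-general-diff-to-hatvtheta} shows activation patterns are preserved throughout the time interval $[0, T_2(r)+t_0]$, so the flow stays on a smooth branch.
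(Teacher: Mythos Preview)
Your proposal is correct and follows essentially the same route as the paper: verify that both $\hatvtheta$ and $\hatvtheta_m = \pi_{\barvb}(\hatvtheta)$ are well-aligned, apply \Cref{thm:phase-2-general} to each, and then identify the $m$-neuron limit with $\pi_{\barvb}(\tildevtheta(t))$ via the embedding. The one organizational difference worth noting: to establish $\phitheta(r\hatvtheta_m, \cdot) = \pi_{\barvb}(\phitheta(r\hatvtheta, \cdot))$ you invoke \Cref{lm:two-neuron-embedding}, which requires both starting points to be non-branching and thus pulls in \Cref{ass:non-branching} already at this stage; the paper instead uses the weaker \Cref{lm:two-neuron-gf-embed-gf} (the embedded trajectory is \emph{some} gradient flow from $r\hatvtheta_m$) together with the uniqueness clause in \Cref{thm:phase-2-general}(1), which says the limit is the same for every choice of trajectory. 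Both arguments are valid, and since \Cref{ass:non-branching} is needed anyway in Phase~II.2, your route costs nothing extra.
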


\begin{proof}
	Under \Cref{ass:sym,ass:dotp-mu-x-non-zero}, the maximum value of $\abssm{G(\vw)}$ on $\sphS^{d-1}$ is $\lambda_0$ and is attained at $\pm \barvmu$. Given a good embedding vector $\barvb$, both $\hatvtheta$ and $\hatvtheta_{\pi} := \pi_{\barvb}(\hatvtheta)$ are well-aligned parameter vectors (\Cref{def:well-aligned}) for width-2 and width-$m$ Leaky ReLU nets respectively. By \Cref{thm:phase-2-general}, there exists $t_0 \in \R$ such that the following two limits exist for all $t \in (-\infty, t_0]$:
	\[
		\tildevtheta(t) := \lim_{r \to 0} \phitheta\left(r\hatvtheta, T_2(r) + t\right), \qquad \tildevtheta_{\pi}(t) := \lim_{r \to 0} \phitheta\left(r\hatvtheta_{\pi}, T_2(r) + t\right).
	\]
	Note that by \Cref{lm:two-neuron-gf-embed-gf}, we have $\pi_{\barvb}(\phitheta(r\hatvtheta, T_2(r) + t))$ is a trajectory of gradient flow starting from $r\hatvtheta_{\pi}$. The uniqueness of $\tildevtheta_{\pi}(t)$ (for all possible choices of $\phitheta$) shows that
	\[
		\pi_{\barvb}(\tildevtheta(t)) = \lim_{r \to 0} \pi_{\barvb}\left(\phitheta\left(r\hatvtheta, T_2(r) + t\right)\right) = \tildevtheta_{\pi}(t).
	\]
	By \Cref{lm:phase1-diff}, for $\sigmainit$ small enough, if we choose $r$ so that $\sigmainit = \frac{r^3}{\sqrt{m}\normM{\barvtheta_0}}$, then for some universal constant $C$ we have
	\[
		\normMsm{\vtheta(T_1(r)) - r\hatvtheta_{\pi}} \le \frac{Cr^3}{\sqrt{m}}.
	\]
	Applying \Cref{thm:phase-2-general} proves the following for all $t \in (-\infty, t_0]$:
	\[
		\lim_{\sigmainit \to 0} \phitheta(\vtheta(T_1(r)), T_2(r) + t) = \tildevtheta_{\pi}(t).
	\]
	Therefore $\lim_{\sigmainit \to 0} \vtheta\left(T_{12} + t\right) = \pi_{\barvb}(\tildevtheta(t))$.
\end{proof}

\subsection{Proofs for Phase II.2}

Next, at the end of Phase II.1, $\vtheta(T_{12}+t_0)$ has a constant norm.
Then we show the trajectory convergence with respect to the initialization scale
in Phase II.2.
\begin{lemma}\label{lm:phase-II-2}
    If $\pi_{\barvb}(\tildevtheta(t_0))$ is non-branching and $\lim_{\sigmainit \to 0} \vtheta\left(T_{12} + t_0\right) = \pi_{\barvb}(\tildevtheta(t_0))$ for some constant $t_0$, then for all $t>t_0$, $\lim_{\sigmainit \to 0} \vtheta\left(T_{12} + t\right) = \pi_{\barvb}(\tildevtheta(t))$.
\end{lemma}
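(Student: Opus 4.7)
The plan is to propagate the convergence at time $T_{12}+t_0$ forward in time by combining (i) the uniqueness of the gradient flow out of the non-branching point $\pi_{\barvb}(\tildevtheta(t_0))$, (ii) the two-neuron/$m$-neuron embedding identity (\Cref{lm:two-neuron-embedding} and \Cref{lm:two-neuron-gf-embed-gf}), and (iii) a continuous-dependence argument for solutions of the differential inclusion \eqref{eq:subgradient_flow}.

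First I would define the forward trajectory $\vtheta^{*}(\tau) := \phitheta(\pi_{\barvb}(\tildevtheta(t_0)), \tau)$ for $\tau \ge 0$, which is unambiguous by the non-branching hypothesis. Next I would check that $\pi_{\barvb}(\tildevtheta(t_0+\tau)) = \vtheta^{*}(\tau)$ for every $\tau \ge 0$. The map $t \mapsto \tildevtheta(t)$, being the pointwise limit (supplied by \Cref{thm:phase-2-general} and \Cref{lm:phase2-1-main}) of gradient-flow trajectories on the two-neuron network, is itself a gradient-flow trajectory on the two-neuron network: the trajectories are uniformly Lipschitz on any compact interval, so Arzelà--Ascoli supplies uniform convergence on compacts, and upper-semicontinuity of the Clarke subdifferential (definability of $f_{\vtheta}$) closes the inclusion under limits. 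Applying \Cref{lm:two-neuron-gf-embed-gf} lifts $\tildevtheta(\cdot)$ to an $m$-neuron gradient-flow trajectory $\pi_{\barvb}(\tildevtheta(t_0+\cdot))$ starting from $\pi_{\barvb}(\tildevtheta(t_0))$; the non-branching hypothesis then forces this trajectory to coincide with $\vtheta^{*}$.

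Finally I would establish $\lim_{\sigmainit \to 0}\vtheta(T_{12}+t_0+\tau)=\vtheta^{*}(\tau)$ by a compactness-plus-uniqueness argument. Fix any $\tau > 0$ and any sequence $\sigmainit_j \to 0$. The hypothesis gives $\vtheta_{\sigmainit_j}(T_{12}+t_0) \to \pi_{\barvb}(\tildevtheta(t_0))$, so the starting points are bounded; combined with a Grönwall-type bound on $\normtwosm{\vtheta(t)}$ (using that $\normtwosm{\cpartial \Loss(\vtheta)}$ grows at most polynomially in $\normtwosm{\vtheta}$ on the finite interval $[0,\tau]$), the trajectories $s \mapsto \vtheta_{\sigmainit_j}(T_{12}+t_0+s)$ on $[0,\tau]$ are uniformly bounded and uniformly Lipschitz. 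Arzelà--Ascoli yields a subsequence converging uniformly to some arc $\vtheta^{\infty}(s)$ with $\vtheta^{\infty}(0)=\pi_{\barvb}(\tildevtheta(t_0))$, and upper-semicontinuity of $\cpartial \Loss$ implies that $\vtheta^{\infty}$ satisfies the differential inclusion. Non-branching then forces $\vtheta^{\infty} \equiv \vtheta^{*}$ on $[0,\tau]$; since every subsequence has a further subsequence converging to the same limit $\vtheta^{*}(\tau)$, the full limit $\lim_{\sigmainit \to 0}\vtheta(T_{12}+t_0+\tau)=\vtheta^{*}(\tau)=\pi_{\barvb}(\tildevtheta(t_0+\tau))$ follows.

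The main obstacle is the compactness step: one must rule out escape to infinity on the finite interval $[0,\tau]$ uniformly in $\sigmainit$, and one must justify the Filippov-style closure (that uniform limits of solutions of $\dot{\vtheta}\in -\cpartial \Loss(\vtheta)$ are again solutions). Both are standard for definable, locally Lipschitz $\Loss$, but they must be invoked carefully because the Leaky ReLU network is non-smooth and a priori admits multiple gradient-flow trajectories from a given point; it is precisely the non-branching assumption on $\pi_{\barvb}(\tildevtheta(t_0))$ that pins down the unique limit and converts subsequential convergence into full convergence.
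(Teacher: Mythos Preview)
Your proposal is correct and follows essentially the same route as the paper. Both arguments hinge on (i) a uniform Lipschitz bound on the trajectories over $[t_0,T]$ (the paper states this as \Cref{lm:path-lip} using \Cref{lm:grad-ub}, you sketch it via Gr\"onwall), (ii) Arzel\`a--Ascoli to extract a uniformly convergent subsequence, (iii) a Filippov-style closure showing the limit arc still satisfies $\dot{\vq}\in-\cpartial\Loss(\vq)$ via upper-semicontinuity of the Clarke subdifferential, and (iv) the non-branching hypothesis to pin down that limit. The only cosmetic difference is that the paper frames (ii)--(iv) as a proof by contradiction (assume some $\vq_T\neq\phitheta(\pi_{\barvb}(\tildevtheta(t_0)),T-t_0)$ is a limit point, then build a second gradient-flow trajectory from $\pi_{\barvb}(\tildevtheta(t_0))$), whereas you argue directly that every subsequence has a further subsequence converging to $\vtheta^{*}$; and you make explicit the identification $\pi_{\barvb}(\tildevtheta(t_0+\tau))=\vtheta^{*}(\tau)$ via \Cref{lm:two-neuron-gf-embed-gf}, which the paper simply asserts in its first line.
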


We first start with a simple lemma on gradient upper bounds, and then show that the trajectory of gradient flow is Lipschitz with time.
\begin{lemma} \label{lm:grad-ub}
    For every $\vtheta \in \R^D$, $\normtwosm{\vg} \le \normtwosm{\vtheta}$ for all $\vg \in \cpartial \Loss(\vtheta)$.
\end{lemma}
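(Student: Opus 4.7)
The plan is to first establish the bound on the ordinary gradient $\nabla \Loss(\vtheta)$ at differentiable points $\vtheta \in \OmegaS$ using the explicit formulas in \eqref{eq:grad-Loss}, and then extend to Clarke's subdifferential by the limiting characterization \eqref{eq:def-clarke2}.

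For $\vtheta \in \OmegaS$, I would use three elementary facts: (i) $\abssm{\ell'(q)} \le 1$ for the logistic loss; (ii) for Leaky ReLU, $\abssm{\phi'(z)} \le 1$ and $\abssm{\phi(z)} \le \abssm{z}$; and (iii) $\normtwosm{\vx_i} \le 1$. From \eqref{eq:grad-Loss}, these immediately give the componentwise bounds
\[
\normtwo{\frac{\partial \Loss(\vtheta)}{\partial \vw_k}} \le \frac{1}{n}\sum_{i \in [n]} \abssm{a_k} \cdot \abssm{\phi'(\vw_k^\top \vx_i)} \cdot \normtwosm{\vx_i} \le \abssm{a_k},
\qquad
\abs{\frac{\partial \Loss(\vtheta)}{\partial a_k}} \le \frac{1}{n}\sum_{i \in [n]} \abssm{\phi(\vw_k^\top \vx_i)} \le \normtwosm{\vw_k}.
\]
Summing the squared partials over $k \in [m]$ yields
\[
\normtwosm{\nabla \Loss(\vtheta)}^2 = \sum_{k \in [m]}\left(\normtwo{\frac{\partial \Loss(\vtheta)}{\partial \vw_k}}^2 + \abs{\frac{\partial \Loss(\vtheta)}{\partial a_k}}^2\right) \le \sum_{k \in [m]} \left(\abssm{a_k}^2 + \normtwosm{\vw_k}^2\right) = \normtwosm{\vtheta}^2,
\]
so $\normtwosm{\nabla \Loss(\vtheta)} \le \normtwosm{\vtheta}$ for all $\vtheta \in \OmegaS$.

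To pass to general $\vtheta \in \R^D$, I would use the representation \eqref{eq:def-clarke2} with $\Omega = \OmegaS$, which is a full-measure set containing no non-differentiable points. Any $\vg \in \cpartial \Loss(\vtheta)$ is a convex combination of limits $\lim_{n \to \infty} \nabla \Loss(\vtheta_n)$ along sequences $\vtheta_n \in \OmegaS$ with $\vtheta_n \to \vtheta$. For each such sequence, $\normtwosm{\nabla \Loss(\vtheta_n)} \le \normtwosm{\vtheta_n} \to \normtwosm{\vtheta}$ by the differentiable case, and taking convex combinations preserves the bound via the triangle inequality. Hence $\normtwosm{\vg} \le \normtwosm{\vtheta}$.

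There is no real obstacle here: the argument is a direct consequence of the explicit gradient formulas and the standard Lipschitz properties of $\ell$ and $\phi$. The only minor care needed is the handoff from ordinary gradients at smooth points to Clarke's subdifferential, which is immediate from the definition once the bound is established on $\OmegaS$.
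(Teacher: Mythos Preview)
Your proposal is correct and follows essentially the same approach as the paper's proof: bound the componentwise gradients on $\OmegaS$ via $\abssm{\ell'}\le 1$, $\abssm{\phi'}\le 1$, $\abssm{\phi(z)}\le\abssm{z}$, $\normtwosm{\vx_i}\le 1$ to get $\normtwosm{\partial\Loss/\partial\vw_k}\le\abssm{a_k}$ and $\abssm{\partial\Loss/\partial a_k}\le\normtwosm{\vw_k}$, sum squares, and then extend to $\cpartial\Loss$ by taking limits in $\OmegaS$. The paper is terser about the extension step, but the argument is the same.
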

\begin{proof}
    Note that $\abssm{\ell'(q)} \le 1$, $\abssm{\phi'(z)} \le 1$ $\normtwosm{\vx_i} \le 1$, $\abssm{y_i} \le 1$.
    For every $\vtheta \in \OmegaS$ (i.e., no activation function has zero input), by the chain rule \eqref{eq:grad-Loss}, we have
    \begin{align*}
        \normtwo{\frac{\partial \Loss(\vtheta)}{\partial \vw_k}} &= \normtwo{\frac{1}{n}\sum_{i \in [n]} \ell'(q_i(\vtheta)) y_i a_k \phi'(\vw_k^{\top} \vx_i) \vx_i} \le \frac{1}{n} \sum_{i \in [n]} \abssm{a_k} = \abssm{a_k}, \\
        \normtwo{\frac{\partial \Loss(\vtheta)}{\partial a_k}} &= \normtwo{\frac{1}{n}\sum_{i \in [n]} \ell'(q_i(\vtheta)) y_i \phi(\vw_k^{\top} \vx_i)} \le \frac{1}{n} \sum_{i \in [n]} \abssm{\vw_k^{\top}\vx_i} \le \normtwosm{\vw_k}.
    \end{align*}
    So $\normtwosm{\nabla \Loss(\vtheta)} \le \normtwosm{\vtheta}$. We can finish the proof for any $\vtheta \in \R^D$ by taking limits in $\OmegaS$.
\end{proof}

\begin{lemma}\label{lm:path-lip}
 The gradient flow trajectory $\vtheta(T_{12}+t)$,in the interval $t\in [t_0,\tmpts]$ for any $\tmpts>t_0$, is Lipschitz in $t$ with Lipschitz constant $\normtwo{\vtheta(T_{12}+t_0)}e^{(\tmpts-t_0)}$.
\end{lemma}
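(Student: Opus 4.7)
The plan is to bound the magnitude of the velocity $\frac{\dd \vtheta}{\dd t}$ uniformly on the interval $[T_{12}+t_0, T_{12}+\tmpts]$ and then integrate. This is a straightforward consequence of two facts stated earlier in the excerpt: Lemma~\ref{lm:descent} gives $\frac{\dd\vtheta}{\dd t}=-\barcpartial\Loss(\vtheta)$ for a.e.\ $t$, and Lemma~\ref{lm:grad-ub} gives $\normtwosm{\vg}\le\normtwosm{\vtheta}$ for every $\vg\in\cpartial\Loss(\vtheta)$.

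First I would observe that combining these two lemmas yields, for a.e.\ $t$,
\[
\normtwo{\tfrac{\dd\vtheta(T_{12}+t)}{\dd t}}\le\normtwosm{\vtheta(T_{12}+t)}.
\]
Since $\vtheta$ is absolutely continuous in $t$, $\normtwosm{\vtheta(T_{12}+t)}$ satisfies the integral inequality
\[
\normtwosm{\vtheta(T_{12}+t)}\le\normtwosm{\vtheta(T_{12}+t_0)}+\int_{t_0}^{t}\normtwosm{\vtheta(T_{12}+\tau)}\,\dd\tau,
\]
and \Gronwall's inequality \eqref{eq:gron-2} (with $\alpha=\normtwosm{\vtheta(T_{12}+t_0)}$ constant and $\beta\equiv 1$) immediately gives $\normtwosm{\vtheta(T_{12}+t)}\le\normtwosm{\vtheta(T_{12}+t_0)}e^{t-t_0}$ for all $t\in[t_0,\tmpts]$.

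Combining the two bounds, $\normtwo{\frac{\dd\vtheta(T_{12}+t)}{\dd t}}\le\normtwosm{\vtheta(T_{12}+t_0)}e^{\tmpts-t_0}$ for a.e.\ $t\in[t_0,\tmpts]$. Since $\vtheta$ is absolutely continuous, Lipschitzness with the claimed constant follows by integrating the velocity bound between any two times $t_1<t_2$ in $[t_0,\tmpts]$. There is no real obstacle here; the only thing to be careful about is that the gradient inclusion only holds for a.e.\ $t$ and the Clarke subdifferential may be set-valued, but since every selection is norm-bounded by $\normtwosm{\vtheta}$, the argument goes through verbatim for the minimum-norm selection $\barcpartial\Loss(\vtheta)$ that drives the flow.
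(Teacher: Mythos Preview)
Your proposal is correct and follows essentially the same route as the paper: bound the speed by $\normtwosm{\vtheta}$ via Lemma~\ref{lm:grad-ub}, apply \Gronwall\ to get exponential growth of $\normtwosm{\vtheta}$, and conclude the uniform velocity bound on $[t_0,\tmpts]$. Your write-up is in fact a bit more careful than the paper's (you spell out the integral form and the a.e.\ caveat), but the argument is the same.
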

\begin{proof}
    By \Cref{lm:grad-ub}, $\normtwo{\frac{\dd }{\dd
    t}\vtheta(T_{12}+t)}\leq  \normtwo{\vtheta(T_{12}+t)}$.
    So $\frac{\dd
    \normtwo{\vtheta(T_{12}+t)}}{\dd t} \leq \normtwo{\vtheta(T_{12}+t)}$.
    By \Gronwall's inequality \eqref{eq:gron-2}, 
    $\normtwo{\vtheta(T_{12}+t)}\leq \normtwo{\vtheta(T_{12}+t_0)}e^{t-t_0}$.
    Then $\normtwo{\frac{\dd }{\dd
    t}\vtheta(T_{12}+t)}\leq  \normtwo{\vtheta(T_{12}+t)} \le \normtwosm{vtheta(T_{12} + t_0)} e^{\tmpts - t_0}$.
\end{proof}
Now we are ready to prove \Cref{lm:phase-II-2}.
\begin{proof}[Proof of \Cref{lm:phase-II-2}]
    When $\sigmainit\to 0$, as $\vtheta(T_{12} + t_0) \to
    \pi_{\barvb}(\tildevtheta(t_0))$, we can select a countable sequence
    $(\sigmainit)_i\to 0$ and trajectories $\vtheta_i(T_{12} + t)$ with
    initialization scale $(\sigmainit)_i$. We show that for every $t\geq t_0$,
    there must be $\vtheta_i(T_{12}+t)=\phitheta(\vtheta_i(T_{12}+t_0),t-t_0)\to
    \pi_{\barvb}(\tildevtheta(t))=\phitheta(\pi_{\barvb}(\tildevtheta(t_0)),t-t_0)$.
    
    If this does not hold for some $t = T$, then there must be a limit point $\vq_T$ of points in  $\{\phitheta(\vtheta_i(T_{12}+t_0), T-t_0)\}_{i=1}^{\infty}$ such that $\vq_T\neq \phitheta(\pi_{\barvb}(\tildevtheta(t_0)),T-t_0)$ and a converging subsequence in $\{\phitheta(\vtheta_i(T_{12}+t_0),T-t_0)\}_{i=1}^{\infty}$ to $\vq_T$. Thus WLOG we assume that the sequence is chosen so that
    \[
    	\lim_{i\to\infty}\phitheta(\vtheta_i(T_{12}+t_0), T-t_0)= \vq_T\neq \phitheta(\pi_{\barvb}(\tildevtheta(t_0)), T-t_0).
    \]
    We then show that there is a trajectory of the gradient flow that starts from $\pi_{\barvb}(\tildevtheta(t_0))$ and reaches $\vq_T$ at time $T - t_0$, thereby causing a contradiction to our assumption that $\pi_{\barvb}(\tildevtheta(t_0))$ is non-branching.

    For any pair of $L_0$-Lipschitz continuous functions $\vf, \vg: [t_0, T] \to \R^D$, define the $\normmax$-distance to be $\norminfsm{\vf - \vg} := \sup_{t \in [t_0, T]} \normtwosm{\vf(t) - \vg(t)}$. Note that the space of $L_0$-Lipschitz functions with bounded function values is compact under $\normmax$-distance, and therefore any sequence of functions in this space has a converging subsequence whose limit is also $L_0$-Lipschitz.
    
    Let $C := \sup_i\{\normtwo{\vtheta_i(T_{12}+t_0)}\}$, then as
    $\{\vtheta_i(T_{12}+t_0)\}$ is converging to
    $\pi_{\barvb}(\tildevtheta(t_0))\neq \infty$, $C<\infty$. By
    \Cref{lm:path-lip} we know each trajectory $\vtheta_i(T_{12}+t)$ is
    $(Ce^{T-t_0})$-Lipschitz for $t\in[t_0,T]$. This means we can find a
    subsequence $1 \le i_1 < i_2 < i_3 < \cdots$ that the trajectory
    $\{\vtheta_{i_j}(T_{12} + t)\}$ $\normmax$-converges on $[t_0, T]$ as
    $j\to\infty$. Then the pointwise limit $\vq(t) := \lim_{j\to\infty}
    \vtheta_{i_j}(T_{12}+t)$ exists for every $t \in [t_0, T]$.
    $\vq(t_0)=\pi_{\barvb}(\tildevtheta(t_0))$, $\vq(T)=\vq_T$.
    
    Finally we show that $\vq(t)$ is indeed a valid gradient flow trajectory.
    Notice that $\vq(t)$ is $(Ce^{T-t_0})$-Lipschitz, then by Rademacher theorem
    for $\vq(t)$ is differentiable for a.e.~$t \in [t_0, T]$. We are
    left to show $\vq'(t)\in\cpartial\Loss(\vq(t))$ whenever $\vq$ is differentiable at $t$.
    
    For any $\epsilon>0$ that $[t, t+\epsilon]\subseteq [t_0,T]$, we
    investigate the behaviour of $\vq(t)$ in the $\epsilon$-neighborhood of $t$.
    Let $\Omega_j$ be the set of $\tau \in [t_0, T]$ so that $\frac{\dd}{\dd \tau} \vtheta_{i_k}(T_{12}+\tau) \in -\cpartial \Loss(\vtheta_{i_k}(T_{12} + \tau))$.
    By definition of differential inclusion, $\Omega_j$ has full measure in $[t_0, T]$.
    Define $B_{j, \epsilon}$ be the following closed convex hull:
    \[
    B_{j,\epsilon}=\overline{\conv}\left\{\frac{\dd}{\dd
    \tau}\vtheta_{i_k}(T_{12}+\tau):k\geq
    j,\tau\in[t,t+\epsilon] \cap \Omega_j \right\}.
    \]
    It is easy to see that $B_{j,\epsilon}$ is monotonic with respect to $j$. Then we know
    that for any $j$,
    \[\frac{\vtheta_{i_j}(T_{12}+t+\epsilon)-\vtheta_{i_j}(T_{12}+t)}{\epsilon}=\frac{1}{\epsilon}\int_{t}^{t+\epsilon}
    \frac{\dd}{\dd\tau}\vtheta_{i_j}(T_{12}+\tau)\dd\tau\in B_{j,\epsilon},\]
    Then taking the limits $j\to\infty$, as all $B_{j,\epsilon}$ are closed, we
    know $\frac{\vq(t+\epsilon)-\vq(t)}{\epsilon}\in
    \lim_{j\to\infty}B_{j,\epsilon}$. 
    
    Now let $C_{j,\epsilon}, C_{\epsilon}$ be the following closed convex hull of subgradients:
    \[
        C_{j,\epsilon}=\overline{\conv}\left(\bigcup_{\substack{k\ge j \\ \tau\in[t,t+\epsilon]}} \cpartial\Loss(\vtheta_{i_k}(T_{12}+\tau))\right),
        \quad 
        C_{\epsilon}=\overline{\conv}\left(\bigcup_{\tau\in[t,t+\epsilon]}\cpartial\Loss(\vq(T_{12}+\tau))\right).
    \]
    Then we know $B_{j,\epsilon}\subseteq -C_{j,\epsilon}$ for all $j \ge 1$ and
    $\epsilon>0$. Notice that $C_{j,\epsilon}$ and $C_\epsilon$ are also
    monotonic with respect to $j$ and $\epsilon$ respectively so we can take the
    respective limit. As for $\tau\in[t,t+\epsilon]$,
    $\lim_{j\to\infty}\vtheta_{i_j}(T_{12}+\tau)=\vq(T_{12}+\tau)$, by the
    upper-semicontinuity of $\cpartial\Loss$,  $\lim_{j\to\infty}
    C_{j,\epsilon}\subseteq C_{\epsilon}$. Then
    $\frac{\vq(t+\epsilon)-\vq(t)}{\epsilon} \in
    \lim\limits_{j\to\infty}B_{j,\epsilon}\subseteq
    \lim\limits_{j\to\infty}C_{j,\epsilon}\subseteq C_\epsilon$.
    
    When $t \in [t_0, T)$ and $\vq(t)$ is differential at $t$, we can take the limit $\epsilon\to 0$, and by the upper-semicontinuity of
    $\cpartial\Loss$ again, we have
    \[\vq'(t)\in\lim_{\epsilon\to 0}C_\epsilon\subseteq
    \overline{\conv}(-\cpartial\Loss(\vq(T_{12}+t)))=-\cpartial\Loss(\vq(T_{12}+t))\]
    as $\cpartial\Loss(\vq(T_{12}+t))$ is closed convex for any $t$. Therefore
    $\vq(t)$ is indeed a gradient flow trajectory.
\end{proof}

\begin{proof}[Proof for \Cref{lm:phase2-main}]
	We can prove \Cref{lm:phase2-main} by combining \Cref{lm:phase2-1-good,lm:phase2-1-main,lm:phase-II-2} together. For $-\infty<t \le t_0$,  by \Cref{lm:phase2-1-main}, $\normM{\tildevtheta(t) - e^{\lambda_0 t}\hatvtheta} \le \frac{4m\normMsm{\hatvtheta}^3}{\lambda_0} e^{3\lambda_0 t}$. With 	$\hatvtheta := (\brmsplus[\barb], \brmsplus[\barb]\barvmu, \brmsminus[\barb], \brmsminus[\barb]\barvmu)$, by choosing a threshold $t_s<t_0$ small enough, we can have for any $t\leq t_s$,

	\begin{itemize}
	    \item $\tildea_1(t)\geq e^{\lambda_0 t}\brmsplus[\barb]-\frac{4m\normMsm{\hatvtheta}^3}{\lambda_0} e^{3\lambda_0t}>0$;
	    \item $\tildea_2(t)\leq e^{\lambda_0 t}\brmsminus[\barb]+\frac{4m\normMsm{\hatvtheta}^3}{\lambda_0} e^{3\lambda_0 t}<0$;    \item $\dotp{\tildevw_1(t)}{\vwopt}\geq e^{\lambda_0 t}\brmsplus[\barb]\dotp{\barvmu}{\vwopt}-\frac{4m\normMsm{\hatvtheta}^3}{\lambda_0} e^{3\lambda_0 t}>\frac{8m\normMsm{\hatvtheta}^3}{\lambda_0} e^{3\lambda_0 t}>0$;
	    \item
	    $\dotp{\tildevw_2(t)}{\vwopt}\leq e^{\lambda_0 t}\brmsminus[\barb]\dotp{\barvmu}{\vwopt}+\frac{4m\normMsm{\hatvtheta}^3}{\lambda_0} e^{3\lambda_0 t}<-\frac{8m\normMsm{\hatvtheta}^3}{\lambda_0} e^{3\lambda_0 t}<0$.
	\end{itemize}
	Then $\tildevtheta(t)\neq 0$ for all $t\leq t_s$. For $t>t_s$, we know $\tildevtheta(t)\neq 0$ by applying \Cref{thm:loss-convergence}. Finally by \Cref{lm:phase2-1-main,lm:phase-II-2} we know $\lim_{\sigmainit \to 0} \vtheta\left(T_{12} + t\right) = \pi_{\barvb}(\tildevtheta(t))$ for all $t$.
\end{proof}

\section{Proofs for Phase III}

\subsection{Two Neuron Case: Margin Maximization}

In this subsection we prove \Cref{thm:two-neuron-max-margin} for the symmetric
datasets. By \Cref{thm:loss-convergence} and \Cref{thm:converge-kkt-margin}, we
know that gradient flow must converge in a KKT-margin direction of width-$2$
two-layer Leaky ReLU network (\Cref{def:kkt-margin-lkrelu}). Thus we first give
some characterizations for KKT-margin directions by proving \Cref{lm:cross-ineq}
and \Cref{lm:two-neuron-kkt-equal}.

\begin{lemma} \label{lm:cross-ineq}
	Given $\vu_1, \vu_2 \in \R^d$, if $y_i(\phi(\dotpsm{\vu_1}{\vx_i}) - \phi(-\dotpsm{\vu_2}{\vx_i})) \ge 1$ for all $i \in [n]$, then 
	\[
		y_i (\hai \dotpsm{\vu_2}{\vx_i} + \hbi \dotpsm{\vu_1}{\vx_i}) \ge 1,
	\]
	for all Clarke's sub-differentials $\hai \in \cder{\phi}(\dotpsm{\vu_1}{\vx_i}), \hbi \in \cder{\phi}(-\dotpsm{\vu_2}{\vx_i})$.
\end{lemma}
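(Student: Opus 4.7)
The plan is to reduce the inequality to a sign condition via Euler's identity, and then complete by a case analysis on the activation pattern. The key observation is that $\phi$ is positively $1$-homogeneous, so by \Cref{thm:homo-euler} any $\hat h \in \cder{\phi}(z)$ satisfies $\phi(z) = \hat h \cdot z$ (this is also immediate from the explicit form of Leaky ReLU). Writing $p_i := \dotpsm{\vu_1}{\vx_i}$ and $q_i := \dotpsm{\vu_2}{\vx_i}$, and applying the identity with $\hai \in \cder{\phi}(p_i)$ at $p_i$ and with $\hbi \in \cder{\phi}(-q_i)$ at $-q_i$, the hypothesis rewrites as
\[
y_i \bigl(\hai\, p_i + \hbi\, q_i\bigr) \;=\; y_i\bigl(\phi(p_i) - \phi(-q_i)\bigr) \;\ge\; 1.
\]
The conclusion we want is $y_i(\hai q_i + \hbi p_i) \ge 1$, so subtracting reduces the whole lemma to showing, for every admissible $(\hai, \hbi)$,
\[
y_i(\hai - \hbi)(p_i - q_i) \;\le\; 0.
\]

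Next I would perform a case split on the signs of $p_i$ and $-q_i$. Recall that $\cder{\phi}$ equals $\{1\}$ on the positive axis, $\{\alphaLK\}$ on the negative axis, and $[\alphaLK, 1]$ at the origin. Whenever $p_i$ and $-q_i$ share the same sign (i.e.\ $p_i>0, q_i<0$ or $p_i<0, q_i>0$), we automatically have $\hai = \hbi$ and the reduced inequality is trivial. The boundary cases $p_i = 0$ or $q_i = 0$ are equally easy: one of the two factors $(\hai - \hbi)$, $(p_i - q_i)$ vanishes, or the freedom in choosing the sub-gradient at zero can be absorbed into the identity $\phi(z) = \hat h z$ without affecting either side.

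The crux of the argument is the two remaining cases, where $p_i$ and $q_i$ have the same sign; here $\hai - \hbi = \pm(1-\alphaLK)$ is nonzero, so the ordering of $p_i$ versus $q_i$ must be extracted from the margin hypothesis. The plan is to combine the rewritten hypothesis with the Leaky ReLU identities $\phi(z) - \phi(-z) = (1+\alphaLK)z$ and $\phi(z) + \phi(-z) = (1-\alphaLK)|z|$. These first pin down the sign of $y_i$ (e.g.\ if $p_i, q_i > 0$ then $\phi(p_i) - \phi(-q_i) = p_i + \alphaLK q_i > 0$, forcing $y_i = +1$; symmetrically $p_i, q_i < 0$ forces $y_i = -1$), after which the margin bound of $1$ can be leveraged to control $y_i(p_i - q_i)$ against the smaller coefficient $\alphaLK$ and conclude $y_i(\hai - \hbi)(p_i - q_i) \le 0$.

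I expect the mixed-sign cases to be the main obstacle: the inequality is not linear-algebraically obvious, so the argument has to exploit both the explicit values $\hai, \hbi \in \{\alphaLK, 1\}$ and the slack $\ge 1$ carefully. The arguments in cases A ($p_i, q_i >0$) and D ($p_i, q_i <0$) are mirror images under $(\vu_1, \vu_2, y_i) \mapsto (-\vu_1, -\vu_2, -y_i)$, so it suffices to handle one of them and obtain the other by this symmetry.
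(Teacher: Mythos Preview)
Your reduction to $y_i(\hai-\hbi)(p_i-q_i)\le 0$ is sufficient but \emph{not} necessary, and in fact this reduced inequality is false. Take the symmetric two-point dataset $\vx_1=(1,0)$, $y_1=1$, $\vx_2=-\vx_1$, $y_2=-1$, with $\alphaLK=\tfrac12$, $\vu_1=(2,0)$, $\vu_2=(1,0)$. Both margin hypotheses hold (values $2.5$ and $2$), and the lemma's conclusion holds at $i=1$ (value $2$), yet $y_1(\hai[1]-\hbi[1])(p_1-q_1)=(1-\tfrac12)(2-1)=\tfrac12>0$. So in your ``crux'' cases $p_i,q_i>0$ or $p_i,q_i<0$, the single hypothesis $y_i(p_i+\alphaLK q_i)\ge 1$ cannot be leveraged to force $y_i(p_i-q_i)\le 0$; that ordering simply need not hold.

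The missing ingredient is \Cref{ass:sym}: the lemma is stated in the symmetric setting, and since the hypothesis is required for all $i\in[n]$, applying it to the reflected point $(-\vx_i,-y_i)$ yields the second inequality
\[
-y_i\bigl(\phi(-\dotpsm{\vu_1}{\vx_i})-\phi(\dotpsm{\vu_2}{\vx_i})\bigr)\ \ge\ 1.
\]
Without this second bound the lemma is false (e.g.\ a single point $\vx_1=1$, $y_1=1$, $\alphaLK=0.005$, $\vu_1=100$, $\vu_2=0.005$ satisfies the hypothesis but gives conclusion $0.505<1$). The paper does not go through your difference-of-margins reduction at all. In the nondegenerate case it splits on whether $\hai=\hbi$: if they agree, the swap $\hai q_i+\hbi p_i$ literally equals $\phi(p_i)-\phi(-q_i)$ and the original hypothesis applies; if they disagree, one checks that $\hai q_i=\phi(q_i)$ and $\hbi p_i=-\phi(-p_i)$, so the swap equals $\phi(q_i)-\phi(-p_i)$ and it is the \emph{reflected} hypothesis that gives $\ge 1$. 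The boundary case $p_i=0$ (or $q_i=0$) likewise needs the reflected inequality to get $y_i q_i\ge 1/\alphaLK$ before absorbing the arbitrary $\hai\in[\alphaLK,1]$; your claim that ``one of the two factors vanishes'' is not correct there.
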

\begin{proof}
	We prove by cases for any fixed $i \in [n]$. By \Cref{ass:sym} we have
	\[
		y_i(\phi(\dotpsm{\vu_1}{\vx_i}) - \phi(-\dotpsm{\vu_2}{\vx_i})) \ge 1, \qquad -y_i(\phi(-\dotpsm{\vu_1}{\vx_i}) - \phi(\dotpsm{\vu_2}{\vx_i})) \ge 1.
	\]	
	\myparagraph{Case 1.} Suppose that $\dotpsm{\vu_1}{\vx_i} \ne 0$, $\dotpsm{\vu_2}{\vx_i} \ne 0$. Then we have $\hai, \hbi \in \{\alphaLK, 1\}$. If $\hai = \hbi$, then $\hai \dotpsm{\vu_2}{\vx_i} = -\phi(-\dotpsm{\vu_2}{\vx_i})$, $\hbi \dotpsm{\vu_1}{\vx_i} = \phi(\dotpsm{\vu_1}{\vx_i})$, and thus we have
		\begin{align*}
		y_i (\hai \dotpsm{\vu_2}{\vx_i} + \hbi \dotpsm{\vu_1}{\vx_i}) &= y_i (-\phi(-\dotpsm{\vu_2}{\vx_i}) + \phi(\dotpsm{\vu_1}{\vx_i})) \\
		&= y_i (\phi(\dotpsm{\vu_1}{\vx_i})-\phi(-\dotpsm{\vu_2}{\vx_i})) \ge 1.
		\end{align*}
	Otherwise, $\hai \ne \hbi$, then we have $\hai \dotpsm{\vu_2}{\vx_i} = \phi(\dotpsm{\vu_2}{\vx_i})$, $\hbi \dotpsm{\vu_1}{\vx_i} = -\phi(-\dotpsm{\vu_1}{\vx_i})$, and thus
		\begin{align*}
		y_i (\hai \dotpsm{\vu_2}{\vx_i} + \hbi \dotpsm{\vu_1}{\vx_i}) &= y_i \left(\phi(\dotpsm{\vu_2}{\vx_{i}}) - \phi(-\dotpsm{\vu_1}{\vx_{i}})\right) \\
		&= -y_{i} \left(\phi(-\dotpsm{\vu_1}{\vx_{i}}) - \phi(\dotpsm{\vu_2}{\vx_{i}})\right) \ge 1.
		\end{align*}
	
	\myparagraph{Case 2.} Suppose that $\dotpsm{\vu_1}{\vx_i} = 0$ or $\dotpsm{\vu_2}{\vx_i} = 0$. WLOG we assume that $\dotpsm{\vu_1}{\vx_i} = 0$ (the case of $\dotpsm{\vu_2}{\vx_i} = 0$ can be proved similarly). Then we have
	\[
		- y_i\phi(-\dotpsm{\vu_2}{\vx_i})) \ge 1, \qquad y_i\phi(\dotpsm{\vu_2}{\vx_i})) \ge 1.
	\]
	If $\dotpsm{\vu_2}{\vx_i} = 0$, then the feasibility cannot be satisfied. So we must have $\dotpsm{\vu_2}{\vx_i} \ne 0$ and $\hbi \in \{\alphaLK, 1\}$. This implies that $y_i\dotpsm{\vu_2}{\vx_{i}} \ge \frac{1}{\alphaLK}$.
	
	Since $\dotpsm{\vu_1}{\vx_i} = 0$, we have $\hai \in [\alphaLK, 1]$. Therefore,
	\[
		y_i (\hai \dotpsm{\vu_2}{\vx_i} + \hbi \dotpsm{\vu_1}{\vx_i}) = y_i \hai \dotpsm{\vu_2}{\vx_i} \ge y_i \alphaLK \dotpsm{\vu_2}{\vx_i} \ge 1,
	\]
	which completes the proof.
\end{proof}

\begin{lemma} \label{lm:two-neuron-kkt-equal}
	If $(\vw_1, \vw_2, a_1, a_2)$ is along a KKT-margin direction of width-$2$ two-layer Leaky ReLU network and $a_1 > 0, a_2 < 0$, then $\vw_1 = -\vw_2$, $a_1 = -a_2 = \normtwosm{\vw_1}$.
\end{lemma}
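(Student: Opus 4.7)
The plan is to change variables to $\vu_1 := a_1\vw_1$ and $\vu_2 := a_2\vw_2$, which by positive homogeneity of $\phi$ rewrites the network as $f_{\vtheta}(\vx) = \phi(\dotp{\vu_1}{\vx}) - \phi(-\dotp{\vu_2}{\vx})$---exactly the form treated by \Cref{lm:cross-ineq}. By \Cref{lm:a-w-equal}, $\normtwosm{\vu_1} = a_1^2$ and $\normtwosm{\vu_2} = a_2^2$. Multiplying the $\vw_k$-stationarity in \Cref{def:kkt-margin-lkrelu} by $a_k$ and using that $\cder{\phi}(cz) = \cder{\phi}(z)$ for $c > 0$ translates KKT stationarity into $\vu_1/\normtwosm{\vu_1} = \sum_i \lambda_i y_i \hai[i]\vx_i$ with $\hai[i] \in \cder{\phi}(\dotp{\vu_1}{\vx_i})$, and symmetrically $\vu_2/\normtwosm{\vu_2} = \sum_i \lambda_i y_i \hbi[i]\vx_i$ with $\hbi[i] \in \cder{\phi}(-\dotp{\vu_2}{\vx_i})$ (the sign flip in the second comes from $a_2 < 0$). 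After rescaling so that $\qmin(\vtheta) = 1$, the Euler identity $\phi(z) = \alpha z$ for $\alpha \in \cder{\phi}(z)$, combined with complementary slackness, yields the energy equality $\normtwosm{\vu_1} + \normtwosm{\vu_2} = \sum_i \lambda_i q_i(\vtheta) = \sum_i \lambda_i$.

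Next I would apply \Cref{lm:cross-ineq} to the feasible pair $(\vu_1, \vu_2)$ and take the weighted sum against the multipliers,
\[
\sum_i \lambda_i y_i \bigl(\hai[i] \dotp{\vu_2}{\vx_i} + \hbi[i] \dotp{\vu_1}{\vx_i}\bigr) \ge \sum_i \lambda_i.
\]
Substituting the two KKT equations evaluates the left-hand side as $\dotp{\vu_1}{\vu_2}\bigl(1/\normtwosm{\vu_1} + 1/\normtwosm{\vu_2}\bigr)$, while the energy equality rewrites the right-hand side as $\normtwosm{\vu_1} + \normtwosm{\vu_2}$. Rearranging yields $\dotp{\vu_1}{\vu_2} \ge \normtwosm{\vu_1}\normtwosm{\vu_2}$. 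Combined with Cauchy--Schwarz, this forces equality and hence $\vu_1 = c\vu_2$ for some scalar $c > 0$ (neither vector vanishes since $a_1 > 0$ and $a_2 < 0$).

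Finally I would pin down $c = 1$ by exploiting that every instance of the inequality from \Cref{lm:cross-ineq} must be tight at each $i$ with $\lambda_i > 0$ (at least one such index exists, otherwise all parameters would vanish). With $\vu_1 = c\vu_2$ the two inner products $\dotp{\vu_1}{\vx_i}$ and $\dotp{\vu_2}{\vx_i}$ share the same sign, which is nonzero whenever $q_i > 0$. A short case analysis on this common sign---say $\dotp{\vu_2}{\vx_i} > 0$---gives $q_i = y_i \dotp{\vu_2}{\vx_i}(c + \alphaLK)$ while $y_i(\hai[i] \dotp{\vu_2}{\vx_i} + \hbi[i] \dotp{\vu_1}{\vx_i}) = y_i \dotp{\vu_2}{\vx_i}(1 + c\alphaLK)$; setting both equal to $1$ forces $c + \alphaLK = 1 + c\alphaLK$, i.e.~$c = 1$ since $\alphaLK \in (0, 1)$. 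The negative-sign case is symmetric. Hence $\vu_1 = \vu_2$, so $a_1^2 = \normtwosm{\vu_1} = \normtwosm{\vu_2} = a_2^2$, giving $a_1 = -a_2 = \normtwosm{\vw_1}$ and $\vw_1 = -\vw_2$.

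The main obstacle is bookkeeping of the subgradient selections $\hai[i], \hbi[i]$ across the three places they appear (KKT stationarity, Euler's identity, and the tightness analysis). Because $\cder{\phi}(z)$ is single-valued away from $z = 0$, and the margin positivity $q_i > 0$ at every support vector rules out $\dotp{\vu_2}{\vx_i} = 0$, the same selections can be used throughout and the argument closes.
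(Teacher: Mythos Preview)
Your proposal is correct and follows essentially the same approach as the paper: the same change of variables to $\vu_1,\vu_2$, the same energy equality $\normtwosm{\vu_1}+\normtwosm{\vu_2}=\sum_i\lambda_i$, and the same combination of \Cref{lm:cross-ineq} with Cauchy--Schwarz to force colinearity. The only noteworthy difference is the endgame: the paper subtracts the complementary-slackness identity from the tight cross-inequality to get $(\hai[i]-\hbi[i])(\dotpsm{\vu_1}{\vx_i}-\dotpsm{\vu_2}{\vx_i})=0$ and then argues via the span of the support vectors, whereas you parametrize $\vu_1=c\vu_2$ and read off $c=1$ directly from a single support vector---a slightly cleaner finish that avoids the span argument.
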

\begin{proof}
	WLOG we assume that $\qmin(\vtheta) = 1$. By \Cref{def:kkt-margin-lkrelu}
	and \Cref{lm:a-w-equal}, there exist $\lambda_1, \dots, \lambda_n \ge 0$ and
	$\hai[1], \dots, \hai[n] \in \R, \hbi[1], \dots, \hbi[n] \in \R$ such that
	$\hai \in \cder{\phi}(\dotpsm{\vw_1}{\vx_i})$, $\hbi \in
	\cder{\phi}(\dotpsm{\vw_2}{\vx_i})$, and the following conditions hold:
	\begin{enumerate}
		\item $\vw_1 = a_1 \sum_{i \in [n]} \lambda_i y_i \hai[i] \vx_i$, $\vw_2 = a_2 \sum_{i \in [n]} \lambda_i y_i \hbi[i] \vx_i$;
		\item $a_1 = \normtwosm{\vw_1}$, $a_2 = -\normtwosm{\vw_2}$;
		\item For all $i \in [n]$, if $q_i(\vtheta) \ne 1$ then $\lambda_i = 0$.
	\end{enumerate}
	Let $\vu_1 = a_1 \vw_1$ and $\vu_2 = -a_2 \vw_2$. Let $\barvu_1 := \frac{\vu_1}{\normtwosm{\vu_1}}, \barvu_2 := -\frac{\vu_2}{\normtwosm{\vu_2}}$. Then the following conditions hold for all $i \in [n]$:
	\begin{align}
		\barvu_1 - \sum_{i=1}^{n} \lambda_i \hai y_i \vx_i &= 0, \label{eq:leaky-u1-stat} \\
		\barvu_2 - \sum_{i=1}^{n} \lambda_i \hbi y_i \vx_i &= 0, \label{eq:leaky-u2-stat} \\
		\lambda_i (1 - y_i(\phi(\dotpsm{\vu_1}{\vx_i}) - \phi(-\dotpsm{\vu_2}{\vx_i}))) &= 0. \label{eq:leaky-compl-slack}
	\end{align}
	
	By homogeneity, $\hai \cdot \dotpsm{\vu_1}{\vx_i} = \phi(\dotpsm{\vu_1}{\vx_i}), \hbi \cdot \dotpsm{\vu_2}{\vx_i} = -\phi(-\dotpsm{\vu_2}{\vx_i})$. Left-multiplying $(\vu_1)^\top$ or $(\vu_2)^{\top}$ on both sides of \eqref{eq:leaky-u1-stat}, we have
	\begin{align}
		\normtwosm{\vu_1} - \sum_{i=1}^{n}  \lambda_i y_i \phi(\dotpsm{\vu_1}{\vx_i}) &= 0, \label{eq:leaky-u1-stat-var1} \\
		\dotpsm{\barvu_1}{\barvu_2} \normtwosm{\vu_2} - \sum_{i=1}^{n} \lambda_i \hai y_i \dotpsm{\vu_2}{\vx_i} &= 0. \label{eq:leaky-u1-stat-var2}
	\end{align}
	Similarly, we have
	\begin{align}
		\normtwosm{\vu_2} + \sum_{i=1}^{n}  \lambda_i y_i \phi(-\dotpsm{\vu_2}{\vx_i}) &= 0, \label{eq:leaky-u2-stat-var1} \\
		\dotpsm{\barvu_1}{\barvu_2} \normtwosm{\vu_1} - \sum_{i=1}^{n} \lambda_i \hbi y_i \dotpsm{\vu_1}{\vx_i} &= 0. \label{eq:leaky-u2-stat-var2}
	\end{align}
	Combining \eqref{eq:leaky-u1-stat-var1} and \eqref{eq:leaky-u2-stat-var1}, we have
	\begin{align}
		\normtwosm{\vu_1} + \normtwosm{\vu_2}
		&= \sum_{i=1}^{n} \lambda_i y_i \phi(\dotpsm{\vu_1}{\vx_i}) - \sum_{i=1}^{n}  \lambda_i y_i \phi(-\dotpsm{\vu_2}{\vx_i}) \\
		&= \sum_{i=1}^{n} \lambda_i y_i \left(\phi(\dotpsm{\vu_1}{\vx_i}) -  \phi(-\dotpsm{\vu_2}{\vx_i})\right) \\
		&= \sum_{i=1}^{n} \lambda_i,
	\end{align}
	where the last equality is due to \eqref{eq:leaky-compl-slack}.
	
	Combining \eqref{eq:leaky-u1-stat-var2} and \eqref{eq:leaky-u2-stat-var2}, we have
	\begin{align}
		\dotpsm{\barvu_1}{\barvu_2}(\normtwosm{\vu_1} + \normtwosm{\vu_2}) &= \sum_{i=1}^{n} \lambda_i \hai y_i \dotpsm{\vu_2}{\vx_i} + \sum_{i=1}^{n} \lambda_i \hbi y_i \dotpsm{\vu_1}{\vx_i} \\
		&= \sum_{i=1}^{n} \lambda_i y_i \left(\hai \dotpsm{\vu_2}{\vx_i} + \hbi \dotpsm{\vu_1}{\vx_i}\right) \label{eq:leaky-comb-u1u2-stat-var2-ineq-prev} \\
		&\ge \sum_{i=1}^{n} \lambda_i, \label{eq:leaky-comb-u1u2-stat-var2-ineq}
	\end{align}
	where the last inequality is due to \Cref{lm:cross-ineq}. Since we have deduced that $\normtwosm{\vu_1} + \normtwosm{\vu_2} = \sum_{i=1}^{n} \lambda_i$, we further have
	\[
		\dotpsm{\barvu_1}{\barvu_2}(\normtwosm{\vu_1} + \normtwosm{\vu_2}) \ge \normtwosm{\vu_1} + \normtwosm{\vu_2}.
	\]
	Combining this with $\dotpsm{\barvu_1}{\barvu_2} \le \normtwosm{\barvu_1} \normtwosm{\barvu_2} \le 1$, we have $1 \le \dotpsm{\barvu_1}{\barvu_2} \le 1$. So all the inequalities become equalities, and thus $\barvu_1 = \barvu_2$. \eqref{eq:leaky-comb-u1u2-stat-var2-ineq-prev} also equals to \eqref{eq:leaky-comb-u1u2-stat-var2-ineq}, so
	\begin{equation} \label{eq:leaky-cross-eq}
		y_i \left(\hai \dotpsm{\vu_2}{\vx_i} + \hbi \dotpsm{\vu_1}{\vx_i}\right) = 1,
	\end{equation}
	whenever $\lambda_i \ne 0$.
	
	By \eqref{eq:leaky-compl-slack}, we have $y_i \left(\hai \dotpsm{\vu_1}{\vx_i} + \hbi \dotpsm{\vu_2}{\vx_i}\right) = 1$ whenever $\lambda_i \ne 0$. Combining this with \eqref{eq:leaky-cross-eq}, we have
	\[
		y_i (\hai - \hbi) \dotpsm{\vu_1}{\vx_i} = y_i (\hai - \hbi) \dotpsm{\vu_2}{\vx_i}.
	\]
	Then we prove that $\dotpsm{\vu_1}{\vx_i} = \dotpsm{\vu_2}{\vx_i}$ by discussing two cases:
	\begin{enumerate}
		\item If $\dotpsm{\vu_1}{\vx_i} = 0$, then $\dotpsm{\vu_2}{\vx_i} = 0$ since $\barvu_1 = \barvu_2$;
		\item Otherwise, we have $(\hai, \hbi) = (1, \alphaLK)$ or $(\alphaLK, 1)$ by symmetry, so $\hai \ne \hbi$ and thus $\dotpsm{\vu_1}{\vx_i} = \dotpsm{\vu_2}{\vx_i}$.
	\end{enumerate}
	This means $\vu_1$ and $\vu_2$ have the same projection onto the linear space spanned by $\{ \vx_i :  \lambda_i \ne 0\}$. By \eqref{eq:leaky-u1-stat} and \eqref{eq:leaky-u2-stat}, $\vu_1$ and $\vu_2$ are in the span of $\{ \vx_i : i \in [n], \lambda_i \ne 0\}$. Therefore, $\vu_1 = \vu_2$ and we can easily deduce that $\vw_1 = -\vw_2$, $a_1 = -a_2 = \normtwosm{\vw_1}$.
\end{proof}

\begin{lemma} \label{lm:two-neuron-kkt-all}
	If $\vtheta = (\vw_1, \vw_2, a_1, a_2)$ is along a KKT-margin direction of width-$2$ two-layer Leaky ReLU network and $\normtwo{\vtheta} = 1$, $a_1\ge 0$ and $a_2\le 0$, then one of the following three cases is true:
	\begin{enumerate}
		\item $\vtheta = \frac{1}{2}(\vwopt, -\vwopt, 1, -1)$;
		\item $\vtheta = \frac{1}{\sqrt{2}}(\vwopt, \vzero, 1, 0)$;
		\item $\vtheta = \frac{1}{\sqrt{2}}(\vzero, -\vwopt, 0, -1)$.
	\end{enumerate}
\end{lemma}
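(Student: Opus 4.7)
The plan is to split on the signs of $a_1$ and $a_2$ and reduce each non-degenerate sub-case to the KKT conditions of the convex hard-margin linear SVM, which has $\vwopt$ as its unique optimum. Since \Cref{lm:a-w-equal} gives $\abssm{a_k}=\normtwosm{\vw_k}$ at every KKT-margin direction, $a_k=0$ forces $\vw_k=\vzero$; in particular the sub-case $a_1=a_2=0$ forces $\vtheta=\vzero$, contradicting $\normtwosm{\vtheta}=1$, so it is ruled out.

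In the sub-case $a_1>0$, $a_2<0$, \Cref{lm:two-neuron-kkt-equal} gives $\vw_1=-\vw_2$ and $a_1=-a_2=\normtwosm{\vw_1}$, so $\normtwosm{\vtheta}=1$ pins $a_1=\tfrac12$. The identity $\phi(z)-\phi(-z)=(1+\alphaLK)z$ makes $f_{\vtheta}(\vx)=(1+\alphaLK)a_1\dotp{\vw_1}{\vx}$ linear. Because the output margin satisfies $q_{i+n/2}(\vtheta)=q_i(\vtheta)$ for each symmetric pair, support vectors come in symmetric pairs. Plugging the correct Leaky-ReLU activation patterns ($h_i^{(1)}=1$ at positive-label SVs and $h_i^{(1)}=\alphaLK$ at negative-label SVs) into the KKT stationarity $\vw_1=a_1\sum_i \lambda_i y_i h_i^{(1)}\vx_i$ and lumping each symmetric pair, I expect to reach $\vw_1=a_1\sum_{i\in I_+}\mu_i\vx_i$ with $\mu_i\ge 0$, where $I_+$ indexes the positive-label SVs. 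This is precisely the KKT stationarity of the convex hard-margin linear SVM, so $\vw_1/\normtwosm{\vw_1}=\vwopt$ and feasibility fixes the sign, giving $\vtheta=\tfrac12(\vwopt,-\vwopt,1,-1)$.

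For the two sub-cases where exactly one of $a_1,a_2$ vanishes, consider $a_1>0$, $a_2=0$ (the other is completely symmetric). Then $\vw_2=\vzero$ and the network reduces to a single Leaky-ReLU neuron with $a_1=\normtwosm{\vw_1}=1/\sqrt{2}$. By the symmetry of the data each pair $(\vx_i,-\vx_i)$ with labels $(1,-1)$ has margin ratio $q_j/q_i=\alphaLK<1$ between the negative-label and positive-label entries, so every support vector has label $-1$ and satisfies $h^{(1)}=\alphaLK$. The KKT stationarity becomes $\vw_1=-a_1\alphaLK\sum_{j\in\mathrm{SV}}\lambda_j\vx_j$; rewriting each SV as $\vx_j=-\vx_{i(j)}$ with $\vx_{i(j)}$ its positive-label partner turns this into a non-negative combination $\vw_1=a_1\alphaLK\sum\mu_i\vx_{i(j)}$ over positive-class points, which is again the linear-SVM KKT form. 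Hence $\vw_1=\tfrac{1}{\sqrt{2}}\vwopt$ and $\vtheta=\tfrac{1}{\sqrt{2}}(\vwopt,\vzero,1,0)$; the remaining sub-case gives $\vtheta=\tfrac{1}{\sqrt{2}}(\vzero,-\vwopt,0,-1)$ after swapping neurons and signs.

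The main obstacle in every sub-case is the translation from the non-convex two-layer KKT stationarity to the convex linear-SVM KKT stationarity: one has to track which Leaky-ReLU activation pattern ($1$ or $\alphaLK$) applies at each support vector, and then use the $\vx\leftrightarrow-\vx$ symmetry to rewrite the resulting sum as a \emph{non-negative} linear combination of positive-class data. Once that bookkeeping is done, uniqueness of $\vwopt$ as the optimum of the convex linear SVM closes each case immediately; the norm constraint and feasibility then pin down the exact numerical form of $\vtheta$.
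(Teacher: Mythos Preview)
Your proposal is correct and follows essentially the same route as the paper: case-split on the signs of $a_1,a_2$, use \Cref{lm:two-neuron-kkt-equal} in the non-degenerate case, observe that feasibility forces $\sgn(\dotp{\vw_1}{\vx_i})=y_i$ so the activation pattern is determined, and then reduce the two-layer KKT stationarity (via the data symmetry $\vx_{i+n/2}=-\vx_i$) to the KKT system of the hard-margin linear SVM, whose unique solution is $\vwopt$. The only cosmetic difference is that in the $a_1>0,a_2<0$ case the paper packages the reduction as passing to the reduced variable $\vtheta'=(\vw_1,a_1)$ with constraints $1-a_1(1+\alphaLK)\dotp{\vw_1}{y_i\vx_i}\le 0$ (the identity $\phi(z)-\phi(-z)=(1+\alphaLK)z$ makes this linear without tracking $h_i^{(1)}$ explicitly), whereas you compute $h_i^{(1)}\in\{1,\alphaLK\}$ directly and then pair symmetric points; both arrive at the same conclusion.
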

\begin{proof}
	Suppose $a_1>0$ and $a_2<0$, then by \Cref{lm:two-neuron-kkt-equal}, we know  $\vw_1 = -\vw_2$, $a_1 = -a_2 = \normtwosm{\vw_1}$. Since $q_i(\vtheta)>0,\forall i$, we know $\dotp{\vw_1}{\vx_i}\neq 0,\forall i$, which implies $q_i(\vtheta)$ is differentiable at $\vtheta$. Let $\vtheta' =(\vw_1,a_1)$ and $[\vtheta'; -\vtheta']= (\vw_1,-\vw_1, a_1, -a_1)$, we know $\vtheta'$ is along the KKT direction of the following optimization problem:
		\begin{align*}
		\min          & \quad f([\vtheta';-\vtheta']) \\
		\text{s.t.}   & \quad g_i([\vtheta';-\vtheta']) \le 0, \qquad \forall i \in [n],
	\end{align*}
	where $f([\vtheta';-\vtheta']) = \normtwo{[\vtheta';-\vtheta']}^2 = 2\normtwo{\vw_1}^2+2a_1^2$, and $g_i([\vtheta';-\vtheta']) = 1- q_i([\vtheta';-\vtheta']) = y_ia_i(\phi(\dotp{\vw_1}{\vx_i})- \phi(-\dotp{\vw_1}{\vx_i}))=a_1(1+\alphaLK)\dotp{\vw_1}{y_i\vx_i}$. With a standard analysis, we know $\vw_1$ be in the direction of the max-margin classifier of the original problem, $\vwopt$.
	
	Next we discuss the case where $a_2=0$ ($a_1=0$ follows the same analysis). When $a_2 =0$, since $q_i(\vtheta)>0$ for all $i$, we know $a_1y_i\dotp{\vx_i}{\vx_1}>0$ for all $i$. Thus $q_i(\vtheta)>q_{i+\frac{n}{2}}(\vtheta)=\alphaLK q_i(\vtheta)$, which means only the second half constraints might be active. This reduces the optimization problem to a standard linear-max-margin problem, and $\vw_1$ will be aligned with $\vwopt$.
\end{proof}

\begin{proof}[Proof for \Cref{thm:two-neuron-max-margin}]
	By \Cref{thm:loss-convergence} and \Cref{thm:converge-kkt-margin}, we know $\lim_{t \to +\infty} \frac{\vtheta(t)}{\normtwosm{\vtheta(t)}} $ must be along a KKT-margin direction. By \Cref{lm:two-neuron-kkt-all}, we know that there are only $3$ KKT-margin directions:
	\[
		\frac{1}{2}(\vwopt, -\vwopt, 1, -1), \quad\frac{1}{\sqrt{2}}(\vwopt, \bm{0}, 1, 0), \quad\frac{1}{\sqrt{2}}(\bm{0}, -\vwopt, 0, -1).
	\]
	Thus it suffices to show $\lim_{t \to +\infty} \frac{\vtheta(t)}{\normtwosm{\vtheta(t)}} \neq \frac{1}{\sqrt{2}}(\vwopt, \bm{0}, 1, 0)$. ($\lim_{t \to +\infty} \frac{\vtheta(t)}{\normtwosm{\vtheta(t)}} \neq \frac{1}{\sqrt{2}}(\vwopt, \bm{0}, 1, 0)$ would hold for the same reason.) 
	
	For convenience, we define $i' := i + n/2$ if $1 \le i \le n/2$ and $i' := i - n /2$ if $n/2 < i \le n$. By \Cref{ass:sym} we know that $\vx_{i'} = -\vx_i$ and $y_{i'} = -y_i$.
	
	We first define the angle between $\vwopt$ and $\vw_1(t)$ as $\beta_1(t):=\arccos \frac{\dotp{\vwopt}{\vw_1(t)}}{\normtwosm{\vw_1(t)}}$ and angle between $-\vwopt$ and $\vw_2(t)$ as $\beta_2(t):=\arccos \frac{\dotp{-\vwopt}{\vw_2(t)}}{\normtwosm{\vw_2(t)}}$. Since $\dotp{\vwopt}{\vw_1(0)} > 0$ and $\dotp{-\vwopt}{\vw_2(0)} > 0$, by \Cref{lm:dotp-vw-vwopt} we know that $\beta_1(t), \beta_2(t) \in [0, \pi/2)$ for all $t \ge 0$.
	
	We also define $\eps := \min_{i\in[n]} \left\{ \arcsin \frac{\dotp{y_i\vx_i}{\vwopt}}{\normtwosm{\vx_i}} \right\}$, which can be understood as the angle between $\vx_i$ and the decision boundary determined by the linear separator $\vwopt$.
	
	Below we will prove by contradiction.
	Suppose $\lim_{t \to +\infty} \frac{\vtheta(t)}{\normtwo{\vtheta(t)}} = \frac{1}{\sqrt{2}}(\vwopt, \vzero, 1, 0) =: \barvtheta_{\infty}$ holds. Then $\beta_1(t) \to 0$ and $\frac{\normtwosm{\vw_2(t)}}{\normtwosm{\vw_1(t)}} \to 0$ as $t \to +\infty$. Thus there must exist $T_1 > 0$ such that $\beta_1(t) \le \epsilon/2$.

	Note that $f_{\barvtheta_{\infty}}(\vx_i) = \frac{1}{2} \phi(\dotpsm{\vx_i}{\vwopt})$ for all $i \in [n]$. By symmetry, for $i \in [n/2]$ we have
	\[
		q_i(\barvtheta_{\infty}) - q_{i'}(\barvtheta_{\infty}) = f_{\barvtheta_{\infty}}(\vx_i) - f_{\barvtheta_{\infty}}(-\vx_i) = \frac{1}{2} \dotpsm{\vx_i}{\vwopt} - \frac{\gammaopt}{2}\dotpsm{\vx_i}{\vwopt} = \frac{1 - \gammaopt}{2} \dotpsm{\vx_i}{\vwopt} > 0.
	\]
	By \Cref{thm:loss-convergence}, we also know $\normtwosm{\vtheta(t)} \to \infty$, so $q_i(\vtheta) - q_{i'}(\vtheta) = \normtwosm{\vtheta(t)}(q_i(\barvtheta_{\infty}) - q_{i'}(\barvtheta_{\infty})) \to +\infty$ for all $i \in [n/2]$. Let $g_i(\vtheta) := -\ell'(q_i(\vtheta))$. Then
	\[
		\frac{g_i(\vtheta(t))}{g_{i'}(\vtheta(t))} \sim \frac{\exp(-g_i(\vtheta(t)))}{\exp(-g_{i'}(\vtheta(t)))} = e^{-(g_i(\vtheta(t))-g_{i'}(\vtheta(t)))} \to 0.
	\]
	Thus there must exist $T_2 > 0$ such that $\frac{g_i(\vtheta(t))}{g_{i'}(\vtheta(t))} \le \max\left\{\frac{\cos \eps-\alphaLK}{1-\alphaLK \cos \eps}, 1\right\}$ for all $i \in [n/2]$ and $t \ge T_2$.
	
	We will use these to show that $\frac{\dotp{\vw_{2}(t)}{-\vwopt}}{\dotp{\vw_{1}(t)}{\vwopt}}$ is non-decreasing for $t\ge T:=\max\{T_1,T_2\}$, which further implies $\frac{\normtwosm{\vw_{2}(t)}}{\normtwosm{\vw_{1}(t)}}$ is lower bounded by some constant. Thus it contradicts with the assumption of convergence. 
	
	By \Cref{cor:weight-equal}, we know that $a_1(t) = \normtwosm{\vw_1(t)}$ and $a_2(t) = -\normtwosm{\vw_2(t)}$ for all $t \ge 0$. Then for all $i \in [n]$, we have
	\[
		f_{\vtheta}(\vx_i) = a_1 \phi(\vw_1^{\top} \vx_i) + a_2 \phi(\vw_2^{\top} \vx_i) = \normtwosm{\vw_1} \phi(\vw_1^{\top} \vx_i) - \normtwosm{\vw_2} \phi(\vw_2^{\top} \vx_i).
	\]

	By \eqref{eq:grad-Loss}, if $\vw_1(t), \vw_2(t) \in \OmegaS$ then we have
	\begin{align*}
		\frac{\dd \vw_1}{\dd t} &= \frac{\normtwosm{\vw_1}}{n}\sum_{i \in [n]} g_i(\vtheta)  \phi'(\vw_1^{\top} \vx_i) y_i\vx_i, & -\frac{\dd \vw_2}{\dd t} &=  \frac{\normtwosm{\vw_2}}{n}\sum_{i \in [n]} g_i(\vtheta)  \phi'(\vw_2^{\top} \vx_i) y_i\vx_i.
	\end{align*}
	By symmetry, we can rewrite them as
	\begin{align}
	\frac{\dd \vw_1(t)}{\dd t} &= \frac{\normtwosm{\vw_1(t)}}{n}\sum_{i \in [n/2]} \sai(t) \vx_i, & -\frac{\dd \vw_2(t)}{\dd t} &=  \frac{\normtwosm{\vw_2(t)}}{n}\sum_{i \in [n/2]} \sbi(t) \vx_i. \label{eq:two-neuron-max-margin-sym-dyn}
	\end{align}
	where $\ski(t) := g_i(\vtheta(t)) \phi'(\vw_k^{\top}(t)\vx_i) + g_{i'}(\vtheta(t)) \phi'(-\vw_k^{\top}(t)\vx_i)$. Note that this only holds for $\vw_k(t) \in \OmegaS$. By taking limits through \eqref{eq:def-clarke2}, we know that for a.e.~$t \ge 0$, there exists $\ski(t)$ such that \eqref{eq:two-neuron-max-margin-sym-dyn} holds and
	\begin{equation} \label{eq:ski}
		\ski(t) \in \begin{cases}
		\{ g_i(\vtheta) + \alphaLK g_{i'}(\vtheta) \} & \qquad \text{if }\vw_k^{\top} \vx_i > 0; \\
		\{\alphaLK g_i(\vtheta) + g_{i'}(\vtheta)\} & \qquad \text{if }\vw_k^{\top} \vx_i < 0; \\
		\{\lambda g_i(\vtheta) + (1 + \alphaLK - \lambda) g_{i'}(\vtheta)  : \alphaLK \le \lambda \le 1 \} & \qquad \text{if }\vw_k^{\top} \vx_i = 0.
		\end{cases}
	\end{equation}
	
	By chain rule, for a.e.~$t \ge 0$ we have:
	\begin{align*}
	\frac{\dd}{\dd t}\ln \frac{\dotpsm{\vw_{2}}{-\vwopt}}{\dotpsm{\vw_{1}}{\vwopt}}  
	&= \frac{\dotpsm{\frac{\dd \vw_{2}}{\dd t}}{-\vwopt}}{\dotpsm{\vw_{2}}{-\vwopt}} - \frac{\dotpsm{\frac{\dd \vw_{1}}{\dd t}}{\vwopt}}{\dotpsm{\vw_{1}}{\vwopt}} \\
	&= \frac{\normtwosm{\vw_2}}{\dotpsm{\vw_{2}}{-\vwopt}} \cdot \frac{1}{n}\sum_{i \in [n/2]} \sbi \dotpsm{\vx_i}{\vwopt} - \frac{\normtwosm{\vw_1}}{\dotpsm{\vw_1}{\vwopt}} \cdot \frac{1}{n}\sum_{i \in [n/2]} \sai \dotpsm{\vx_i}{\vwopt} \\
	&= \frac{1}{n} \sum_{i \in [n/2]} \left( \frac{\sbi}{\cos \beta_2} - \frac{\sai}{\cos \beta_1} \right) \dotpsm{\vx_i}{\vwopt}.
	\end{align*}
	Now we are ready to prove $\frac{\dd}{\dd t}\ln \frac{\dotpsm{\vw_{2}}{-\vwopt}}{\dotpsm{\vw_{1}}{\vwopt}} \ge 0$ for $t \ge T$. For this, we only need to show that $\frac{\sbi}{\cos \beta_2} \ge \frac{\sai}{\cos \beta_1}$ in two cases.
	\begin{enumerate}
		\item[Case 1.] When $\beta_1 < \beta_2$, it suffices to show $\sai \le \sbi$. By our choice of $T_1$, we have $\beta_1 \le \eps / 2$, which implies $\vw_1^{\top} \vx_i > 0$ and $\sai = g_i(\vtheta) + \alphaLK g_{i'}(\vtheta)$ for all $i \in [n/2]$. Note that $g_i(\vtheta)) \le g_{i'}(\vtheta)$ according to our choice of $T_2$. Then for any  $\lambda \in [\alphaLK, 1]$ we have
		\[
			\sai = g_i(\vtheta) + \alphaLK g_{i'}(\vtheta) \le \lambda g_i(\vtheta) + (1+\alphaLK-\lambda) g_{i'}(\vtheta).
		\]
		By \eqref{eq:ski}, we therefore have $\sai \le \sbi$.
		\item[Case 2.] If $\beta_1 \ge \beta_2$, then by our choice of $T_1$ we have $\epsilon / 2 \ge \beta_1 \ge \beta_2$. Then for all $i \in[n/2]$, $\vw_2^{\top} \vx_i \le 0$. So we have 
		\begin{align*}
		\frac{\sai}{\sbi} = 
		\frac{g_i(\vtheta) + \alphaLK g_{i'}(\vtheta)}{\alphaLK g_i(\vtheta) + g_{i'}(\vtheta)} = \frac{\frac{g_i(\vtheta)}{g_{i'}(\vtheta)} + \alphaLK } {\alphaLK \frac{g_i(\vtheta)}{g_{i'}(\vtheta)} + 1} \le \cos \eps \le \cos \beta_1(t)\le \frac{\cos \beta_1(t)}{\cos\beta_2(t)}.
		\end{align*}
		Thus $\frac{\sbi}{\cos \beta_2} \ge \frac{\sai}{\cos \beta_1}$.
	\end{enumerate}
	
	Now we have shown that $\frac{\dotpsm{\vw_{2}(t)}{-\vwopt}}{\dotpsm{\vw_{1}(t)}{\vwopt}}\ge \frac{\dotpsm{\vw_{2}(T)}{-\vwopt}}{\dotpsm{\vw_{1}(T)}{\vwopt}}=: r_0$, where $r_0$ is a constant (ratio at time $T$). So for $t \ge T$,
	\begin{align}
	\frac{\normtwosm{\vw_2(t)}}{\normtwosm{\vw_1(t)}} = \frac{\dotp{\vw_{2}(t)}{-\vwopt} \cos \beta_1(t)}{\dotpsm{\vw_{1}(t)}{\vwopt} \cos \beta_2(t)} \ge \frac{\dotpsm{\vw_2(t)}{-\vwopt} \cos \eps}{\dotpsm{\vw_1(t)}{\vwopt}}\ge r_0 \cos \eps,
	\end{align}
	is lower bounded, which contradicts with $\lim_{t \to +\infty} \frac{\vtheta(t)}{\normtwo{\vtheta(t)}} = \frac{1}{\sqrt{2}}(\vwopt, \vzero, 1, 0)$.
\end{proof}

\subsection{Directional Convergence of \texorpdfstring{$L$}{L}-homogeneous Neural Nets}

In this section we consider general $L$-homogeneous neural nets with logistic
loss following the settings introduced in \Cref{sec:review-margin-maximization}.
We define $\alpha(\vtheta)$ and $\tildegamma(\vtheta)$ to be smoothed margin and
its normalized version following \citet{lyu2020gradient}.
\[
	\alpha(\vtheta) = \ell^{-1}(n\Loss(\vtheta)), \qquad \tildegamma(\vtheta) = \frac{\alpha(\vtheta)}{\normtwosm{\vtheta}^L}.
\]
Define $\zeta(t) := \int_{0}^{t} \normtwo{\frac{\dd}{\dd \tau} \frac{\vtheta(\tau)}{\normtwosm{\vtheta(\tau)}}} \dd \tau$ to be the length of the trajectory swept by $\vtheta/\normtwosm{\vtheta}$ from time $0$ to $t$. Define $\beta(t)$ to be the cosine of the angle between $\vtheta(t)$ and $\frac{\dd \vtheta(t)}{\dd t}$.
\[
	\beta(t) := \frac{\dotp{\frac{\dd \vtheta(t)}{\dd t}}{\vtheta(t)}}{\normtwo{\frac{\dd \vtheta(t)}{\dd t}} \cdot \normtwo{\vtheta(t)}}, \qquad \text{for a.e. } t \ge 0.
\]

\subsubsection{Lemmas from Previous Works}

We leverage the following two lemmas from \citet{ji2020directional} on
desingularizing function. Formally, we say that $\Psi: [0, \nu)$ is a
desingularizing function if $\Psi$ is continuous on $[0, \nu)$ with $\Psi(0) =
0$ and continuously differentiable on $(0, \nu)$ with $\Psi' > 0$.
\begin{lemma}[Lemma 3.6, \citealt{ji2020directional}] \label{lm:psi-a}
	Given a locally Lipschitz definable function $f$ with an open domain $D \subseteq \{\vtheta : \normtwosm{\vtheta} > 1\}$, for any $c, \eta > 0$, there exists $\nu > 0$ and a definable desingularizing function $\Psi$ on $[0, \nu)$ such that
	\[	
		\Psi'(f(\vtheta)) \cdot \normtwosm{\vtheta} \normtwo{\barcpartial f(\vtheta)} \ge 1,
	\]
	whenever $f(\vtheta) \in (0, \nu)$ and $\normtwo{\barcpartialperp f(\vtheta)} \ge c \normtwosm{\vtheta}^{\eta} \normtwo{\barcpartialr f(\vtheta)}$.
\end{lemma}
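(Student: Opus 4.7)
The plan is to reduce the target inequality to a uniform parametric Kurdyka--{\L}ojasiewicz (KL) inequality on the compact sphere $\sphS^{D-1}$ via polar reparametrization. First I would write $\vtheta = \rho\,\barvtheta$ with $\rho = \normtwosm{\vtheta} > 1$ and $\barvtheta \in \sphS^{D-1}$, and for each $\rho > 1$ consider the sphere restriction $F_\rho(\barvtheta) := f(\rho\,\barvtheta)$. A direct chain-rule computation at points of differentiability gives the key identity $\normtwo{\nabla_{\sphS^{D-1}} F_\rho(\barvtheta)} = \rho\,\normtwo{\barcpartialperp f(\vtheta)}$, since the Riemannian sphere gradient equals $\rho\,(\mI - \barvtheta\barvtheta^{\top})\nabla f(\vtheta)$; this identity lifts to Clarke subdifferentials by the usual tame convex-hull-of-limits argument. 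Since trivially $\rho\,\normtwo{\barcpartial f(\vtheta)} \ge \rho\,\normtwo{\barcpartialperp f(\vtheta)} = \normtwo{\nabla_{\sphS^{D-1}} F_\rho(\barvtheta)}$, it therefore suffices to find a single desingularizing $\Psi$ satisfying $\Psi'(F_\rho(\barvtheta))\,\normtwo{\nabla_{\sphS^{D-1}} F_\rho(\barvtheta)} \ge 1$ uniformly over all $\rho > 1$ and all $\barvtheta$ with $F_\rho(\barvtheta) \in (0,\nu)$ that satisfy the hypothesis.

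Second, I would establish this uniform sphere-KL bound by applying the definable KL inequality parametrically in $\rho$. Since $f$ is definable on $D$, so is $(\rho,\barvtheta) \mapsto F_\rho(\barvtheta)$ on $(1,\infty)\times\sphS^{D-1}$. By cell decomposition for o-minimal structures, the parameter range $(1,\infty)$ stratifies into finitely many definable cells on each of which the critical behaviour of $F_\rho$ on the sphere is qualitatively constant. On each cell, the classical KL inequality applied to $F_\rho$ on the compact sphere yields a definable desingularizing function valid near each critical level. Crucially, the hypothesis $\normtwo{\barcpartialperp f(\vtheta)} \ge c\,\rho^{\eta}\,\normtwo{\barcpartialr f(\vtheta)}$ rules out points where $\nabla_{\sphS^{D-1}} F_\rho$ vanishes but $\barcpartial f$ does not, so the only critical behaviour one has to account for comes from true critical points of $f$ on $\R^D$ accumulating on $\{F_\rho = 0\}$. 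A finite cover of $\sphS^{D-1}$ combined with the finitely many cells in $\rho$ gives a finite collection of desingularizing functions, and taking their minimum (again definable and desingularizing) produces a single $\Psi$ valid across all $\rho$ and $\barvtheta$.

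The main obstacle I expect is precisely this last step: uniformity of the desingularizing function over the non-compact parameter $\rho \in (1,\infty)$. Without definability there is no way to control $\Psi$ uniformly as $\rho \to \infty$, because the critical values and critical sets of $F_\rho$ could shift arbitrarily; tameness is exactly what forces the family $\{F_\rho\}$ to break into finitely many qualitative types, enabling the finite-minimum construction. A secondary but delicate technical point is lifting the chain-rule identity $\normtwo{\nabla_{\sphS^{D-1}}F_\rho} = \rho\,\normtwo{\barcpartialperp f}$ from smooth gradients to Clarke subdifferentials; I would handle this by approximating any Clarke subgradient by smooth gradients on the dense set of differentiability points, applying the identity pointwise, and passing to a convex hull of limits using upper semicontinuity of $\cpartial f$ and definability. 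With the polar reduction to the sphere and the uniform parametric KL combined, the desired inequality follows.
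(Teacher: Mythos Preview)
The paper does not prove this lemma; it is quoted from \citet{ji2020directional} (their Lemma~3.6) and used as a black box in the proof of \Cref{lm:phase-4-main-kl}. So there is no proof in the present paper to compare your proposal against.

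Regarding the proposal on its own terms: the polar reparametrization and the identity $\normtwo{\nabla_{\sphS^{D-1}} F_\rho(\barvtheta)} = \rho\,\normtwo{\barcpartialperp f(\vtheta)}$ are correct and are indeed the right starting point. The weak link is your uniformity step. Asserting that ``cell decomposition of $(1,\infty)$ gives finitely many qualitative types, so take the minimum of finitely many desingularizing functions'' is not a proof: on an unbounded cell the desingularizing function for $F_\rho$ on the sphere could still degenerate as $\rho\to\infty$, and nothing in your sketch rules this out. A finite cell decomposition controls the \emph{combinatorial} structure of the family, not the quantitative KL data on each cell. You have correctly identified this as the main obstacle, but the resolution you outline (finitely many cells $\Rightarrow$ finitely many $\Psi$'s $\Rightarrow$ take the minimum) presupposes exactly what needs to be shown, namely that on each unbounded cell a single $\Psi$ works for all $\rho$. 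Establishing that is the actual technical content of the lemma in \citet{ji2020directional}, and requires working directly with the two-variable definable map $(\rho,\barvtheta)\mapsto f(\rho\barvtheta)$ rather than fixing $\rho$ and gluing afterward.
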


\begin{lemma}[Corollary of Lemma 3.7, \citealt{ji2020directional}] \label{lm:psi-b}
	Given a locally Lipschitz definable function $f$ with an open domain $D \subseteq \{\vtheta : \normtwosm{\vtheta} > 1\}$, for any $\lambda > 0$, there exists $\nu > 0$ and a definable desingularizing function $\Psi$ on $[0, \nu)$ such that
	\[
		\Psi'(f(\vtheta)) \cdot \normtwosm{\vtheta}^{1+\lambda} \normtwo{\barcpartial f(\vtheta)} \ge 1,
	\]
	whenever $f(\vtheta) \in (0, \nu)$.
\end{lemma}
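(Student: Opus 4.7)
The plan is to derive this result as a direct corollary of Kurdyka's definable desingularization theorem, following the strategy of Lemma 3.7 in \citet{ji2020directional}. The key obstacle to applying the standard definable KL inequality directly is that the domain $D$ is radially unbounded, so the usual argument --- which patches local desingularizers over a compact set using definability --- does not apply verbatim. The idea is to compactify the radial direction by a definable change of variables and then pull the resulting desingularizer back to the original coordinates.

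Concretely, I would write $\vtheta = \rho \vu$ with $\rho = \normtwosm{\vtheta} > 1$ and $\vu \in \sphS^{D-1}$, then reparametrize via $s = \rho^{-\lambda}$, which is a definable bijection $(1, \infty) \to (0, 1)$. Define the lifted function $F(\vu, s) := f(s^{-1/\lambda}\vu)$ on the set $D' \subseteq \sphS^{D-1} \times (0, 1)$, whose closure lies in the compact set $\sphS^{D-1} \times [0, 1]$. Since $f$ is locally Lipschitz and definable, so is $F$, and the compact-domain version of the definable KL theorem yields a $\nu > 0$ and a definable desingularizing function $\Psi$ such that $\Psi'(F(\vu,s)) \normtwo{\barcpartial F(\vu,s)} \ge 1$ whenever $F(\vu,s) \in (0, \nu)$.

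To translate back to $\vtheta$, I would apply the Clarke chain rule along the smooth diffeomorphism $(\vu, s) \mapsto s^{-1/\lambda}\vu$. The $\vu$-partial of $F$ equals $\rho$ times the tangential component $\barcpartialperp f$, while the $s$-partial equals $-\lambda^{-1}\rho^{1+1/\lambda}$ times the radial component $\barcpartialr f$. Taking norms and using $\rho > 1$, one obtains $\normtwo{\barcpartial F(\vu,s)} \le C \normtwosm{\vtheta}^{1+1/\lambda} \normtwo{\barcpartial f(\vtheta)}$ for a constant $C$ depending only on $\lambda$. Plugging into the KL bound for $F$ and absorbing $C$ into a rescaled $\Psi$ gives $\Psi'(f(\vtheta)) \normtwosm{\vtheta}^{1+1/\lambda} \normtwo{\barcpartial f(\vtheta)} \ge 1$ throughout $\{f \in (0, \nu)\}$. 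Since $\lambda$ is a free parameter, renaming $1/\lambda \mapsto \lambda$ yields the statement in the form claimed.

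The main obstacle is the Clarke subgradient chain rule at points where $f$ is not differentiable: the smooth chain rule does not apply directly, but because $(\vu, s) \mapsto s^{-1/\lambda}\vu$ is a $C^\infty$ diffeomorphism on its definable domain, the Clarke subdifferential transforms according to the Jacobian in the standard way, which can be verified via \eqref{eq:def-clarke2} by taking limits along sequences in the dense set $\OmegaS$ where $f$ is differentiable. Once this chain-rule bookkeeping is in hand, the remainder of the argument is routine.
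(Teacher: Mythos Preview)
The paper does not give its own proof of this lemma: it is simply quoted as a corollary of Lemma~3.7 in \citet{ji2020directional}, in the subsection ``Lemmas from Previous Works.'' So there is nothing in the paper to compare your argument against step by step.

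That said, your sketch is essentially the argument behind the cited result: compactify the radial direction by a definable reparametrization $s = \rho^{-\lambda}$, apply the definable Kurdyka--\L ojasiewicz inequality to the lifted function on (a subset of) the compact set $\sphS^{D-1}\times[0,1]$, and pull back via the smooth diffeomorphism using the Clarke chain rule. Two small remarks. First, with your parametrization $s=\rho^{-\lambda}$ and $\vtheta = s^{-1/\lambda}\vu$, the $s$-derivative produces a factor $\lambda^{-1}\rho^{1+\lambda}$, not $\lambda^{-1}\rho^{1+1/\lambda}$; the exponent $1+\lambda$ then matches the statement directly, so the final ``renaming $1/\lambda\mapsto\lambda$'' step is unnecessary. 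Second, your appeal to $\OmegaS$ for the Clarke chain rule is specific to the two-layer Leaky ReLU setting, whereas the lemma is stated for a general definable locally Lipschitz $f$; the correct justification is simply Rademacher's theorem (full-measure differentiability) together with the fact that composition with a $C^\infty$ diffeomorphism transforms Clarke subdifferentials by the Jacobian. With these minor fixes your proposal is correct and matches the intended proof in the cited reference.
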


For $\tildegamma(\vtheta)$, we have the following decomposition lemma from \citet{ji2020directional}.
\begin{lemma}[Lemma 3.4, \citealt{ji2020directional}] \label{lm:tildegamma-speed-decompose}
	If $\Loss(\vtheta(t)) < \ell(0)/n$ at time $t = t_0$, it holds for a.e.~$t \ge t_0$ that
	\[
		\frac{\dd \tildegamma(\vtheta(t))}{\dd t} = \normtwo{\barcpartialr \tildegamma(\vtheta(t))} \normtwo{\barcpartialr \Loss(\vtheta(t))} + \normtwo{\barcpartialperp \tildegamma(\vtheta(t))} \normtwo{\barcpartialperp \Loss(\vtheta(t))}.
	\]
\end{lemma}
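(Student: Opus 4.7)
The plan is a straightforward chain-rule calculation, with some care taken for non-smoothness. First, since $\dot\vtheta = -\barcpartial\Loss(\vtheta)$ by \Cref{lm:descent}, and since $\tildegamma$ is a locally Lipschitz definable function, I would apply the Lebesgue chain rule for definable functions along the absolutely continuous arc $\vtheta(t)$ to write, for a.e.~$t \ge t_0$,
\[
\frac{\dd \tildegamma(\vtheta(t))}{\dd t} = \dotp{\vg}{-\barcpartial \Loss(\vtheta(t))}
\]
for any $\vg \in \cpartial\tildegamma(\vtheta(t))$; in particular, one may pick $\vg = \barcpartial\tildegamma(\vtheta(t))$. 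Then I would split both vectors into their radial (along $\vtheta$) and spherical (perpendicular to $\vtheta$) components. Since these two subspaces are orthogonal, the inner product decomposes as
\[
\frac{\dd \tildegamma(\vtheta(t))}{\dd t} = \dotp{\barcpartialr \tildegamma}{-\barcpartialr \Loss} + \dotp{\barcpartialperp \tildegamma}{-\barcpartialperp \Loss}.
\]
It remains to show that the two vectors in each pair point in the same direction, so that each inner product equals the product of the norms.

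For the radial direction, both $-\barcpartialr \Loss$ and $\barcpartialr \tildegamma$ point along $+\vtheta$. For $\Loss$, Euler's theorem (\Cref{thm:homo-euler}) applied to each $q_i$ gives $\dotp{\vtheta}{\nabla \Loss} = \frac{L}{n}\sum_i \ell'(q_i) q_i$, which is strictly negative since $\ell'<0$ and the hypothesis $\Loss(\vtheta(t_0))<\ell(0)/n$ together with the descent property forces $\qmin(\vtheta(t))>0$ for all $t\ge t_0$. For $\tildegamma$, the key property is radial monotonicity of the smoothed margin: $\tildegamma(c\vtheta) \ge \tildegamma(\vtheta)$ for all $c\ge 1$, which follows from $\alpha(c\vtheta) \ge c^L\alpha(\vtheta)$ (a standard consequence of convexity of $\ell$ applied to the homogeneous rescaling $q_i(c\vtheta) = c^L q_i(\vtheta)$). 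Hence $\dotp{\vtheta}{\barcpartial\tildegamma} \ge 0$.

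For the perpendicular direction, I would use the identity $\nabla \alpha = \frac{n}{\ell'(\alpha)}\nabla \Loss$, obtained by implicit differentiation of $\ell(\alpha)=n\Loss$. Since $\ell'(\alpha)<0$, $\nabla\alpha$ is a \emph{negative} scalar multiple of $\nabla\Loss$. Writing $\tildegamma = \alpha\cdot\normtwosm{\vtheta}^{-L}$, the contribution from differentiating $\normtwosm{\vtheta}^{-L}$ is purely radial, so the spherical component of $\nabla\tildegamma$ equals $\normtwosm{\vtheta}^{-L}$ times the spherical component of $\nabla\alpha$. Consequently $\barcpartialperp\tildegamma$ is a negative scalar multiple of $\barcpartialperp\Loss$, i.e., it is aligned with $-\barcpartialperp\Loss$. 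Combining this with the radial alignment gives exactly the claimed decomposition.

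The main technical obstacle is justifying the chain-rule step at the level of Clarke subdifferentials, since with Leaky ReLU activations $\Loss$ (and hence $\alpha$ and $\tildegamma$) is only locally Lipschitz, not $C^1$. The standard fix, which Ji--Telgarsky use, is to invoke the fact that locally Lipschitz definable functions in an o-minimal structure are path-differentiable: along any absolutely continuous arc, the chain rule holds in the sense that $\frac{\dd}{\dd t}f(\vtheta(t)) = \dotp{\vg}{\dot\vtheta(t)}$ for every $\vg\in\cpartial f(\vtheta(t))$ and a.e.~$t$. This is precisely what allows us to pick the minimum-norm representative $\barcpartial\tildegamma$ on the left-hand side and decompose it orthogonally on the right.
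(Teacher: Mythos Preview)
The paper does not give its own proof of this lemma; it is quoted verbatim from \citet{ji2020directional} in the ``Lemmas from Previous Works'' subsection and used as a black box. Your plan is correct and is essentially the argument in the cited reference: chain rule along the arc (legitimate by definability/path-differentiability), orthogonal split, then sign checks showing the radial parts both point along $+\vtheta$ and the spherical parts are exactly antiparallel via $\nabla\alpha=\frac{n}{\ell'(\alpha)}\nabla\Loss$.

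One small point you skate over: the statement involves the \emph{minimum-norm} subgradients $\barcpartial\tildegamma$ and $\barcpartial\Loss$, so you need that these two particular representatives correspond under the affine map $\vg\mapsto \frac{n}{\|\vtheta\|^L\ell'(\alpha)}\vg - \frac{L\alpha}{\|\vtheta\|^{L+2}}\vtheta$. This holds because the radial component $\dotp{\vg}{\vtheta}=\frac{L}{n}\sum_i\ell'(q_i)q_i$ is the same for every $\vg\in\cpartial\Loss(\vtheta)$ (Euler on each $q_i$), so the two norm-minimization problems coincide.
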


For $a \in \R \cup \{+\infty, -\infty\}$, we say that $v$ is an asymptotic Clarke critical value of a locally Lipschitz function $f: \R^D \to \R$ if there exists a sequence of $(\vtheta_j, \vg_j)$, where $\vtheta_j \in \R^D$ and $\vg_j \in \cpartial f(\vtheta_j)$, such that $\lim_{j \to +\infty} f(\vtheta_j) = v$ and $\lim_{j \to +\infty} (1 + \normtwosm{\vtheta_j}) \normtwosm{\vg_j} = 0$.

\begin{lemma}[Corollary of Lemma B.10, \citealt{ji2020directional}] \label{lm:tilde-gamma-finite-critical}
	$\tildegamma(\vtheta)$ only has finitely many asymptotic Clarke critical values.
\end{lemma}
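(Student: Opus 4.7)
The plan is to reduce the problem to analyzing $\tildegamma$ on the unit sphere, then invoke a definable Sard-type theorem. The key observation is that $\tildegamma(\vtheta) = \alpha(\vtheta)/\normtwosm{\vtheta}^L$ is $0$-homogeneous: for any $c > 0$, $\alpha(c\vtheta) = c^L \alpha(\vtheta)$ by $L$-homogeneity of each $q_i$, so $\tildegamma(c\vtheta) = \tildegamma(\vtheta)$. Consequently the image of $\tildegamma$ equals the image of its restriction $\tildegamma|_{\sphS^{D-1}}$, and every asymptotic critical value must already appear in this restricted image.

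Next I would translate asymptotic criticality upstairs into ordinary Clarke criticality on the sphere. By \Cref{thm:homo-grad} applied to the $0$-homogeneous function $\tildegamma$, one has $\cpartial \tildegamma(c\vtheta) = c^{-1}\cpartial \tildegamma(\vtheta)$. Given a sequence $(\vtheta_j, \vg_j)$ with $\vg_j \in \cpartial \tildegamma(\vtheta_j)$, $\tildegamma(\vtheta_j) \to v$ and $(1+\normtwosm{\vtheta_j})\normtwosm{\vg_j} \to 0$, set $c_j := \normtwosm{\vtheta_j}$ and $\barvtheta_j := \vtheta_j/c_j$, so that $\vh_j := c_j \vg_j \in \cpartial \tildegamma(\barvtheta_j)$ and $(1+c_j)\normtwosm{\vh_j}/c_j \to 0$. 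The radial component of $\vh_j$ vanishes by Euler's identity (\Cref{thm:homo-euler}) applied to the $0$-homogeneous $\tildegamma$, so the relevant part of $\vh_j$ is tangential to the sphere. Passing to a subsequence with $\barvtheta_j \to \barvtheta_* \in \sphS^{D-1}$ and using upper semicontinuity of $\cpartial \tildegamma$, we obtain a limiting tangential subgradient $\vh_*$ of norm zero at $\barvtheta_*$, which means $\barvtheta_*$ is a Clarke critical point of $\tildegamma|_{\sphS^{D-1}}$ with critical value $v = \tildegamma(\barvtheta_*)$.

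Finally, since $f_{\vtheta}(\vx)$ is definable in an o-minimal structure with respect to $\vtheta$ for each $\vx$, and $\alpha(\vtheta) = \ell^{-1}(n\Loss(\vtheta))$ is a composition of definable functions (with $\ell^{-1}$ definable on its domain), $\tildegamma$ is a definable locally Lipschitz function on $\R^D\setminus\{\vzero\}$. Its restriction to the definable compact manifold $\sphS^{D-1}$ is therefore a definable Lipschitz function, and the definable Sard theorem of Bolte--Daniilidis--Lewis--Shiota yields that $\tildegamma|_{\sphS^{D-1}}$ has only finitely many Clarke critical values. Combining this with the reduction above gives finiteness of the set of asymptotic Clarke critical values of $\tildegamma$.

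The main obstacle I anticipate is making the subgradient reduction clean across the three regimes $c_j \to 0$, $c_j$ bounded, and $c_j \to \infty$: one must verify that after projecting onto the tangent space of the sphere, the rescaled gradients $c_j \vg_j$ really do shrink to zero in all three cases (the factor $(1+c_j)/c_j$ behaves differently at the two ends), and that the upper-semicontinuity argument goes through despite $\tildegamma$ being only locally Lipschitz (not $C^1$). Once that bookkeeping is handled, invoking the definable Sard theorem is standard.
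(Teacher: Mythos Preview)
Your reduction to the sphere rests on the claim that $\tildegamma$ is $0$-homogeneous, i.e.\ that $\alpha(c\vtheta)=c^{L}\alpha(\vtheta)$. This is false. Recall $\alpha(\vtheta)=\ell^{-1}(n\Loss(\vtheta))$ with $\ell(q)=\ln(1+e^{-q})$; explicitly $\ell^{-1}(y)=-\ln(e^{y}-1)$, so
\[
\alpha(\vtheta)=-\ln\!\Big(\prod_{i}(1+e^{-q_i(\vtheta)})-1\Big),
\qquad
\alpha(c\vtheta)=-\ln\!\Big(\prod_{i}(1+e^{-c^{L}q_i(\vtheta)})-1\Big),
\]
and the right-hand side is not $c^{L}\alpha(\vtheta)$ except in trivial cases (e.g.\ $n=1$ it still fails). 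It is the \emph{hard} normalized margin $\gamma(\vtheta)=\qmin(\vtheta)/\normtwosm{\vtheta}^{L}$ that is $0$-homogeneous, not the smoothed margin $\tildegamma$. Consequently the rescaling $\vh_j=c_j\vg_j\in\cpartial\tildegamma(\barvtheta_j)$ you invoke via \Cref{thm:homo-grad} is not valid, and the whole sphere reduction collapses.

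The paper does not give its own proof here; it simply cites the result as a corollary of Lemma~B.10 in \citet{ji2020directional}. Their argument does not go through homogeneity at all: it applies a nonsmooth Kurdyka--\L{}ojasiewicz/definable Sard result (Bolte--Daniilidis--Lewis--Shiota) directly to the definable function $\tildegamma$ on its unbounded domain, which controls asymptotic critical values without any rescaling trick. If you want to salvage a sphere-based argument, you would need to compare $\tildegamma$ to the genuinely $0$-homogeneous $\gamma$ and quantify the discrepancy in both value and subgradient as $\normtwosm{\vtheta}\to\infty$ (and separately handle bounded sequences); this is substantially more delicate than what you wrote and is essentially what the cited KL machinery packages for you.
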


For $\beta(\vtheta)$, we have the following lemma from \citet{lyu2020gradient}.
\begin{lemma}[Lemma C.12, \citealt{lyu2020gradient}] \label{lm:beta-converge}
	If $\Loss(\vtheta(t)) < \ell(0)/n$ at time $t = t_0$, then there exists a sequence $t_1, t_2, \dots$ such that $t_j \to +\infty$ and $\beta(t_j) \to 1$ as $j \to +\infty$.
\end{lemma}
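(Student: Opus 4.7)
\medskip

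\noindent\textbf{Proof Plan.} The plan is to argue by contradiction, using the monotone boundedness of the smoothed normalized margin $\tildegamma$ as the Lyapunov function. Throughout, write $\bgamma(t) := \tildegamma(\vtheta(t))$ and $\vv(t) := \vtheta(t)/\normtwosm{\vtheta(t)}$.

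First I would set the stage. Because $\Loss(\vtheta(t_0)) < \ell(0)/n$, \Cref{thm:converge-kkt-margin} applies and gives $\Loss(\vtheta(t))\to 0$, $\normtwosm{\vtheta(t)}\to\infty$, and (hence) $\alpha(\vtheta(t))\to\infty$. Since the RHS of \Cref{lm:tildegamma-speed-decompose} is nonnegative, $\bgamma(t)$ is non-decreasing on $[t_0,\infty)$. It is trivially upper bounded by $\max_{\normtwosm{\vu}=1}\qmin(\vu)$, so $\bgamma(t)\nearrow\gamma^\star\in(0,\infty)$. A direct expansion of $\barcpartial\tildegamma$ (noting $\nabla\alpha = (n/\ell'(\alpha))\nabla\Loss$) shows that the perpendicular parts of $\tildegamma$ and $\Loss$ are collinear with ratio $\normtwo{\barcpartialperp\tildegamma}=(n/|\ell'(\alpha)|\normtwosm{\vtheta}^L)\normtwo{\barcpartialperp\Loss}$, while the radial part of $\nabla\tildegamma$ has a strictly positive sign precisely because $\tildegamma$ is monotone.

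Next I would suppose, for contradiction, that no sequence $t_j\to\infty$ achieves $\beta(t_j)\to 1$. Then there exist $\epsilon\in(0,1)$ and $T_1\ge t_0$ with $\beta(t)\le 1-\epsilon$ for a.e.\ $t\ge T_1$. The key identity is
\[
\normtwo{\dot\vv}^2 \;=\; (1-\beta^2)\,\frac{\normtwo{\barcpartial\Loss}^2}{\normtwosm{\vtheta}^2},
\qquad
\frac{\dd\log\normtwosm{\vtheta}}{\dd t}=\frac{\normtwo{\barcpartial\Loss}\,\beta}{\normtwosm{\vtheta}}.
\]
Plugging these into the chain rule expansion $\frac{\dd\log\bgamma}{\dd t}=\frac{\dd\log\alpha}{\dd t}-L\frac{\dd\log\normtwosm{\vtheta}}{\dd t}$ and using $\frac{\dd\alpha}{\dd t}=n\normtwo{\barcpartial\Loss}^2/|\ell'(\alpha)|$, together with Euler's identity $\normtwo{\barcpartial\Loss}\normtwosm{\vtheta}\beta=(L/n)\sum_i|\ell'(q_i)|q_i$, I would derive an inequality of the form
\[
\frac{\dd\log\bgamma}{\dd t}\;\ge\;c_{\epsilon,L}\,\frac{\normtwo{\barcpartial\Loss}}{\normtwosm{\vtheta}}\qquad\text{for a.e. }t\ge T_1,
\]
with $c_{\epsilon,L}>0$. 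The way I would obtain the strict positive factor is to rewrite $\dd\log\bgamma/\dd t$ as (radial part) $\times (1-\beta^2)/\beta$-type quantity, so that the assumption $\beta\le 1-\epsilon$ gives $1-\beta^2\ge\epsilon$.

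Finally I would integrate. The radial growth identity gives $\int_{T_1}^\infty \normtwo{\barcpartial\Loss}\beta/\normtwosm{\vtheta}\,\dd t = \log\normtwosm{\vtheta(\infty)}-\log\normtwosm{\vtheta(T_1)}=\infty$, and since $\beta\le 1$, also $\int_{T_1}^\infty \normtwo{\barcpartial\Loss}/\normtwosm{\vtheta}\,\dd t=\infty$. Integrating the displayed inequality then yields $\log\bgamma(t)\to\infty$, contradicting the bound $\bgamma(t)\le\gamma^\star<\infty$.

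\medskip
\noindent\textbf{Main obstacle.} The technical heart is the second step: extracting the ``strictly positive'' lower bound $\dd\log\bgamma/\dd t\ge c_{\epsilon,L}\,\normtwo{\barcpartial\Loss}/\normtwosm{\vtheta}$ when $\beta\le 1-\epsilon$. Monotonicity of $\bgamma$ alone says $\dd\bgamma/\dd t\ge 0$; turning this into a quantitative lower bound requires carefully comparing $\tfrac1n\sum_i|\ell'(q_i)|q_i$ against $|\ell'(\alpha)|\alpha$ via the self-bounding structure of the logistic loss (namely $|\ell'(q)|\ge\ell(q)/(1+\ell(q))$ and the concavity/convexity properties relating $\ell$ and $\ell^{-1}$). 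An alternative that avoids this inequality is to apply \Cref{lm:psi-a} to the definable function $\gamma^\star-\bgamma$ under the ``perpendicular dominance'' condition that the contradiction hypothesis supplies, which produces a desingularizing function $\Psi$ and shows $\int_{T_1}^\infty\normtwo{\dot\vv}\,\dd t\le\Psi(\gamma^\star-\bgamma(T_1))<\infty$; then the finite arclength combined with the divergence of $\int\normtwo{\barcpartial\Loss}/\normtwosm{\vtheta}\,\dd t$ forces $\normtwo{\barcpartialperp\Loss}/\normtwo{\barcpartial\Loss}\to 0$ along some subsequence.
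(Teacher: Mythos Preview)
The paper does not prove this lemma; it is imported verbatim from \citet{lyu2020gradient} (their Lemma~C.12) and used as a black box in the proof of \Cref{thm:converge-to-asymptotic-clarke-critical}. So there is no in-paper proof to compare against.

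That said, your contradiction argument is essentially the original Lyu--Li argument and it goes through. The identity you arrive at can be written as
\[
\frac{\dd}{\dd t}\log\tildegamma(\vtheta(t)) \;=\; \frac{L\,\normtwo{\barcpartial\Loss(\vtheta)}}{\normtwosm{\vtheta}}\left(\frac{1}{\beta}\cdot\frac{n\dotp{-\barcpartial\Loss(\vtheta)}{\vtheta}}{L\,\alpha\,|\ell'(\alpha)|}-\beta\right),
\]
and the ``main obstacle'' you single out---controlling the ratio inside the bracket---is exactly the right one. It is resolved by the estimate already invoked elsewhere in this paper (in the proof of \Cref{thm:converge-to-asymptotic-clarke-critical}), namely Lemma~C.5 of \citet{ji2020directional}: $\bigl|\tfrac{n\dotp{-\barcpartial\Loss}{\vtheta}}{L|\ell'(\alpha)|}-\alpha\bigr|\le 2\ln n+1$. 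Dividing by $\alpha\to\infty$ gives that the ratio converges to $1$, so for $t$ large and $\beta\le 1-\epsilon$ the bracket is bounded below by $\tfrac{1-\epsilon/2}{1-\epsilon}-(1-\epsilon)>0$. Your integration step, using $\int\normtwo{\barcpartial\Loss}\beta/\normtwosm{\vtheta}\,\dd t=\log\normtwosm{\vtheta}\to\infty$ together with $0<\beta\le 1$, then yields $\log\tildegamma\to\infty$, a contradiction. The alternative route via \Cref{lm:psi-a} that you sketch at the end is unnecessary here and would be more circuitous for this particular statement.
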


\subsubsection{Characterizing Margin Maximization with Asymptotic Clarke Critical Value}

Before proving \Cref{thm:phase-4-general}, we first prove the following theorem that characterizes margin maximization using asymptotic Clarke critical value.

\begin{theorem} \label{thm:converge-to-asymptotic-clarke-critical}
	For homogeneous nets, if $\Loss(\vtheta(0)) < \ell(0)/n$, then $\frac{\vtheta(t)}{\normtwosm{\vtheta(t)}}$ converges to some direction $\barvtheta$ and $\gamma(\barvtheta)$ is an asymptotic Clarke critical value of $\tildegamma$.
\end{theorem}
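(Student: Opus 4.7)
My plan is to first establish that the smoothed margin $\tildegamma(\vtheta(t))$ converges to a positive value $\gamma_\infty$, then use a Kurdyka--Łojasiewicz (KL) argument based on the desingularizing functions from Lemmas~\ref{lm:psi-a} and~\ref{lm:psi-b} to show that $\vtheta(t)/\normtwosm{\vtheta(t)}$ has finite arc length and hence converges to some $\barvtheta$, and finally to translate the alignment guarantee of Lemma~\ref{lm:beta-converge} into a witness sequence showing that $\gamma_\infty=\gamma(\barvtheta)$ is an asymptotic Clarke critical value of $\tildegamma$.

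\textbf{Steps 1--2 (monotone convergence and arc-length bound).} Theorem~\ref{thm:converge-kkt-margin} already gives $\Loss(\vtheta(t))\to 0$ and $\normtwosm{\vtheta(t)}\to\infty$. Standard computations show that $\tildegamma$ is eventually non-decreasing along the flow and is bounded above by the global max margin, so $\tildegamma(\vtheta(t))\to\gamma_\infty\in(0,\infty)$. For the arc length, one has $\normtwo{\frac{\dd}{\dd t}\frac{\vtheta}{\normtwosm{\vtheta}}}=\normtwo{\barcpartialperp \Loss(\vtheta)}/\normtwosm{\vtheta}$, and I want to dominate this by $-\frac{\dd}{\dd t}\Psi(\gamma_\infty-\tildegamma(\vtheta(t)))$ for a definable desingularizer $\Psi$, so that the total length telescopes to $\Psi(\gamma_\infty-\tildegamma(\vtheta(0)))<\infty$. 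I partition the time axis according to whether $\normtwo{\barcpartialperp\tildegamma(\vtheta)}\ge \normtwosm{\vtheta}^{-\eta}\normtwo{\barcpartialr\tildegamma(\vtheta)}$ (the perpendicular-dominated regime) or not. On the perpendicular-dominated region, Lemma~\ref{lm:psi-a} applied to $f=\gamma_\infty-\tildegamma$ yields $\Psi_A'(f)\normtwosm{\vtheta}\normtwo{\barcpartial f}\ge 1$; combined with the inequality $\frac{\dd\tildegamma}{\dd t}\ge \normtwo{\barcpartialperp\tildegamma}\cdot\normtwo{\barcpartialperp \Loss}$ from Lemma~\ref{lm:tildegamma-speed-decompose} and the fact that $\normtwo{\barcpartial f}\le(1+o(1))\normtwo{\barcpartialperp\tildegamma}$ in this regime, this produces the required domination. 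On the radial-dominated region, Lemma~\ref{lm:psi-b} (with a suitable $\lambda>0$) supplies a second desingularizer that absorbs the residual length. Summing the two contributions gives $\zeta(\infty)<\infty$, so $\vtheta(t)/\normtwosm{\vtheta(t)}\to\barvtheta$ with $\gamma(\barvtheta)=\gamma_\infty$ by continuity.

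\textbf{Step 3 (asymptotic Clarke criticality).} Using Lemma~\ref{lm:beta-converge}, pick $t_j\to\infty$ with $\beta(t_j)\to 1$; then $\tildegamma(\vtheta(t_j))\to\gamma_\infty$ as well. The condition $\beta(t_j)\to 1$ says that $-\barcpartial \Loss(\vtheta(t_j))$ is asymptotically parallel to $\vtheta(t_j)$, so its perpendicular component $\barcpartialperp \Loss(\vtheta(t_j))$ becomes negligible relative to its radial component. Writing $\tildegamma=\alpha/\normtwosm{\vtheta}^L$ with $\alpha=\ell^{-1}(n\Loss)$ and applying the quotient/chain rule for Clarke subdifferentials together with the $L$-homogeneity of the margins $q_i$ (Theorem~\ref{thm:homo-euler}), every element of $\cpartial\tildegamma(\vtheta)$ decomposes as a piece proportional to $\cpartial \Loss(\vtheta)/\normtwosm{\vtheta}^L$ plus a purely radial correction depending only on $\alpha$ and $\normtwosm{\vtheta}$. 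Choosing the min-norm element and using the alignment $\beta(t_j)\to 1$ together with the explicit scalings (in particular $\ell'(\alpha)\to 0$ as $\alpha\to\infty$) will yield $(1+\normtwosm{\vtheta(t_j)})\normtwo{\barcpartial\tildegamma(\vtheta(t_j))}\to 0$, which is exactly the definition of $\gamma_\infty$ being an asymptotic Clarke critical value of $\tildegamma$. The hardest step will be Step~2: the various powers of $\normtwosm{\vtheta}$ arising from the $\normtwosm{\vtheta}^{-L}$ rescaling in $\tildegamma$, the speed decomposition, and the hypotheses of Lemma~\ref{lm:psi-a} must be reconciled via a carefully coordinated choice of the exponent $\eta$ and of $\lambda$ in Lemma~\ref{lm:psi-b}, and the argument ultimately rests on definability of $\tildegamma$ in an o-minimal structure.
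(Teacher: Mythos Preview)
Your Steps~1--2 are unnecessary: the paper's proof opens by noting that \Cref{thm:converge-kkt-margin} already gives directional convergence of $\vtheta(t)/\normtwosm{\vtheta(t)}$, so there is nothing to re-prove via a KL/arc-length argument. The entire content of the paper's proof is your Step~3, and your outline of that step is the right one: invoke \Cref{lm:beta-converge} to obtain $(\vtheta_j,\vh_j)$ with $\vh_j\in -\cpartial\Loss(\vtheta_j)$ and $\beta(t_j)\to 1$, then use the chain rule $\cpartial\tildegamma(\vtheta)=\frac{n\,\cpartial\Loss(\vtheta)}{\normtwosm{\vtheta}^L\ell'(\alpha(\vtheta))}-\frac{L\alpha(\vtheta)}{\normtwosm{\vtheta}^{L+2}}\vtheta$ to produce a specific witness $\vg_j\in\cpartial\tildegamma(\vtheta_j)$ and show $\normtwosm{\vtheta_j}\normtwosm{\vg_j}\to 0$.

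One point to tighten in Step~3: the fact that $\ell'(\alpha)\to 0$ is not what makes $\vg_j$ small (indeed $\ell'(\alpha)$ sits in the \emph{denominator}). The tangential part of $\normtwosm{\vtheta_j}\vg_j$ vanishes because $\beta(t_j)\to 1$, while the radial part vanishes by a cancellation: one needs $\frac{n\dotpsm{-\vh_j}{\vtheta_j}}{L\,\ell'(\alpha(\vtheta_j))\,\normtwosm{\vtheta_j}^L}\to \gamma(\barvtheta)$ so that it exactly offsets $L\tildegamma(\vtheta_j)\to L\gamma(\barvtheta)$. The paper obtains this via the estimate $\bigl|\tfrac{n\dotpsm{-\vh_j}{\vtheta_j}}{L\,\ell'(\alpha(\vtheta_j))}-\alpha(\vtheta_j)\bigr|\le 2\ln n+1$ (Lemma~C.5 of Ji--Telgarsky), which you should cite explicitly rather than appeal to generic ``explicit scalings.''
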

\begin{proof}
	Note that \Cref{thm:converge-kkt-margin} already implies that $\frac{\vtheta(t)}{\normtwosm{\vtheta(t)}}$ converges to some direction $\barvtheta$. We only need to show that $\gamma(\barvtheta)$ is an asymptotic Clarke critical value of $\tildegamma$.
	
	By \Cref{lm:beta-converge} and definition of $\beta$, there exists a sequence of $(\vtheta_j, \vh_j)$, where $\vh_j \in -\cpartial \Loss(\vtheta_j)$, such that $\tildegamma(\vtheta_j) \to \gamma(\barvtheta)$, $\normtwosm{\vtheta_j} \to +\infty$, $\frac{\vtheta_j}{\normtwosm{\vtheta_j}} \to \barvtheta$, $\frac{\dotp{\vh_j}{\vtheta_j}}{\normtwosm{\vh_j} \normtwosm{\vtheta_j}} \to 1$ as $j \to +\infty$. By chain rule, we know that
	\begin{align*}
		\cpartial \tildegamma(\vtheta) = \frac{\cpartial \alpha(\vtheta)}{\normtwosm{\vtheta}^L} + \frac{L\alpha(\vtheta)}{\normtwosm{\vtheta}^{L+2}} \vtheta &= \frac{n (\ell^{-1})'(n\Loss(\vtheta)) \cpartial \Loss(\vtheta)}{\normtwosm{\vtheta}^L} + \frac{L\alpha(\vtheta)}{\normtwosm{\vtheta}^{L+2}} \vtheta \\
		&= \frac{n \cpartial \Loss(\vtheta)}{\normtwosm{\vtheta}^L \ell'(\alpha(\vtheta))} + \frac{L\alpha(\vtheta)}{\normtwosm{\vtheta}^{L+2}} \vtheta.
	\end{align*}
	This means $\vg_j := \frac{L\alpha(\vtheta_j)}{\normtwosm{\vtheta_j}^{L+2}} \vtheta_j- \frac{n \vh_j}{\normtwosm{\vtheta_j}^L \ell'(\alpha(\vtheta_j))} \in \cpartial \tildegamma(\vtheta)$. By definition of asymptotic Clarke critical value, it suffices to show that $\normtwosm{\vtheta_j} \cdot \normtwosm{\vg_j} \to 0$ as $j \to +\infty$.
	
	By Lemma C.5 in \citep{ji2020directional}, $\abs{\frac{n\dotp{-\vh_j}{\vtheta_j}}{L \cdot \ell'(\alpha(\vtheta_j))} - \alpha(\vtheta)} \le 2 \ln n + 1$. So
	\[
		\lim_{j \to +\infty} \abs{\frac{n\dotp{-\vh_j}{\vtheta_j}}{L \cdot \normtwosm{\vtheta_j}^L \ell'(\alpha(\vtheta_j))} - \tildegamma(\vtheta_j)} = \lim_{j \to +\infty} \frac{1}{\normtwosm{\vtheta_j}^L} \abs{\frac{n\dotp{-\vh_j}{\vtheta_j}}{L \cdot \ell'(\alpha(\vtheta_j))} - \alpha(\vtheta_j)} = 0,
	\]
	which implies that $\lim_{j \to +\infty} \frac{n\dotp{-\vh_j}{\vtheta_j}}{L \cdot \normtwosm{\vtheta_j}^L \ell'(\alpha(\vtheta_j))} = \lim_{j \to +\infty} \tildegamma(\vtheta_j) = \gamma(\barvtheta)$. Now for the radial component of $\vg_j$ we have
	\[
		\normtwosm{\vtheta_j} \cdot \abs{\dotp{\vg_j}{\frac{\vtheta_j}{\normtwosm{\vtheta_j}}}} = \frac{n \dotp{\vh_j}{\vtheta_j}}{\normtwosm{\vtheta_j}^{L} \ell'(\alpha(\vtheta_j))} + \frac{L\alpha(\vtheta_j)}{\normtwosm{\vtheta_j}^{L}} \to -L\gamma(\barvtheta) + L\gamma(\barvtheta) = 0.
	\]
	And for the tangential component we have
	\begin{align*}
		\normtwosm{\vtheta_j} \cdot \normtwo{\left(\mI - \frac{\vtheta_j\vtheta_j^{\top}}{\normtwosm{\vtheta_j}^2}\right)\vg_j} &= \frac{n}{\normtwosm{\vtheta_j}^{L-1} \ell'(\alpha(\vtheta_j))} \normtwo{\left(\mI - \frac{\vtheta_j\vtheta_j^{\top}}{\normtwosm{\vtheta_j}^2}\right)\vh_j} \\
		& = \frac{n \normtwosm{\vh_j}}{\normtwosm{\vtheta_j}^{L-1} \ell'(\alpha(\vtheta_j))} \sqrt{1 - \frac{\dotpsm{\vtheta_j}{\vh_j}^2}{\normtwosm{\vtheta_j}^2\normtwosm{\vh_j}^2}} \\
		&= \frac{n\dotp{-\vh_j}{\vtheta_j}}{\normtwosm{\vtheta_j}^{L} \ell'(\alpha(\vtheta_j))} \frac{\normtwosm{\vh_j}\normtwosm{\vtheta_j}}{\dotp{-\vh_j}{\vtheta_j}} \sqrt{1 - \frac{\dotpsm{\vtheta_j}{\vh_j}^2}{\normtwosm{\vtheta_j}^2\normtwosm{\vh_j}^2}} \\
		&\to L\gamma(\barvtheta) \cdot 1 \cdot 0 = 0.
	\end{align*}
	Combining these proves that $\normtwosm{\vtheta_j} \cdot \normtwosm{\vg_j} \to 0$.
\end{proof}

\subsubsection{Proof for Theorem \ref{thm:phase-4-general}}

Given \Cref{lm:psi-a,lm:psi-b} from \citet{ji2020directional}, we have the following inequality around any direction.

\begin{lemma} \label{lm:phase-4-main-kl}
	Given any parameter direction $\barvthetaopt \in \sphS^{D-1}$, for any $\kappa \in (L/2, L)$, there exists $\nu > 0$ and a definable desingularizing function $\Psi$ on $[0, \nu)$ such that the following holds.
	\begin{enumerate}
		\item For any $\vtheta$, if $\gamma(\barvthetaopt) - \tildegamma(\vtheta) \in (0, \nu)$ and
		\begin{equation} \label{eq:phase-4-main-kl-a-cond}
		\normtwo{\barcpartialperp \tildegamma(\vtheta)} \ge \frac{\gamma(\barvthetaopt)}{4 \ln n + 2} \normtwosm{\vtheta}^{L-\kappa} \normtwo{\barcpartialr \tildegamma(\vtheta)},
		\end{equation}
		then
		\begin{equation} \label{eq:phase-4-main-kl-a}
		\Psi'(\gamma(\barvthetaopt) - \tildegamma(\vtheta)) \cdot \normtwosm{\vtheta} \normtwo{\barcpartial \tildegamma(\vtheta)} \ge 1.
		\end{equation}
		\item For any $\vtheta$, if $\gamma(\barvthetaopt) - \tildegamma(\vtheta) \in (0, \nu)$,
		\begin{equation} \label{eq:phase-4-main-kl-b}
		\Psi'(\gamma(\barvthetaopt) - \tildegamma(\vtheta)) \cdot \normtwosm{\vtheta}^{2\kappa - L + 1} \normtwo{\barcpartial \tildegamma(\vtheta)} \ge 1.
		\end{equation}
	\end{enumerate}
\end{lemma}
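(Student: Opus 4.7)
The plan is to reduce \Cref{lm:phase-4-main-kl} to two direct applications of the desingularization lemmas \Cref{lm:psi-a} and \Cref{lm:psi-b} from \citet{ji2020directional}, applied to the shifted normalized-margin function $f(\vtheta) := \gamma(\barvthetaopt) - \tildegamma(\vtheta)$, followed by a merging step that produces a single desingularizing function valid for both parts of the conclusion.

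First I would verify the regularity prerequisites. Taking the o-minimal structure to be $\R_{\mathrm{an,exp}}$ (so that the logistic loss $\ell$, its inverse $\ell^{-1}$, and $\normtwosm{\cdot}$ are all definable), the hypothesis that $f_{\vtheta}(\vx)$ is definable implies that $\alpha(\vtheta) = \ell^{-1}(n\Loss(\vtheta))$ and hence $\tildegamma(\vtheta) = \alpha(\vtheta)/\normtwosm{\vtheta}^L$ are definable and locally Lipschitz on the open domain $D := \{\vtheta : \normtwosm{\vtheta} > 1\}$, and so is $f$. Moreover $\cpartial f = -\cpartial \tildegamma$, so $\normtwosm{\barcpartial f(\vtheta)} = \normtwosm{\barcpartial \tildegamma(\vtheta)}$ and likewise for the radial and tangential components, allowing the conclusion to be stated in terms of either. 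Then I would apply \Cref{lm:psi-a} to $f$ with constants $c := \gamma(\barvthetaopt)/(4\ln n + 2) > 0$ and $\eta := L - \kappa > 0$ (using $\kappa < L$), which yields some $\nu_1 > 0$ and a definable desingularizing $\Psi_1$ on $[0, \nu_1)$ delivering exactly \eqref{eq:phase-4-main-kl-a} under the hypothesis \eqref{eq:phase-4-main-kl-a-cond}. In parallel, \Cref{lm:psi-b} applied to $f$ with $\lambda := 2\kappa - L > 0$ (using $\kappa > L/2$) yields $\nu_2 > 0$ and a definable desingularizing $\Psi_2$ on $[0, \nu_2)$ delivering \eqref{eq:phase-4-main-kl-b}.

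Finally I would merge the two outputs: set $\nu := \min\{\nu_1, \nu_2\}$ and $\Psi := \Psi_1 + \Psi_2$ on $[0, \nu)$. This $\Psi$ remains continuous on $[0, \nu)$ with $\Psi(0) = 0$, continuously differentiable on $(0, \nu)$ with $\Psi' = \Psi_1' + \Psi_2' > 0$, and definable (sums of definable functions are definable). Since $\Psi' \ge \Psi_i'$ for $i = 1, 2$, both inequalities \eqref{eq:phase-4-main-kl-a} and \eqref{eq:phase-4-main-kl-b} hold with this common $\Psi$. The statement is essentially a direct corollary of the Ji--Telgarsky machinery, so there is no substantive analytic obstacle; the only care required is to match the exponents in \Cref{lm:psi-a,lm:psi-b} with those appearing in \eqref{eq:phase-4-main-kl-a-cond} and \eqref{eq:phase-4-main-kl-b}, which is precisely what dictates the admissible range $\kappa \in (L/2, L)$, and to check that the merged $\Psi$ retains the desingularizing and definability properties.
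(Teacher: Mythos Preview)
Your proposal is correct and follows essentially the same approach as the paper: apply \Cref{lm:psi-a} with $c = \gamma(\barvthetaopt)/(4\ln n + 2)$, $\eta = L - \kappa$ and \Cref{lm:psi-b} with $\lambda = 2\kappa - L$ to the shifted function $f = \gamma(\barvthetaopt) - \tildegamma$, then combine the two desingularizing functions into one. The only difference is in the combination step: the paper uses definability of $\Psi_1' - \Psi_2'$ to argue that one of $\Psi_1', \Psi_2'$ eventually dominates on some $[0,\nu)$ and then takes $\Psi$ to be the dominant $\Psi_i$, whereas you simply take $\Psi = \Psi_1 + \Psi_2$; your version is slightly more elementary since it does not invoke the monotonicity theorem for definable functions, and it works for the same reason ($\Psi' \ge \Psi_i'$ for both $i$).
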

\begin{proof}
	Applying \Cref{lm:psi-a} with $f(\vtheta) = \gamma(\barvthetaopt) - \tildegamma(\vtheta), c = \frac{\gamma(\barvthetaopt)}{4 \ln n + 2}, \eta = L - \kappa$, we know that there exists $\nu_1 > 0$ and a definable desingularizing function $\Psi_1$ on $[0, \nu_1)$ such that Item 1 holds for $\Psi_1$, i.e.,
	\[
	\Psi_1'(\gammaopt - \tildegamma(\vtheta)) \cdot \normtwosm{\vtheta} \normtwo{\barcpartial \tildegamma(\vtheta)} \ge 1,
	\]
	whenever \eqref{eq:phase-4-main-kl-a-cond} holds.
	
	Applying \Cref{lm:psi-b} with $f(\vtheta) = \gamma(\barvthetaopt) - \tildegamma(\vtheta), \lambda = 2\kappa - L$, we know that there exists $\nu_2 > 0$ and a definable desingularizing function $\Psi_2$ on $[0, \nu_2)$ such that Item 2 holds for $\Psi_2$, i.e.,
	\[
	\Psi'_2(\gamma(\barvthetaopt) - \tildegamma(\vtheta)) \cdot \normtwosm{\vtheta}^{2\kappa - L + 1} \normtwo{\barcpartial \tildegamma(\vtheta)} \ge 1.
	\]
	Since $\Psi'_1(x) - \Psi'_2(x)$ is definable, there exists a sufficiently small constant $\nu > 0$ such that either $\Psi'_1(x) - \Psi'_2(x) \ge 0$ holds for all $x \in [0, \nu)$, or $\Psi'_1(x) - \Psi'_2(x) \le 0$ holds for all $x \in [0, \nu)$. This means either $\Psi'_1(x) \ge \Psi'_2(x)$ for all $x \in [0, \nu)$ or $\Psi'_2(x) \ge \Psi'_1(x)$ for all $x \in [0, \nu)$. Let $\Psi(x) = \Psi_1(x)$ in the former case and $\Psi(x) = \Psi_2(x)$ in the latter case. Then $\Psi'(x) \ge \Psi'_1(x)$ and $\Psi'(x) \ge \Psi'_2(x)$, and thus both Items 1 and 2 hold.
\end{proof}

Now we prove the following lemma, which will directly lead to
\Cref{thm:phase-4-general}. The core idea of the proof is essentially the same
as that for Lemma 3.3 in \citet{ji2020directional}. The key difference here is
that the desingularizing function $\Psi$ in their lemma has dependence on the
initial point, while our lemma does not have such dependence.
\begin{lemma} \label{lm:phase-4-main-dyn}
	Consider any $L$-homogeneous neural networks with definable output $f_{\vtheta}(\vx_i)$ and logistic loss. Given a local-max-margin direction $\barvthetaopt \in \sphS^{D-1}$, there is a desingularizing function on $[0, \nu)$ and two constants $\epsilon_0 > 0$, $\rho_0 \ge 1$ such that for any $\vtheta_0$ with norm $\normtwosm{\vtheta_0} \ge \rho_0$ and direction $\normtwo{\frac{\vtheta_0}{\normtwosm{\vtheta_0}} - \barvthetaopt} \le \epsilon_0$, the gradient flow $\vtheta(t)$ starting with $\vtheta_0$ satisfies
	\[
		\frac{\dd \zeta(t)}{\dd t} \le -c \frac{\dd \Psi(\gamma(\barvthetaopt) - \tildegamma(\vtheta(t)))}{\dd t}, \qquad \text{for a.e.~}t \in [0, T),
	\]
	where $T := \inf\{ t \ge 0 : \tildegamma(\vtheta(t)) \ge \gamma(\barvthetaopt) \} \in \R \cup \{+\infty\}$.
\end{lemma}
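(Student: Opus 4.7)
The plan is to reduce the claim to a Kurdyka–Łojasiewicz--style comparison between the angular speed $\dd\zeta/\dd t$ and the rate of increase of the smoothed normalized margin $\tildegamma$, using the two-sided inequality furnished by \Cref{lm:phase-4-main-kl}. First, I would observe that since $\dd\vtheta/\dd t = -\barcpartial \Loss(\vtheta)$, a direct calculation of the time derivative of $\vtheta/\normtwosm{\vtheta}$ gives
\[
\frac{\dd\zeta(t)}{\dd t} \;=\; \frac{\normtwo{\barcpartialperp \Loss(\vtheta(t))}}{\normtwosm{\vtheta(t)}}.
\]
On the other hand, \Cref{lm:tildegamma-speed-decompose} yields
\[
-\frac{\dd}{\dd t}\Psi\bigl(\gamma(\barvthetaopt) - \tildegamma(\vtheta)\bigr)
= \Psi'\bigl(\gamma(\barvthetaopt) - \tildegamma(\vtheta)\bigr) \bigl( \normtwo{\barcpartialr \tildegamma} \normtwo{\barcpartialr \Loss} + \normtwo{\barcpartialperp \tildegamma} \normtwo{\barcpartialperp \Loss} \bigr),
\]
so the desired inequality amounts to dominating $\normtwo{\barcpartialperp \Loss}/\normtwosm{\vtheta}$ by a constant times the right-hand side.

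Second, I would pick some $\kappa \in (L/2, L)$ and invoke \Cref{lm:phase-4-main-kl} to obtain $\nu > 0$ and the desingularizing $\Psi$. The initialization thresholds $\epsilon_0, \rho_0$ are then chosen so that $\tildegamma(\vtheta_0) \in (\gamma(\barvthetaopt) - \nu, \gamma(\barvthetaopt))$: by continuity of $\tildegamma$ at $\barvthetaopt$ (together with $\tildegamma(\barvthetaopt) = \gamma(\barvthetaopt)$ from local maximality, and the fact that $\tildegamma(\vtheta)$ agrees with $\gamma(\vtheta/\normtwosm{\vtheta})$ up to a $\normtwosm{\vtheta}^{-L}$-correction coming from $\alpha = \ell^{-1}(n\Loss)$), this is achievable uniformly. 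Since $\tildegamma(\vtheta(t)) < \gamma(\barvthetaopt)$ on $[0,T)$ and $\Loss$ decreases along gradient flow, we can also assume $\tildegamma(\vtheta(t))$ stays in the desingularizing window throughout (using the monotonicity of $\tildegamma$ along gradient flow proved in \citealt{lyu2020gradient}).

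The core argument then splits into two cases depending on whether the perpendicular or radial component of $\barcpartial\tildegamma$ dominates. In the \emph{tangential-dominated} case (the hypothesis of Item 1 of \Cref{lm:phase-4-main-kl}), we have $\normtwo{\barcpartial\tildegamma}\le C\normtwo{\barcpartialperp\tildegamma}$, and combined with the chain-rule identity $\barcpartialperp\tildegamma = -\frac{n}{\ell'(\alpha)\normtwosm{\vtheta}^L}\barcpartialperp\Loss$, Item 1 gives
\[
\Psi'\cdot \normtwo{\barcpartialperp\tildegamma}\normtwo{\barcpartialperp\Loss} \;\ge\; \frac{1}{C}\cdot\frac{\normtwo{\barcpartialperp\Loss}}{\normtwosm{\vtheta}},
\]
matching the desired bound. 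In the \emph{radial-dominated} case, Item 2 provides $\Psi'\cdot\normtwosm{\vtheta}^{2\kappa - L + 1}\normtwo{\barcpartial\tildegamma}\ge 1$; using Euler's identity on $\alpha$ (which is $L$-homogeneous up to the $2\ln n + 1$ error from \citealt{ji2020directional}), one has $\normtwo{\barcpartialr\Loss} \gtrsim \tildegamma \normtwosm{\vtheta}^{L-1}$ when the loss is small enough, so that $\Psi'\cdot\normtwo{\barcpartialr\tildegamma}\normtwo{\barcpartialr\Loss}$ dominates the quantity $\normtwo{\barcpartialperp\Loss}/\normtwosm{\vtheta}$ (here the condition defining this case provides a polynomial factor in $\normtwosm{\vtheta}$ that exactly absorbs $\normtwosm{\vtheta}^{2\kappa-L+1}$, which is why $\kappa > L/2$ is needed).

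The \textbf{main obstacle} I anticipate is the radial-dominated case: one must carefully balance the exponent $\kappa$ so that the polynomial factor $\normtwosm{\vtheta}^{2\kappa-L+1}$ from Item 2 of \Cref{lm:phase-4-main-kl} aligns with the polynomial gap between the radial gradient of $\tildegamma$ and the tangential gradient of $\Loss$, and to verify that the resulting constants are uniform over the initialization neighborhood (i.e., independent of $\vtheta_0$) — this uniformity is what makes the conclusion strictly stronger than the analogous bound in \citet{ji2020directional} and is what enables its application in \Cref{thm:sym_main} where we need the same $\Psi$ to work for all sufficiently small initialization scales.
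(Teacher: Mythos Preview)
Your approach is the same as the paper's: invoke \Cref{lm:phase-4-main-kl}, split into the tangential-dominated and radial-dominated regimes, and in each case compare $\normtwo{\barcpartialperp\Loss}/\normtwosm{\vtheta}$ to the appropriate term of \Cref{lm:tildegamma-speed-decompose}. The tangential case is handled correctly.

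There is, however, a genuine gap in your radial case. The claimed bound $\normtwo{\barcpartialr\Loss(\vtheta)}\gtrsim \tildegamma(\vtheta)\,\normtwosm{\vtheta}^{L-1}$ is false: Euler gives this for $\barcpartialr\alpha$, not for $\barcpartialr\Loss$, and since $\barcpartial\Loss = \frac{\ell'(\alpha)}{n}\barcpartial\alpha$ with $|\ell'(\alpha)|\to 0$, the quantity $\normtwo{\barcpartialr\Loss}$ actually decays to zero. What you need is not an absolute lower bound but the \emph{ratio} inequality
\[
\normtwo{\barcpartialr\Loss(\vtheta)} \;\ge\; \normtwosm{\vtheta}^{\kappa}\,\normtwo{\barcpartialperp\Loss(\vtheta)},
\]
which transfers from the same inequality for $\alpha$ (parallel subgradients). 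To get it you must combine three ingredients: (i) the Euler lower bound $\normtwo{\barcpartialr\alpha}\ge L\tildegamma\,\normtwosm{\vtheta}^{L-1}$ together with $\tildegamma(\vtheta(t))>\tfrac12\gamma(\barvthetaopt)$; (ii) the \emph{upper} bound $\normtwo{\barcpartialr\tildegamma}\le L(2\ln n+1)\normtwosm{\vtheta}^{-(L+1)}$ from \citet[Lemma~C.3]{ji2020directional}, which you do not invoke; and (iii) the radial-case condition, which upper-bounds $\normtwo{\barcpartialperp\tildegamma}$ (hence $\normtwo{\barcpartialperp\alpha}=\normtwosm{\vtheta}^{L}\normtwo{\barcpartialperp\tildegamma}$) in terms of $\normtwo{\barcpartialr\tildegamma}$. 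Chaining (ii)+(iii) gives $\normtwo{\barcpartialperp\alpha}\lesssim \normtwosm{\vtheta}^{L-\kappa-1}$, and comparing with (i) yields the ratio bound. Once this is in place, your exponent bookkeeping is correct: the two factors $\normtwosm{\vtheta}^{\kappa-L}$ (from converting $\normtwo{\barcpartial\tildegamma}$ to $\normtwo{\barcpartialr\tildegamma}$) and $\normtwosm{\vtheta}^{\kappa}$ (from the ratio bound) combine with the extra $\normtwosm{\vtheta}$ from $\dd\zeta/\dd t$ to produce exactly $\normtwosm{\vtheta}^{2\kappa-L+1}$, matching Item~2.
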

\begin{proof}
	Fix an arbitrary $\kappa \in (L/2, L)$. Let $\Psi$ be the desingularizing function on $[0, \nu)$ obtained from \Cref{lm:phase-4-main-kl}. WLOG, we can make $\nu < \gamma(\barvthetaopt) / 2$.
	
	Let $\tildegamma_{\inf}(\rho, \eps)$ be the following lower bound for the initial smoothed margin $\tildegamma(\vtheta_0)$:
	\begin{equation} \label{eq:def-tildegamma-inf}
	\tildegamma_{\inf}(\rho, \eps) := \inf\left\{\tildegamma(\vtheta) : \normtwosm{\vtheta} \ge \rho, \normtwo{\frac{\vtheta}{\normtwosm{\vtheta}} - \barvthetaopt} \le \epsilon \right\}.
	\end{equation}
	We set $\rho_0$ to be sufficiently large and $\epsilon_0$ to be sufficiently small so that $\tildegamma_{\inf}(\rho_0, \epsilon_0) > \frac{1}{2}\gamma(\barvthetaopt)$ and $\rho_0^{L-\kappa} \ge (4\ln n + 2)/\gamma(\barvthetaopt)$. By chain rule, it suffices to prove
	\begin{equation} \label{eq:phase-4-general-impl-kl}
	\frac{\dd \tildegamma(\vtheta(t))}{\dd t} \ge \frac{1}{c\Psi'(\gamma(\barvthetaopt) - \tildegamma(\vtheta(t)))} \frac{\dd \zeta(t)}{\dd t}, \qquad \text{for a.e.~}t \in [0, T),
	\end{equation}
	where $c = \max\left\{2, \frac{\gamma(\barvthetaopt)}{2 \ln n + 1}\right\}$.
	
	We consider two cases, where assume \eqref{eq:phase-4-main-kl-a-cond} is true in Case 1 and \eqref{eq:phase-4-main-kl-a-cond} is not true in Case 2. According to our choice of $\rho_0$ and the monotonicity of $\normtwosm{\vtheta(t)}$, we have $\normtwosm{\vtheta(t)}^{L-\kappa} \ge \rho_0^{L-\kappa} \ge \frac{4\ln n + 2}{\gamma(\barvthetaopt)}$, and thus $\frac{\gamma(\barvthetaopt)}{4\ln n + 2} \normtwosm{\vtheta(t)}^{L-\kappa} \ge 1$. This means
	\begin{equation} \label{eq:phase4-general-tmp1}
	\normtwo{\barcpartialr \tildegamma(\vtheta(t))} + \normtwo{\barcpartialperp \tildegamma(\vtheta(t))} \le 2\normtwo{\barcpartialr \tildegamma(\vtheta(t))}
	\end{equation}
	in Case 1, and
	\begin{equation} \label{eq:phase4-general-tmp2}
	\normtwo{\barcpartialr \tildegamma(\vtheta(t))} + \normtwo{\barcpartialperp \tildegamma(\vtheta(t))} < \frac{\gamma(\barvthetaopt)}{2\ln n + 1} \normtwosm{\vtheta(t)}^{L-\kappa} \normtwo{\barcpartialr \tildegamma(\vtheta(t))}
	\end{equation}
	in Case 2.
	
	\paragraph{Case 1.} For any $t \ge 0$, if $\frac{\dd \vtheta(t)}{\dd t} = -\barcpartial \Loss(\vtheta(t))$ and \eqref{eq:phase-4-main-kl-a-cond} hold for $\vtheta(t)$.
	By \Cref{lm:tildegamma-speed-decompose}, we have the following lower bound for $\frac{\dd \tildegamma(\vtheta(t))}{\dd t}$.
	\begin{equation} \label{eq:phase-4-general-case1-highlevel}
	\frac{\dd \tildegamma(\vtheta(t))}{\dd t} \ge \normtwo{\barcpartialperp \tildegamma(\vtheta(t))} \normtwo{\barcpartialperp \Loss(\vtheta(t))}.
	\end{equation}
	By triangle inequality and \eqref{eq:phase4-general-tmp1},
	\[
	\normtwosm{\barcpartial \tildegamma(\vtheta(t))} \le \normtwosm{\barcpartialr \tildegamma(\vtheta(t))} + \normtwosm{\barcpartialperp \tildegamma(\vtheta(t))} \le 2\normtwo{\barcpartialperp \tildegamma(\vtheta(t))}.
	\]
	So $\normtwo{\barcpartialperp \tildegamma(\vtheta(t))} \ge \frac{1}{2}\normtwosm{\barcpartial \tildegamma(\vtheta(t))}$. Combining this with \eqref{eq:phase-4-general-case1-highlevel} and noting that $\frac{\dd \zeta(t)}{\dd t} = \frac{1}{\normtwosm{\vtheta(t)}}\normtwo{\barcpartialperp \Loss(\vtheta(t))}$, we have
	\[
	\frac{\dd \tildegamma(\vtheta(t))}{\dd t} \ge \frac{1}{2}\normtwosm{\barcpartial \tildegamma(\vtheta(t))} \cdot \left(\normtwosm{\vtheta(t)} \cdot \frac{\dd \zeta(t)}{\dd t}\right).
	\]
	Applying \eqref{eq:phase-4-main-kl-a} gives
	\[
	\frac{\dd \tildegamma(\vtheta(t))}{\dd t} \ge \frac{1}{2\Psi'(\gamma(\barvthetaopt) - \tildegamma(\vtheta(t)))}\frac{\dd \zeta(t)}{\dd t} \ge \frac{1}{c\Psi'(\gamma(\barvthetaopt) - \tildegamma(\vtheta(t)))}\frac{\dd \zeta(t)}{\dd t}.
	\]
	\paragraph{Case 2.} For any $t \ge 0$, if $\frac{\dd \vtheta(t)}{\dd t} = -\barcpartial \Loss(\vtheta(t))$ and \eqref{eq:phase-4-main-kl-a-cond} does not hold for $\vtheta(t)$, i.e.,
	\begin{equation} \label{eq:phase-4-main-kl-a-cond-anti}
	\normtwo{\barcpartialperp \tildegamma(\vtheta(t))} < \frac{\gamma(\barvthetaopt)}{4 \ln n + 2} \normtwosm{\vtheta(t)}^{L-\kappa} \normtwo{\barcpartialr \tildegamma(\vtheta(t))},
	\end{equation}
	By \Cref{lm:tildegamma-speed-decompose}, we have the following lower bound for $\frac{\dd \tildegamma(\vtheta(t))}{\dd t}$.
	\begin{equation} \label{eq:phase-4-general-case2-highlevel}
	\frac{\dd \tildegamma(\vtheta(t))}{\dd t} \ge \normtwo{\barcpartialr \tildegamma(\vtheta(t))} \normtwo{\barcpartialr \Loss(\vtheta(t))}.
	\end{equation}
	We lower bound $\normtwo{\barcpartialr \tildegamma(\vtheta(t))}$ and $\normtwo{\barcpartialr \Loss(\vtheta(t))}$ respectively in order to apply KL inequality \eqref{eq:phase-4-main-kl-b}.
	
	\paragraph{Bounding $\normtwo{\barcpartialr \tildegamma(\vtheta(t))}$ in Case 2.}  By triangle inequality and \eqref{eq:phase4-general-tmp2},
	\begin{align*}
	\normtwosm{\barcpartial \tildegamma(\vtheta(t))} &\le \normtwosm{\barcpartialr \tildegamma(\vtheta(t))} + \normtwosm{\barcpartialperp \tildegamma(\vtheta(t))} \\
	&< \frac{\gamma(\barvthetaopt)}{2 \ln n + 1} \normtwosm{\vtheta(t)}^{L-\kappa} \normtwo{\barcpartialr \tildegamma(\vtheta(t))},
	\end{align*}
	which can be restated as
	\begin{equation} \label{eq:phase-4-general-case2-term1}
	\normtwo{\barcpartialr \tildegamma(\vtheta(t))} \ge \frac{2 \ln n + 1}{\gamma(\barvthetaopt)} \normtwosm{\vtheta(t)}^{\kappa-L} \normtwo{\barcpartial \tildegamma(\vtheta(t))}.
	\end{equation}
	
	\paragraph{Bounding $\normtwo{\barcpartialr \Loss(\vtheta(t))}$ in Case 2.}
	
	By Lemma C.3 in \citet{ji2020directional},
	\[
	\normtwo{\barcpartialr \tildegamma(\vtheta(t))} \le \frac{L \cdot (2 \ln n + 1)}{\normtwo{\vtheta(t)}^{L+1}}.
	\]
	Combining this with \eqref{eq:phase-4-main-kl-a-cond-anti},
	\[
	\normtwo{\barcpartialperp \tildegamma(\vtheta(t))} < \frac{\gamma(\barvthetaopt)}{4 \ln n + 2} \normtwosm{\vtheta(t)}^{L-\kappa} \cdot \frac{L \cdot (2 \ln n + 1)}{\normtwo{\vtheta(t)}^{L+1}} = \frac{\gamma(\barvthetaopt)}{2} L \normtwo{\vtheta(t)}^{-(1+\kappa)},
	\]
	which can be rewritten as
	\begin{equation} \label{eq:gamma-perp-greater}
	\frac{L \gamma(\barvthetaopt)}{2} > \normtwo{\barcpartialperp \tildegamma(\vtheta(t))}  \normtwo{\vtheta(t)}^{1+\kappa}.
	\end{equation}
	By the chain rule and Lemma C.5 in \citet{ji2020directional},
	\begin{equation} \label{eq:alpha-r-ge-gamma-tilde}
	\normtwo{\barcpartialr \alpha(\vtheta(t))} = \frac{L \cdot \dotp{\vtheta(t)}{\barcpartial \alpha(\vtheta(t))}}{\normtwosm{\vtheta(t)}} \ge \frac{L\alpha(\vtheta(t))}{\normtwosm{\vtheta(t)}} = L \tildegamma(\vtheta(t)) \normtwosm{\vtheta(t)}^{L-1}.
	\end{equation}	
	By the monotonicity of $\tildegamma(\vtheta(t))$ during training, $\tildegamma(\vtheta(t)) \ge \tildegamma(\vtheta(0))$. Also note that
	\[
	\tildegamma(\vtheta(0)) \ge \tildegamma_0 > \frac{1}{2} \gamma(\barvthetaopt),
	\]
	where the first inequality is by definition of $\tildegamma_0$, and the second inequality is due to our choice of $\rho_0, \epsilon_0$. So we can replace $\tildegamma(\vtheta(t))$ with $\frac{1}{2} \gamma(\barvthetaopt)$ in the RHS of \eqref{eq:alpha-r-ge-gamma-tilde} and obtain
	\[
	\normtwo{\barcpartialr \alpha(\vtheta(t))} \ge \frac{L \gamma(\barvthetaopt)}{2} \normtwosm{\vtheta(t)}^{L-1}.
	\]
	Combining this with \eqref{eq:gamma-perp-greater} and noting that $\barcpartialperp \alpha(\vtheta(t)) = \normtwo{\vtheta(t)}^L\barcpartialperp \tildegamma(\vtheta(t))$, we have
	\begin{equation} \label{eq:barcpartial-alpha-ineq}
	\normtwo{\barcpartialr \alpha(\vtheta(t))} \ge \normtwo{\vtheta(t)}^{L+\kappa} \normtwo{\barcpartialperp \tildegamma(\vtheta(t))} \ge \normtwo{\vtheta(t)}^{\kappa} \normtwo{\barcpartialperp \alpha(\vtheta(t))}.
	\end{equation}
	Recall that $\alpha(\vtheta) = \ell^{-1}(\Loss(\vtheta))$. By chain rule, $\barcpartial \alpha(\vtheta)$ is equal to the subgradient $\barcpartial \Loss(\vtheta)$ rescaled by some factor. Thus \eqref{eq:barcpartial-alpha-ineq} implies
	\begin{equation} \label{eq:phase-4-general-case2-term2}
	\normtwo{\barcpartialr \Loss(\vtheta(t))} \ge \normtwo{\vtheta(t)}^{\kappa} \normtwo{\barcpartialperp \Loss(\vtheta(t))}.
	\end{equation}
	
	\paragraph{Applying \eqref{eq:phase-4-main-kl-b} for Case 2.} Putting \eqref{eq:phase-4-general-case2-highlevel}, \eqref{eq:phase-4-general-case2-term1} and \eqref{eq:phase-4-general-case2-term2} together gives
	\begin{align*}
	\frac{\dd \tildegamma(\vtheta(t))}{\dd t} &\ge \left(\frac{2 \ln n + 1}{\gamma(\barvthetaopt)} \normtwosm{\vtheta(t)}^{\kappa-L} \normtwo{\barcpartial \tildegamma(\vtheta(t))}\right) \cdot \left(\normtwo{\vtheta(t)}^{\kappa} \normtwo{\barcpartialperp \Loss(\vtheta(t))}\right) \\
	&\ge \frac{2 \ln n + 1}{\gamma(\barvthetaopt)} \normtwosm{\vtheta(t)}^{2\kappa-L} \normtwo{\barcpartial \tildegamma(\vtheta(t))} \cdot \normtwo{\barcpartialperp \Loss(\vtheta(t))} \\
	&= \frac{2 \ln n + 1}{\gamma(\barvthetaopt)} \normtwosm{\vtheta(t)}^{2\kappa-L+1} \normtwo{\barcpartial \tildegamma(\vtheta(t))} \cdot \frac{\dd \zeta(t)}{\dd t},
	\end{align*}
	where the last equality is due to $\frac{\dd \zeta(t)}{\dd t} = \frac{1}{\normtwosm{\vtheta(t)}}\normtwo{\barcpartialperp \Loss(\vtheta(t))}$. Applying \eqref{eq:phase-4-main-kl-b} gives
	\[
	\frac{\dd \tildegamma(\vtheta(t))}{\dd t} \ge \frac{2 \ln n + 1}{\gamma(\barvthetaopt)} \cdot \frac{1}{\Psi'(\gamma(\barvthetaopt) - \tildegamma(\vtheta(t)))}\frac{\dd \zeta(t)}{\dd t} \ge \frac{1}{c\Psi'(\gamma(\barvthetaopt) - \tildegamma(\vtheta(t)))}\frac{\dd \zeta(t)}{\dd t}.
	\]
	
	\paragraph{Final Proof Step.} For a.e.~$t \ge 0$, $\vtheta(t)$ lies in either Case 1 or Case 2, so \eqref{eq:phase-4-general-impl-kl} holds, and we can rewrite it as
	\[
	c\Psi'(\gamma(\barvthetaopt) - \tildegamma(\vtheta(t))) \frac{\dd \tildegamma(\vtheta(t))}{\dd t} \ge \frac{\dd \zeta(t)}{\dd t}, \qquad \text{for a.e.~}t \in [0, T).
	\]
	By chain rule, the LHS is equal to $\frac{\dd}{\dd t}\left( c\Psi(\gamma(\barvthetaopt) - \tildegamma(\vtheta(t)))\right)$, which completes the proof.
\end{proof}

\begin{proof}[Proof for \Cref{thm:phase-4-general}]
	By \Cref{lm:phase-4-main-dyn}, we can choose $\epsilon_0, \rho_0$ such that 
	\[
	\frac{\dd \zeta(t)}{\dd t} \le -c \frac{\dd \Psi(\gamma(\barvthetaopt) - \tildegamma(\vtheta(t)))}{\dd t}, \qquad \text{for a.e.~}t \in [0, T),
	\]
	where $T := \inf\{ t \ge 0 : \tildegamma(\vtheta(t)) \ge \gamma(\barvthetaopt) \} \in \R \cup \{+\infty\}$. Then for all $t \in (0, T)$,
	\begin{equation} \label{eq:phase-4-general-key}
		\zeta(t) \le c\Psi(\gamma(\barvthetaopt) - \tildegamma(\vtheta_0)) \le \delta(\epsilon_0, \rho_0) := c\Psi(\gamma(\barvthetaopt) - \tildegamma_{\inf}(\rho_0, \epsilon_0)),
	\end{equation}
	where $\tildegamma_{\inf}$ is defined in \eqref{eq:def-tildegamma-inf}. We can choose $\epsilon_0$ small enough and $\rho_0$ large enough so that $\delta(\epsilon_0, \rho_0) > 0$ is as small as we want.
	
	If $T = +\infty$, then \eqref{eq:phase-4-general-key} implies that $\frac{\vtheta(t)}{\normtwosm{\vtheta(t)}}$ converges to some $\barvtheta$ as $t \to +\infty$, and $\normtwosm{\barvtheta - \barvthetaopt} \le \delta$ if $\delta(\epsilon_0, \rho_0) \le \delta$.
	
	If $T$ is finite, then by triangle inequality we have
	$\normtwo{\frac{\vtheta(T)}{\normtwosm{\vtheta(T)}} - \barvthetaopt} \le
	\epsilon_0 + \delta(\epsilon_0, \rho_0)$. Since $\barvthetaopt$ is a
	local-max-margin direction, when $\epsilon_0$ and $\delta(\epsilon_0,
	\rho_0)$ are sufficiently small, $\tildegamma(\vtheta) \le \gamma(\vtheta)
	\le \gamma(\barvthetaopt)$ holds for any $\vtheta$ satisfying
	$\normtwo{\frac{\vtheta}{\normtwosm{\vtheta}} - \barvthetaopt} \le
	2(\epsilon_0 + \delta(\epsilon_0, \rho_0))$. The definition of $T$ then
	implies that $\tildegamma(\vtheta(T)) = \gamma(\vtheta(T)) =
	\gamma(\barvthetaopt)$. By Lemma B.1 from \citet{lyu2020gradient},
	$\tildegamma(\vtheta(t))$ is non-decreasing over time, and if it stops
	increasing at some value, then the time derivative of
	$\frac{\vtheta(t)}{\normtwosm{\vtheta(t)}}$ must be zero. Thus we have
	$\tildegamma(\vtheta(t)) = \gamma(\barvthetaopt)$ and $\frac{\dd}{\dd t}
	\frac{\vtheta(t)}{\normtwosm{\vtheta(t)}} = 0$ for all $t \ge T$, which
	implies that $\frac{\vtheta(t)}{\normtwosm{\vtheta(t)}}$ converges to
	$\barvtheta := \frac{\vtheta(T)}{\normtwosm{\vtheta(T)}}$ as $t \to
	+\infty$. This again proves that $\normtwosm{\barvtheta - \barvthetaopt} \le
	\delta$.
	
	Now we only need to show that $\gamma(\barvtheta) = \gamma(\barvthetaopt)$. In the case where $T$ is finite, we have $\gamma(\barvtheta) = \gamma\left(\frac{\vtheta(T)}{\normtwosm{\vtheta(T)}}\right) = \gamma(\barvthetaopt)$. In the case where $T = +\infty$, $\gamma(\barvtheta)$ is a asymptotic Clarke critical value of $\tildegamma$ by \Cref{thm:converge-to-asymptotic-clarke-critical}. Since there are only finitely many asymptotic Clarke critical values (\Cref{lm:tilde-gamma-finite-critical}), we can make $\delta(\epsilon_0, \rho_0)$ to be small enough so that the only asymptotic Clarke critical value that can be achieved near $\barvthetaopt$ is $\gamma(\barvthetaopt)$ itself.
\end{proof}

\subsection{Proof for Theorem \ref{thm:sym_main}}

\begin{proof}
	By \Cref{lm:phase2-main}, $\lim_{\sigmainit \to 0} \vtheta\left(T_{12} + t\right) = \pi_{\barvb}(\tildevtheta(t))$.
	Using \Cref{lm:phase2-1-main} and noting that $\dotpsm{\vmu}{\vwopt} = \frac{1}{n}\sum_{i\in[n]} \dotpsm{y_i \vx_i}{\vwopt} > 0$,
	we know that there exists $t \le t_0$
	such that $\tildevtheta(t) = (\tildevw_1(t), \tildevw_2(t), \tildea_1(t), \tildea_2(t))$
	satisfies $\tildea_1(t) = \normtwosm{\tildevw_1(t)}$, $\tildea_2(t) = -\normtwosm{\tildevw_2(t)}$,
	$\dotp{\tildevw_1(t)}{\vwopt} > 0$ and $\dotp{\tildevw_2(t)}{\vwopt} < 0$.
	Then by \Cref{thm:two-neuron-max-margin},
	\[
		\lim_{t \to +\infty} \frac{\tildevtheta(t)}{\normtwosm{\tildevtheta(t)}} = \frac{1}{2}(\vwopt, -\vwopt, 1, -1) =: \tildevtheta_{\infty},
	\]
	which also implies that
	\begin{align*}
		\lim_{t \to +\infty} \lim_{\sigmainit \to 0} \frac{\vtheta\left(T_{12} + t\right)}{\normtwosm{\vtheta\left(T_{12} + t\right)}}
		&= \lim_{t \to +\infty} \frac{\pi_{\barvb}(\tildevtheta(t))}{\normtwosm{\pi_{\barvb}(\tildevtheta(t))}}
		= \pi_{\barvb}\left(\lim_{t \to +\infty}\frac{\tildevtheta(t)}{\normtwosm{\tildevtheta(t)}}\right)
		= \pi_{\barvb}(\tildevtheta_{\infty}). \\
		\lim_{t \to +\infty} \lim_{\sigmainit \to 0} \normtwosm{\vtheta\left(T_{12} + t\right)}
		&= \lim_{t \to +\infty} \normtwosm{\pi_{\barvb}(\tildevtheta(t))}
		= +\infty.
	\end{align*}
	This means that for any $\epsilon > 0$ and $\rho > 0$,
	we can choose a time $t_1 \in \R$
	such that
	$\normtwo{\frac{\vtheta(T_{12} + t_1)}{\normtwosm{\vtheta(T_{12} + t_1)}} - \pi_{\barvb}(\tildevtheta_{\infty})} \le \epsilon$
	and $\normtwo{\vtheta(T_{12} + t_1)} \ge \rho$ for any $\sigmainit$ small enough.
	By \Cref{thm:maxmar-linear}, $\pi_{\barvb}(\tildevtheta_{\infty})$ is a global-max-margin direction.
	Then \Cref{thm:phase-4-general} shows that there exists $\sigmainitmax$ such that for all $\sigmainit < \sigmainitmax$,
	$\frac{\vtheta(t)}{\normtwosm{\vtheta(t)}} \to \barvtheta$,
	where $\gamma(\barvtheta) = \gamma(\pi_{\barvb}(\tildevtheta_{\infty}))$
	and $\normtwosm{\barvtheta - \pi_{\barvb}(\tildevtheta_{\infty})} \le \delta$. 
	Therefore, $\barvtheta$ is a global-max-margin direction
	and $\finftime(\vx) = \frac{1+\alphaLK}{4}\dotp{\vwopt}{\vx}$ by \Cref{thm:maxmar-linear}.
\end{proof}

\section{Trajectory-based Analysis for Non-symmetric Case} \label{sec:proof-nonsym}
The proofs for the non-symmetric case follow similar manners from phase I to
phase III. The high-level idea is to show the following in the 3 phases:
\begin{enumerate}
	\item In Phase I, every weight vector $\vw_k$ in the first layer moves
	towards the direction of either $\vmuplus$ or $-\vmusubt$. At the end of Phase I the weight
	vectors towards $-\vmusubt$ have much smaller norms than those towards
	$\vmuplus$, thereby becoming negligible.
	\item In Phase II, we show that the dynamics of $\vtheta(t)$ is close to a
	one-neuron dynamic (after embedding) for a long time.
	\item In Phase III, we show that the one-neuron classifier converges to the
	max-margin solution among one-neuron neural nets (while the embedded classifier may have suboptimal margin among $m$-neuron neural nets), and the gradient flow
	$\vtheta(t)$ on the $m$-neuron neural net gets stuck at a KKT-direction near this embedded classifier.
\end{enumerate}

\subsection{Additional Notations}
In this section we highlight the additional notations that allow us to adapt the results from previous sections.
For $\delta\ge 0$, define $\cone[\delta]$ to be the convex cone containing all the unit weight vectors that have $\delta$ margin over the dataset $\{(\vx_i,y_i)\}_{i\in [n]}$.
\begin{align*}
	\cone[\delta] &:= \left\{\lambda\vw : \dotp{\vw}{y_i\vx_i}\ge \delta, \vw \in \sphS^{d-1}, \lambda > 0, \forall i \in [n]\right\}, \\
	\cone         := \cone[0] &:= \left\{\vw \in \R^{d} : \vw \ne \vzero, \dotp{\vw}{y_i\vx_i}\ge 0, \forall i \in [n]\right\}.
\end{align*}
For $0<\eps<1$, we define
\begin{align*}
	\hone[\eps] &:= \left\{\frac{1}{2n}\sum_{i\in [n]} (1+\eps_i) \alpha_i y_i\vx_i :  \alpha_i\in [\alphaLK,1], \eps_i \in [-\eps,\eps], \forall i \in [n]\right\}, \\
	\hone := \hone[0] &:= \left\{\frac{1}{2n}\sum_{i\in [n]} \alpha_i y_i\vx_i :  \alpha_i\in [\alphaLK,1], \forall i \in [n]\right\}.
\end{align*}
By \Cref{lem:non_sym_eps} we know $-\frac{\cpartial \Loss(\vtheta)}{\partial
\vw_k} \subseteq a_k\hone[\eps]$ if $\normMsm{\vtheta} \le
\sqrt{\frac{\eps}{2m}}$. Further we define
\[
	\kone[\eps] := \bigcup_{\lambda > 0}\lambda \hone[\eps] \quad\textrm{ and }\quad \kone := \kone[0] = \bigcup_{\lambda > 0}\lambda \hone[0]
\]
Then $-\frac{\cpartial \Loss(\vtheta)}{\partial \vw_k} \subseteq \sgn(a_k) \kone[\eps]$. For a set $S$,  we will use $\mathring{S}$ to denote the interior of $S$. 

Recall in \Cref{sec:nonsym-thm}, for every $\vx_i$, we define $\vxplus_i := \vx_i$ if $y_i = 1$ and $\vxplus_i := \alphaLK\vx_i$ if $y_i = -1$. Similarly, we define $\vxsubt_i := \alphaLK\vx_i$ if $y_i = 1$ and $\vxplus_i := \vx_i$ if $y_i = -1$. Then we define $\vmuplus$ to be the mean vector of $y_i \vxplus_i$, and $\vmusubt$ to be the mean vector of $y_i \vxsubt_i$, that is,
\begin{align*}
\vmuplus := \frac{1}{n}\sum_{i \in [n]} y_i \vxplus_i, \qquad \vmusubt := \frac{1}{n}\sum_{i \in [n]} y_i \vxsubt_i.
\end{align*}
We use $\barvmuplus:=\frac{\vmuplus}{\normtwosm{\vmuplus}}$,  $\barvmusubt:=\frac{\vmusubt}{\normtwosm{\vmusubt}}$ to denote $\vmuplus$, $\vmusubt$ after normalization.
Similar to $\kone[\eps]$, we define $\mup[\eps]$ and $\mun[\eps]$ as the perturbed versions of $\vmu^+$ and $\vmu^-$ in the sense that $\mup := \{\lambda \vmu^+ : \lambda > 0\}$ and $\mun := \{\lambda \vmu^- :  \lambda > 0\}$.
\begin{align*}
\mup[\eps] &= \left\{  \frac{\lambda}{2n}\sum_{i \in [n]} (1+\eps_i)y_i \vxplus_i : \eps_i\in [-\eps,\eps], \lambda > 0 \right\}, \\
\mun[\eps] &= \left\{  \frac{\lambda}{2n}\sum_{i \in [n]} (1+\eps_i)y_i \vxsubt_i : \eps_i\in [-\eps,\eps], \lambda > 0 \right\}.
\end{align*}

\subsection{More about Our Assumptions}

The following lemma shows that \Cref{ass:principal-dir} is a weaker assumption than \Cref{assump:cone}.

\begin{lemma}\label{lem:assump_cone_imply_principal_direction}
      \Cref{ass:principal-dir} implies \Cref{assump:cone}.
\end{lemma}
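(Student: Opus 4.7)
The plan is to decompose every $y_i\vx_i$ and $\vmu$ into components parallel and perpendicular to the principal direction $\vwgar$, then bound both sides of \Cref{assump:cone} in terms of only $\dotp{\vmu}{\vwgar}$, $\gammagar$, and the perpendicular norms $\{\normtwosm{\mPgar\vx_i}\}_{i \in [n]}$. The inequality stated in \Cref{ass:principal-dir} is exactly what one needs after this reduction.

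First I would lower-bound the left-hand side. Writing $\vmu = \dotp{\vmu}{\vwgar}\vwgar + \mPgar\vmu$ and using $y_i\dotp{\vwgar}{\vx_i} \ge \gammagar$ together with Cauchy--Schwarz gives
\[
\dotp{\vmu}{y_i\vx_i} \ge \gammagar\,\dotp{\vmu}{\vwgar} - \normtwosm{\mPgar\vmu}\cdot\normtwosm{\mPgar\vx_i}.
\]
Applying the triangle inequality to $\mPgar\vmu = \frac{1}{n}\sum_j y_j\,\mPgar\vx_j$ then replaces $\normtwosm{\mPgar\vmu}$ by $\frac{1}{n}\sum_j\normtwosm{\mPgar\vx_j}$.

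Next I would upper-bound the right-hand side. The same orthogonal decomposition, together with $y_i\dotp{\vwgar}{\vx_i}, y_j\dotp{\vwgar}{\vx_j} \ge \gammagar$, gives
\[
\dotp{y_i\vx_i}{y_j\vx_j} \ge \gammagar^2 - \normtwosm{\mPgar\vx_i}\,\normtwosm{\mPgar\vx_j},
\]
so $\max\{-\dotp{y_i\vx_i}{y_j\vx_j}, 0\} \le \normtwosm{\mPgar\vx_i}\normtwosm{\mPgar\vx_j}$ term-by-term, and summing over $j$ bounds the right-hand side of \Cref{assump:cone} by $\frac{1-\alphaLK}{\alphaLK}\normtwosm{\mPgar\vx_i}\cdot\frac{1}{n}\sum_j\normtwosm{\mPgar\vx_j}$.

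Combining the two estimates, \Cref{assump:cone} reduces to showing that for every $i$,
\[
\alphaLK\,\gammagar\,\dotp{\vmu}{\vwgar} > \normtwosm{\mPgar\vx_i}\cdot\frac{1}{n}\sum_{j\in[n]}\normtwosm{\mPgar\vx_j},
\]
where I have used the identity $1 + \frac{1-\alphaLK}{\alphaLK} = \frac{1}{\alphaLK}$. Taking the maximum over $i$ on the right and rearranging gives exactly the inequality in \Cref{ass:principal-dir}. No genuine obstacle is anticipated; the only thing to watch is that the factor $\alphaLK$ from the hypothesis lines up correctly with the contribution $\frac{1-\alphaLK}{\alphaLK} + 1$ coming from merging the two bounds above.
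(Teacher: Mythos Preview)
Your proposal is correct and follows essentially the same approach as the paper: both decompose along $\vwgar$, use $-\dotp{y_i\vx_i}{y_j\vx_j}\le \normtwosm{\mPgar\vx_i}\normtwosm{\mPgar\vx_j}$ (since the parallel contribution is nonnegative), bound $\dotp{\vmu}{y_i\vx_i}$ analogously, and reduce to the inequality in \Cref{ass:principal-dir}. The only cosmetic difference is that you combine the two bounds into a single inequality at the end using $1+\tfrac{1-\alphaLK}{\alphaLK}=\tfrac{1}{\alphaLK}$, whereas the paper invokes the assumption twice to compare both sides against the common threshold $(1-\alphaLK)\gammagar\dotp{\vmu}{\vwgar}$.
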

\begin{proof}
    Let $\vwgar$ be the principal direction defined in \Cref{ass:principal-dir}. We can decompose $\vmu = \vmu_{\perp} + \vmu_{\paral}$, where $\vmu_{\paral}$ is the along the direction of $\vwgar$ and $\vmu_{\perp}$ is orthogonal to $\vwgar$. \Cref{ass:principal-dir} implies that for all $i, j\in [n]$,
    \begin{align*}
    	-\dotpsm{y_i\vx_i}{y_j\vx_j} &= -\dotpsm{y_i\vx_i}{\vwgar} \cdot \dotpsm{y_j\vx_j}{\vwgar} - \dotp{\mPgar (y_i\vx_i)}{\mPgar (y_j\vx_j)} \\
    	&\le \normtwosm{\mPgar \vx_i} \normtwosm{\mPgar \vx_j}.
    \end{align*}
    Then for all $i \in [n]$, we have
    \begin{align*}
    	\frac{1}{n}\sum_{j\in [n]}\max\{-\dotpsm{y_i\vx_i}{y_j\vx_j}, 0\} &\le \frac{1}{n}\sum_{j \in [n]}\normtwosm{\mPgar \vx_i} \normtwosm{\mPgar \vx_j} \\
    	&\le \normtwosm{\mPgar \vx_i} \cdot \alphaLK\dotp{\vmu}{\vwgar} \frac{\gammagar}{\max_{j \in [n]} \normtwosm{\mPgar \vx_j}} \\
    	&\le \alphaLK\dotp{\vmu}{\vwgar}\gammagar.
    \end{align*}
    On the other hand, recall that $\gammagar := \min_{i \in [n]} y_i\dotp{\vwgar}{\vx_i}$, then we have
    \begin{align*}
    	\dotpsm{\vmu}{y_i\vx_i} &= \dotp{\vmu}{\vwgar} \dotp{y_i\vx_i}{\vwgar} + \dotp{\mPgar \vmu}{\mPgar (y_i\vx_i)} \\
    	&\ge \dotp{\vmu}{\vwgar} \dotp{y_i\vx_i}{\vwgar} - \frac{1}{n} \sum_{j \in [n]} \normtwosm{\mPgar \vx_j} \normtwosm{\mPgar \vx_i} \\
    	&\ge \dotp{\vmu}{\vwgar} \gammagar - \normtwosm{\mPgar \vx_i} \cdot \alphaLK\dotp{\vmu}{\vwgar} \frac{\gammagar}{\max_{j \in [n]} \normtwosm{\mPgar \vx_j}} \\
    	&\ge (1 - \alphaLK) \dotp{\vmu}{\vwgar} \gammagar.
    \end{align*}
    Combining these proves that $\dotpsm{\vmu}{y_i\vx_i} \ge \frac{1-\alphaLK}{n \cdot \alphaLK} \sum_{j\in [n]}\max\{-\dotpsm{y_i\vx_i}{y_j\vx_j}, 0\}$.
\end{proof}

\Cref{lem:assump_equiv} gives the main property we will use from  \Cref{assump:cone}, i.e. $\kone \subseteq \mathring{\cone}$.  
\begin{lemma}\label{lem:assump_equiv}
	For linearly separable dataset $\{(\vx_i, y_i)\}_{i \in [n]}$ and $\alphaLK\in (0,1]$, \Cref{assump:cone} is equivalent to $\kone \subseteq \mathring{\cone}$.
\end{lemma}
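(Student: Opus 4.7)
My plan is to unfold both sides of the equivalence into explicit linear inequalities in $\{y_i\vx_i\}_{i\in[n]}$ and match them. First I would observe that the topological interior of the polyhedral cone $\cone$ is $\mathring{\cone} = \{\vw \in \R^d : \dotp{\vw}{y_i\vx_i} > 0,\ \forall i \in [n]\}$; this is standard because $\cone$ is cut out by $n$ weak half-space inequalities, and any direction orthogonal to $\mathrm{span}\{y_i\vx_i\}$ preserves all inner products, so strict positivity in all $n$ constraints is genuinely equivalent to being in the topological interior (not merely the relative interior). Since $\kone = \bigcup_{\lambda>0}\lambda\hone$ is invariant under positive scaling, the inclusion $\kone \subseteq \mathring{\cone}$ reduces to the cleaner statement: $\dotp{\vw}{y_i\vx_i} > 0$ for every $\vw \in \hone$ and every $i \in [n]$.

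Fixing $i$, I would then minimize the linear functional $\alpha \mapsto \sum_{j\in[n]}\alpha_j\dotp{y_j\vx_j}{y_i\vx_i}$ over the box $\alpha \in [\alphaLK, 1]^n$. By linearity, the infimum is attained at the vertex with $\alpha_j = \alphaLK$ when $\dotp{y_j\vx_j}{y_i\vx_i} \ge 0$ and $\alpha_j = 1$ when $\dotp{y_j\vx_j}{y_i\vx_i} < 0$. The key algebraic step is to write $\alpha_j = \alphaLK + (1-\alphaLK)\mathbbm{1}[\dotp{y_j\vx_j}{y_i\vx_i} < 0]$ at this vertex and apply the identity $\sum_j\dotp{y_j\vx_j}{y_i\vx_i} = n\dotp{\vmu}{y_i\vx_i}$, which rearranges the vertex value (up to the irrelevant positive factor $\frac{1}{2n}$) into
\[
\alphaLK n\dotp{\vmu}{y_i\vx_i} - (1-\alphaLK)\sum_{j\in[n]}\max\{-\dotp{y_j\vx_j}{y_i\vx_i},\, 0\}.
\]
Dividing by $\alphaLK n > 0$, strict positivity of this minimum for every $i \in [n]$ is exactly \Cref{assump:cone}, proving both directions of the equivalence simultaneously.

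I do not anticipate any serious obstacle: the content is essentially linear programming on a hyperrectangle plus one algebraic rearrangement. The only place to be mildly careful is the vertex-minimization step when some inner products $\dotp{y_j\vx_j}{y_i\vx_i}$ vanish, but in that case the choice of $\alpha_j$ does not affect the value, so the vertex characterization is still valid. As a sanity check, in the degenerate limit $\alphaLK = 1$ the set $\hone$ collapses to $\{\tfrac{1}{2}\vmu\}$ and both sides of the equivalence reduce to $\dotp{\vmu}{y_i\vx_i} > 0$ for all $i$, which is exactly what the displayed formula gives after substitution.
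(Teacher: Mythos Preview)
Your proposal is correct and follows essentially the same approach as the paper: both reduce $\kone \subseteq \mathring{\cone}$ to strict positivity of $\min_{\valpha \in [\alphaLK,1]^n}\dotp{y_i\vx_i}{\sum_j \alpha_j y_j\vx_j}$ for every $i$, identify the minimizing vertex coordinatewise, and algebraically rearrange the resulting expression into the inequality of \Cref{assump:cone}. The paper packages the per-coordinate minimum as $\Delta_{ij} = (1-\alphaLK)\min\{\dotp{y_i\vx_i}{y_j\vx_j},0\} + \alphaLK\dotp{y_i\vx_i}{y_j\vx_j}$, which is exactly your vertex value $\alpha_j\dotp{y_j\vx_j}{y_i\vx_i}$ with $\alpha_j = \alphaLK + (1-\alphaLK)\mathbbm{1}[\dotp{y_j\vx_j}{y_i\vx_i}<0]$.
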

\begin{proof}
    By definition, we know
    \begin{align*}
    	\cone &= \{\vw \in \R^d : \vw \ne \vzero, \dotp{\vw}{y_i\vx_i}\ge 0, \forall i \in [n]\}, \\
    	\mathring{\cone} &= \{\vw \in \R^d: \dotp{\vw}{y_i\vx_i}> 0, \forall i \in [n]\}.
    \end{align*}
    and $\kone = \left\{\lambda\sum_{i\in [n]} \alpha_i y_i\vx_i: \alpha_i\in [\alphaLK,1], \lambda > 0\right\}$. For any $i \in [n]$, we have
    \begin{align*}
                            & \quad \dotp{\vmu}{y_i\vx_i} > \frac{1-\alphaLK}{n \cdot \alphaLK}\sum_{j\in [n]}\max\{-\dotp{y_i\vx_i}{y_j\vx_j}, 0\} \\
    	\Longleftrightarrow & \quad \frac{1}{n}\sum_{j\in [n]}\left( \frac{1-\alphaLK}{\alphaLK}\min\{\dotp{y_i\vx_i}{y_j\vx_j}, 0\}+ \dotp{y_i\vx_i}{y_j\vx_j}\right)> 0 \\
    	\Longleftrightarrow & \quad \sum_{j\in [n]} \underbrace{\left( (1-\alphaLK)\min\{\dotp{y_i\vx_i}{y_j\vx_j}, 0\}+ \alphaLK\dotp{y_i\vx_i}{y_j\vx_j}\right)}_{\Delta_{ij}} > 0
    \end{align*}
    Note that $\Delta_{ij} = \min_{\alpha_j \in [\alphaLK, 1]} \dotp{y_i\vx_i}{\alpha_j y_j \vx_j}$. So
    \[
    	\sum_{j \in [n]} \Delta_{ij} = \sum_{j \in [n]} \min_{\alpha_j \in [\alphaLK, 1]} \dotp{y_i\vx_i}{\alpha_j y_j \vx_j} = \min_{\valpha \in [\alphaLK, 1]^n}\dotp{y_i\vx_i}{\sum_{j \in [n]} \alpha_j y_j \vx_j}.
    \]
    Therefore we have the following equivalence:
    \begin{align}
    	\text{\Cref{assump:cone}} \quad &\Longleftrightarrow \quad \forall i \in [n]: \min_{\valpha \in [\alphaLK, 1]^n}\dotp{y_i\vx_i}{\sum_{j \in [n]} \alpha_j y_j \vx_j} > 0 \label{eq:ass-cone-equiv-1} \\
    	&\Longleftrightarrow \quad \forall \vw \in \kone, \forall i \in [n]: \dotp{y_i\vx_i}{\vw} > 0 \\
    	&\Longleftrightarrow \quad \kone \subseteq \mathring{\cone}.
    \end{align}
    which completes the proof.
\end{proof}

\Cref{lem:assump_equiv} shows that every direction in $\kone$ has non-zero margin. Below we let the $\delta$ be the minimum of the margin of unit-norm linear separators in $\kone$:
\[
    \delta := \min_{\vw\in \kone \cap \sphS^{d-1}}\min_{i\in [n]} \dotp{y_i\vx_i}{\vw}.
\]
By \eqref{eq:ass-cone-equiv-1} we have $\delta>0$, and thus $\kone \subseteq \cone[\delta]$.

\subsection{Phase I}
The overall result we will prove for phase I in the non-symmetric case is
\Cref{lem:non-sym-phase1-main}.  Compared to the symmetric case, even $G$
function is not linear anymore. Recall $G$ is defined as below:
\[
	G(\vw) := \frac{-\ell'(0)}{n}\sum_{i \in [n]} y_i \phi(\vw^{\top} \vx_i) = \frac{1}{2n}\sum_{i \in [n]} y_i \phi(\vw^{\top} \vx_i).
\]
It holds that $\forall \vw\in\R^d$, $\cpartial G(\vw)\subseteq \kone$. Moreover,
we have $\vw\in \ocone \Longrightarrow \cpartial G(\vw) = \{\vmuplus \}$ and
$\vw\in -\ocone \Longrightarrow\cpartial G(\vw) = \{ \vmusubt\}$. Thanks to
\Cref{assump:cone}, we can show each neuron $\vw_k(t)$ will eventually converge
to areas with fixed sign pattern $\pm \cone[\delta/3]$ and thus $G$ will become
linear. \Cref{lem:non_sym_first_converge_to_cone_G} states this idea more
formally. Its proof is a simpliciation to the realistic case,
\Cref{lem:non_sym_first_converge_to_cone_realistic}, and thus omitted. We will
not use \Cref{lem:non_sym_first_converge_to_cone_G} in the future.

\begin{lemma}\label{lem:non_sym_first_converge_to_cone_G}
For any dataset $\{(\vx_i,y_i)\}_{i\in [n]}$ satisfying \Cref{assump:cone}, suppose $\vw(0) \neq \lambda \vmu^-,\ \forall \lambda\ge0$, and  it holds that 
\begin{align}
    \frac{\dd \vw}{\dd t}\in \normtwo{\vw} \cdot\cpartial G(\vw),
\end{align}
then there exists $T_0>0$, such that $ \vw(T_0)\in \cone[\delta/2]$.
    
\end{lemma}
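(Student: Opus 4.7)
The plan is to exploit the piecewise-linear structure of $G$: by Lemma~\ref{lem:assump_equiv}, $\cpartial G(\vw)\subseteq\kone\subseteq\mathring{\cone}$, so every admissible velocity $\normtwo{\vw}\vg$ has margin at least $\delta\normtwo{\vw}\normtwo{\vg}$ against each $y_i\vx_i$. Moreover, on the open cones $\ocone$ and $-\ocone$ the subgradient is single-valued, equal to $\tfrac{1}{2}\vmuplus$ and $\tfrac{1}{2}\vmusubt$ respectively (adapting the computation from Section~\ref{sec:sketch-phase-i} to the non-symmetric case), so the inclusion collapses to a linear ODE that integrates explicitly. The proof then splits into cases according to where $\vw(0)$ lies.

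The main case is $\vw(0)\in\ocone$. I would write $\vw(t)=\vw(0)+c(t)\vmuplus$ with $\dot c=\tfrac{1}{2}\normtwo{\vw}$ and $c(0)=0$, establish forward invariance of $\ocone$ via $\ocone+\cone\subseteq\ocone$ together with $\vmuplus\in\kone\subseteq\cone$, and then check $\dotp{\vw(0)}{\vmuplus}>0$ by expressing $\vmuplus=\tfrac{1}{n}\sum_i\alpha_i y_i\vx_i$ with $\alpha_i\in\{\alphaLK,1\}$. This yields $\normtwo{\vw(t)}\ge\normtwo{\vw(0)}$ and hence $c(t)\to\infty$, so $\vw(t)/\normtwo{\vw(t)}\to\barvmuplus\in\cone[\delta]$. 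Since $\cone[\delta/2]$ is a relatively open neighborhood of $\barvmuplus$ on the sphere, a finite $T_0$ suffices.

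For $\vw(0)\in -\ocone$ the same integration gives $\vw(t)=\vw(0)+c(t)\vmusubt$. The only way $\vw$ can stay in $-\ocone$ for all time is if $\vw(0)$ lies on the half-ray $\{-\lambda\vmusubt:\lambda\ge 0\}$, along which $\vw(t)$ shrinks monotonically toward $\vzero$ without escaping; the hypothesis on $\vw(0)$ is designed to exclude this bad ray, so in finite time $\vw$ crosses into $\overline{\ocone}$ and Case~1 applies. The remaining possibility that $\vw(0)$ lies on the common boundary is handled by noting that every $\vg\in\cpartial G(\vw(0))\subseteq\kone$ has $\dotp{\vg}{y_i\vx_i}\ge\delta\normtwo{\vg}$, so $\dotp{\vw(t)}{y_i\vx_i}$ becomes strictly positive for all $i$ after arbitrarily small time, placing $\vw$ inside $\ocone$ and reducing once again to Case~1.

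The hard part will be showing that the trajectory cannot slide along $\partial\cone$ for positive time, where the subgradient becomes set-valued and one must argue that no selection can maintain $\dotp{\vw}{y_i\vx_i}=0$ for some $i$. This is exactly where the strict inclusion $\kone\subseteq\mathring{\cone}$ is used quantitatively: it provides the uniform margin $\delta>0$ for every admissible subgradient direction which, combined with the $\normtwo{\vw}$ factor scaling the velocity, forces a Gronwall-type strict growth of every $\dotp{\vw(t)}{y_i\vx_i}$ whenever $\vw\ne\vzero$.
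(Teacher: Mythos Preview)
Your case split $\ocone\,/\,{-\ocone}\,/\,$``common boundary'' is not exhaustive. In general there is a large open region where some $\dotp{\vw}{y_i\vx_i}$ are positive and others negative; your third case (``becomes strictly positive after arbitrarily small time'') only treats points of $\partial\cone$, where all inner products are already $\ge 0$. Relatedly, the reduction in Case~2 is incorrect: when $\vw$ exits $-\ocone$ it does \emph{not} cross into $\overline{\ocone}$ but into this mixed-sign region, so you cannot invoke Case~1. You also never establish a lower bound on $\normtwo{\vw(t)}$ once the trajectory leaves $-\ocone$; without it, the ``Gronwall-type strict growth'' you mention at the end cannot be made quantitative (the velocity carries a factor $\normtwo{\vw}$ and could in principle vanish).

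The paper's argument (omitted there, but a simplification of the proof of \Cref{lem:non_sym_first_converge_to_cone_realistic}) avoids the region-by-region analysis entirely. It works with the single Lyapunov function $\Gamma_i^{\delta/2}(\vw):=\dotp{\vw}{y_i\vx_i}-\tfrac{\delta}{2}\normtwo{\vw}$ and uses $\cpartial G(\vw)\subseteq\hone\subseteq\cone[\delta]$ to get $\frac{\dd}{\dd t}\Gamma_i^{\delta/2}(\vw)\ge\tfrac{\delta}{2}\bigl\|\tfrac{\dd\vw}{\dd t}\bigr\|$ globally, with no case split. The remaining work is a uniform lower bound $\normtwo{\vw(t)}\ge\bar D>0$, argued in two stages: while the trajectory is still in $-\cone[\delta/3]$ it moves along the fixed direction $\vmusubt$ from $\vw(0)$, so its norm is at least $\distsm{\vw(0)}{-\mun}>0$ by the hypothesis on $\vw(0)$; once it leaves $-\cone[\delta/3]$ a cone-separation bound (distance from $\sphS^{d-1}\setminus(-\cone[\delta/2])$ to $-\cone[2\delta/3]$) keeps the norm bounded below thereafter. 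With $\normtwo{\vw}\ge\bar D$ and $\inf_{\vg\in\hone}\normtwo{\vg}>0$, each $\Gamma_i^{\delta/2}$ grows at a uniform linear rate and becomes nonnegative in finite time, i.e.\ $\vw\in\cone[\delta/2]$.

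Your last paragraph already contains this mechanism, but you frame it as a technicality about sliding along $\partial\cone$ rather than as the backbone of the proof. Drop the explicit integration and piecewise cases; run the Lyapunov argument globally and supply the two-stage norm lower bound.
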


However, in the realistic setting, each $\vw_k$ is not following gradient flow of $G$ exactly --- there are tiny correlations between different $\vw_k$. And we will control those correlations by setting initialization very small. This yields \Cref{lem:non_sym_first_converge_to_cone_realistic}.

\begin{lemma}\label{lem:non_sym_first_converge_to_cone_realistic}
Under \Cref{assump:cone}, if $\barvtheta = (\barvw_1, \dots, \barvw_m, \bara_1, \dots, \bara_m)$ satisfies the following three conditions:
\begin{enumerate}
	\item For all $k \in [m]$, $\abssm{\bara_k} = \normtwosm{\barvw_k} \ne 0$;
	\item If $\bara_k > 0$, then $\barvw_k \neq \lambda \vmusubt$ for any $\lambda > 0$;
	\item If $\bara_k < 0$, then $\barvw_k \neq -\lambda \vmuplus$ for any $\lambda > 0$;
\end{enumerate}
then there exist $T_0,\sigmainitmax>0$, such that for any $\sigmainit<\sigmainitmax$, the gradient flow $\vtheta(t) = (\vw_1(t), \dots, \vw_m(t), a_1(t), \dots, a_m(t)) = \phitheta(\sigmainit \barvtheta, t)$ satisfies the following at time $T_0$,
\begin{equation} \label{eq:non_sym_first_converge_to_cone_realistic}
    \vw_k(T_0) \in \begin{cases}
    \cone[\delta/3],  &\quad\text{if } \bar{a}_k > 0, \\
    -\cone[\delta/3], &\quad\text{if } \bar{a}_k < 0.
    \end{cases}
\end{equation}
Moreover, there are constants $A,B>0$ such that $A\sigmainit\le \normtwo{\vw_k(T_0)}\le B\sigmainit$.
\end{lemma}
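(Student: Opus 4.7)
The strategy is to reduce the $\Loss$-flow to the $\tildeLoss$-gradient flow via the small-initialization approximation, exploit \Cref{assump:cone} (equivalently $\kone\subseteq\ocone$) to establish per-neuron cone attraction, and then transfer back.

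Since $\tildeLoss(\vtheta)-\ell(0)=-\sum_k a_kG(\vw_k)$ is $2$-homogeneous, its subgradient flow is $1$-homogeneous and hence scale-invariant: $\phisg(\sigmainit\barvtheta,t)=\sigmainit\,\phisg(\barvtheta,t)$. A direct adaptation of \Cref{lm:phase1-main}, with $\lambda_0$ replaced by any constant $\lambda\ge\sup_{\vg\in\kone}\normtwosm{\vg}$ (which bounds $\phisg$-growth by a Gr\"onwall argument analogous to \Cref{lm:phisg-growth}), gives $\normMsm{\vtheta(t)-\sigmainit\,\phisg(\barvtheta,t)}=O(\sigmainit^3 e^{3\lambda t})$ on $t\le\tfrac{1}{\lambda}\log\tfrac{\sqrt{\lambda/4}}{\sqrt{m}\sigmainit\normMsm{\barvtheta}}$. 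It thus suffices to analyze the ``ideal'' trajectory $\vtheta^{\dagger}(t):=\phisg(\barvtheta,t)$; once we show each $\vw^{\dagger}_k(T_0)\in s_k\cone[\delta/2]$ with $\normtwosm{\vw^{\dagger}_k(T_0)}$ bounded above and below by positive constants depending only on $\barvtheta$, the conclusion for $\vtheta(T_0)$ follows for sufficiently small $\sigmainit$ because $\cone[\delta/3]$ strictly contains $\cone[\delta/2]$ with a positive margin buffer.

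Under $\tildeLoss$ the neurons decouple: $\tfrac{\dd\vw^{\dagger}_k}{\dd t}\in a^{\dagger}_k\cpartial G(\vw^{\dagger}_k)$ and $\tfrac{\dd a^{\dagger}_k}{\dd t}=G(\vw^{\dagger}_k)$. Euler's identity for the $1$-homogeneous $G$ gives $\tfrac{\dd}{\dd t}((a^{\dagger}_k)^2-\normtwosm{\vw^{\dagger}_k}^2)=0$, so balance is preserved: $a^{\dagger}_k(t)=s_k\normtwosm{\vw^{\dagger}_k(t)}$ with $s_k:=\sgn(\bara_k)$. Moreover $|G(\vw)|\le\lambda\normtwosm{\vw}$ yields $\normtwosm{\vw^{\dagger}_k(t)}\in[\normtwosm{\barvw_k}e^{-\lambda t},\normtwosm{\barvw_k}e^{\lambda t}]$ by Gr\"onwall, so no neuron collapses to zero in finite time.

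The core claim is that each $\vw^{\dagger}_k$ enters $s_k\cone[\delta/2]$ in finite time. By \Cref{lem:assump_equiv}, $\cpartial G(\vw)\subseteq\kone\subseteq\cone[\delta]$ for all $\vw$, so the subgradient always points into the good cone with margin at least $\delta$. In the open region $\ocone$ (respectively $-\ocone$), $\cpartial G$ collapses to the singleton $\tfrac{1}{2}\vmuplus$ (respectively $\tfrac{1}{2}\vmusubt$), so the per-neuron flow reduces within each region to a $2\times 2$ linear ODE in the coordinates $(\dotp{\vw^{\dagger}_k}{\vmuplus},a^{\dagger}_k)$ or $(\dotp{\vw^{\dagger}_k}{\vmusubt},a^{\dagger}_k)$, with spectrum $\pm\tfrac{1}{2}\normtwosm{\vmuplus}$ (resp.\ $\pm\tfrac{1}{2}\normtwosm{\vmusubt}$). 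The dominant eigenmode drives $\vw^{\dagger}_k$ exponentially toward the attracting ray parallel to $\pm\vmuplus$ or $\pm\vmusubt$ (signs determined by $s_k$ and the region), unless the initial condition lies in the one-dimensional invariant subspace on which the dominant mode vanishes --- a spectral analysis identifies these ``stuck'' directions precisely with those excluded by hypotheses (2) and (3). On the measure-zero boundary between activation regions $\cpartial G$ becomes set-valued but still lies in $\cone[\delta]$, so every subgradient selection pushes the flow strictly into the cone and prevents stalling there.

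Combining these steps, for sufficiently small $\sigmainit$ the estimate $\vw_k(T_0)=\sigmainit\vw^{\dagger}_k(T_0)+O(\sigmainit^3)$ places $\vw_k(T_0)$ inside a neighborhood of $\sigmainit\vw^{\dagger}_k(T_0)\in s_k\cone[\delta/2]$ that remains within the scale-invariant $s_k\cone[\delta/3]$, and the two-sided bounds on $\normtwosm{\vw^{\dagger}_k(T_0)}$ give $A\sigmainit\le\normtwosm{\vw_k(T_0)}\le B\sigmainit$ for appropriate constants $A,B$ depending on $\barvtheta$. The main obstacle is the cone-attraction step: pinpointing the one-dimensional invariant ``stuck'' subspaces exactly, matching them to the hypotheses' exclusion lists, and carefully handling trajectories that cross between $\ocone$ and $-\ocone$ through the set-valued boundary. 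The decisive ingredient is $\kone\subseteq\ocone$, which upgrades cone attraction from a local to a global property by ensuring the subgradient can never point in a ``bad'' direction.
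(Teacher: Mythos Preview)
Your approach differs substantially from the paper's, and has two genuine gaps.

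The paper does \emph{not} reduce to the $\tildeLoss$-flow at all in this lemma; it works directly with the true $\Loss$-flow via the inclusion $-\tfrac{\cpartial\Loss(\vtheta)}{\partial\vw_k}\subseteq a_k\hone[\eps]$ (\Cref{lem:non_sym_eps}) valid while $\normMsm{\vtheta}\le\sqrt{\eps/(2m)}$. Cone attraction is proved by the Lyapunov function $\Gamma^{c}_i(\vw):=\dotp{\vw}{y_i\vx_i}-c\normtwosm{\vw}$: one checks $\tfrac{\dd}{\dd t}\Gamma^{\delta/3}_i(s_k\vw_k)\ge(\delta/3)\normtwosm{\tfrac{\dd\vw_k}{\dd t}}\ge(\delta/3)h_{\min}\,|a_k|$, so the key ingredient is a \emph{constant}-in-time lower bound $|a_k(t)|\ge\bar D\sigmainit$ (not the exponentially decaying Gr\"onwall bound you wrote). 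The paper obtains $\bar D$ by a separate geometric argument: while $\vw_k\in -\cone[\delta/3]$ the velocity lies in $\mun[\eps]$, so $\vw_k(t)\in\vw_k(0)+\mun[\eps]$, and the distance of this set from the origin is positive precisely by the excluded initial directions.

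Your first gap is the ``direct adaptation of \Cref{lm:phase1-main}''. In the symmetric case $G$ is linear and $\tildeLoss$ is smooth, so the Gr\"onwall comparison uses the global Lipschitz bound $\normMsm{\nabla\tildeLoss(\vtheta)-\nabla\tildeLoss(\tildevtheta)}\le\lambda_0\normMsm{\vtheta-\tildevtheta}$. Here $G$ is only piecewise linear, $\nabla\tildeLoss$ is discontinuous across activation-region boundaries, and that Lipschitz term is simply unavailable when $\vtheta$ and $\tildevtheta$ sit in different regions. You cannot assume same-region membership either, since that is exactly the cone conclusion you are trying to establish. Your second gap is in the spectral step. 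The complement of $\ocone\cup(-\ocone)$ is not a measure-zero boundary: it contains many full-dimensional activation regions $R$, in each of which $\nabla G=\vv_R\in\kone$ is some other constant and the stable (``stuck'') eigendirection for $s_k=+1$ is the ray along $-\vv_R$, not along $-\vmusubt$. To pin the stuck directions down to those in hypotheses~(2)--(3) you would need the extra observation that $-\vv_R\in-\ocone$ (since $\vv_R\in\kone\subseteq\ocone$), so the stuck ray can only be \emph{contained in} the region $R$ when $R=-\ocone$; you have not made this argument. The paper's Lyapunov route sidesteps both issues, needing neither a flow comparison nor any region-by-region eigen-analysis.
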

It is easy to see that the three conditions in
\Cref{lem:non_sym_first_converge_to_cone_realistic} hold with probability $1$
over the random draw of $\barvtheta_0 \sim \Dinit(1)$. Then after time $T_0$,
all the neurons $\vw_k$ are either in $\cone[\delta/3]$ or $-\cone[\delta/3]$,
and will not leave it until $T^\eps_{\sigmainit}$, which implies the sign
patterns $\sgn(\dotp{\vx_i}{\vw_k(t)})=s_ky_i$ is fixed for
$t\in[T_0,T^\eps_{\sigmainit}]$. Thus similar to the symmetric case,
$\vtheta(t)$ evolves approximately under power iteration and yields the
following lemma.
\begin{lemma}\label{lem:non-sym-phase1-main}
	Suppose that \Cref{assump:cone,ass:vmuplus-greater} hold.
	Let $T_1(\sigmainit, r) := \frac{1}{\lambda^+_0} \ln \frac{r}{\sqrt{m} \sigmainit}$.
	With probability $1$ over the random draw of
	$\barvtheta_0 = (\barvw_1, \dots, \barvw_m, \bara_1, \dots, \bara_m) \sim \Dinit(1)$,
	the prerequisites of \Cref{lem:non_sym_first_converge_to_cone_realistic} are satisfied.
	In this case, there exists a vector $\barvb(\sigmainit) \in \R^m$ for any $\sigmainit > 0$ such that the following statements hold:
	\begin{enumerate}
		\item There exist constants $C_1 > 0, C_2 > 0, T_0 \ge 0, r_{\max} > 0$ such that for $r \in (0, r_{\max}), \sigmainit \in (0, C_1 r^3)$, any neuron $(\vw_k, a_k)$ at time $T_0 + T_1(\sigmainit, r)$ can be decomposed into
		\begin{align*}
			\textrm{If $\bara_k >0$:}\quad \vw_k(T_0+T_1(\sigmainit, r)) &= r\barb_k(\sigmainit) \barvmuplus + \Delta\vw_k, \\
			a_k(T_0+T_1(\sigmainit, r)) &= r\barb_k(\sigmainit) + \Delta a_k,\\
			\textrm{If $\bara_k <0$:}\quad	\vw_k(T_0+T_1(\sigmainit, r)) &= r^{1-\kappa} \barb_k(\sigmainit) \barvmusubt + \Delta\vw_k, \\
			a_k(T_0+T_1(\sigmainit, r)) &= r^{1-\kappa} \barb_k(\sigmainit) + \Delta a_k,
		\end{align*}
		where the error term $\Delta \vtheta :=(\Delta \vw_1, \dots, \Delta
		\vw_m, \Delta a_1, \dots, \Delta a_m)$ is upper bounded by
		$\normMsm{\Delta \vtheta} \le C_2 r^3$ and $\kappa$ is the gap $1-
		\frac{\normtwosm{\vmusubt}}{\normtwosm{\vmuplus}} > 0$.
		\item There exist constants $\bar{A}, \bar{B} > 0$ such that
		$\abssm{\barb_k(\sigmainit)} \in [\bar{A}, \bar{B}]$ whenever $\bara_k
		>0$ and $\abssm{\barb_k(\sigmainit)} \in [\sigmainit^{\kappa}\bar{A},
		\sigmainit^{\kappa}\bar{B}]$ whenever $\bara_k < 0$.
	\end{enumerate}
\end{lemma}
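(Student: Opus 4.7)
The plan is to split Phase~I into two sub-phases, mirroring the role of \Cref{lm:phase1-diff} in the symmetric case. First, with probability $1$ over $\barvtheta_0\sim\Diniteq(1)$ the prerequisites of \Cref{lem:non_sym_first_converge_to_cone_realistic} are satisfied: the balanced draw enforces $|\bara_k|=\|\barvw_k\|>0$ surely, and each exceptional ray such as $\{\barvw_k\in\R_{+}\vmusubt\}$ has Lebesgue measure zero. Invoking \Cref{lem:non_sym_first_converge_to_cone_realistic} then yields a constant time $T_0$ independent of $\sigmainit$ and constants $A,B>0$ such that at time $T_0$ one has $\vw_k(T_0)\in\sgn(\bara_k)\cone[\delta/3]$ with $\|\vw_k(T_0)\|\in[A\sigmainit,B\sigmainit]$ for every $k$.

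In the second sub-phase I run gradient flow from $T_0$ up to $T_0+T_1(\sigmainit,r)$. The strict cone containment freezes the sign pattern $\sgn(\dotp{\vw_k}{\vx_i})$ along the trajectory (provided we confirm it stays in $\pm\cone$), so $G$ restricts to $G(\vw_k)=\dotp{\vw_k}{\vmuplus}$ on positive-$\bara_k$ neurons and $G(\vw_k)=\dotp{\vw_k}{\vmusubt}$ on negative ones. The linearized flow on $\tildeLoss$ then decouples into per-neuron linear ODEs
\[
  \frac{\dd}{\dd t}\begin{bmatrix}\vw_k\\a_k\end{bmatrix}=\mM_k\begin{bmatrix}\vw_k\\a_k\end{bmatrix},\qquad
  \mM_k:=\begin{cases}\begin{bmatrix}\vzero&\vmuplus\\\vmuplus^{\top}&0\end{bmatrix}&\bara_k>0\\[4pt]\begin{bmatrix}\vzero&\vmusubt\\\vmusubt^{\top}&0\end{bmatrix}&\bara_k<0\end{cases},
\]
whose top eigenpairs are $(\lambda^{+}_0,\tfrac{1}{\sqrt 2}(\barvmuplus,1))$ and $(\lambda^{-}_0,\tfrac{1}{\sqrt 2}(\barvmusubt,1))$ with $\lambda^{-}_0=(1-\kappa)\lambda^{+}_0$. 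Over time $T_1(\sigmainit,r)=\tfrac{1}{\lambda^{+}_0}\ln\tfrac{r}{\sqrt m\sigmainit}$ the linear ODE produces growth factors $r/(\sqrt m\sigmainit)$ on positive neurons and $(r/(\sqrt m\sigmainit))^{1-\kappa}$ on negative ones; combined with the $\Theta(\sigmainit)$-sized top-eigenvector component at time $T_0$ (almost surely nonzero, and bounded below by $\alphaLK\delta/3\cdot\|\vw_k(T_0)\|$ thanks to $\vw_k(T_0)\in\pm\cone[\delta/3]$), the aligned magnitudes at time $T_0+T_1$ become $\Theta(r)$ on positive neurons and $\Theta(\sigmainit^{\kappa}r^{1-\kappa})$ on negative neurons. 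Setting $\barb_k(\sigmainit)$ to be the aligned coefficient rescaled by $r^{-1}$ or $r^{-(1-\kappa)}$ respectively yields the stated ranges $[\bar A,\bar B]$ and $[\sigmainit^{\kappa}\bar A,\sigmainit^{\kappa}\bar B]$.

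The perturbation between true and linearized dynamics is controlled exactly as in \Cref{lm:phase1-main}: \Cref{cor:diff-Loss-tildeLoss-smooth} gives $\normMsm{\vg}\le m\normMsm{\vtheta}^3$ for every $\vg\in\cpartial\Loss(\vtheta)-\nabla\tildeLoss(\vtheta)$, so \Gronwall{}'s inequality against the growth envelope $\normMsm{\vtheta(T_0+t)}\lesssim e^{\lambda^{+}_0 t}\sigmainit$ yields an M-norm error of $O(m\sigmainit^3 e^{3\lambda^{+}_0 T_1}/\lambda^{+}_0)=O(r^3/\sqrt m)$, which is absorbed into the advertised $C_2 r^3$. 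The cone containment is preserved since $\pm\barvmuplus$ and $\pm\barvmusubt$ lie in $\pm\ocone$ by \Cref{lem:assump_equiv}, while the $O(r^3)$ perturbation is too small to push any $\vw_k/\|\vw_k\|$ outside $\pm\cone$ once $r<r_{\max}$.

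The main obstacle is the interplay of the two growth rates: negative neurons end the phase at magnitude $\Theta(\sigmainit^{\kappa}r^{1-\kappa})$, which can be much smaller than the uniform $O(r^3)$ error bound. To conclude that their direction remains aligned to $\barvmusubt$ one must use the finer per-neuron bound $\normtwosm{\partial(\Loss-\tildeLoss)/\partial(\vw_k,a_k)}\le m\normMsm{\vtheta}^2\normtwosm{(\vw_k,a_k)}$ from \Cref{lm:diff-Loss-tildeLoss-partial} rather than the uniform bound; this yields a per-neuron multiplicative error of order $mr^2 T_1=O(mr^2\log(1/r))$, which vanishes with $r$ and thus preserves directional alignment even on the slowly-growing negative neurons. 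Choosing $\sigmainit\le C_1 r^3$ with $C_1$ and $r_{\max}$ depending on $\kappa$ and the cone margin $\delta$ closes the argument.
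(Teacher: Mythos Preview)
Your approach is the same as the paper's in structure and in the main estimates: invoke \Cref{lem:non_sym_first_converge_to_cone_realistic} to reach the cones at time $T_0$ with $\|\vw_k(T_0)\|\in[A\sigmainit,B\sigmainit]$, linearize thereafter so that each neuron evolves under $\mM_+$ or $\mM_-$ according to $\sgn(\bara_k)$, read off the top-eigenvector components to define $\barb_k(\sigmainit)$, and control the gap between true and linearized flow by \Gronwall{} exactly as in \Cref{lm:non-sym-phase1-norm}. The paper obtains the lower bound on $|\barb_k|$ in Item~2 even more simply than you do, straight from the balanced condition $a_k(T_0)=\sgn(\bara_k)\|\vw_k(T_0)\|$, which already gives $|\dotpsm{\vw_k(T_0)}{\barvmuplus}+a_k(T_0)|\ge\|\vw_k(T_0)\|\ge A\sigmainit$; your cone-margin bound works as well but is not needed.

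Your last paragraph, however, tackles a non-issue and does not appear in the paper's proof. The lemma does \emph{not} claim that negative neurons are directionally aligned with $\barvmusubt$; it only records the decomposition $\vw_k(T_0+T_1)=r^{1-\kappa}\barb_k\barvmusubt+\Delta\vw_k$ with $\normMsm{\Delta\vtheta}\le C_2r^3$, and this holds regardless of which summand dominates. The uniform M-norm bound from \Cref{lm:non-sym-phase1-norm} together with the trivial bound $\le\sqrt{2}\normMsm{\vtheta(T_0)}=O(\sigmainit)$ on the non-top eigencomponents of $\exp(T_1\mM_\pm)$ already gives $\normMsm{\Delta\vtheta}=O(r^3)+O(\sigmainit)=O(r^3)$; no per-neuron refinement is required. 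Downstream (\Cref{lm:nonsym-phase2-main}) the negative neurons are simply assigned $\hatvw_k=\vzero,\hata_k=0$ in the well-aligned vector, and only the size bound $\|\vw_k(T_0+T_1)\|\le r^{1-\kappa}|\barb_k|+C_2r^3=O(r^{1+2\kappa}+r^3)$ is used, never the direction. So your proposed finer bound, while correct, buys nothing here and can be dropped.
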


As $\sigmainit \to 0$, $\abssm{\barb_k(\sigmainit)} \to 0$ for neurons with
$\bara_k < 0$, while $\abssm{\barb_k(\sigmainit)} \in [\bar{A}, \bar{B}]$
remains for neurons with $\bara_k > 0$. This means when the initialization scale
is small, only the neurons with $\bara_k > 0$ remain effective and the others
become negligible. Those effective neurons move their weight vectors towards the
direction of $\barvmuplus$, until the error term $\Delta\vtheta$ becomes large.

\subsubsection{Proof of Lemma \ref{lem:non_sym_first_converge_to_cone_realistic}}

\begin{proof}[Proof of \Cref{lem:non_sym_first_converge_to_cone_realistic}]
	Let $s_k := \sgn(\bara_k)$. By \Cref{cor:weight-equal}, $a_k(t) = s_k\normtwosm{\vw(t)}$ for all $t \ge 0$. Define $T^\eps_{\sigmainit}:=\inf\left\{t\ge 0 : \normM{\vtheta(t)}\ge \sqrt{\frac{\eps}{m}}\right\}$. By \Cref{lem:non_sym_eps}, we have $\forall t\le T^\eps_{\sigmainit}$, $-\frac{\cpartial \Loss(\vtheta(t))}{\partial \vw_k} \subseteq a_k(t)\hone[\eps]\subseteq s_k\kone[\eps]$.  Since $\kone\subseteq {\cone[\delta]}$, there exists $\eps_1>0$, such that for all $\eps<\eps_1$, $\kone[\eps] \subseteq {\cone[2\delta/3]}$.
	The high-level idea of the proof is that  suppose  $-\frac{\cpartial \Loss(\vtheta(t))}{\partial \vw_k} \subseteq s_k \cone[2\delta/3]$ holds for sufficiently long time $T_0$ , $\vw_k(t)$ will eventually end up in a  cone $s_k\cone[\delta/3]$ slightly wider than $s_k \cone[2\delta/3]$, as long as the total distance traveled is sufficiently long. On the other hand, we can make $\sigmainitmax$ sufficiently small, such that $T^\eps_{\sigmainit}\ge T_0$ for all $\sigmainit<\sigmainitmax$.
	
	By \Cref{lm:weight-balance} and Lipschitzness of $\ell$,
	\[
	\abs{\frac{1}{2}\frac{\dd \normtwosm{\vw_k}^2}{\dd t}} = \abs{\frac{1}{n} \sum_{i=1}^{n} \ell'(q_i(\vtheta)) y_i a_k \phi(\vw_k^{\top} \vx_i)} \le \frac{1}{n} \sum_{i=1}^{n} \abssm{a_k} \cdot \normtwosm{\vw_k} \le \normtwosm{\vw_k}^2.
	\]
	Then we have
	\begin{align}\label{eq:non_sym_first_phase_growth_control}
	\forall t\le T_{\sigmainit}^\eps, \quad \normtwosm{\vw_k(t)} \in [\normtwosm{\vw_k(0)}e^{-t}, \normtwosm{\vw_k(0)}e^t].
	\end{align}
	Thus for any $T_0\ge 0$, if $\sigmainit\le e^{-T_0}\frac{\sqrt{\frac{\eps}{m}}}{\normM{\barvtheta}}$, we have $T_0 \le T^\eps_{\sigmainit}$.
	
	In order to lower bound the total travel distance for each $\vw_k(t)$, it turns out that it suffices to lower bound the $\inf_{t\in[0,T_{\sigmainit}^\eps]} \normtwosm{\vw_k(t)}$ by $\bar{D}\sigmainit$, where $\bar{D} > 0$ is some constant. We will first show that we can guarantee the existence of such constant $\bar{D}$ by picking sufficiently small $\eps$. Then we will formally prove the original claim of \Cref{lem:non_sym_first_converge_to_cone_realistic}.
	
	\paragraph{Existence of $\bar{D}$.} By definitions of $\mup$ and $\mun$, it holds that $\forall k\in [m]$,
	\[\barvw_k \notin 
	\begin{cases}
	\mup,  &\mbox{if $\bara_k<0$;}\\
	-\mun, &\mbox{if $\bara_k>0$.}
	\end{cases}
	\]
	In other words
	\[
	\bar{d}:= \min \left\{\min_{k:\bara_k<0} \distsm{\barvw_k- \mup}{\vzero},\min_{k:\bara_k>0} \distsm{\barvw_k+ \mun}{\vzero} \right\} > 0.
	\]
	By the continuity of the distance function, there exists  $\eps_2>0$ such that $\forall \eps \in (0, \eps_2)$, it holds that 
	\[
	\min \left\{\min_{k:\bara_k<0} \distsm{\bar{\vw}_k - \mup[\eps]}{\vzero},\min_{k:\bara_k>0} \distsm{\bar{\vw}_k + \mun[\eps]}{\bm{0}} \right\} \ge \frac{\bar{d}}{2}.
	\]
	Now we take $\eps = \min\{\eps_1,\eps_2\}$. We will first show the existence of such $\bar{D}$ for $k \in [m]$ with $\bara_k>0$. And the same argument holds for $k$ with negative $\bara_k$. Let $t_k := \sup\{t\le T^\eps_{\sigmainit} : \vw_k(t)\in -\cone[\delta/3]\}$. We note that $\vw_k(t)\in -\cone[\delta/3]$ for all $t \le t_k$. Otherwise, $\vw_k(t') \notin -\cone[\delta/3]$ for some $t' <t_k$. On the one hand, we have $\vw_k(t_k)\in\vw_k(t')+ \kone[\eps]\subseteq \vw_k(t')+\cone[2\delta/3]$; on the other hand, we also know that $\vw_k(t_k)\in -\cone[\delta/3]$ by continuity of the trajectory of $\vw_k(t)$. This implies $-\cone[\delta/3] \cap (\cone[2\delta/3]+\vw_k(t'))\neq \varnothing$, and thus $\vw_k(t') \in -\cone[\delta/3]-\cone[2\delta/3] \subseteq -\cone[\delta/3]-\cone[\delta/3] \subseteq -\cone[\delta/3]$. Contradiction.
	
	Now we have $\vw_k(t)\in -\cone[\delta/3]$ for all $t \le t_k$, and this implies that $\dotp{\vw_k(t)}{\vx_i}<0$ for all $i \in [n]$. Then $ \frac{\dd \vw_k(t)}{\dd t} = -\nabla_{\vw_k} \Loss(\vtheta(t)) \in \mun[\eps]$. Therefore we have $\inf_{t\in[0,t_k]} \normtwosm{\vw_k(t)} \ge \distsm{\vw_k(0)+\mun[\eps]}{\vzero} = \sigmainit \distsm{\barvw_k+\mun[\eps]}{\vzero} \ge \frac{\bar{d}\sigmainit}{2}$. 
	
	Below we show the norm lower bound for any $t$ such that $t \in [t_k, T^\eps_{\sigmainit}]$. Let $\bar{d}'$ be the minimum distance between any point in $-\cone[2\delta/3]$ and any point on unit sphere but not in $-\cone[\delta/2]$, that is,
	\[
	\bar{d}':= \distsm{\sphS^{d-1} \setminus (-\cone[\delta/2])}{-\cone[2\delta/3]}
	\]
	We claim that $\bar{d}' > 0$. Otherwise there is a sequence of $\{\vw_j\}$ with unit norm and $\vw_j\notin-\cone[\delta/2]$ satisfying that $\lim_{n\to \infty} \dist{\vw_k}{-\cone[2\delta/3]} = 0$. Let $\barvw$ be a limit point, then $\barvw \in-\cone[2\delta/3]$ since $-\cone[2\delta/3]$ is closed. Since $-\cone[2\delta/3] \subseteq -\ocone[\delta/2]$, we further have $\barvw \in-\ocone[\delta/2]$, which contradicts with the definition of limit point.
	
	By the continuity of $\vw_k(t)$, we know $\vw_k(t_k)\notin -\cone[\delta/2]$. Thus for any $t \in [t_k, T^\eps_{\sigmainit}]$, we have $\vw_k(t)\in \vw_k(t_k)+ \cone[2\delta/3]$ and $\inf_{t \in [t_k, T^\eps_{\sigmainit}]} \normtwo{\vw_k(t)}\ge \distsm{\vzero}{\vw_k(t_k)+\cone[2\delta/3]} =\distsm{-\cone[2\delta/3]}{\vw_k(t_k)} = \normtwo{\vw_k(t_k)} \distsm{-\cone[2\delta/3]}{\frac{\vw_k(t_k)}{\normtwosm{\vw_k(t_k)}}}\ge \normtwosm{\vw_k(t_k)} \cdot \bar{d}'\ge \frac{\bar{d}\bar{d}'\sigmainit}{2}$. We can apply the same argument for those $k$ with $\bara_k < 0$, and finally we can conclude that $\normtwo{\vw_k(t)}\ge \bar{D}\sigmainit$ for all $t\in[0,T^\eps_{\sigmainit}]$ and $k\in [m]$, where $\bar{D}:=\max\{1,\bar{d}'\}\frac{\bar{d}}{2}$.

	\paragraph{Convergence to $\cone[\delta/3]$.} For $c \ge 0$ and $i\in[n]$ define $\Gamma^c_i(\vw): = \dotp{\vw}{y_i\vx_i}-c\normtwo{\vw}$. For all $k\in [m]$ and $t\le T^\eps_{\sigmainit}$, it holds that 
	\begin{align*}
	\frac{\dd \Gamma^{\delta/3}_i(s_k\vw_k)}{\dd t}
	&=\dotp{\frac{\dd \vw_k}{\dd t}}{s_ky_i\vx_i} - (\delta/3)\dotp{\frac{\dd \vw_k}{\dd t}}{\frac{\vw_k}{\normtwo{\vw_k}}}\\
	&\ge (2\delta/3)\normtwo{\frac{\dd \vw_k}{\dd t}} - (\delta/3)\normtwo{\frac{\dd \vw_k}{\dd t}}
	= (\delta/3)\normtwo{\frac{\dd \vw_k}{\dd t}},
	\end{align*}
	where the inequality is because
	$\frac{\dd \vw_k}{\dd t} \subseteq a_k\hone[\eps] \subseteq a_k\cone[2\delta/3]$ and $\dotpsm{\frac{\dd \vw_k}{\dd t}}{\frac{\vw_k}{\normtwo{\vw_k}}} \le \normtwo{\frac{\dd \vw_k}{\dd t}}$.
	
	Let $h_{\min} :=\inf_{\vw\in \hone[\eps]} \normtwosm{\vw} = \min_{\vw\in \hone[\eps]} \normtwosm{\vw} >0 $. Note that $\abssm{a_k(t)} = \normtwo{\vw_k(t)}\ge \bar{D}\sigmainit$. Using $\frac{\dd \vw_k}{\dd t} \subseteq a_k\hone[\eps]$ again we have  
	\[\frac{\dd \Gamma^{\delta/3}_i(s_k\vw_k(t))}{\dd t} \ge \frac{\delta h_{\min} \bar{D}\sigmainit}{3}.\]
	Thus if we pick
	\[
	T_0 := \max\left\{\frac{3}{\delta h_{\min} \bar{D}} \max_{i\in [n],k\in [m]}\{-\Gamma^{\delta/3}_i(s_k\barvw_k)\}, 0 \right\}
	\]
	and set $\sigmainitmax \le e^{-T_0} \frac{\sqrt{\frac{\eps}{m}}}{\normM{\barvtheta}}$ then it holds that $T_0\le T^\eps_{\sigmainit}$ for all $\sigmainit\le\sigmainitmax$ and that
	\begin{align*}
	\Gamma^{\delta/3}_i(s_k\vw_k(T_0)) 
	&\ge \frac{\delta h_{\min} \bar{D}\sigmainit T_0}{3} + \Gamma^{\delta/3}_i(s_k\vw_k(0)) \\
	&\ge \sigmainit\left(\frac{\delta h_{\min} \bar{D} T_0}{3} + \Gamma^{\delta/3}_i(s_k\bar{\vw}_k(0)) \right)\\
	&\ge  0,
	\end{align*} 
	which implies \eqref{eq:non_sym_first_converge_to_cone_realistic}.
	
	Finally, by \eqref{eq:non_sym_first_phase_growth_control}, it suffices to pick $A = e^{-T_0}\min_{k\in[m]}\normtwo{\bar{\vw}_k}$ and $B = e^{T_0}\max_{k\in[m]}\normtwo{\bar{\vw}_k}$.
\end{proof}

\subsubsection{Proof of Lemma \ref{lem:non-sym-phase1-main}}
Note that $G(\vw) = \dotp{\vw}{\vmuplus}$ for $\vw\in\cone[\delta/3]$ and  $G(\vw) = \dotp{\vw}{\vmusubt}$ for $\vw\in-\cone[\delta/3]$. Similar to the first-phase analysis to the symmetric case, we use $\phisg(\tildevtheta_0, t)$ to denote the trajectory of gradient flow on $\tildeLoss$:
\[
	\tildeLoss(\vtheta) := \ell(0) + \sum_{k \in [m]} a_k G(\vw_k) .
\]
Throughout this subsection, we will set $T_0$ and $\eps$ as defined in the proof of \Cref{lem:non_sym_first_converge_to_cone_realistic}, and therefore by \Cref{lem:non_sym_first_converge_to_cone_realistic}, we know there is $\sigmainitmax>0$, s.t. $a_k(T_0)\vw_k(T_0)\in \cone[\delta/3]$ for all $\sigmainit\le \sigmainitmax$.
This means the dynamics of $\tildevtheta(t) = (\tilde{\vw}_1(t), \dots, \tilde{\vw}_m(t), \tilde{a}_1(t), \dots, \tilde{a}_m(t)) = \phisg(\vtheta(T_0), t - T_0)$ can be described by linear ODE for $T_0 \le t\le T^\eps_{\sigmainit}$.
\begin{align*}
	 \textrm{If $\bara_k > 0$:} & \qquad \frac{\dd \tilde{\vw}_k}{\dd t} = \tilde{a}_k \vmuplus, \qquad \frac{\dd \tilde{a}_k}{\dd t} = \dotpsm{\tilde{\vw}_k}{\vmuplus};\\
	 \textrm{If $\bara_k < 0$:} & \qquad \frac{\dd \tilde{\vw}_k}{\dd t} = \tilde{a}_k \vmusubt, \qquad \frac{\dd \tilde{a}_k}{\dd t} = \dotpsm{\tilde{\vw}_k}{\vmusubt}.
\end{align*}

Let $\mM_+ := \begin{bmatrix}
	\vzero & \vmuplus \\
	(\vmuplus)^{\top} & 0
	\end{bmatrix}$ and $\mM_- := \begin{bmatrix}
	\vzero & \vmusubt \\
	(\vmusubt)^{\top} & 0
	\end{bmatrix}$. The largest eigenvalues for $\mM_+$ and $\mM_-$ are $\lambda_0^+ :=\normtwo{\vmuplus}$ and $\lambda_0^- :=\normtwo{\vmusubt}$ respectively. Then the above linear ODE can be solved as
\begin{align}
\textrm{If $\bara_k > 0$:} & \qquad \begin{bmatrix}
\tildevw_k(T_0 + t) \\
\tildea_k(T_0 + t)
\end{bmatrix} = \exp(t\mM_+)\begin{bmatrix}
\vw_k(T_0) \\
a_k(T_0)
\end{bmatrix}; \label{eq:mmplusexp} \\
\textrm{If $\bara_k < 0$:} & \qquad \begin{bmatrix}
\tildevw_k(T_0 + t) \\
\tildea_k(T_0 + t)
\end{bmatrix} = \exp(t\mM_-)\begin{bmatrix}
\vw_k(T_0) \\
a_k(T_0)
\end{bmatrix}. \label{eq:mmsubtexp}
\end{align}
\begin{lemma} \label{lm:non-sym-phisg-growth}
	Let $\tildevtheta(t) = \phisg(\vtheta(T_0), t-T_0)$. Then for all $T_0\le t\le T^\eps_{\sigmainit}$, it holds that 
	\[
		\normMsm{\tildevtheta(t)} \le \exp((t-T_0)\lambda^+_0)\normMsm{\tildevtheta(T_0)}.
	\]
\end{lemma}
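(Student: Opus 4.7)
The plan is to mimic the proof of \Cref{lm:phisg-growth} in the symmetric case, with the only wrinkle being that each neuron evolves under one of two linear ODEs (governed by $\mM_+$ or $\mM_-$) depending on the sign of $\bara_k$. Crucially, by \Cref{ass:vmuplus-greater} we have $\lambda_0^- = \normtwosm{\vmusubt} \le \normtwosm{\vmuplus} = \lambda_0^+$, so a single growth rate $\lambda_0^+$ governs all neurons uniformly.

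First, I will fix an arbitrary neuron $k$ and bound its componentwise growth. Using the explicit ODEs displayed just before the lemma, together with the Cauchy--Schwarz inequality, we have for $k$ with $\bara_k > 0$:
\[
\normtwo{\frac{\dd \tildevw_k}{\dd t}} = \abssm{\tildea_k}\,\normtwosm{\vmuplus} \le \lambda_0^+ \abssm{\tildea_k}, \qquad \abs{\frac{\dd \tildea_k}{\dd t}} = \abssm{\dotpsm{\tildevw_k}{\vmuplus}} \le \lambda_0^+ \normtwosm{\tildevw_k},
\]
and for $k$ with $\bara_k < 0$ the same bounds hold with $\lambda_0^-$ in place of $\lambda_0^+$; since $\lambda_0^- \le \lambda_0^+$, both cases are dominated by $\lambda_0^+$. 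Consequently
\[
\max\{\normtwosm{\tildevw_k(t)}, \abssm{\tildea_k(t)}\} \le \max\{\normtwosm{\tildevw_k(T_0)}, \abssm{\tildea_k(T_0)}\} + \int_{T_0}^{t} \lambda_0^+ \max\{\normtwosm{\tildevw_k(\tau)}, \abssm{\tildea_k(\tau)}\}\,\dd\tau.
\]

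Second, I apply \Gronwall's inequality~\eqref{eq:gron-2} to this integral inequality to obtain
\[
\max\{\normtwosm{\tildevw_k(t)}, \abssm{\tildea_k(t)}\} \le \exp((t-T_0)\lambda_0^+) \max\{\normtwosm{\tildevw_k(T_0)}, \abssm{\tildea_k(T_0)}\}.
\]
Finally, taking the maximum of both sides over $k \in [m]$ yields the desired bound
$\normMsm{\tildevtheta(t)} \le \exp((t-T_0)\lambda_0^+)\normMsm{\tildevtheta(T_0)}$. Alternatively, one can observe directly from the closed-form solutions \eqref{eq:mmplusexp} and \eqref{eq:mmsubtexp} that $\normtwosm{(\tildevw_k(T_0+s), \tildea_k(T_0+s))} \le e^{s\lambda_0^+}\normtwosm{(\vw_k(T_0), a_k(T_0))}$, since the operator norms of $\exp(s\mM_+)$ and $\exp(s\mM_-)$ are $e^{s\lambda_0^+}$ and $e^{s\lambda_0^-} \le e^{s\lambda_0^+}$ respectively (both matrices are symmetric with spectral radius $\lambda_0^\pm$).

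There is no serious obstacle: the lemma is a direct growth bound and the entire argument parallels \Cref{lm:phisg-growth}. The only care needed is ensuring that the single exponent $\lambda_0^+$ covers both sign groups of neurons, which is exactly what \Cref{ass:vmuplus-greater} guarantees. The hypothesis $t \le T_{\sigmainit}^{\eps}$ is not used in the proof itself; it only serves to ensure that this linearized ODE actually describes the dynamics of $\tildeLoss$ in the relevant regime (so that the statement is meaningful in the larger argument).
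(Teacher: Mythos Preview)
Your proof is correct and follows essentially the same approach as the paper: bound the derivatives of each $(\tildevw_k,\tildea_k)$ by Cauchy--Schwarz using $\lambda_0^- \le \lambda_0^+$ (from \Cref{ass:vmuplus-greater}), then apply \Gronwall's inequality~\eqref{eq:gron-2}. Your additional remark via the closed-form solutions \eqref{eq:mmplusexp}--\eqref{eq:mmsubtexp} and the observation that the hypothesis $t\le T^\eps_{\sigmainit}$ is not actually used in the bound are both accurate.
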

\begin{proof}
	By \Cref{ass:vmuplus-greater}, we have $\lambda_0^+>\lambda_0^-$. 
	By definition and Cauchy-Schwartz inequality, 
	\[
		\normtwo{\frac{\dd \tilde{\vw}_k}{\dd t}} \le  \lambda^+_0 \abssm{\tilde{a}_k}, \qquad
		\abs{\frac{\dd \tilde{a}_k}{\dd t}} \le \lambda^+_0 \normtwosm{\tilde{\vw}_k}.
	\]
	So we have $\normMsm{\tilde{\vtheta}(t)} \le \normMsm{\vtheta(T_0)} + \int_{T_0}^{T^\eps_{\sigmainit}} \lambda^+_0 \normMsm{\tilde{\vtheta}(\tau)} \dd\tau$. Then we can finish the proof by \Gronwall's inequality \eqref{eq:gron-2}.
\end{proof}

\begin{lemma} \label{lm:non-sym-phase1-norm}
	For  $\vtheta(T_0)$ with $\abssm{a_k(T_0)} = \normtwo{\vw_k(T_0)}$ and $a_k(T_0)\vw_k(T_0)\in\cone[\delta/3]$, we have
	\[
	\normM{\vtheta(t) - \phisg(\vtheta(T_0), t-T_0)} \le \frac{4m\normMsm{\vtheta(T_0)}^3}{\lambda_0^+} \exp(3\lambda_0 (t-T_0)),
	\]
	for all $T_0\le t\le \frac{1}{\lambda_0^+} \ln \frac{\sqrt{\min\{\eps,\lambda_0^+\}}}{\sqrt{4m}\normMsm{\vtheta(T_0)}}$.
\end{lemma}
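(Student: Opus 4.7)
The plan is to mirror the proof of \Cref{lm:phase1-main} from the symmetric case, adapted to handle the piecewise-linear nature of $\tildeLoss$ in the non-symmetric setting. First I would let $\tildevtheta(t) := \phisg(\vtheta(T_0), t - T_0)$ and seek to control $\normMsm{\vtheta(t) - \tildevtheta(t)}$ via a \Gronwall-type argument on the difference of velocities. The key linearization tool is \Cref{cor:diff-Loss-tildeLoss-smooth}: provided $\tildeLoss$ is continuously differentiable at $\vtheta(t)$ (which holds whenever each $\vw_k(t)$ lies in the open cone $\ocone$ or $-\ocone$), we have $\normMsm{\vg - \nabla \tildeLoss(\vtheta)} \le m\normMsm{\vtheta}^3$ for all $\vg \in \cpartial \Loss(\vtheta)$. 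Combined with the fact that $\nabla \tildeLoss$ is Lipschitz in $\vtheta$ with constant $\lambda_0^+$ on each linearity region (using $\lambda_0^+ \ge \lambda_0^-$ by \Cref{ass:vmuplus-greater}), we obtain for a.e.~$t \in [T_0, \cdot]$,
\[
\normM{\tfrac{\dd \vtheta}{\dd t} - \tfrac{\dd \tildevtheta}{\dd t}} \le m\normMsm{\vtheta(t)}^3 + \lambda_0^+ \normMsm{\vtheta(t) - \tildevtheta(t)}.
\]

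Next I would run the standard bootstrap: define $t_* := \inf\{t \ge T_0 : \normMsm{\vtheta(t)} \ge 2\normMsm{\vtheta(T_0)} e^{\lambda_0^+(t - T_0)}\}$ and work on $[T_0, t_*)$. Integrating the velocity inequality with $\normMsm{\vtheta(T_0) - \tildevtheta(T_0)} = 0$ and applying \Cref{lm:non-sym-phisg-growth} to control $\normMsm{\tildevtheta(t)}$, two applications of \Gronwall's inequality (\eqref{eq:gron-1}--\eqref{eq:gron-2}) yield exactly
\[
\normMsm{\vtheta(t) - \tildevtheta(t)} \le \tfrac{4m\normMsm{\vtheta(T_0)}^3}{\lambda_0^+} e^{3\lambda_0^+(t-T_0)},
\]
paralleling the constant obtained in the symmetric case. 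A routine comparison then forces $t_* > T_0 + \frac{1}{\lambda_0^+} \ln \frac{\sqrt{\min\{\eps,\lambda_0^+\}}}{\sqrt{4m}\normMsm{\vtheta(T_0)}}$, closing the bootstrap and giving the norm bound $\normMsm{\vtheta(t)} \le \sqrt{\min\{\eps,\lambda_0^+\}/m}$ needed both for the $\eps$-approximation from \Cref{lem:non_sym_eps} and for the subsequent Phase II analysis.

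The main obstacle I anticipate is the side condition that $\tildeLoss$ be smooth along the trajectory: this requires $\vw_k(t)$ to stay in $\sgn(a_k(T_0)) \cone[\delta/3]$, since $G$ is only piecewise linear and its gradient jumps across the cone boundary. I would handle this by a second, coupled bootstrap argument: on $[T_0, t_*)$, the explicit matrix-exponential formulas \eqref{eq:mmplusexp}--\eqref{eq:mmsubtexp} show $\tildevw_k(t)$ stays in $\sgn(a_k(T_0))\cone[\delta/3]$ (because $\barvmuplus, -\barvmusubt$ both lie strictly inside $\ocone$ by \Cref{assump:cone,lem:assump_equiv}, so matrix-exponential dynamics drive any starting direction in the cone deeper into the cone). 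Combined with the Gronwall bound on $\normMsm{\vtheta - \tildevtheta}$ — which scales like $r^3$ relative to the $O(r)$ size of the weight vectors and hence is negligible — a continuity argument ensures the true $\vw_k(t)$ also stays in the cone throughout $[T_0, t_*)$, validating the use of \Cref{cor:diff-Loss-tildeLoss-smooth}.

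A minor additional check is uniformity of the Lipschitz constant across the two linearity regions: since $\normtwosm{\vmusubt} \le \normtwosm{\vmuplus} = \lambda_0^+$, using $\lambda_0^+$ as the single Lipschitz constant for $\nabla \tildeLoss$ on both $\ocone$ and $-\ocone$ is valid, and this is why $\lambda_0^+$ (and not some combined constant) appears in both the exponent and the denominator of the stated bound.
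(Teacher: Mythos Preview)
Your proposal is correct and follows essentially the same approach as the paper: bootstrap on the norm, use \Cref{cor:diff-Loss-tildeLoss-smooth} for the velocity difference, apply \Gronwall, and close the bootstrap by contradiction.

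The only substantive difference is how the cone-invariance side condition is handled. You propose a coupled bootstrap in which $\vw_k(t)\in s_k\cone[\delta/3]$ is deduced from closeness to $\tildevw_k(t)$. This is more delicate than needed, and could be genuinely problematic under the stated hypotheses: the lemma does not assume all neurons have comparable norm, and since the distance from a point to the boundary of a cone scales with that point's norm, the \Gronwall\ bound $O(\normMsm{\vtheta(T_0)}^3 e^{3\lambda_0^+(t-T_0)})$ need not dominate the distance of a tiny $\tildevw_k(t)$ to $\partial\cone[\delta/3]$. The paper sidesteps this by including $T^\eps_{\sigmainit}$ in the stopping time and observing that while $t\le T^\eps_{\sigmainit}$, each velocity satisfies $\frac{\dd\vw_k}{\dd t}\in s_k\kone[\eps]\subseteq s_k\cone[2\delta/3]$ (from \Cref{lem:non_sym_eps} and the choice of $\eps$); the $\Gamma^{\delta/3}_i$-monotonicity argument from the proof of \Cref{lem:non_sym_first_converge_to_cone_realistic} then gives forward-invariance of $s_k\cone[\delta/3]$ for each $\vw_k$ directly, neuron-by-neuron, with no need to compare to $\tildevw_k$. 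The same argument works verbatim for $\tildevw_k$ since $\cpartial G\subseteq \hone\subseteq\cone[\delta]$. With that replacement your argument and the paper's coincide.
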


\begin{proof}
	Let $\tilde{\vtheta}(t) = \phisg(\vtheta(T_0), t-T_0)$. 
	Let
	\[
		t_0 := \min\{T^\eps_{\sigmainit}, \inf\{t \ge T_0 : \normMsm{\vtheta(t)} \ge 2\normMsm{\vtheta(T_0)} \exp(\lambda_0^+ (t-T_0)) \}.
	\]
	and it holds that $\forall T_0\le t\le t_0$, all neurons of $\tildevtheta(t),\vtheta(t)$ are either in $\cone[\delta/3]$ or $-\cone[\delta/3]$, thus $\tildevtheta(t),\vtheta(t)$ are in the same differentiable region of $\tildeLoss$.
	By \Cref{cor:diff-Loss-tildeLoss-smooth}, the following holds for a.e. $t \ge 0$,
	\begin{align*}
	\normM{\frac{\dd \vtheta}{\dd t} - \frac{\dd \tildevtheta}{\dd t}} &\le \sup\left\{\normMsm{\vdelta  - \nabla \tildeLoss(\vtheta)} : \vdelta \in  \cpartial \Loss(\vtheta) \right\} + \normMsm{\nabla \tildeLoss(\vtheta) - \nabla \tildeLoss(\tildevtheta)} \\
	&\le m\normMsm{\vtheta(t)}^3 + \lambda^+_0\normMsm{\vtheta - \tilde{\vtheta}}.
	\end{align*}
	Then we can argue as the proof for \Cref{lm:phase1-main} to show that
	\[
		\normMsm{\vtheta(t) - \tilde{\vtheta}(t)} \le \frac{4m\normMsm{\vtheta(T_0)}^3}{\lambda_0} \exp(3\lambda_0^+ (t-T_0))
	\]
	for all $t \in [T_0, t_0]$. If $t_0 < T_0 + \frac{1}{2\lambda_0^+} \ln \frac{\min\{\lambda_0^+,\eps\}}{4m\normMsm{\vtheta(T_0)}^2}$, then  for all $T_0\le t\le t_0 $, we have
	\[
		\normM{\vtheta(t)}\le \normM{\vtheta(T_0)}\sqrt{\frac{\min\{ \lambda_0^+,\eps\}}{4m\normM{\vtheta(T_0)}^2}}< \sqrt{\frac{\eps}{m}},
	\]
	which implies that $t_0 < T^\eps_{\sigmainit}$ by definition of $T^\eps_{\sigmainit}$. Moreover, \begin{align*}
		\normMsm{\vtheta(t)}
		&\le \normMsm{\tilde{\vtheta}(t)} + \frac{4m\normMsm{\vtheta(T_0)}^3}{\lambda_0^+} \exp(3\lambda_0^+ (t-T_0)) \\
		&\le \normMsm{\tilde{\vtheta}(t)} + \frac{4m\normMsm{\vtheta(T_0)}^2}{\lambda_0^+} \exp(2\lambda_0^+ (t_0-T_0)) \cdot \normMsm{\vtheta(T_0)} \exp(\lambda_0^+ (t-T_0)) \\
		&< \normMsm{\tilde{\vtheta}(t)} + \normMsm{\vtheta(T_0)} \exp(\lambda_0^+ (t-T_0)).
	\end{align*}
	By \Cref{lm:non-sym-phisg-growth}, $\normMsm{\tilde{\vtheta}(t)} \le \normMsm{\vtheta(T_0)} \exp(\lambda_0^+ (t-T_0))$. So $\normMsm{\vtheta(t)} < 2\normMsm{\vtheta(T_0)} \exp(\lambda_0^+ (t-T_0))$ for all $T_0 \le t \le t_0$, which contradicts to the definition of $t_0$. 
	Therefore, $t_0 \ge \frac{1}{2\lambda_0^+} \ln \frac{\min\{\eps,\lambda_0^+\}}{4m\normMsm{\vtheta(T_0)}^2} = \frac{1}{\lambda_0^+} \ln \frac{\sqrt{\min\{\eps,\lambda_0^+\}}}{\sqrt{4m}\normMsm{\vtheta(T_0)}}$.
\end{proof}

\begin{proof}[Proof for \Cref{lem:non-sym-phase1-main}]	
	Let $r_{\max} := \frac{\sqrt{\min\{\lambda_0^+, \eps\}}}{2}$ and $C_1 := \sigmainit r_{\max}^{-3}$. We only need to prove the statements for all $\sigmainit < \sigmainitmax = C_1 r_{\max}^3$.
	
	We fix a pair of $\sigmainit < \sigmainitmax$ and $r < r_{\max}$ satisfying $\sigmainit < C_1 r^3$. For convenience, we use $\barvb, T_1$ to denote $\barvb(\sigmainit), T_1(\sigmainit, r)$ for short.
	
	Let $\vtheta(t) = \phitheta(\sigmainit\barvtheta_0, t)$.
	It is easy to see that the prerequisites of \Cref{lem:non_sym_first_converge_to_cone_realistic} are satisfied with probability 1.
	Below we only focus on the case where the prerequisites of \Cref{lem:non_sym_first_converge_to_cone_realistic} are satisfied.
	Let $T_0, \sigmainitmax, A, B$ be the constants from \Cref{lem:non_sym_first_converge_to_cone_realistic}. Let $\tilde{\vtheta}(t) = \phisg(\vtheta(T_0), t-T_0)$.
	
	For $\bara_k > 0$, we define
	\[
		\barb_k := \frac{\dotpsm{\vw_k(T_0)}{\barvmuplus} + a_k(T_0)}{2\sqrt{m} \sigmainit},
	\]
	and for $\bara_k < 0$, we define
	\[
		\barb_k := \frac{\dotpsm{\vw_k(T_0)}{\barvmusubt} + a_k(T_0)}{2 (\sqrt{m} \sigmainit)^{1-\kappa}}.
	\]
	
	$r \le r_{\max}$ and $\sigmainit < \sigmainitmax$.
	
	\myparagraph{Proof for Item 1.}  By \Cref{lm:non-sym-phase1-norm}, we have
	\begin{equation} \label{eq:non-sym-phase1-main-nnn}
		\normMsm{\vtheta(T_0+T_1) - \tildevtheta(T_0+T_1)} \le \frac{4m\normMsm{\vtheta(T_0)}^3}{\lambda_0^+} \exp(3\lambda_0^+ T_1) = \frac{4\normMsm{\vtheta(T_0)}^3}{\lambda_0^+ \sqrt{m} \sigmainit^3} r^3.
	\end{equation}
	Now we turn to characterize $\tildevtheta(T_0+T_1)$. Note that $\zvmuplus := \frac{1}{\sqrt{2}}[\barvmuplus, 1]^\top$ and $\zvmusubt := \frac{1}{\sqrt{2}}[\barvmusubt, 1]^\top$ are the top eigenvectors of $\mM_+$ and $\mM_-$ respectively. Let $\kappa := 1- \frac{\normtwosm{\vmusubt}}{\normtwosm{\vmuplus}}$. Recall that $T_1 := \frac{1}{\lambda^+_0} \ln \frac{r}{\sqrt{m} \sigmainit}$. Then for $\bara_k > 0$, we have
	\[
		\exp(T_1 \lambda_0^+) \zvmuplus (\zvmuplus)^{\top} \begin{bmatrix}
		\vw_k(T_0) \\
		a_k(T_0)
		\end{bmatrix} = \left(\frac{r}{\sqrt{m} \sigmainit}\right) \zvmuplus (\zvmuplus)^{\top} \begin{bmatrix}
		\vw_k(T_0) \\
		a_k(T_0)
		\end{bmatrix} = r \barb_k \begin{bmatrix}
		\barvmuplus \\
		1
		\end{bmatrix},
	\]
	where the last equality is by definition of $\barb_k$. Similarly for $\bara_k > 0$, we have
	\[
		\exp(T_1 \lambda_0^-)\zvmusubt (\zvmusubt)^{\top} \begin{bmatrix}
		\vw_k(T_0) \\
		a_k(T_0)
		\end{bmatrix} = \left(\frac{r}{\sqrt{m} \sigmainit}\right)^{1-\kappa} \zvmusubt (\zvmusubt)^{\top} \begin{bmatrix}
		\vw_k(T_0) \\
		a_k(T_0)
		\end{bmatrix}
		= r^{1-\kappa} \barb_k \begin{bmatrix}
		\barvmusubt \\
		1
		\end{bmatrix}.
	\]
	Combining these with \eqref{eq:mmplusexp} and \eqref{eq:mmsubtexp}, then for $\bara_k > 0$ we have
	\begin{align*}
		\normtwo{
		\begin{bmatrix}
		\tildevw_k(T_0 + t) \\
		\tildea_k(T_0 + t)
		\end{bmatrix}
		- r \barb_k \begin{bmatrix}
		\barvmuplus \\
		1
		\end{bmatrix}} &\le \normtwo{\left(\exp(T_1\mM_+) - \exp(T_1\lambda_0^+) \zvmuplus (\zvmuplus)^{\top}\right) \begin{bmatrix}
		\vw_k(T_0) \\
		a_k(T_0)
		\end{bmatrix}
	} \\
	&\le \normtwo{\begin{bmatrix}
	\vw_k(T_0) \\
	a_k(T_0)
	\end{bmatrix}} \\
	&\le \sqrt{2} \normMsm{\vtheta(T_0)}.
	\end{align*}
	and for $\bara_k < 0$ we have
	\begin{align*}
	\normtwo{
		\begin{bmatrix}
		\tildevw_k(T_0 + t) \\
		\tildea_k(T_0 + t)
		\end{bmatrix}
		- r^{1-\kappa} \barb_k \begin{bmatrix}
		\barvmusubt \\
		1
		\end{bmatrix}} &\le \normtwo{\left(\exp(T_1\mM_-) - \exp(T_1\lambda_0^-) \zvmusubt (\zvmusubt)^{\top}\right) \begin{bmatrix}
		\vw_k(T_0) \\
		a_k(T_0)
		\end{bmatrix}
	} \\
	&\le \normtwo{\begin{bmatrix}
		\vw_k(T_0) \\
		a_k(T_0)
		\end{bmatrix}} \\
	&\le \sqrt{2} \normMsm{\vtheta(T_0)}.
	\end{align*}
	Then by definition of $\Delta \vtheta$ and \eqref{eq:non-sym-phase1-main-nnn}, we have
	\begin{align*}
		\normMsm{\Delta \vtheta} \le \frac{4\normMsm{\vtheta(T_0)}^3}{\lambda_0^+ \sqrt{m} \sigmainit^3} r^3 + \sqrt{2} \normMsm{\vtheta(T_0)}.
	\end{align*}
	Applying the upper bound $\normtwo{\vw_k(T_0)}\le B\sigmainit$ from \Cref{lem:non_sym_first_converge_to_cone_realistic}, we then have
	\[
		\normMsm{\Delta \vtheta} \le \frac{4B^3\sigmainit^3}{\lambda_0^+ \sqrt{m} \sigmainit^3} r^3 + \sqrt{2} B \sigmainit = \frac{4B^3}{\lambda_0^+ \sqrt{m}} r^3 + \sqrt{2} B \sigmainit,
	\]
	Finally, recalling that $\sigmainit \le C_1r^3$, we can conclude that $\normMsm{\Delta \vtheta} \le C_2r^3$, where $C_2 := \frac{4B^3}{\lambda_0^+ \sqrt{m}} + \sqrt{2} B C_1$.
	
	\myparagraph{Item 2.} Now it only remains to lower and upper bound $\abssm{\barb_k}$. By \Cref{lem:non_sym_first_converge_to_cone_realistic}, $a_k(T_0)\vw_k(T_0) \in \cone[\delta/3]$. Then $\sgn(a_k(T_0)) \dotpsm{\vw_k(T_0)}{\barvmuplus} \ge 0$ and thus
	\begin{align*}
		\text{if } \bara_k > 0: &~~ \dotpsm{\vw_k(T_0)}{\barvmuplus} + a_k(T_0) \in \big[\normtwosm{\vw_k(T_0)}, 2 \cdot \normtwosm{\vw_k(T_0)}\big] \subseteq [A\sigmainit, 2B\sigmainit]; \\
		\text{if } \bara_k < 0: &~~ \dotpsm{\vw_k(T_0)}{\barvmusubt} + a_k(T_0) \in \big[-2 \cdot \normtwosm{\vw_k(T_0)}, -\normtwosm{\vw_k(T_0)}\big] \subseteq [-2B\sigmainit, -A\sigmainit].
	\end{align*}
	Then for every $\barb_k$,
	\begin{align*}
		\text{if } \bara_k > 0: & \qquad \abssm{\barb_k} = \frac{\abssm{\dotpsm{\vw_k(T_0)}{\barvmuplus} + a_k(T_0)}}{2\sqrt{m} \sigmainit} \in \left[ \frac{A}{2\sqrt{m}}, \frac{B}{\sqrt{m}} \right] ; \\
		\text{if } \bara_k < 0: & \qquad \abssm{\barb_k} = \frac{\abssm{\dotpsm{\vw_k(T_0)}{\barvmusubt} + a_k(T_0)}}{2 (\sqrt{m} \sigmainit)^{1-\kappa}}  \in \left[ \frac{\sigmainit^{\kappa} A}{2m^{(1-\kappa)/2}}, \frac{\sigmainit^{\kappa} B}{m^{(1-\kappa)/2}} \right].
	\end{align*}
	Letting $\bar{A} := \frac{A}{2m^{(1-\kappa)/2}}$ and $\bar{B} := \frac{B}{m^{(1-\kappa)/2}}$ completes the proof.
\end{proof}

\subsection{Phase II}

As shown in our analysis for Phase I, if the intialization scale is small, the
weight vectors of neurons with $\bara_k > 0$ move towards the direction of
$\barvmuplus$, and all the other neurons are negligible. Now we show that the
dynamic of $\vtheta(t)$ is close to that of a one-neuron dynamic in a similar
manner as we do for the symmetric case.

First we slightly extend the definition of embedding. For $\hatvtheta = (\hatvw_1, \hatvw_2, \hata_1, \hata_2)$ and an embedding vector $\vb \in \R^m$, we say that $\vb$ is compatible with $\hatvtheta$ if the following holds:
\begin{enumerate}
	\item If $\brmsplus = 0$, then $\normtwosm{\hatvw_1} = \abssm{\hata_1} = 0$;
	\item If $\brmsminus = 0$, then $\normtwosm{\hatvw_2} = \abssm{\hata_2} = 0$.
\end{enumerate}
When $\vb$ is compatible with $\hatvtheta$, we define the (exact) embedding from two-neuron into $m$-neuron neural nets as $\pi_{\vb}(\hatvtheta) := (\vw_1, \dots, \vw_m, a_1, \dots, a_m)$, where
\[
a_k   = \begin{cases}
\frac{b_k}{\brmsplus} \hat{a}_1, & \textrm{if } b_k > 0 \\
\frac{b_k}{\brmsminus} \hat{a}_2,& \textrm{if } b_k < 0 \\
\vzero, & \textrm{if } b_k = 0 \\
\end{cases}, \qquad
\vw_k = \begin{cases}
\frac{b_k}{\brmsplus} \hat{\vw}_1, & \textrm{if } b_k > 0 \\
\frac{b_k}{\brmsminus} \hat{\vw}_2, &\textrm{if } b_k < 0 \\
\vzero, &\textrm{if } b_k = 0 \\
\end{cases}.
\]
One can easily show that \Cref{lm:two-neuron-embedding} continue to hold when $\vb$ is compatible with $\hatvtheta$.

\begin{lemma} \label{lm:nonsym-phase2-main}
Let $\barvb(\sigmainit)$ be the same vector as in the statement of \Cref{lem:non-sym-phase1-main}.  Let  $T_{12}(\sigmainit):=T_0 + \frac{1}{\lambda^+_0} \ln \frac{1}{\sqrt{m} \sigmainit}$ and $T_2(r) := \frac{1}{\lambda^+_0}\ln \frac{1}{r}$. For width $m \ge 1$, the following statements hold with probability $1-2^{-m}$ over the random draw of $\barvtheta_0 = (\bar{\vw}_1, \dots, \bar{\vw}_m, \bar{a}_1, \dots, \bar{a}_m) \sim \Dinit(1)$. Let $\sigma_1, \sigma_2, \dots$ be any sequence of initialization scales so that $\sigma_j$ converges to $0$ as $j \to +\infty$ and the limit $\hatvb := \lim_{j \to +\infty} \barvb(\sigma_j)$ exists.
\begin{enumerate}
	\item $\brmsplus[\hatb] > 0$ and $\brmsminus[\hatb] = 0$;
	\item For the two-neuron dynamics starting with rescaled initialization in the direction of
	$\hatvtheta := (\brmsplus[\hatb]\barvmuplus, \vzero, \brmsplus[\hatb], 0)$,
	the following limit exists for all $t \ge 0$,
	\begin{equation}
	\tildevtheta(t) := \lim_{r \to 0} \phitheta\left(r\hatvtheta, T_2(r) + t\right) \ne \vzero;
	\end{equation}
	\item For the $m$-neuron dynamics of $\vtheta_j(t)$ with initialization scale $\sigmainit = \sigma_j$, the following holds for all $t \ge 0$,
	\begin{equation}
	\lim_{j \to \infty} \vtheta_{j}\left(T_{12}(\sigma_j) + t\right) = \pi_{\hatvb}(\tildevtheta(t)).
	\end{equation}
\end{enumerate}
\end{lemma}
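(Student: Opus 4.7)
The plan mirrors the three-step strategy of the symmetric-case proof of \Cref{lm:phase2-main}, adapted to the fact that the embedding vector $\barvb(\sigmainit)$ now depends on $\sigmainit$ and its coordinates corresponding to $\bara_k < 0$ collapse to zero as $\sigmainit \to 0$.

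For Item~1, the event $\{\exists k : \bara_k > 0\}$ has probability $1 - 2^{-m}$ since $\bara_k \simiid \Normal(0, \cAinit^2)$ is symmetric around zero, and the prerequisites of \Cref{lem:non_sym_first_converge_to_cone_realistic} hold almost surely. On this event, \Cref{lem:non-sym-phase1-main} gives $\abs{\barb_k(\sigmainit)} \in [\bar A, \bar B]$ for $\bara_k > 0$ and $\abs{\barb_k(\sigmainit)} \le \bar B \sigmainit^{\kappa}$ for $\bara_k < 0$. A sign check using $\vw_k(T_0) \in \cone[\delta/3]$ and $a_k(T_0) > 0$ (from \Cref{lem:non_sym_first_converge_to_cone_realistic}) shows $\barb_k(\sigmainit) > 0$ whenever $\bara_k > 0$, so in the limit $\hatb_k > 0$ in this case and $\hatb_k = 0$ otherwise; hence $\brmsplus[\hatb] > 0$ and $\brmsminus[\hatb] = 0$.

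For Item~2, $\hatvtheta$ is a well-aligned parameter vector per \Cref{def:well-aligned}: under \Cref{assump:cone} together with \Cref{ass:vmuplus-greater}, $\barvmuplus$ is the unique maximizer of $\abs{G}$ on $\sphS^{d-1}$, and $\hatvtheta$'s first neuron points in that direction with $\hata_1 = \normtwo{\hatvw_1}$ while the second neuron is zero. \Cref{thm:phase-2-general} produces $\tildevtheta(t) = \lim_{r \to 0} \phitheta(r\hatvtheta, T_2(r)+t)$ on $(-\infty, t_0^{(2)}]$, with $\tildevtheta(t)$ close to $e^{\lambda t}\hatvtheta$ for negative $t$; choosing $t$ sufficiently negative guarantees that the first neuron has $a_1 = \normtwo{\vw_1} > 0$ and $\dotp{\vw_1}{\vwplus} > 0$, so \Cref{thm:loss-convergence} extends $\tildevtheta(t)$ to all $t \ge 0$ with $\tildevtheta(t) \ne \vzero$ throughout. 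Similarly, $\pi_{\hatvb}(\hatvtheta)$ is well-aligned for the $m$-neuron net (nonzero neurons along $\barvmuplus$, others zero); since $\hatvb$ is compatible with $\hatvtheta$ (because $\brmsminus[\hatb] = 0$ matches $\hatvw_2 = \vzero$), \Cref{lm:two-neuron-embedding} combined with \Cref{thm:phase-2-general} gives $\lim_{r \to 0}\phitheta(r\pi_{\hatvb}(\hatvtheta), T_2(r)+t) = \pi_{\hatvb}(\tildevtheta(t))$.

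It remains to transfer this limit to $\vtheta_j$ in order to establish Item~3. Set $r_j := \sigma_j^{\beta}$ for any $0 < \beta < \min\{1/3,\, \kappa/(\kappa+2)\}$, so that $\sigma_j < C_1 r_j^3$ for large $j$. Decomposing $\vtheta_j(T_0 + T_1(\sigma_j, r_j)) - r_j \pi_{\hatvb}(\hatvtheta)$ via \Cref{lem:non-sym-phase1-main}, every contribution from $\bara_k < 0$ neurons is bounded by $r_j^{1-\kappa}\bar B \sigma_j^{\kappa} + C_2 r_j^3 = O(r_j^{1+\kappa''})$ for some $\kappa'' > 0$, and the Phase~I error $\Delta\vtheta$ is $O(r_j^3)$. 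The remaining terms $r_j(\barb_k(\sigma_j) - \hatb_k)\barvmuplus$ from the $\bara_k > 0$ neurons are $o(r_j)$ but lack an a priori polynomial rate---this is the main technical obstacle. My resolution is a mild extension of \Cref{thm:phase-2-general} Item~3 to initial-point displacements of size $o(r_s)$: the proof of \Cref{lm:phase-2-general-diff} only uses the initial error rescaled by $1/r$ composed with an exponential-in-time \Gronwall factor, and an $o(1)$ prefactor still vanishes by time $T_2(r_s) + t$. This yields $\lim_j \vtheta_j(T_{12}(\sigma_j) + t) = \pi_{\hatvb}(\tildevtheta(t))$ for $t$ inside the validity range of \Cref{thm:phase-2-general}; for larger $t$, I extend via a Phase~II.2-style continuity argument mirroring \Cref{lm:phase-II-2}, verifying that $\pi_{\hatvb}(\tildevtheta(t))$ is non-branching at a suitable handoff time through \Cref{ass:non-branching-non-sym} (its nonzero neurons lie along $\barvmuplus$ with balanced norms inside the prescribed cone).
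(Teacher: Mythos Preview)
Your proposal is correct and follows the same high-level route as the paper's (one-line) proof, which simply says to mirror \Cref{lm:phase2-main} and apply \Cref{thm:phase-2-general}. You have gone further and isolated a genuine subtlety the paper does not spell out: because $\barvb(\sigma_j)$ depends on $\sigma_j$ and the hypothesis only gives $\barvb(\sigma_j)\to\hatvb$ with no rate, the displacement $\vtheta_j(T_0+T_1)-r_j\,\pi_{\hatvb}(\hatvtheta)$ is only $o(r_j)$ in the $\bara_k>0$ coordinates, so \Cref{thm:phase-2-general}~Item~3 (which asks for $O(r_s^{1+\kappa})$) does not literally apply. Your fix---observing that the \Gronwall estimate behind \Cref{lm:phase-2-general-diff-to-hatvtheta,lm:phase-2-general-diff} goes through verbatim when $Cr^{1+\kappa}$ is replaced by $\epsilon_j r_j$ with $\epsilon_j\to 0$ (the only place the polynomial rate was used was to make $2Cr^\kappa\le\tfrac14\normMsm{\hatvtheta}\hatgap$, and $2\epsilon_j\le\tfrac14\normMsm{\hatvtheta}\hatgap$ holds for $j$ large)---is valid and is exactly what is needed to make the paper's sketch rigorous.

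One small point to clean up at the handoff to the Phase~II.2 argument: \Cref{ass:non-branching-non-sym} requires $a_k=\normtwosm{\vw_k}\in(0,r)$ for \emph{all} $k$, whereas $\pi_{\hatvb}(\tildevtheta(t))$ has zero neurons at indices with $\bara_k<0$. You should explicitly reduce to the width-$m'$ dynamics on the nonzero neurons (via \Cref{lm:weight-zero} the zero neurons are inert), then invoke \Cref{ass:non-branching-non-sym} at width $m'=\abs{\{k:\bara_k>0\}}$, which the assumption covers since it is stated for all $m\ge 1$.
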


\begin{proof}
	The proof is similar to \Cref{lm:phase2-main} for the symmetric case. Apply \Cref{thm:phase-2-general} and then the lemma is straightforward.
\end{proof}

\subsection{Phase III}

In Phase III, we show that the dynamic of $\vtheta(t)$ converges to the same
classifier as the one-neuron dynamic.

Let $\Splus := \argmin_{i \in [n]} \left\{ y_i \dotpsm{\vwplus}{\vxplus_i}
\right\} \subseteq [n]$. Let $\Delta^{h-1} = \{ \vp \in \R^h : \sum_{i \in [h]}
p_i = 1, p_i \ge 1 \}$ be the probability simplex. Let $\Lambdaplus := \left\{
\vlambda \in \Delta^{n-1} : \lambda_i = 0, \forall i \notin \Splus \right\}$.

The theorem below characterizes the solution found by the one-neuron dynamic.
\begin{theorem} \label{thm:one-neuron-max-margin}
	Under \Cref{ass:lin}, for $m = 1$, if initially $a_1 = \normtwosm{\vw_1}$, $\dotp{\vw_1}{\vwopt} > 0$, then $\vtheta(t)$ directionally converges to the following global-max-margin direction,
	\[
	\lim_{t \to +\infty} \frac{\vtheta(t)}{\normtwosm{\vtheta(t)}} = \frac{1}{\sqrt{2}}(\vwplus, 1).
	\]
\end{theorem}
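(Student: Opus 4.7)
The result follows by assembling three tools already in place: Theorem \ref{thm:loss-convergence} for loss decay, Theorem \ref{thm:converge-kkt-margin} for directional convergence to a KKT-margin direction, and the one-neuron KKT characterization lemma stated just above Assumption \ref{ass:margin-negative-net}. Concretely, the plan is to verify the hypotheses of the first two results, show that the limit direction has nonnegative second-layer entry via preservation of balancedness (Corollary \ref{cor:weight-equal}), and then invoke the cited lemma to pin down the limit.

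\textbf{Verifying loss convergence and KKT limit.} Let $\vwopt$ be the max-margin linear separator of $\DatS$, which exists by Assumption \ref{ass:lin}. Since $a_1(0)=\normtwosm{\vw_1(0)}$ and $\dotp{\vw_1(0)}{\vwopt}>0$, we have $\sgn(a_1(0))\dotp{\vw_1(0)}{\vwopt}>0$ and
\[
\dotp{\vw_1(0)}{\vwopt}^2 > 0 = \normtwosm{\vw_1(0)}^2 - \abssm{a_1(0)}^2.
\]
Both hypotheses of Theorem \ref{thm:loss-convergence} are thus met, so $\Loss(\vtheta(t))\to 0$ and $\normtwosm{\vtheta(t)}\to\infty$. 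Applying Theorem \ref{thm:converge-kkt-margin} yields a limit $\barvtheta=(\barvw_1,\bara_1)\in\sphS^{D-1}$ of $\vtheta(t)/\normtwosm{\vtheta(t)}$ which is a KKT-margin direction. Moreover, Corollary \ref{cor:weight-equal} preserves $a_1(t)=\normtwosm{\vw_1(t)}$ along the flow, hence $a_1(t)\ge 0$ for all $t\ge 0$, and dividing by $\normtwosm{\vtheta(t)}$ and taking limits gives $\bara_1\ge 0$.

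\textbf{Identifying $\barvtheta$.} The one-neuron KKT characterization lemma (stated just above Assumption \ref{ass:margin-negative-net}) says that any unit-norm one-neuron KKT-margin direction with nonnegative second-layer weight equals $\frac{1}{\sqrt{2}}(\vwplus,1)$. Applied to $\barvtheta$, this yields $\barvtheta=\frac{1}{\sqrt{2}}(\vwplus,1)$, as required.

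\textbf{Main obstacle.} All steps above are direct citations, so there is no genuine obstacle in the proof of Theorem \ref{thm:one-neuron-max-margin} itself; the only non-cited ingredient is the one-neuron KKT characterization lemma, which the excerpt calls ``easy to prove''. Verifying it amounts to the following routine reduction: by Lemma \ref{lm:a-w-equal} and the unit-norm constraint, $a_1=\normtwosm{\vw_1}=1/\sqrt{2}$; substituting into Definition \ref{def:kkt-margin-lkrelu} and using the identity $\hai z=\phi(z)$ for $\hai\in\cder\phi(z)$ converts the stationarity equation into $\vw_1 \in \frac{1}{\sqrt{2}}\sum_i \lambda_i y_i \vxplus_i$ with complementary slackness enforced on the support set $\Splus$, which is exactly the KKT system of the linear max-margin problem on $\{(\vxplus_i,y_i)\}_{i\in[n]}$ whose unique optimum direction is $\vwplus$. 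Unlike the two-neuron analysis in Lemma \ref{lm:two-neuron-kkt-equal}, no cross-neuron alignment argument is needed here, which is why the single-neuron case is ``easy''.
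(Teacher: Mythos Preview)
Your proposal is correct and follows essentially the same route as the paper: apply Theorem~\ref{thm:loss-convergence} to get $\Loss\to 0$, apply Theorem~\ref{thm:converge-kkt-margin} to get directional convergence to a KKT-margin direction, use the preserved balancedness to force $\bara_1\ge 0$, and then identify the limit via the one-neuron KKT characterization. The only cosmetic differences are that the paper cites Lemma~\ref{lm:weight-zero} (combined with conservation of $a_1^2-\normtwosm{\vw_1}^2$) rather than Corollary~\ref{cor:weight-equal} to rule out sign flips, and it inlines the KKT-to-linear-SVM reduction directly in the proof instead of invoking the stated lemma.
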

\begin{proof}
	By \Cref{thm:loss-convergence}, $\Loss(\vtheta(t)) \to 0$. Then by \Cref{thm:converge-kkt-margin}, $\frac{\vtheta(t)}{\normtwosm{\vtheta(t)}}$ converges along a KKT-margin direction. Combining this with \Cref{lm:weight-zero}, we know that this direction must has the form $\frac{1}{\sqrt{2}}(\barvw, 1)$ for some $\barvw \in \sphS^{d-1}$.
	
	By \Cref{def:kkt-margin-lkrelu}, $y_i \cdot \frac{1}{2}
	\phi(\dotpsm{\barvw}{\vx_i}) > 0$ and $\barvw$ can be expressed by a convex
	combination of $y_i \phi'(\dotpsm{\barvw}{\vx_i}) \vx_i$ among $i \in
	\argmin\{\frac{1}{2} \phi(\dotpsm{\barvw}{\vx_i})\}$. Equivalently. we know
	that $y_i \dotpsm{\barvw}{\vxplus_i}$ and $\barvw$ can be expressed by a
	convex combination of $y_i \vxplus_i$ among $i \in \Splus$. Then the only
	possibility is $\barvw = \vwplus$.
\end{proof}

Now we turn to analyze the trajectory of $\vtheta(t)$ on $m$-neuron neural net. First we prove the following lemma, then we prove \Cref{thm:nonsym-local-opt} for local-max-margin directions.
\begin{lemma} \label{lm:gamma-minimax}
	Let $\Theta_- := \{ \vtheta = (\vw_1, \dots, \vw_m, a_1, \dots, a_m) : m \ge 1, a_k \le 0 \}$. Then we have the following characterization for the global maximum of the normalized margin on the dataset $\{ (\vx_i, y_i) : i \in \Splus \}$:
	\[
		\sup_{\vtheta \in \Theta_-} \left\{\frac{\min_{i \in \Splus} q_i(\vtheta)}{\normtwosm{\vtheta}^2}\right\} = \inf_{\lambda \in \Lambdaplus} \sup_{\vu \in \sphS^{d-1}} \left\{ -\frac{1}{2} \sum_{i \in [n]} \lambda_i y_i \phi(\dotpsm{\vu}{\vx_i}) \right\}
	\]
\end{lemma}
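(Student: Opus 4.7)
The plan is to prove the two inequalities separately by recognising the identity as weak and strong duality for a bilinear saddle-point problem. Set $g_\lambda(\vw) := \sum_{i \in [n]} \lambda_i y_i \phi(\vw^\top \vx_i)$, a $1$-homogeneous function of $\vw$, and define $M(\lambda) := \sup_{\vu \in \sphS^{d-1}} (-g_\lambda(\vu))$, so that the right-hand side equals $\tfrac{1}{2}\inf_{\lambda \in \Lambdaplus} M(\lambda)$.

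For the ``$\le$'' direction I would take any $\vtheta = (\vw_1,\ldots,\vw_m,a_1,\ldots,a_m) \in \Theta_-$ and any $\lambda \in \Lambdaplus$ and write
\[
\min_{i \in \Splus} q_i(\vtheta) \;\le\; \sum_{i \in \Splus} \lambda_i q_i(\vtheta) \;=\; \sum_{k \in [m]} a_k\, g_\lambda(\vw_k).
\]
Since $g_\lambda$ is $1$-homogeneous with $\inf_{\vu \in \sphS^{d-1}} g_\lambda(\vu) = -M(\lambda)$ and $a_k \le 0$, each summand obeys
\[
a_k\, g_\lambda(\vw_k) \;\le\; |a_k|\,\normtwosm{\vw_k}\, M(\lambda) \;\le\; \tfrac{1}{2} M(\lambda)\bigl(a_k^2 + \normtwosm{\vw_k}^2\bigr)
\]
by AM-GM. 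Summing over $k$ and dividing by $\normtwosm{\vtheta}^2$ gives $\min_i q_i(\vtheta)/\normtwosm{\vtheta}^2 \le M(\lambda)/2$; taking the supremum over $\vtheta$ and then the infimum over $\lambda$ yields $\mathrm{LHS} \le \mathrm{RHS}$.

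For the ``$\ge$'' direction I would construct a matching witness from the convex analysis of $M$. By compactness of $\Lambdaplus$ and continuity of $M$, an optimal $\lambda^* \in \Lambdaplus$ exists; set $M^* := M(\lambda^*)$. Since $M$ is a pointwise supremum of functions affine in $\lambda$, Danskin's theorem gives
\[
\partial M(\lambda^*) \;=\; \conv\bigl\{\bigl(-y_i \phi(\vu^{*\top} \vx_i)\bigr)_{i \in [n]} : \vu^* \in \argmax_{\vu \in \sphS^{d-1}}\bigl(-g_{\lambda^*}(\vu)\bigr)\bigr\}.
\]
Combining the first-order optimality condition for minimizing the convex $M$ over the simplex $\Lambdaplus$ with Carath\'eodory's theorem, I expect to extract finitely many maximizers $\vu_1^*,\ldots,\vu_h^* \in \sphS^{d-1}$ and weights $\beta_j \ge 0$ with $\sum_j \beta_j = 1$ such that
\[
\sum_{j=1}^{h} \beta_j \bigl(-y_i \phi(\vu_j^{*\top} \vx_i)\bigr) \;\ge\; M^* \quad \text{for every } i \in \Splus,
\]
with equality whenever $\lambda^*_i > 0$. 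A matching witness is then the width-$h$ network $\vtheta^\star \in \Theta_-$ with neurons $(\vw_j, a_j) := (\sqrt{\beta_j}\,\vu_j^*,\; -\sqrt{\beta_j})$: a direct computation using $1$-homogeneity of $\phi$ gives $\normtwosm{\vtheta^\star}^2 = 2\sum_j \beta_j = 2$ and $q_i(\vtheta^\star) = \sum_j \beta_j (-y_i \phi(\vu_j^{*\top} \vx_i)) \ge M^*$ for every $i \in \Splus$, so $\min_{i \in \Splus} q_i(\vtheta^\star)/\normtwosm{\vtheta^\star}^2 \ge M^*/2$, completing the reverse inequality.

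The main obstacle will be carefully handling the normal cone to $\Lambdaplus$ at boundary minimizers, where some coordinates $\lambda^*_i$ vanish, in order to derive the inequality ``$\ge M^*$'' on indices with $\lambda^*_i = 0$ rather than only equality on the support. An alternative route that bypasses this bookkeeping is to reparametrize balanced networks (using the AM-GM equality case $|a_k| = \normtwosm{\vw_k}$) as probability measures on $\sphS^{d-1}$ and invoke Sion's minimax theorem on the bilinear pairing $(\mu,\lambda) \mapsto \sum_i \lambda_i \int (-y_i \phi(\vu^\top \vx_i))\,\dd\mu(\vu)$; weak-$\ast$ compactness of Borel probability measures on $\sphS^{d-1}$ supplies the minimax hypotheses, and the inner supremum over $\mu$ collapses to one over $\vu \in \sphS^{d-1}$ by extremality of Dirac measures.
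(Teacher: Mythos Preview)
Your proposal is correct, and in fact the paper's proof is exactly your \emph{alternative} route. The paper first observes (via \Cref{lm:a-w-equal}) that the supremum on the left is attained with $|a_k|=\normtwosm{\vw_k}$, rewrites such balanced networks as finitely-supported probability measures $\nu$ on $\sphS^{d-1}$ so that the left-hand side becomes $\sup_{\nu}\min_{i\in\Splus}\E_{\vu\sim\nu}[-\tfrac12 y_i\phi(\vu^\top\vx_i)]$, and then swaps $\sup$ and $\inf$ by the minimax theorem. One small simplification relative to your sketch: weak-$\ast$ compactness of Borel probability measures is unnecessary, since $\Lambdaplus$ is already compact and the pairing is bilinear; Sion's theorem applies with only finitely-supported $\nu$, which is what the paper uses.

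Your primary Danskin-based route is a genuinely different argument and also correct. It has the merit of producing an explicit finite-width witness $\vtheta^\star\in\Theta_-$, at the cost of the boundary bookkeeping you anticipate. That bookkeeping does go through: the optimality condition $0\in\partial M(\lambda^*)+N_{\Lambdaplus}(\lambda^*)$ yields a subgradient $g=\sum_j\beta_j(-y_i\phi(\vu_j^{*\top}\vx_i))_i$ with $g_i=c$ on the support of $\lambda^*$ and $g_i\ge c$ elsewhere in $\Splus$; since $\langle g,\lambda^*\rangle=\sum_j\beta_j M^*=M^*$, one gets $c=M^*$ and hence $g_i\ge M^*$ for all $i\in\Splus$, exactly the inequality you need. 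So both approaches work; the paper opts for the shorter minimax swap.
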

\begin{proof}
	The proof is inspired by \citet[Proposition 12]{chizat20logistic}.
	By \Cref{lm:a-w-equal}, the maximum normalized margin is attained when $\abssm{a_k} = \normtwosm{\vw_k}$ for all $k \in [m]$. Note that we can rewrite each neuron output $a_k \phi(\dotpsm{\vw_k}{\vx_i})$ as $-a_k^2 \phi(\dotpsm{\vw_k/\normtwosm{\vw_k}}{\vx_i})$ for any such solution, and it is easy to see $\sum_{k \in [m]} a_k^2 = \frac{1}{2}\normtwosm{\vtheta}^2$.
	Let $\mathcal{V}$ be the set of probability distributions supported on finitely many points of $\sphS^{d-1}$. Then
	\begin{align*}
		\sup_{\vtheta \in \Theta_-} \left\{\frac{\min_{i \in \Splus} q_i(\vtheta)}{\normtwosm{\vtheta}^2}\right\}
		&= \sup_{\substack{m \ge 1, \vp \in \Delta^{m-1} \\ \vu_1, \dots, \vu_m \in \sphS^{d-1}}} \min_{i \in \Splus} \left\{ -\frac{1}{2} y_i \sum_{k \in [m]} p_k \phi(\dotpsm{\vu_k}{\vx_i}) \right\} \\
		&= \sup_{\nu \in \mathcal{V}} \min_{i \in \Splus} \E_{\vu \sim \nu} \left[-\frac{1}{2} y_i \phi(\dotpsm{\vu}{\vx_i}) \right].
	\end{align*}
	By minimax theorem, we can swap the order between $\sup$ and $\min$ in the following way:
	\begin{align*}
		\sup_{\nu \in \mathcal{V}} \min_{i \in \Splus} \E_{\vu \sim \nu} \left[-\frac{1}{2}y_i \phi(\dotpsm{\vu}{\vx_i})\right]
		&= \sup_{\nu \in \mathcal{V}} \inf_{\vlambda \in \Lambdaplus} \E_{\vu \sim \nu} \left[-\frac{1}{2} \sum_{i \in \Splus} \lambda_i y_i \phi(\dotpsm{\vu}{\vx_i})\right] \\
		&= \inf_{\vlambda \in \Lambdaplus} \sup_{\nu \in \mathcal{V}} \E_{\vu \sim \nu} \left[-\frac{1}{2} \sum_{i \in \Splus} \lambda_i y_i \phi(\dotpsm{\vu}{\vx_i})\right] \\
		&= \inf_{\vlambda \in \Lambdaplus} \sup_{\vu \in \sphS^{d-1}} \left\{ -\frac{1}{2}\sum_{i \in \Splus} \lambda_i y_i \phi(\dotpsm{\vu}{\vx_i})\right\},
	\end{align*}
	which proves the claim.
\end{proof}

\begin{theorem} \label{thm:nonsym-local-opt}
	Let $\hatvtheta := (\frac{1}{\sqrt{2}} \vwplus, \frac{1}{\sqrt{2}}, \vzero, 0)$ and $P$ be a non-empty subset of $[m]$. Let $\bar{\mathcal{Q}}$ be the following subset of $\sphS^{D-1}$:
	\[
		\bar{\mathcal{Q}} := \{ \vtheta = (\vw_1, \dots, \vw_m, a_1, \dots, a_m) \in \sphS^{D-1} : a_k \ge 0 \text{ for all } k \in P \text{ and } a_k \le 0 \text{ otherwise} \}.
	\]
	For any embedding vect $\vb$ be an embedding vector satisfying the following:
	\begin{itemize}
		\item $\vb$ is compatible with $\hatvtheta$;
		\item $b_k > 0$ for all $k \in P$;
		\item $b_k = 0$ for all $k \notin P$;
	\end{itemize}
	the following statements are true under \Cref{ass:margin-negative-net},
	\begin{enumerate}
		\item $\pi_{\vb}(\hatvtheta)$ is a local maximizer of $\gamma(\vtheta)$ among $\vtheta \in \bar{\mathcal{Q}}$;
		\item If $\vtheta \in \bar{\mathcal{Q}}$ has the same normalized margin as $\pi_{\vb}(\hatvtheta)$ and $\vtheta$ is sufficiently close to $\pi_{\vb}(\hatvtheta)$, then $f_{\vtheta}(\vx) = f_{\hatvtheta}(\vx)$ for all $\vx \in \R^d$.
	\end{enumerate}
\end{theorem}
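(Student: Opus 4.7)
The plan is to exploit the fixed activation pattern of the $P$-block near $\pi_{\vb}(\hatvtheta)$ in order to linearize $f_{\vtheta_P}$, combine this with the max-margin SVM dual for $\vwplus$ to bound the contribution of the positive neurons, and invoke \Cref{lm:gamma-minimax} together with \Cref{ass:margin-negative-net} to extract a strict gap for the negative neurons. At $\pi_{\vb}(\hatvtheta)$ every $P$-neuron has weight parallel to $\vwplus$, and because $\vwplus$ is a strict separator of $\{(\vxplus_i, y_i)\}$, the activation pattern $\sgn(\dotp{\vw_k}{\vx_i}) = y_i$ is preserved for all $\vtheta \in \bar{\mathcal{Q}}$ sufficiently close to $\pi_{\vb}(\hatvtheta)$. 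Thus $\phi(\dotp{\vw_k}{\vx_i}) = \dotp{\vw_k}{\vxplus_i}$ for every $k \in P$ and $i \in [n]$, and writing $\vu := \sum_{k \in P} a_k \vw_k$ gives $f_{\vtheta_P}(\vx_i) = \dotp{\vu}{\vxplus_i}$; Cauchy--Schwarz and AM--GM yield $\normtwo{\vu} \le \tfrac{1}{2}\normtwo{\vtheta_P}^2$ with equality iff all $\vw_k$ ($k \in P$) are aligned and $a_k = \normtwo{\vw_k}$.

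Taking $\tilde\vlambda \in \Lambdaplus$ as the dual multipliers of the linear max-margin SVM on $\{(\vxplus_i, y_i)\}$ (so $\sum_i \tilde\lambda_i = 1$ and $\sum_i \tilde\lambda_i y_i \vxplus_i = \gammaplus \vwplus$), the slack $\qmin(\vtheta) \le \sum_i \tilde\lambda_i y_i f_\vtheta(\vx_i)$ together with the linearization bounds the $P$-part by $\gammaplus \dotp{\vu}{\vwplus} \le \tfrac{\gammaplus}{2}\normtwo{\vtheta_P}^2$. To control the $\bar{P}$-part I would use \Cref{lm:gamma-minimax} combined with the strict inequality of \Cref{ass:margin-negative-net} and compactness of $\{\vtheta \in \Theta_- : \normtwo{\vtheta} = 1\}$ to produce $\vlambda^* \in \Lambdaplus$ and $\epsilon > 0$ with $-\sum_i \lambda^*_i y_i \phi(\dotp{\vu'}{\vx_i}) \le (1-\epsilon)\gammaplus$ for every $\vu' \in \sphS^{d-1}$. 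Applying the same slack with the convex combination $\vlambda_\alpha := (1-\alpha)\tilde\vlambda + \alpha \vlambda^*$ and Cauchy--Schwarz/AM--GM on each negative-sign neuron yields a contribution of at most $\tfrac{1}{2}\bigl((1-\alpha)N_{\tilde\lambda} + \alpha(1-\epsilon)\gammaplus\bigr)(a_k^2 + \normtwo{\vw_k}^2)$, where $N_{\tilde\lambda}$ denotes the $\sphS^{d-1}$-sup of $-\sum_i \tilde\lambda_i y_i \phi(\dotp{\cdot}{\vx_i})$.

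The main obstacle is closing this combination: the $P$-bound demands $\alpha$ small to stay tight, while the $\bar{P}$-bound requires $\alpha$ positive to outweigh the unknown $N_{\tilde\lambda}$. A clean remedy is to first merge the $P$-block into an effective single neuron $(\vu/\sqrt{\normtwo{\vu}},\,\sqrt{\normtwo{\vu}})$: because the activation pattern is frozen the merge preserves $f_\vtheta$ on the data, and Cauchy--Schwarz/AM--GM show it does not increase $\normtwo{\vtheta_P}^2$, so the problem reduces to a two-parameter analogue of \Cref{thm:two-neuron-max-margin} in which $\alpha$ can be fixed independently of $\vtheta$ to obtain $\gamma(\vtheta) \le \tfrac{\gammaplus}{2}$ with strict inequality unless $\normtwo{\vtheta_{\bar P}} = 0$. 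The second claim then follows by tracking equality: $\gamma(\vtheta) = \tfrac{\gammaplus}{2}$ forces $\vtheta_{\bar P} = \vzero$ and tightness in Cauchy--Schwarz/AM--GM, hence $\vw_k \parallel \vwplus$ and $a_k = \normtwo{\vw_k}$ for every $k \in P$; together with $\normtwo{\vtheta}^2 = 1$ this produces $f_\vtheta(\vx) = \bigl(\sum_{k \in P}\normtwo{\vw_k}^2\bigr)\phi(\dotp{\vwplus}{\vx}) = \tfrac{1}{2}\phi(\dotp{\vwplus}{\vx}) = f_{\hatvtheta}(\vx)$.
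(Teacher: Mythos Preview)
Your ingredients are right—the frozen activation pattern on the $P$-block, the SVM dual $\tilde{\vlambda}$ for the linearized positive contribution, and the strict gap coming from \Cref{ass:margin-negative-net} via \Cref{lm:gamma-minimax}—but the way you combine them does not close. The convex combination $\vlambda_\alpha = (1-\alpha)\tilde{\vlambda} + \alpha\vlambda^*$ forces you to trade off: the positive block needs $\alpha$ small (to keep $\normtwosm{\sum_i (\vlambda_\alpha)_i y_i \vxplus_i}$ close to $\gammaplus$), while the negative block needs $\alpha$ bounded below by a constant depending on $N_{\tilde{\lambda}}$. Merging the $P$-neurons into a single neuron does not change this: after merging you still have $\qmin \le \dotpsm{\vu}{\sum_i(\vlambda_\alpha)_i y_i\vxplus_i} + \sum_{k\notin P}(\cdots)$, and since $\normtwo{\vu}$ is bounded away from $0$ while $\normtwo{\vtheta_{\bar P}}$ can be arbitrarily small, the positive term dominates and any fixed $\alpha>0$ produces an excess of order $\alpha$ that is \emph{not} absorbed by the negative term. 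The appeal to a ``two-parameter analogue of \Cref{thm:two-neuron-max-margin}'' is a red herring: that theorem is a trajectory statement for the symmetric case and gives no pointwise margin bound here.

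The paper sidesteps the balancing entirely. After restricting attention to a neighborhood where $\argmin_i q_i(\vtheta)\subseteq\Splus$, it splits $\vtheta=r_+\barvtheta^+ + r_-\barvtheta^-$ and observes, for any $\vlambda\in\Lambdaplus$,
\[
\gamma(\vtheta)=\min_{i\in\Splus}\bigl\{r_+^2 q_i(\barvtheta^+)+r_-^2 q_i(\barvtheta^-)\bigr\}
\;\le\; r_+^2\sum_i\lambda_i q_i(\barvtheta^+)+r_-^2\sum_i\lambda_i q_i(\barvtheta^-)
\;\le\;\max\Bigl\{\sum_i\lambda_i q_i(\barvtheta^+),\ \sum_i\lambda_i q_i(\barvtheta^-)\Bigr\}.
\]
This $\max$ decouples the two blocks: one only needs a \emph{single} $\vlambda^*$ for which both entries are at most $\tfrac{1}{2}\gammaplus$. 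Taking $\vlambda^*$ to be the SVM dual (so $\sum_i\lambda^*_i y_i\vxplus_i=\gammaplus\vwplus$) handles the positive block by Cauchy--Schwarz exactly as you wrote; the paper then argues that \Cref{ass:margin-negative-net} together with \Cref{lm:gamma-minimax} bounds the negative block for this same $\vlambda^*$. Equality in the positive-block bound forces $\vw_k\parallel\vwplus$, $a_k=\normtwo{\vw_k}$ for $k\in P$, and the strict inequality on the negative side forces $r_-=0$, which yields Item~2. The step you are missing is precisely this $\max$-separation, which removes the need to mix two dual vectors.
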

\begin{proof}
	It is easy to see that $\pi_{\vb}(\hatvtheta)$ is a KKT-margin direction with $\gamma(\pi_\vb(\hatvtheta)) = \gamma(\hatvtheta) = \frac{1}{2} \gammaplus$.
	Also, $\argmin_{i \in [n]}\{q_i(\pi_{\vb}(\hatvtheta))\} = \argmin_{i \in [n]} \{ q_i(\hatvtheta) \} =  \Splus$.
	Let $\epsilon > 0$ be a small constant such that the following holds whenever $\normMsm{\vtheta - \pi_{\vb}(\hatvtheta)} < \epsilon$:
	\begin{enumerate}
		\item $\sgn(\dotp{\vw_k}{\vx_i}) = \sgn(\dotp{\vwplus}{\vx_i})$ for all $i \in [n]$ and for all $k \in P$;
		\item $\argmin_{i \in [n]} \{q_i(\vtheta)\} \subseteq \Splus$.
	\end{enumerate}

	Let $\vtheta \in \bar{\mathcal{Q}}$ be any parameter satisfying $\normMsm{\vtheta - \pi_{\vb}(\hatvtheta)} < \epsilon$.
	We can decompose $\vtheta$ into $\vtheta^+ + \vtheta^-$, where $\vtheta^+ = (\vw_1^+, \dots, \vw_m^+, a_1^+, \dots, a_m^+)$, $\vtheta^- = (\vw_1^-, \dots, \vw_m^-, a_1^-, \dots, a_m^-)$, and
	\[
		\vw_k^+ = \onec{k \in P} \vw_k, a_k^+ = \onec{k \in P} a_k, \qquad \vw_k^- = \onec{k \notin P} \vw_k, a_k^- = \onec{k \notin P} a_k.
	\]
	Let $r_+ = \normtwosm{\vtheta^+}$ and $r_- = \normtwosm{\vtheta^-}$. Define $\barvtheta^+$ and $\barvtheta^-$ to be two unit-norm parameters so that $\vtheta^+ = r_+ \barvtheta^+$, $\vtheta^- = r_- \barvtheta^-$.
	Then we have
	\[
		\gamma(\vtheta)
		= \min_{i \in \Splus} \{q_i(\vtheta)\}
		= \min_{i \in \Splus} \left\{ q_i(\vtheta^+) + q_i(\vtheta^-) \right\}
		= \min_{i \in \Splus} \left\{ r_+^2 q_i(\barvtheta^+) + r_-^2 q_i(\barvtheta^-) \right\}
	\]
	Note that $r_+^2 + r_-^2 = 1$. By minimax theorem (similar to \Cref{lm:gamma-minimax}),
	\begin{align*}
		\min_{i \in \Splus} \left\{ r_+^2 q_i(\barvtheta^+) + r_-^2 q_i(\barvtheta^-) \right\}
		\le \min_{\lambda \in \Lambdaplus} \max\left\{ \sum_{i \in \Splus} \lambda_i q_i(\barvtheta^+), \sum_{i \in \Splus} \lambda_i q_i(\barvtheta^-) \right\}.
	\end{align*}
	By definition of $\vwplus$ and KKT conditions, we can find $\vlambda^* \in \Lambdaplus$ so that $\sum_{i \in \Splus} \lambda^*_i \vxplus = \gammaplus \vwplus$.
	Letting $\vlambda = \vlambda^*$ for the above inequality, we can obtain
	\[
		\gamma(\vtheta) \le \max\left\{ \sum_{i \in \Splus} \lambda^*_i q_i(\barvtheta^+), \sum_{i \in \Splus} \lambda^*_i q_i(\barvtheta^-) \right\}.
	\]
	We only need to prove that both $\sum_{i \in \Splus} \lambda^*_i q_i(\barvtheta^+)$ and $\sum_{i \in \Splus} \lambda^*_i q_i(\barvtheta^-)$ are no more than $\frac{1}{2} \gammaplus$.
	Note that combining \Cref{ass:margin-negative-net} and \Cref{lm:gamma-minimax} directly implies that $\sum_{i \in \Splus} \lambda^*_i q_i(\barvtheta^-) < \frac{1}{2} \gammaplus$.
	Now we focus on $\sum_{i \in \Splus} \lambda^*_i q_i(\barvtheta^+)$.

	According to our choice of $\epsilon$, we have $a_k\phi(\dotpsm{\vw_k}{\vx_i}) = \dotpsm{a_k\vw_k}{\vxplus_i}$.
	For $\sum_{i \in \Splus} \lambda^*_i q_i(\barvtheta^+)$, we have
	\begin{align*}
		\sum_{i \in \Splus} \lambda^*_i q_i(\barvtheta^+)
		= \sum_{i \in \Splus} \lambda^*_i y_i \sum_{k \in P} \dotpsm{a_k \vw_k}{\vxplus_i}
		&= \sum_{k \in P} a_k \dotp{\vw_k}{\sum_{i \in \Splus} \lambda^*_i y_i\vxplus_i} \\
		&= \sum_{k \in P} a_k \dotp{\vw_k}{\gammaplus \vwplus}.
	\end{align*}
	By Cauchy-Schwartz inequality,
	\[
		\sum_{k \in P} a_k \dotp{\vw_k}{\gammaplus \vwplus}
		\le \sqrt{\sum_{k \in P} a_k^2} \cdot \sqrt{\sum_{k \in P} \dotp{\vw_k}{\gammaplus \vwplus}^2}
		\le \frac{1}{\sqrt{2}} \cdot \frac{1}{\sqrt{2}} \gammaplus
		= \frac{1}{2}\gammaplus.
	\]
	This proves that $\sum_{i \in \Splus} \lambda^*_i q_i(\barvtheta^+) \le \frac{1}{2} \gammaplus$,
	and thus $\gamma(\vtheta) \le \frac{1}{2} \gammaplus = \gamma(\hatvtheta)$. Therefore Item 1 is true.

	For Item 2, we only need to note that the equality in $\gamma(\vtheta) \le \frac{1}{2} \gammaplus$ only holds if $r_{-} = 0$ and $\vw_k = a_k \vwplus$ for all $k \in P$, so $f_{\vtheta}$ represents the same function as $f_{\hatvtheta}$.
\end{proof}

For proving \Cref{thm:nonsym_main}, we only need to show this:
\begin{theorem} \label{thm:nonsym_main2}
	For any sequence of $\sigma_1, \sigma_2, \dots$ converging to $0$,
	there is a subsequence $\sigma_{p_1}, \sigma_{p_2}, \dots$ and a constant $\sigmainitmax$
	such that \Cref{thm:nonsym_main} holds for $\sigmainit = \sigma_{p_i}$ as long as $\sigma_{p_i} < \sigmainitmax$.
\end{theorem}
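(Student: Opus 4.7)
The plan is to combine the three-phase analysis of the previous sections (now refined to the non-symmetric setting in \Cref{lem:non-sym-phase1-main,lm:nonsym-phase2-main,thm:one-neuron-max-margin,thm:nonsym-local-opt}) with a subsequence argument that handles the dependence of $\barvb(\sigmainit)$ on the initialization scale. First I would fix $\barvtheta_0 \sim \Diniteq(1)$ and condition on the event that at least one neuron has $\bara_k > 0$ (probability $1 - 2^{-m}$, since the $s_k$ are i.i.d.\ $\unif\{\pm 1\}$); on this event, the prerequisites of \Cref{lem:non_sym_first_converge_to_cone_realistic} hold. By Item~2 of \Cref{lem:non-sym-phase1-main}, the entries $\barb_k(\sigma_j)$ are uniformly bounded (in $[\bar A,\bar B]$ for positive $\bara_k$ and of order $\sigma_j^{\kappa}\to 0$ for negative $\bara_k$), so from any sequence $\sigma_j \to 0$ I extract a subsequence $\sigma_{p_i}$ on which $\barvb(\sigma_{p_i})\to\hatvb$, and the limit automatically satisfies $\brmsplus[\hatb]>0$ and $\brmsminus[\hatb]=0$.

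Next I would feed this limit into \Cref{lm:nonsym-phase2-main} to obtain the one-neuron embedded limit dynamic $\tildevtheta(t) := \lim_{r\to 0}\phitheta(r\hatvtheta, T_2(r)+t)$ starting from $\hatvtheta := (\brmsplus[\hatb]\barvmuplus,\vzero,\brmsplus[\hatb],0)$, together with the pointwise convergence $\lim_{i\to\infty} \vtheta_{p_i}(T_{12}(\sigma_{p_i})+t) = \pi_{\hatvb}(\tildevtheta(t))$ for every $t\ge 0$. Since the second slot of $\hatvtheta$ is zero, \Cref{lm:weight-zero} makes $\tildevtheta$ effectively a one-neuron flow; it is balanced and its weight is aligned with $\barvmuplus$, and the linear separability assumption \Cref{ass:lin} yields $\dotp{\vmuplus}{\vwopt} = \frac{1}{n}\sum_i y_i \dotp{\vwopt}{\vxplus_i} > 0$, verifying the hypotheses of \Cref{thm:one-neuron-max-margin}. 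Hence $\tildevtheta(t)/\normtwosm{\tildevtheta(t)} \to \frac{1}{\sqrt{2}}(\vwplus,\vzero,1,0)$, and after embedding the full-width limit is $\barvthetaopt := \pi_{\hatvb}\bigl(\frac{1}{\sqrt{2}}(\vwplus,\vzero,1,0)\bigr)$, which represents the classifier $\frac{1}{2}\phi(\dotp{\vwplus}{\vx})$.

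For Phase III, the pointwise limit must be lifted to the directional limit $t\to\infty$ of each trajectory $\vtheta_{p_i}(t)$. Using \Cref{thm:nonsym-local-opt}, $\barvthetaopt$ is a local-max-margin direction within the sign-pattern cone $\bar{\mathcal Q}$ determined by $\mathrm{sgn}(a_k)=s_k$, and by the balanced initialization together with \Cref{cor:weight-equal} the gradient flow $\vtheta_{p_i}(t)$ stays inside $\bar{\mathcal Q}$ for all $t$. Given any $\delta>0$, I would invoke \Cref{thm:phase-4-general} to get $\epsilon_0,\rho_0$ that guarantee directional convergence within $\delta$ of $\barvthetaopt$ from any start with norm $\ge\rho_0$ and direction within $\epsilon_0$ of $\barvthetaopt$. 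I first pick $T^\ast$ large so that $\pi_{\hatvb}(\tildevtheta(T^\ast))$ has norm at least $2\rho_0$ and direction within $\epsilon_0/2$ of $\barvthetaopt$ (possible because $\normtwosm{\tildevtheta(t)}\to\infty$ and the direction converges), and then pick $\sigmainitmax$ so that for all $\sigma_{p_i}<\sigmainitmax$ the approximation $\vtheta_{p_i}(T_{12}(\sigma_{p_i})+T^\ast)\approx \pi_{\hatvb}(\tildevtheta(T^\ast))$ is accurate enough to land in the $(\epsilon_0,\rho_0)$-neighborhood. Finally, choosing $\delta$ small and using Item~2 of \Cref{thm:nonsym-local-opt} forces the convergent direction, which lies in $\bar{\mathcal Q}$ and within $\delta$ of $\barvthetaopt$ with the same margin, to represent the same function as $\barvthetaopt$, giving $\finftime(\vx)=\frac{1}{2}\phi(\dotpsm{\vwplus}{\vx})$.

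The main obstacle is that \Cref{thm:phase-4-general} is stated for a local-max-margin direction on the full sphere $\sphS^{D-1}$, whereas \Cref{thm:nonsym-local-opt} only supplies local maximality within the closed convex cone $\bar{\mathcal Q}$. The reconciliation is that gradient flow with balanced initialization never leaves $\bar{\mathcal Q}$, so the proof of \Cref{thm:phase-4-general} carries over verbatim once one restricts the infimum in the definition of $\tildegamma_{\inf}(\rho,\epsilon)$ to $\bar{\mathcal Q}$; the KL-type desingularization arguments and the monotonicity of $\tildegamma$ only use local maximality along the actual trajectory, which remains in $\bar{\mathcal Q}$. Verifying this restriction step carefully—together with confirming that the one-neuron dynamic $\tildevtheta(t)$ indeed extends globally in $t$ and meets the hypotheses of \Cref{thm:one-neuron-max-margin} via the same $t_s$-shift idea used in the proof of \Cref{lm:phase2-main}—is the technical heart of the argument.
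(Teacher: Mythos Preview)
Your proposal is correct and follows essentially the same approach as the paper: extract a subsequence so that $\barvb(\sigma_{p_i})$ converges, invoke \Cref{lm:nonsym-phase2-main} and \Cref{thm:one-neuron-max-margin} for the one-neuron limit, then use \Cref{cor:weight-equal} to confine the trajectory to the sign cone $\bar{\mathcal Q}$ and apply a restricted version of \Cref{thm:phase-4-general} together with \Cref{thm:nonsym-local-opt}. The paper handles the obstacle you flag (local maximality only within $\bar{\mathcal Q}$ rather than on all of $\sphS^{D-1}$) in exactly the same way, simply calling it ``a variant of \Cref{thm:phase-4-general}'' without spelling out the restriction; your explicit discussion of why the KL/desingularization argument survives this restriction is, if anything, more careful than the paper's own proof.
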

\begin{proof}[Proof for \Cref{thm:nonsym_main}]
	Assume to the contrary that \Cref{thm:nonsym_main} does not hold.
	Then there exists $\vx \in \R^d$ and a sequence of initialization scales $\sigma_1, \sigma_2, \dots$ converging to $0$
	such that $f^{\infty}(\vx) \ne \frac{1}{2}\phi(\dotp{\vwplus}{\vx})$ for any $\sigma_j$.
	However, by \Cref{thm:nonsym_main2}, we can find a subsequence $\sigma_{p_1}, \sigma_{p_2}, \dots$ and a constant $\sigmainitmax$
	such that $\frac{1}{2}\phi(\dotp{\vwplus}{\vx})$ holds for $\sigma_{p_i}$
	as long as $\sigma_{p_i} < \sigmainitmax$, contradiction.
\end{proof}

\begin{proof}[Proof for \Cref{thm:nonsym_main2}]
	With probability $1$ over the random draw of $\barvtheta_0 \sim \Dinit(1)$,
	by \Cref{lem:non-sym-phase1-main}, the prerequisites of \Cref{lem:non_sym_first_converge_to_cone_realistic} hold
	and we can find a subsequence of initialization scales $\sigma_{p_1}, \sigma_{p_2}, \dots$
	so that the limit $\hatvb := \lim_{j \to +\infty} \barvb(\sigma_{p_j})$ exists.
	
	Let $\vtheta_j(t) = \phitheta(\sigma_{p_j} \barvtheta_0, t)$.
	By \Cref{lm:nonsym-phase2-main}, with probability $1 - 2^{-m}$,
	$\lim_{j \to \infty} \vtheta_j(T_{12}(\sigma_{p_j}) + t) = \pi_{\hatvb}(\tildevtheta(t))$.
	By \Cref{thm:one-neuron-max-margin}, $\lim_{t \to +\infty} \frac{\tildevtheta(t)}{\normtwosm{\tildevtheta(t)}} = \frac{1}{\sqrt{2}}(\vwplus, \vzero, 1, 0) =: \tildevtheta_{\infty}$.
	Then we can argue in a similar way as \Cref{thm:sym_main} to show that
	for any $\epsilon > 0$ and $\rho > 0$,
	we can choose a time $t_1 \in \R$ such that
	$\normtwo{\frac{\vtheta_j(T_{12}(\sigma_{p_j}) + t_1)}{\normtwosm{\vtheta_j(T_{12}(\sigma_{p_j} + t_1))}} - \pi_{\hatvb}(\tildevtheta_{\infty})}$
	and $\normtwo{\vtheta_j(T_{12}(\sigma_{p_j}) + t_1)} \ge \rho$ for $\sigma_{p_j}$ small enough.

	By \Cref{cor:weight-equal}, the trajectory of gradient flow starting with $\sigma_{p_j} \barvtheta_0$ lies in the set 
	$\mathcal{Q} := \{ \vtheta : a_k \bara_k \ge 0 \text{ for all } k \in [m] \}$ for all $j \ge 1$,
	that is, every $a_k$ has the same sign as its initial value during training.
	By a variant of \Cref{thm:phase-4-general}, there exists $\sigmainitmax$ such that for all $\sigma_{p_j} < \sigmainitmax$,
	$\frac{\vtheta(t)}{\normtwosm{\vtheta(t)}} \to \barvtheta \in \mathcal{Q}$,
	where $\gamma(\barvtheta) = \gamma(\pi_{\barvb}(\tildevtheta_{\infty}))$
	and $\normtwosm{\barvtheta - \pi_{\barvb}(\tildevtheta_{\infty})} \le \delta$.
	Applying \Cref{thm:nonsym-local-opt} proves that $\finftime(\vx) = \frac{1}{2} \phi(\dotpsm{\vwplus}{\vx})$ for $\sigma_{p_j} < \sigmainitmax$.
\end{proof}

\section{Proofs for the Orthogonally Separable Case}\label{sec:orthogonal_separable}

In this section, we revisit the orthogonally separable setting considered by
\citet{phuong2021the}. Suprisingly, in this setting, all KKT points which
contains at least one positive neuron and negative neuron are indeed
global-max-margin directions and unique in function space. This means it is
possible to prove the global optimality of margin in \citet{phuong2021the}'s setting even without a trajectory-based
analysis.

\begin{definition}[Orthogonally Separable Data, \citealt{phuong2021the}]
\label{def:ortho-separ}
	A binary classification dataset $\{(\vx_1, y_1), \dots, (\vx_n, y_n)\}$ is
	called \emph{orthogonally separable} if for all $i, j\in[n]$, if
	$\vx_i^\top\vx_j>0$ whenever $y_i=y_j$ and $\vx_i^\top\vx_j\le0$ whenever $y_i=-y_j$.
\end{definition}

Let  $\vtheta = (\vw_1, \dots, \vw_m, a_1, \dots, a_m) \in \R^D$ and  $f_\vtheta(\vx) :=
\sum_{i=1}^m a_i\phi(\dotpsm{\vx}{\vw_i})$ where $\phi$ is ReLU, i.e.,
$\phi(x) =\max\{0,x\}$. The following theorem shows that for orthogonally
separable data, all KKT-margin directions are global-max-margin directions.
\begin{theorem}\label{thm:orthogonally_separable} Suppose the dataset is
 orthogonally separable, for all KKT-margin directions $\vtheta \in \sphS^D$, their corresponding functions $f_\vtheta$ are the same and
 thus they are all global-max-margin directions.
\end{theorem}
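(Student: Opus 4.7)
The plan is to exploit orthogonal separability to rigidify any KKT-margin direction so completely that its positive neurons collapse to a single direction (the SVM direction on the positive class) and likewise for its negative neurons, at which point $f_\vtheta$ on all of $\R^d$ is determined by two strictly convex QPs depending only on the data.

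First I would extract the sign patterns forced by the first KKT condition combined with \Cref{def:ortho-separ}. Let $\vtheta=(\vw_1,\ldots,\vw_m,a_1,\ldots,a_m)$ be a KKT-margin direction in the sense of \Cref{def:kkt-margin-lkrelu}, normalised so that $\qmin(\vtheta)=1$, and write $K^\pm:=\{k:\pm a_k>0\}$. For $k\in K^+$, the first KKT condition reads $\vw_k=a_k\sum_i\lambda_iy_i\sigma'_{ki}\vx_i$ with $\sigma'_{ki}\in\cder\phi(\dotp{\vw_k}{\vx_i})\subseteq[0,1]$. Dotting with any $\vx_l$, each summand carries a factor $y_i\dotp{\vx_i}{\vx_l}$, which by \Cref{def:ortho-separ} is $\ge 0$ when $y_l=+1$ and $\le 0$ when $y_l=-1$; hence $\dotp{\vw_k}{\vx_l}$ inherits the sign of $y_l$, and the mirror holds for $K^-$. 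Setting $\vu^\pm:=\sum_{k\in K^\pm}a_k\vw_k$, these sign patterns collapse the ReLU activations on training data to $f_\vtheta(\vx_j)=\dotp{\vu^+}{\vx_j}$ if $y_j=+1$ and $f_\vtheta(\vx_j)=\dotp{\vu^-}{\vx_j}$ if $y_j=-1$.

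Second, I would chain together the two KKT conditions, balancedness \Cref{lm:a-w-equal}, and complementary slackness to show that every $k\in K^+$ satisfies $\vw_k=a_k\vmu^+_\lambda$ for the unit vector $\vmu^+_\lambda:=\sum_{j:y_j=+1}\lambda_j\vx_j$. Concretely, the second KKT condition together with the sign patterns gives $a_k=\dotp{\vw_k}{\vmu^+_\lambda}$; dotting $\vu^+$ against $\vmu^+_\lambda$ in two ways produces $A^+=\Lambda^+$, where $A^+:=\sum_{k\in K^+}a_k^2$ and $\Lambda^+:=\sum_{j:y_j=+1}\lambda_j$; the first KKT condition in the generic case (where $\sigma'_{ki}=1$ for $y_i=+1$ and $0$ for $y_i=-1$) gives $\vw_k=a_k\vmu^+_\lambda$ directly; and combining $\dotp{\vu^+}{\vx_j}=1$ on the positive support with $\vw_k=a_k\vmu^+_\lambda$ forces $\dotp{\vmu^+_\lambda}{\vx_j}=1/A^+$, whence $\normtwosm{\vmu^+_\lambda}^2=\Lambda^+/A^+=1$. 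The mirror argument yields a unit vector $\vmu^-_\lambda$ with every $k\in K^-$ collinear with it, so all positive (resp.\ negative) neurons share a single direction.

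Third, I would cash in the collinearity via strict convexity. Once $\vu^+=A^+\vmu^+_\lambda=\sum_{j:y_j=+1}\lambda_j\vx_j$, it is a non-negative combination of positive data satisfying the KKT conditions of the strictly convex QP
\[
    \min_{\vw\in\R^d}\tfrac{1}{2}\normtwosm{\vw}^2\quad\text{s.t.}\quad\dotp{\vw}{\vx_j}\ge 1\ \ \forall\, j\text{ with }y_j=+1,
\]
whose solution is unique; thus $\vu^+$ depends only on the data, and the mirror QP determines $-\vu^-$. Positive homogeneity of $\phi$ and the collinearity from the previous step then give $f_\vtheta(\vx)=\phi(\dotp{\vu^+}{\vx})-\phi(\dotp{-\vu^-}{\vx})$ for every $\vx\in\R^d$, independent of the chosen KKT point. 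Since the globally margin-maximising direction is itself a KKT-margin direction, the common function necessarily realises the global max margin, so every KKT-margin direction is global-max-margin.

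The main technical obstacle is the collinearity step under non-generic activations: when $\dotp{\vw_k}{\vx_i}=0$ for some $k,i$, the Clarke subderivative $\cder\phi(0)=[0,1]$ admits a full interval of choices for $\sigma'_{ki}$, so the clean identity $\vw_k=a_k\vmu^+_\lambda$ is not immediate and $\vmu^+_\lambda$ could a priori pick up spurious negative-data components. Resolving this will require an aggregate Cauchy--Schwarz argument across all positive neurons simultaneously, leveraging the global identity $A^+=\Lambda^+$ and complementary slackness $\dotp{\vu^+}{\vx_j}=1$ on the support to absorb the boundary ambiguities before pointwise collinearity is read off.
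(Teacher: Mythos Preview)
Your approach is essentially the same as the paper's: show that within each sign class $K^\pm$ all neurons are collinear, identify the aggregate $\vu^\pm$ with the unique minimizer of a strictly convex SVM-type QP depending only on the data, and read off that $f_\vtheta$ is determined. The only substantive difference is in how the flagged obstacle is handled: the paper resolves the non-generic activation case with a one-line sharpening of your first step rather than an aggregate Cauchy--Schwarz argument. Specifically, expanding $\dotp{\vw_k}{\vx_i}=a_k y_i\sum_j\lambda_j\sigma'_{kj}\dotp{y_j\vx_j}{y_i\vx_i}$, every summand is nonnegative by orthogonal separability and the $j=i$ term equals $\lambda_i\sigma'_{ki}\normtwosm{\vx_i}^2$; thus if $\lambda_i\sigma'_{ki}>0$ the right-hand side is strictly positive, forcing $\sgn(a_ky_i)=\sgn(\dotp{\vw_k}{\vx_i})\ge 0$ and hence $y_i=\sgn(a_k)$. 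This immediately kills all wrong-sign contributions in the first KKT identity (for $y_i\ne\sgn(a_k)$ either $\lambda_i=0$ or $\sigma'_{ki}=0$), and for the right-sign indices the same expansion with a surviving witness $j_*$ gives $\dotp{\vw_k}{\vx_i}>0$ strictly, so $\sigma'_{ki}=1$. The collinearity $\vw_k=a_k\vmu^+_\lambda$ then follows pointwise without any further work, and your additional observations $A^+=\Lambda^+$ and $\normtwosm{\vmu^+_\lambda}=1$, while correct, are not needed for the conclusion.
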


The \Cref{thm:orthogonally_separable} is a simple corollary of the following lemma \Cref{lem:orthogonally_separable}.

\begin{lemma}\label{lem:orthogonally_separable} If  $\vtheta$ satisfies the KKT
    conditions of \eqref{eq:prob-P}, then for $a_k \ne 0$,  $|a_k| =
    \normtwo{\vw_k}$ and $(\sum_{j:a_ja_k>0}a_j^2)\frac{\vw_k}{a_k}$ is the
    global minimizer of the following optimization problem \eqref{eq:prob-Q}:
\begin{equation}\label{eq:prob-Q}
    \min_{\vw} \quad \frac{1}{2}\normtwo{\vw}^2 
\quad\mathrm{s.t.\ }  \dotp{\vw}{\vx_i}\ge 1, \quad\text{for all } i\in[n] \text{ with } y_i=\sgn(a_k). \tag{Q} 
\end{equation}
In other words, all the non-zero $a_k,\vw_k$ can be split into $2$ groups according to the sign of $a_k$, where in each group, $\frac{\vw_k}{a_k}$ is the same. 
\end{lemma}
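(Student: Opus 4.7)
The plan is to combine four ingredients: a norm-balance identity, the geometry of orthogonal separability, a common-direction observation, and the convexity of (Q). First I would apply the reasoning of \Cref{lm:a-w-equal} verbatim to the ReLU case (the argument only uses $1$-homogeneity of $\phi$ together with the Euler identity, both of which survive), obtaining $|a_k| = \|\vw_k\|$; in particular $\vw_k \ne \vzero$ whenever $a_k \ne 0$. I would also record the stationarity condition in the form $\vw_k = a_k \sum_i \tilde\lambda_{k,i} y_i \vx_i$, where $\tilde\lambda_{k,i} := \lambda_i h_{k,i}$ with $h_{k,i} \in \cder{\phi}(\vw_k^\top \vx_i) \subseteq [0,1]$, and let $\tilde I_k := \{i : \tilde\lambda_{k,i} > 0\}$; note that $i \in \tilde I_k$ forces $h_{k,i} > 0$, hence $\vw_k^\top \vx_i \ge 0$.

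The key step is to show the ``perfect'' activation pattern $\vw_k^\top \vx_j > 0 \iff y_j = \sgn(a_k)$. Fix $a_k > 0$ and suppose for contradiction some $j \in \tilde I_k$ has $y_j = -1$. Taking the inner product of the stationarity formula with $\vx_j$ makes every summand on the right-hand side nonpositive: same-sign neighbors $(y_i = -1)$ contribute $-\tilde\lambda_{k,i}\, \vx_i^\top \vx_j \le 0$ because $\vx_i^\top \vx_j > 0$; opposite-sign neighbors $(y_i = 1)$ contribute $\tilde\lambda_{k,i}\, \vx_i^\top \vx_j \le 0$ because $\vx_i^\top \vx_j \le 0$; and the diagonal $i = j$ term contributes $-\tilde\lambda_{k,j}\, \|\vx_j\|^2 < 0$. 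Hence $\vw_k^\top \vx_j < 0$, contradicting $j \in \tilde I_k$. Thus $\tilde I_k \subseteq \{i : y_i = 1\}$; then for any $j$ with $y_j = 1$, $\vw_k^\top \vx_j = a_k \sum_{i \in \tilde I_k} \tilde\lambda_{k,i}\, \vx_i^\top \vx_j > 0$ because $\tilde I_k$ is nonempty (as $\vw_k \ne \vzero$) and every factor is strictly positive. The case $a_k < 0$ is symmetric.

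The remainder is bookkeeping. The activation pattern collapses the stationarity formula to $\vw_k/a_k = \sum_{i : y_i = \sgn(a_k)} \lambda_i \vx_i$, a vector independent of $k$; call it $\vu_+$ for $a_k > 0$ and $\vu_-$ for $a_k < 0$, each of unit norm by Step 1. Let $S_+ := \sum_{k : a_k > 0} a_k^2$. For every $i$ with $y_i = 1$, only positive-sign neurons activate, so $q_i = f_{\vtheta}(\vx_i) = \sum_{k: a_k > 0} a_k \vw_k^\top \vx_i = S_+ \vu_+^\top \vx_i$. The original KKT feasibility $q_i \ge 1$ and complementary slackness $\lambda_i > 0 \Rightarrow q_i = 1$ translate verbatim into the KKT conditions for (Q) with dual variables $\mu_i := S_+ \lambda_i \ge 0$ (for $y_i = 1$), and the identity $S_+ \vu_+ = \sum_{i : y_i = 1} \mu_i \vx_i$ verifies stationarity for (Q). Since (Q) is strictly convex, $S_+ \vu_+$ is its unique global minimizer, which is exactly the claim for $a_k > 0$; the $a_k < 0$ case is symmetric. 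The main obstacle is Step 2: orthogonal separability enters only here, and the contradiction crucially needs the strict inequality $\|\vx_j\|^2 > 0$ in the diagonal term to break the tie, since the off-diagonal inequalities $\vx_i^\top \vx_j \le 0$ (opposite sign) are only weak.
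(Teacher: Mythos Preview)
Your proposal is correct and follows essentially the same approach as the paper: both derive $|a_k|=\|\vw_k\|$ from the Euler identity, use the stationarity formula together with orthogonal separability (the key sign computation $y_iy_j\,\vx_i^\top\vx_j\ge 0$ with strict positivity on the diagonal) to pin down the activation pattern, collapse $\vw_k/a_k$ to a $k$-independent combination $\sum_{i:y_i=\sgn(a_k)}\lambda_i\vx_i$, and then verify the KKT conditions of the convex problem \eqref{eq:prob-Q}. The only cosmetic difference is that you phrase Step~2 as a contradiction (assuming a wrong-sign index in $\tilde I_k$ and showing $\vw_k^\top\vx_j<0$), whereas the paper argues directly that $a_ky_i\ge 0$ from the sign of the same sum; the underlying computation is identical.
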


\begin{proof}[Proof of \Cref{thm:orthogonally_separable}]
By \Cref{lem:orthogonally_separable}, we know for any $\vtheta$ satisfying the
KKT condition of \eqref{eq:prob-P},
\begin{align}\label{eq:proof_orthogonally_separable_1}
\vw_k=\left(\sum_{j:a_ja_k>0}a_j^2\right)^{-1}a_k\vw^{\sgn(a_k)}, \end{align}
where $\vw^{\sgn(a_k)}$ ($\vw^{+}$ or $\vw^{-}$) are the unique global minimzer
of the constrained convex optimization of \eqref{eq:prob-Q}.

Thus $\normtwo{\vtheta}^2 = \sum_{i\in [m]}(|a_i|^2+\normtwosm{\vx_i}^2) =
2\sum_{i\in [m]}|a_i|^2 = \normtwosm{\vw^{-}}+\normtwosm{\vw^{+}}$ is the same
for all $\vtheta$ satisfying the condition in the theorem statement. Here the
last equality uses \eqref{eq:proof_orthogonally_separable_1} and $|a_k| =
\normtwo{\vw_k}$.

Next we check the uniqueness of $f_\vtheta$. For any $\vx$, we have 
\begin{align*}
f_\vtheta(\vx) = 
\sum_{k\in [m]} a_k\phi(\dotp{\vx}{\vw_k}) &= \phi\left(\dotp{\vx}{\sum_{k:a_k>0} a_k\vw_k}\right) + \phi\left(\dotp{\vx}{\sum_{k:a_k<0} a_k\vw_k}\right)\\
&=  \phi\left(\dotp{\vx}{\vw^{+}}\right) + \phi\left(\dotp{\vx}{\vw^{-}}\right), 
\end{align*}
which completes the proof.
\end{proof}

\begin{proof}[Proof of \Cref{lem:orthogonally_separable}]
By KKT conditions (\Cref{def:kkt-margin-lkrelu}), there exist $\lambda_1, \dots
\lambda_n \ge 0$, such that for each $k\in [m]$, there are $\hki[1], \dots, \hki[n]
\in \R$ such that for all $i\in [n]$,  $\hki \in
\cder{\phi}(\dotpsm{\vw_k}{\vx_i})$, and the following conditions hold:
\[
	\vw_k = a_k\sum_{i \in [n]} \lambda_i \hki y_i \vx_i,  \qquad a_k = \sum_{i \in [n]} \lambda_i y_i \phi(\vw_k^\top \vx_i),
\]
and $\lambda_i = 0$ whenever $y_if_\vtheta(\vx_i) > 1$. By \Cref{lm:a-w-equal},
$\normtwosm{\vw_k} = \abssm{a_k}$.

We claim that for all $i \in [n]$ so that $\lambda_i \hki>0$, it holds that $y_i = \sgn(a_k)$ and $\dotpsm{\vw_k}{\vx_i} > 0$. Let $i \in [n]$ be any index so that
$\lambda_i \hki > 0$. Then $\hki > 0$. By KKT conditions,
\begin{equation} \label{eq:ortheq1}
	\dotpsm{\vw_k}{\vx_i} =
	\dotp{a_k\sum_{j \in [n]} \lambda_j \hki[j] y_j \vx_j}{\vx_i} = a_k y_i \sum_{j
	\in [n]} \lambda_j \hki[j] \dotp{y_j \vx_j}{y_i \vx_i}.
\end{equation}
Since $\phi(x) = \max\{x,0\}$, it holds that $\dotpsm{\vw_k}{\vx_i}\ge 0$;
otherwise $\hki\in \cder{\phi}(\dotpsm{\vw_k}{\vx_i})=\{0\}$, which contradicts
to $\hki > 0$. Then \eqref{eq:ortheq1} implies that the product of $a_ky_i$ and
$\sum_{j \in [n]} \lambda_j \hki[j] \dotp{y_j \vx_j}{y_i \vx_i}$ is non-negative. By
orthogonal separability, $\dotp{y_j \vx_j}{y_i \vx_i}\ge 0$ and thus $\sum_{j
\in [n]} \lambda_j \hki[j] \dotp{y_j \vx_j}{y_i \vx_i} \ge \lambda_i \hki
\normtwosm{y_i\vx_i}^2 > 0$. Then we can conclude that $a_k y_i \ge 0$ and thus
$y_i = \sgn(a_k)$. Since $y_i = \sgn(a_k)$ and $a_k \ne 0$, indeed we have $a_k y_i > 0$.
Now using \eqref{eq:ortheq1} again, we obtain $\dotpsm{\vw_k}{\vx_i} \ge a_ky_i \cdot \lambda_i \hki
\normtwosm{y_i\vx_i}^2 > 0$ if $\lambda_i \hki>0$.

Furthermore, for any $a_k\neq 0$, since $\normtwosm{\vw_k}= |a_k| > 0$, there is
at least one index $j_* \in [n]$ such that $\lambda_{j_*} \hki[j_*] >0$ (otherwise
$\vw_k = \vzero$ by KKT conditions). For all $i\in [n]$, again by \eqref{eq:ortheq1}, it holds  that 
\[
	\sgn(a_k) y_i \dotpsm{\vw_k}{\vx_i} = |a_k|  \sum_{j
	\in [n]} \lambda_j \hki[j] \dotp{y_j \vx_j}{y_i \vx_i}
	\ge |a_k| \lambda_{j_*} \hki[j_*] \dotp{y_{j_*}\vx_{j_*}}{y_i\vx_i}>0,
\]
where the last inequality is from the assumption of orthogonally separability. This further implies $\hki = \onec{y_i = \sgn(a_k)}$ and thus 
$\vw_k = a_k \sum_{i=1}^{n} \onec{y_i = \sgn(a_k)} \lambda_i y_i \vx_i$ for all $k \in [m]$.

Therefore we can split the neurons with non-zero $a_k$ into two parts: $K^+ =
\{k \in [m] : a_k>0\}$, $K^- = \{k \in [m] : a_k<0\}$. Every $k \in K^+$ satisfies the following:
\begin{align}
	a_k &= \normtwosm{\vw_k},\\
	\vw_k &= a_k\sum_{i=1}^{n} \onec{y_i = 1} \lambda_i \vx_i.
\end{align}
This implies $\forall k\in K^+$, $\frac{\vw_k}{a_k} =
\frac{\vw_k}{\normtwosm{\vw_k}} = \sum_{i=1}^{n} \onec{y_i = 1} \lambda_i \vx_i$. Define $\barvw :=\sum_{k\in K^+}a_k\vw_k$, then
\[
	\barvw = \left(\sum_{k\in K^+}a^2_k\right) \sum_{i=1}^{n} \onec{y_i = 1} \lambda_i  \vx_i.
\]
Recall that $\lambda_i = 0$ whenever $y_i f_{\vtheta}(\vx_i) > 1$. When $y_i =
1$, $f_{\vtheta}(\vx_i)$ can be rewritten as
\[
	f_{\vtheta}(\vx_i) = \sum_{k \in [m]} a_k \onec{\sgn(a_k) = 1} \dotp{\vw_k}{\vx_i} = \dotp{\vx_i}{\barvw}.
\]
So we can verify that $\barvw$ satisfies the KKT conditions of the following constrained convex optimization problem:
\begin{align}
    \min_{\vw} & \normtwo{\vw}^2 \\
\mathrm{s.t.}  & \dotp{\vw}{\vx_i}\ge 1, \quad \text{for all } i\in[n] \text{ with } y_i=1. 
\end{align}
By convexity, $\barvw$ is the unique minimizer of the above problem. The negative part $K^-$ can be analyzed in the same way.
\end{proof}

\section{Additional Discussions}
\subsection{Illustrations for Figure \ref{fig:decision_boundary}} \label{sec:proof-figure}
In this section we further illustrate the the relationship between KKT-margin and max-margin directions, as the examples have showed in \Cref{fig:decision_boundary}.

\subsubsection{Left: Symmetric Data}

\paragraph{Example.} For some symmetric data, there are KKT-margin directions
with non-linear decision boundary (and thus by \Cref{thm:maxmar-linear} are not
global-max-margin directions).

Let $\lambda_i$ be the dual variable for $(x_i,y_i)$, then the KKT conditions
(\Cref{def:kkt-margin-lkrelu} and \Cref{lm:a-w-equal}) ask
\begin{enumerate}
    \item for all $k \in [m]$, $\vw_k \in \sum_{i \in [n]} \lambda_i y_i a_k \cder{\phi}(\vw_k^{\top}\vx_i) \vx_i$;
    \item for all $k \in [m]$, $\abssm{a_k} = \normtwosm{\vw_k}$;
    \item for all $i \in [n]$, if $q_i(\vtheta) \ne \qmin(\vtheta)$ then
    $\lambda_i = 0$ (recall that $q_i(\vtheta) = y_i f_{\vtheta}(\vx_i)$).
\end{enumerate}

For $\alphaLK=0$, the example is simpler. Consider the following case: the data
points are $\vx_1=(1,-1)$, $\vx_2=(1,0)$, $\vx_3=(1,1)$ with label $1$ and the
symmetric counterpart $\vx_4=(-1,1)$, $\vx_5=(-1,0)$, $\vx_6=(-1,-1)$ with label
$-1$. As we have proved, the global-max-margin solution is a linear function and
in this case $\vwopt=(1,0)$. On the other hand, for hidden neurons $m\geq 3$,
one KKT-margin direction is as follows:
\[
\left\{
\begin{array}{ll}
 a_1= 2^{-1/4}& \vw_1=\frac{1}{2^{3/4}}(1,1)\\
 a_2= 2^{-1/4}& \vw_2=\frac{1}{2^{3/4}}(1,-1)\\
 a_3= -1& \vw_3=(-1,0)\\
 a_k=0 & \vw_k=\vzero \qquad\qquad\qquad \text{for all } k>3.
\end{array}\right.
\]
In this case, all the data points  $\vx_i$ share the same output margin
$q_i(\vtheta)$, so they are all support vectors. A possible choice of dual
variables is $\vlambda=(\frac{1}{\sqrt{2}},0,\frac{1}{\sqrt{2}},0,1,0)$. It is
easy to verify that this KKT-margin direction does not have linear decision
boundary and is thus not global-max-margin.

For $\alphaLK>0$, we can adapt the above case to construct a KKT point. 
Let $\beta$ be a solution to the equation
\[(2\sin^2\beta+\cos\beta)\alphaLK^2-(1+\cos\beta)\alphaLK+\cos 2\beta=0.\] Let
the data be $\vx_1=(1,\cot\beta),\vx_2=(1,0),\vx_3=(1,-\cot\beta)$ with label
$1$ and the corresponding opposites
$\vx_4=(-1,-\cot\beta),\vx_5=(-1,0),\vx_6=(-1,\cot\beta)$ with label $-1$.
Then we can have a KKT point with
$\vlambda=(\frac{\sin^2\beta}{\cos\beta(1-\alphaLK)},0,\frac{\sin^2\beta}{\cos\beta(1-\alphaLK)},0,1-\frac{2\alphaLK\sin^2\beta}{(1-\alphaLK)\cos\beta},0)$
and
\[
\left\{
\begin{array}{ll}
 a_1= (2(1+\alphaLK)\cos\beta)^{-1/2}& \vw_1=a_1\sin\beta(\cot\beta,1)\\
 a_2= a_1& \vw_2=a_2\sin\beta(\cot\beta,-1)\\
 a_3= -(1+\alphaLK)^{-1/2}& \vw_3=-a_3(-1,0)\\
 a_k=0 & \vw_k= \vzero \qquad\qquad\qquad \text{for all } k>3.
\end{array}\right.
\]

When $\cos 2\beta=0$ we already have solution $\alphaLK=0$, and it is easy to verify that for any $\alphaLK\in[0,1)$ there is a solution $\beta$ that satisfy the KKT conditions. Thus in the leaky ReLU case we are considering in the previous chapters, there are also KKT-margin directions that have non-linear decision boundaries and therefore have sub-optimal margin.

\subsubsection{Middle and Right: Non-symmetric Data}
In \Cref{fig:decision_boundary} we further show two examples of non-symmetric data that gradient flow from small initialization converges to a linear-boundary classifier that has a suboptimal margin.

The idea of the middle plot dataset comes from \cite{shah2020pitfalls}. In the middle subplot, we exhibit a data example that is linear separable in the first dimension $x$ but not linear separable in the second dimension $y$. The data is distributed on $(A_\epsilon,1)$ and $(A_\epsilon, -1)$ with label $1$ and on $(-A_{\epsilon'},0)$ with label $-1$ (here $A_c=[c,\infty)$ is an interval in one dimension). We add identical entries $c$ to all the data in the third dimension $z$ so in the $x-y$ plane with $z=c$ the two-layer ReLU network can represent decision patterns with bias. 

To apply \Cref{thm:nonsym_main} on this dataset, we need to make $c$ smaller than $\epsilon$ and $\epsilon/\epsilon'\ll \alphaLK$, so that the points at $(\epsilon, 1)$ and $(\epsilon, -1)$ becomes the support vectors for the one-neuron function. Also we can control the principal direction by taking more data points from the positive class, and then gradient flow  will converge to the one-neuron max-margin solution as predicted by  \Cref{thm:nonsym_main}. This solution cannot be global max-margin when $\epsilon\ll 1$, as a two-neuron network can express a function where these two support vectors have much larger distances to the decision boundary (and possess larger output margins).

In the right plot, we add three hints to a linear separable dataset so that gradient flow converges to the solution with a linear decision boundary and suboptimal margin. The result follows from \Cref{thm:hinted_main}.

\subsubsection{Experimental Results}
We run gradient descent with small learning rate and 0.001 times the He
intialization \citep{He_2015_ICCV} on the two-layer LeakyReLU network for the
examples in \Cref{fig:decision_boundary}. The contours of the neural net outputs
are displayed in \Cref{fig:image2}. In the three settings the neural nets
actually converge to linear classifiers.
\begin{figure}[htbp]
\begin{subfigure}{0.33\textwidth}
\includegraphics[width=\linewidth]{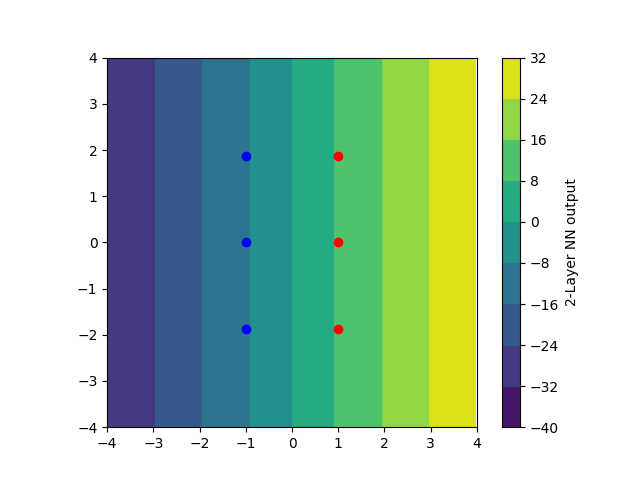} 
\caption{LEFT}
\end{subfigure}
\begin{subfigure}{0.33\textwidth}
\includegraphics[width=\linewidth]{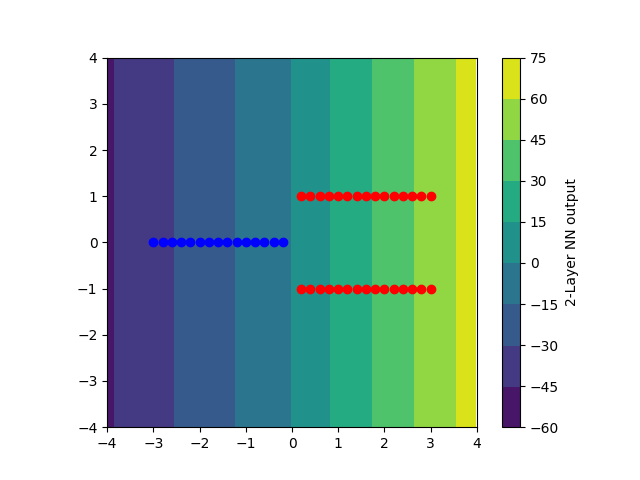} 
\caption{MIDDLE}
\end{subfigure}
\begin{subfigure}{0.33\textwidth}
\includegraphics[width=\linewidth]{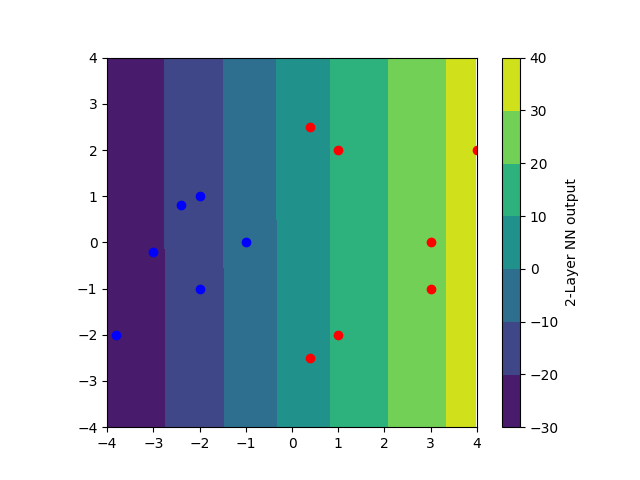} 
\caption{RIGHT}
\end{subfigure}
\caption{Two-layer Leaky ReLU neural nets converge to functions with linear decision boundary for the examples in \Cref{fig:decision_boundary}. The output contours are displayed in colors, and lighter  colors mean higher outputs.}
\label{fig:image2}
\end{figure}

\subsection{On the Non-branching Starting Point Assumptions} \label{sec:non-b}
\label{sec:nonunique}
In the proofs of the main theorems we make assumptions regarding the starting
point of gradient flow trajectories being non-branching
(\Cref{ass:non-branching} for the symmetric case and
\Cref{ass:non-branching-non-sym} for the non-symmetric case). The assumptions
address a technical difficulty due to the potential non-uniqueness of gradient flow
trajectories on general non-smooth loss functions. The motivations for these
assumptions are explained below.

\subsubsection{The non-uniqueness of gradient flow trajectories}

Gradient flow trajectories are unique on smooth loss functions by the classic
theory of ordinary differential equations. In this case, for trajectory
defined by $\frac{\dd \vtheta}{\dd t}=-\nabla \Loss(\vtheta)$, at any point
$\vtheta_0$, if both $\nabla \Loss(\vtheta_0)$ and $\nabla^2
\Loss(\vtheta_0)$ are continuous, then the trajectory is unique as long as it exists.

\begin{figure}[t]
\begin{subfigure}{0.33\textwidth}
\includegraphics[width=1\linewidth]{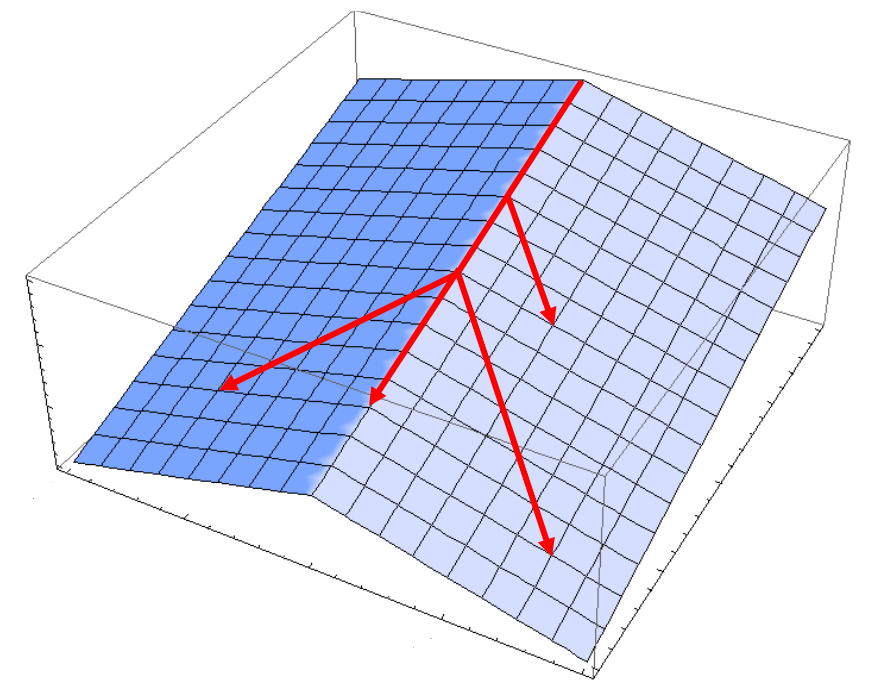} 
\caption{Ridge}
\end{subfigure}
\begin{subfigure}{0.33\textwidth}
\includegraphics[width=1\linewidth]{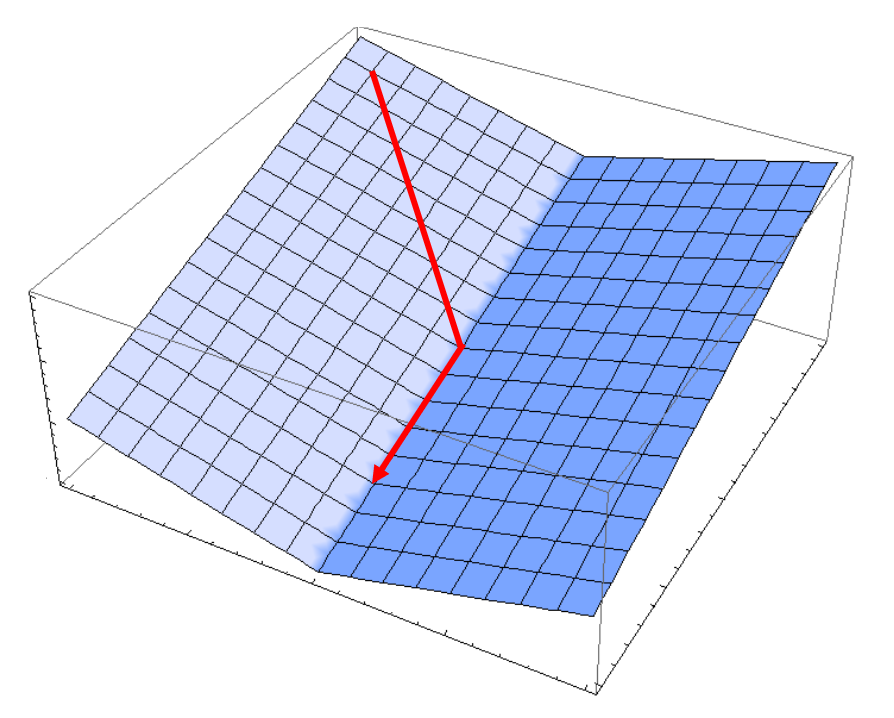} 
\caption{Valley}
\end{subfigure}
\begin{subfigure}{0.33\textwidth}
\includegraphics[width=1\linewidth]{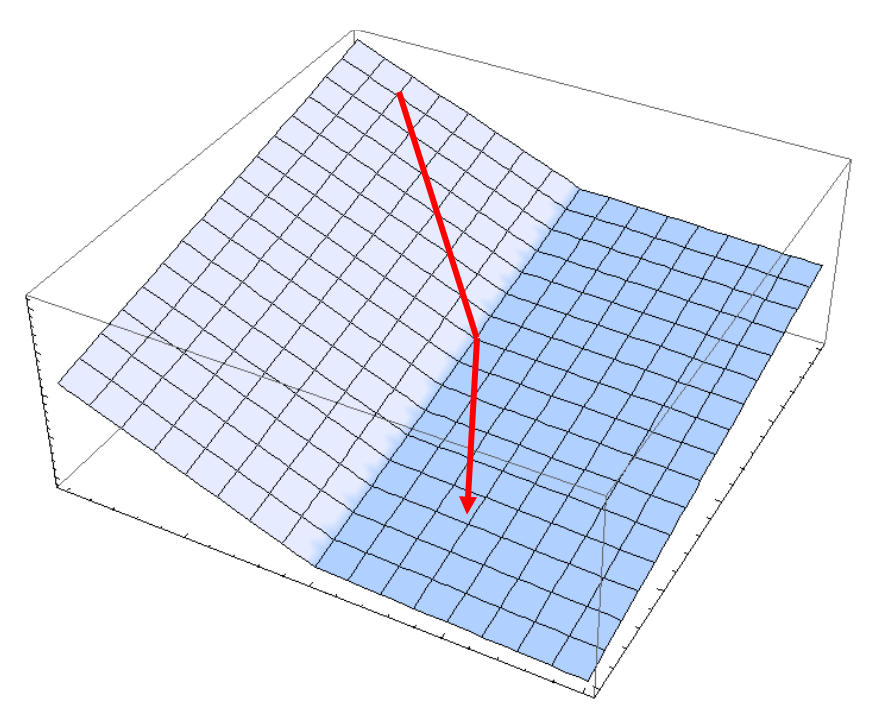} 
\caption{Refraction edge}
\end{subfigure}
\caption{Gradient flow trajectories behave differently in different landscapes. The trajectory may be non-unique only after arriving at a point on the "ridge".}
\label{fig:demoimage}
\end{figure}

For the non-smooth case with differential inclusion $\frac{\dd \vtheta}{\dd
t}\in-\cpartial \Loss(\vtheta)$, when $\Loss$ is continuous and convex, the
Clarke subdifferentials agree with the subdifferentials for convex functions,
and gradient flow trajectory is also unique (for instance see
\citealt{bolte2010characterizations}). However, on loss functions that are
non-smooth and non-convex, gradient flow may not be unique and the trajectory
may branch at non-differentiable points (see \Cref{fig:demoimage}). When a
non-differentiable point is atop a ``ridge'', a gradient flow reaching it may go
down different slopes next. Then any starting points wherefrom gradient flow can
reach such on-the-ridge points are not non-branching starting points as the
trajectory is not unique. For instance, with
$\Loss(\vtheta)=-\abssm{\dotpsm{\vtheta}{\vw}}$, then the trajectory with $\vtheta(t)=0$ for
$t<t_s$ and $\vtheta(t)=\pm (t-t_s) \vw$ for $t\geq t_s$ is a valid gradient
flow trajectory for any $t_s\geq 0$. On the other hand, when the point is
either at the bottom of a ``valley'' or at a ``refraction edge'', the trajectory
would not split. \Cref{fig:demoimage} sketches in red the possible gradient flow
trajectories in different circumstances.

In the case of two-layer Leaky ReLU network dynamics, there are settings where
\Cref{ass:non-branching} or \Cref{ass:non-branching-non-sym} holds. When data
points are  orthogonally separable (\Cref{def:ortho-separ}), all starting points
are non-branching. In this case, the output of each Leaky ReLU neuron will
change monotonically. By the chain rule, for any neuron $k\in[m]$, on any data
sample $i\in[n]$,
\[\dotp{\frac{\dd \vw_k}{\dd t}}{x_i} \in -\frac{a_k}{n}\sum_{j\in
[n]}\ell'(q_j(\vtheta))y_j\phi^{\circ}(\dotp{\vw_k}{ \vx_i})\dotp{x_i}{x_j}. \]
Then as $y_iy_j\dotp{\vx_i}{\vx_j}\geq 0$ by the orthogonally separability, the
sign of RHS is controlled by $\sgn(a_ky_i)$. With \Cref{thm:loss-convergence},
we know each $a_k$ does not change its sign along the gradient flow trajectory,
and therefore $\dotp{\vw_k}{x_i}$ changes monotonically. Then following the
arguements of the classic theory of ordinary differential equation, by applying
\Gronwall's inequality to both intervals $\{t:\dotp{\vw_k(t)}{x_i}>0\}$ and
$\{t:\dotp{\vw_k(t)}{x_i}\leq 0\}$ we know the trajectory is unique. In this
setting all the  non-differentiable landscapes resemble the ``refraction edges''.

In the general cases, it is a future research direction to find other analyses
that can replace the non-branching starting point assumptions, and doing so may
deepen our understanding in the trajectory behaviors in non-smooth settings. 

\section{Additional Experiments}
We conducted several additional experiments on synthetic datasets. The goal is
to show that 2-layer Leaky ReLU networks actually converges to the max-margin
linear classifiers in different settings with moderately small initialization.
The results are summarized in \Cref{tbl:exptbl} and \Cref{fig:expimage2}.

\begin{table}[htbp]
\begin{center}
\begin{tabular}{ |c|c|c| } 
 \hline
 Dataset size & SVM test error & 2-Layer neural net test error\\
 \hline
 10 & 30.2\% & 30.5 \% \\
 20	& 19.7\% & 18.9\% \\
 30 & 17.6\% & 15.6\% \\
 40 & 8.0\% & 7.1\% \\
 50 & 6.4\% & 5.9\% \\
 60 & 6.3\%	& 5.1\% \\ 
 70 & 7.6\%	& 6.5\% \\
 80 & 3.9\%	& 3.1\% \\
 90 & 6.1\%	& 5.2\% \\
 100 & 2.9\% & 2.9\% \\
 \hline
\end{tabular}
\end{center}
\caption{Test errors for SVM max-margin linear classifiers and 2-Layer ReLU neural networks are nearly the same across different data size.}
\label{tbl:exptbl}
\end{table}

\myparagraph{Data.} $n=10,20,\cdots,100$ data points are randomly sampled from
the standard gaussian distribution $\Normal(0,\mI)$ in the space of dimension
$d=50$, and are classified with a linear classifier through zero. Then the
points are translated mildly away from the classifier to make a small nonzero
margin that assists learning.

\myparagraph{Model and Training.} We used the two-layer leaky ReLU network with
hidden layer width $m=100$ and with bias terms. In out setting the bias term is
equivalent to adding an extra dimension of value $0.1$ to all the data points.
We trained our model with the gradient descent method from 0.001 times the He
initialization \citep{He_2015_ICCV} and initial learning rate 0.01. The
learning rate is raised after interpolation to boost margin increase.

We compare the neural network output with the max-margin linear classfier produced by the support vector machine (SVM) on hinge loss. In \Cref{tbl:exptbl}, the test errors are calculated from 10000 test points from the same distribution. In \Cref{fig:expimage2}, we drawn the decision boundaries for both the SVM max-margin linear classifier and the neural network restricted to a plane passing 0. 
The results show that the neural network classifier converges to the max-margin linear classfier in our setting.

\begin{figure}[htbp]

\begin{subfigure}{0.33\textwidth}
\includegraphics[width=\linewidth]{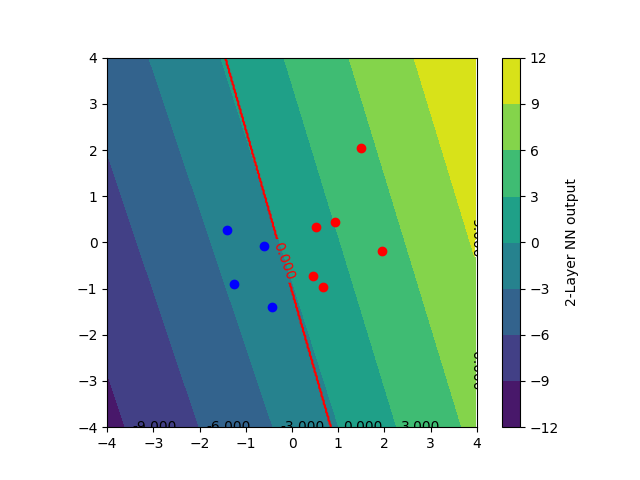} 
\caption{$n=10$}
\end{subfigure}
\begin{subfigure}{0.33\textwidth}
\includegraphics[width=\linewidth]{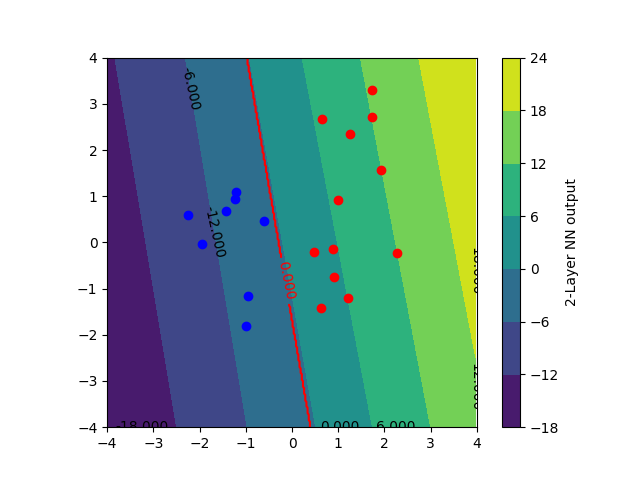} 
\caption{$n=20$}
\end{subfigure}
\begin{subfigure}{0.33\textwidth}
\includegraphics[width=\linewidth]{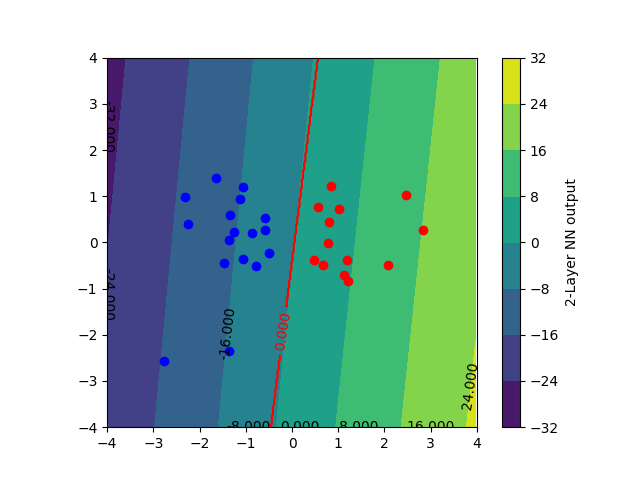} 
\caption{$n=30$}
\end{subfigure}
\begin{subfigure}{0.33\textwidth}
\includegraphics[width=\linewidth]{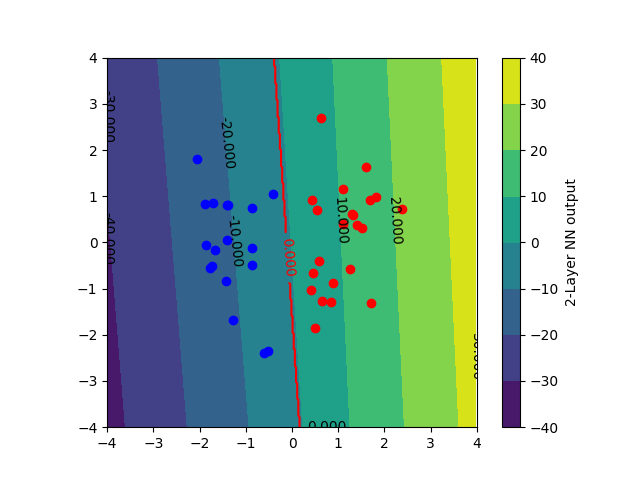} 
\caption{$n=40$}

\end{subfigure}
\begin{subfigure}{0.33\textwidth}
\includegraphics[width=\linewidth]{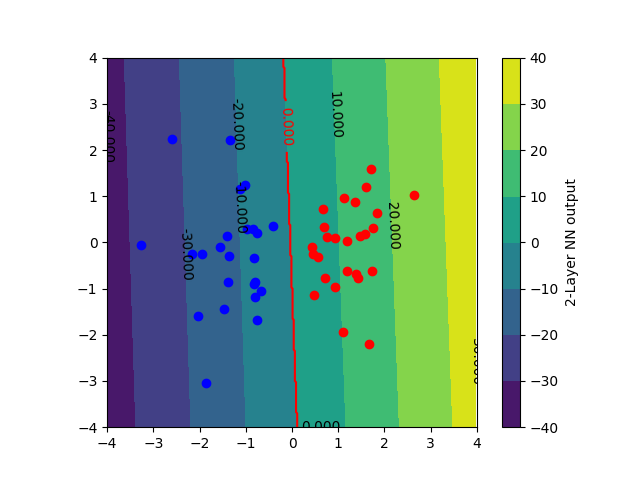} 
\caption{$n=50$}
\end{subfigure}
\begin{subfigure}{0.33\textwidth}
\includegraphics[width=\linewidth]{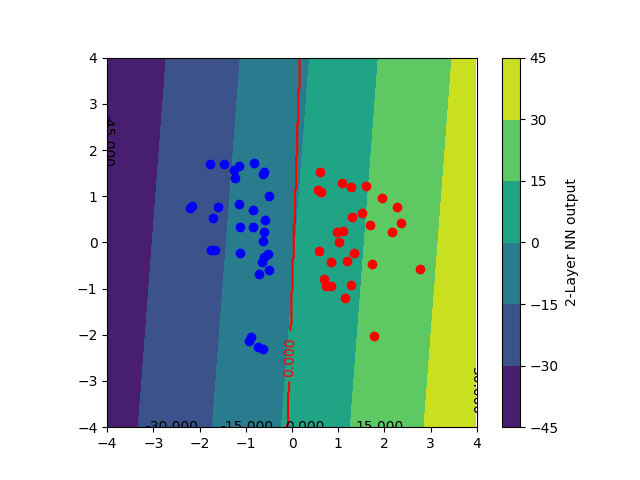} 
\caption{$n=60$}
\end{subfigure}
\begin{subfigure}{0.33\textwidth}
\includegraphics[width=\linewidth]{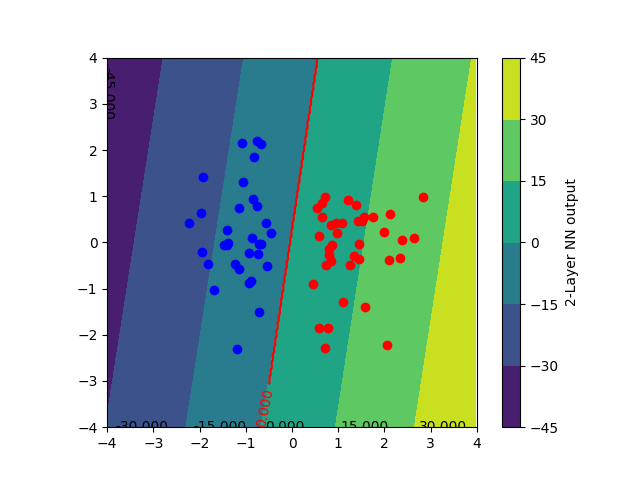} 
\caption{$n=70$}
\end{subfigure}
\begin{subfigure}{0.33\textwidth}
\includegraphics[width=\linewidth]{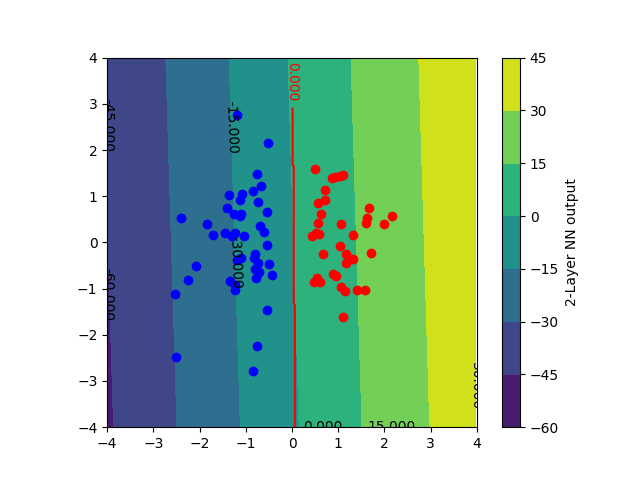} 
\caption{$n=80$}
\end{subfigure}

\begin{subfigure}{0.33\textwidth}
\includegraphics[width=\linewidth]{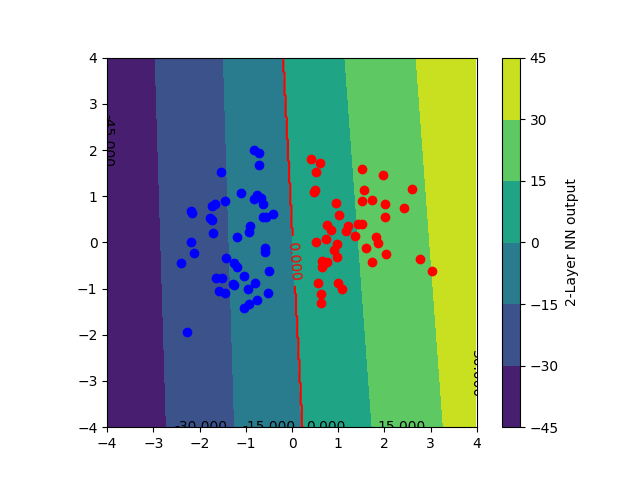} 
\caption{$n=90$}
\end{subfigure}
\begin{subfigure}{0.33\textwidth}
\includegraphics[width=\linewidth]{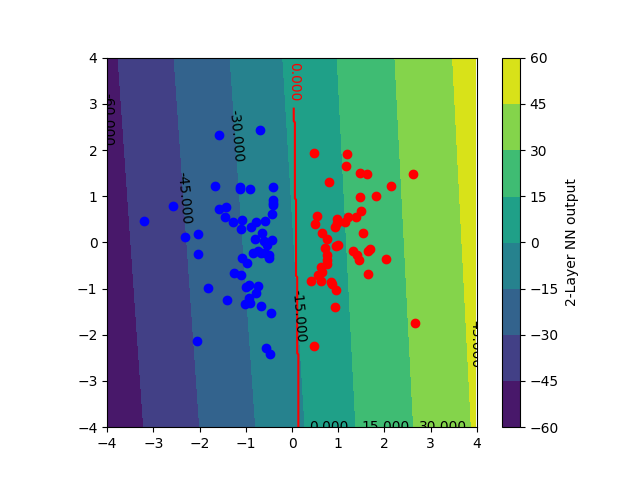}
\caption{$n=100$}
\end{subfigure}
\caption{Two-layer Leaky ReLU neural net converges in direction to the SVM max-margin linear classifier. \textbf{Red and Blue Dots:} two classes of data points. \textbf{Red Lines:} the decision boundaries of the SVM max-margin linear classifiers. \textbf{Background:} the contours of two-layer leaky-ReLU neural network outputs. Lighter colors mean higher outputs. The underlying true separator is the vertical line through zero.}
\label{fig:expimage2}
\end{figure}

\end{document}